\newcommand\myshade{100}
\colorlet{mylinkcolor}{RubineRed}
\colorlet{mycitecolor}{RoyalBlue}
\colorlet{myurlcolor}{RoyalBlue}
\let\hat\widehat
\let\bar\overline
\newcommand{\expectunder}[2]{\bbE^{#1}\bra{#2}}
\newcommand{\probunder}[2]{\bbP^{#1}\bra{#2}}
\newcommand{\probbar}[1]{\bar{\bbP}\bra{#1}}
\newcommand{\Probbar}[1]{\bar{\bbP}[#1]}
\newcommand{\probbarunder}[2]{\bar{\bbP}_{#1}\bra{#2}}
\newcommand{\probb}[2]{\bbP_{#1}^{#2}}
\newcommand{\thetahatk}{\hat{\theta}^k}
\newcommand{\bhatk}{\hat{b}^k}
\newcommand{\Uhathk}{\hat{U}_{h}^{k}}
\newcommand{\Uhk}{U_{h}^{k}}
\newcommand{\psibarstar}{\bar{\psi}^{*}}
\newcommand{\hatbar}[1]{\hat{\bar{#1}}}
\newcommand{\uhexp}[1]{\mathtt{u}_{#1}^{\mathtt{exp}}}
\newcommand{\Qhexp}[1]{\bbQ_{#1}^{\mathtt{exp}}}
\newcommand{\val}[2]{V_{#1}^{#2}}
\newcommand{\act}{\mathtt{act}}
\newcommand{\obs}{\mathtt{obs}}
\title{On the Role of Information Structure in Reinforcement Learning for Partially-Observable Sequential Teams and Games}
\author{
Awni Altabaa \\
Department of Statistics \& Data Science\\
Yale University\\
\texttt{awni.altabaa@yale.edu}
\and
Zhuoran Yang \\
Department of Statistics \& Data Science\\
Yale University\\
\texttt{zhuoran.yang@yale.edu}
}
\date{
  % Date created: July 17, 2023 \\ 
  % Last modified: \today
  }
\begin{document}
\maketitle

\begin{abstract}
    In a sequential decision-making problem, the \textit{information structure} is the description of how events in the system occurring at different points in time affect each other.
    % In particular, for each system variable, the information structure describes the subset of past events which affect it directly.
    Classical models of reinforcement learning (e.g., MDPs, POMDPs, Dec-POMDPs, and POMGs) assume a simple and highly regular information structure, while more general models like predictive state representations do not explicitly model the information structure. By contrast, real-world sequential decision-making problems typically involve a complex and time-varying interdependence of system variables, requiring a rich and flexible representation of information structure. %The control community has long recognized the importance of information structure, leading to the development of the celebrated Witsenhausen intrinsic model, and extensive study since the 1970s.

    In this paper, we argue for the perspective that explicit representation of information structures is an important component of analyzing and solving reinforcement learning problems. Taking inspiration from the control literature, we propose \textit{partially-observable sequential teams} and \textit{partially-observable sequential games} as reinforcement learning models with an explicit representation of information structure as part of the problem specification, capturing classical models of reinforcement learning as special cases.
    % We show that this leads to a richer analysis of sequential decision-making problems and enables more tailored algorithm design.
    We then use these models to carry out an information-structural analysis of the statistical hardness of general sequential decision-making problems, obtaining a characterization via a graph-theoretic analysis of the DAG representation of the information structure.
    The central quantity in this analysis is the minimal set of variables, observable or latent, that $d$-separates the past observations from future observations.
    We prove an upper bound on the sample complexity of learning a general sequential decision-making problem in terms of its information structure by exhibiting an algorithm achieving the upper bound.
    This recovers known tractability results and gives a novel perspective on reinforcement learning in general sequential decision-making problems, providing a systematic way of identifying new tractable classes of problems.
\end{abstract}

\section{Introduction}\label{sec:intro}

The \textit{information structure} of a sequential decision-making problem is a description of how events in the system occurring at different points in time affect each other. In particular, in a causal sequential system, the information structure describes the subset of past events which have a direct effect on the present. This includes the information available to each agent at each time that they take an action as well as the information that affects the dynamics of the system. The control community has long recognized the importance of information structure, leading to the development of the celebrated Witsenhausen intrinsic model~\parencite{witsenhausenIntrinsicModelDiscrete1975a}, and extensive study since the 1970s~\parencite[e.g.,][]{witsenhausen1971information,ho1972team,hoEquivalenceInformationStructures1973,yoshikawa1978decomposition,witsenhausenEquivalentStochasticControl1988,andersland1992information,teneketzisInformationStructuresNonsequential1996,tatikondaControlCommunicationConstraints2000,
% mahajanGraphicalModelingApproach2009,
% mahajan2009optimal,
% nayyarOptimalControlStrategies2011,
mahajanInformationStructuresOptimal2012,nayyarCommonInformationApproachDecentralized2014,
% malikopoulosTeamDecisionProblems2022,
saldiGeometryInformationStructures2022}
%,yukselStochasticTeamsGames2023}.

In contrast to the control literature, reinforcement learning has so far primarily studied problems where the information structure is either fixed and highly regular, or not explicitly considered. For example, in an MDP or a Markov team/game, it is assumed that there exists a Markovian state variable that is observable by the agent(s) and which forms a sufficient statistic for the evolution of the system. Such a model lacks the expressivity needed to naturally capture real-world sequential decision-making problems where each event in the system may have an arbitrary dependence on past events.

Similarly, the treatment of partial-observability in commonly studied models is restrictive compared to the complexity of real-world problems. In general, partial observability refers to the fact that a system's evolution is dictated by a potentially large number of sequential events, but only a subset of these will be observable by the learning agent. For example, in a POMDP---the typical model of partial observability studied in the RL literature---it is assumed that there exists a Markovian state and that the observables at each point in time are noisy measurements of the current state. This assumption is often unrealistic, since general systems may not have ``states'' per se, and observations may be generated with more complex dependencies.

The highly regular information structures of these models make analysis more tractable and enable favorable learning results~\parencite[e.g.,][]{singh2000convergence,sutton2008convergent,munos2008finite,abbasi2011regret,lattimore2012pac}. Correspondingly, reinforcement learning has achieved notable empirical success in a wide range of domains, including in multi-agent systems~\parencite[e.g.,][]{mnihPlayingAtariDeep2013,koberReinforcementLearningRobotics2013,mnihHumanlevelControlDeep2015,silverMasteringGameGo2016,shalev-shwartzSafeMultiAgentReinforcement2016,vinyalsAlphastarMasteringRealtime2019}. Despite this success, a general theory of information structure in reinforcement learning is missing. As we show in this work, the tractability of modeling and learning a sequential decision-making problem can indeed be characterized in terms of its information structure.
%Tractable information structures have been identified in classical models such as MDPs (and certain classes of POMDPs).
By explicitly modeling the information structure, we can identify a broader class of tractable decision-making problems and develop more tailored approaches to reinforcement learning which exploit modeling the information structure.

In this work, we argue for the perspective that information structure is an important component of analyzing and solving reinforcement learning problems. A rich and flexible representation of information structure is needed to faithfully represent real-world sequential decision-making problems, where the system evolves according to a complex and time-varying dependence on the past, and different agents will have different information available to them at different points in time.
% Moreover, the sequence in which events occur (i.e., the evolution of the system and the actions of different agents) may be irregular and unable to be naturally represented by classical models.
%Finally, the set of observable system variables which are available to the learning algorithm may also be irregular.
Our main contributions are summarized as follows: 
\textbf{1)} we present a general model of sequential decision-making with an explicit representation of information structure; 
\textbf{2)} through this model, we analyze the ``complexity'' of sequential decision-making problems as a function of their information structure, and, in doing so, identify a class of tractable problems; and 
\textbf{3)} we prove an upper bound on the sample complexity of learning a general sequential decision-making problem as a function of its information structure by exhibiting an algorithm achieving the upper bound.
We give a more detailed overview below.

\begin{figure*}
    \begin{minipage}[t]{0.32\textwidth}
        \centering
        \includegraphics[width=0.9\textwidth]{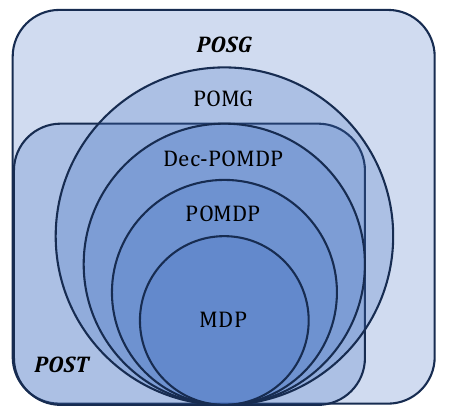}
        \caption{A depiction of the generality of our proposed models. POSTs and POSGs capture MDPs, POMDPs, Dec-POMDPs, and POMGs as special cases.}\label{fig:model_venn_diagram}
    \end{minipage}
    \hfill
    \begin{minipage}[t]{0.65\textwidth}
        \centering
        \includegraphics[width=0.95\textwidth]{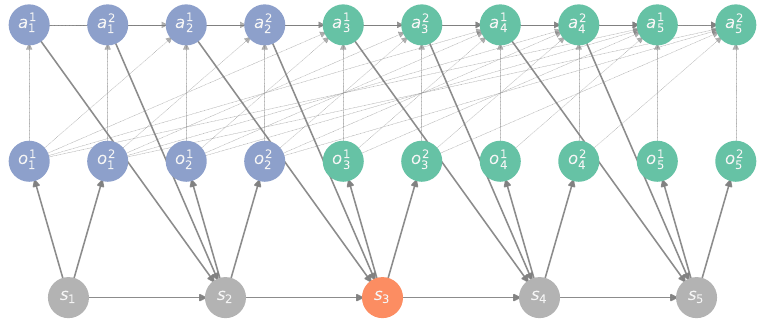}
        \caption{A depiction of the information structure of a (2-agent) Dec-POMDP/POMG within the POST/POSG framework. Blue nodes indicate past observables, green nodes indicate future observables, and orange nodes indicate the information structural state $\calI^\dagger$. This shows that in the case of Dec-POMDPs/POMGs, the information structural state recovers the latent Markovian state.}\label{fig:decpomdp_infostruct_state}
    \end{minipage}
\end{figure*}

% \begin{figure}
%     \centering
%     \includegraphics{}
%     \caption{A depiction of the generality of our proposed models. POSTs and POSGs capture MDPs, POMDPs, Dec-POMDPs, and POMGs as simple special cases.}\label{fig:model_venn_diagram}
% \end{figure}

\subsection{Overview of contributions and technical challenges}

\subsubsection*{An expressive model that explicitly represents information structure and partial observability}

Taking inspiration from the control literature, we propose \textit{partially-observable sequential teams} (POST) and \textit{partially-observable sequential games} (POSG) as highly general models with an \textit{explicit representation of information structure} as part of the problem specification. This forms a unifying framework that captures many commonly studied RL models as special cases, including MDPs, Markov teams/games, POMDPs, and Dec-POMDPs/POMGs (\Cref{fig:model_venn_diagram}). It also enables the study of additional more specialized decision-making problems (e.g., real-time communication with feedback or mean-field problems) in a common framework. The explicit representation of information structure enables a richer analysis of sequential decision-making problems, as well as more tailored learning algorithms.

In addition to an explicit representation of information structure, this framework also leads to a more general formalization of \textit{partial-observability}. Through its ability to represent information structure, our proposed model gives the most general formulation of partial observability---each system variable has an arbitrary dependence on the past, and the learning agent may observe an arbitrary subset of the system variables. This generality allows us to distinguish between ``observability'' in the context of decision-making (i.e., the information available at the time of making each decision) and observability in the context of learning (i.e., the information available to the learning algorithm).

% \subsubsection*{Characterizing complexity and tractability in terms of information structure}
\subsubsection*{Theoretical analysis of sequential decision-making through information structure.}

POSTs and POSGs are highly general models that capture a wide array of sequential decision-making problems, some of which possess an information structure that makes them tractable and some of which do not. A core contribution of this work is to characterize the rank of the observable dynamics through a graph-theoretic complexity metric of the information structure. In particular, we identify that the complexity is captured by the minimal set of variables (including latent variables) that $d$-separates the past observations from future observations. This gives a measure of the ``complexity'' of the objective-relevant part of the system dynamics for a sequential decision-making problem. This result gives a clear and interpretable condition in terms of the information structure for when a sequential decision-making problem can be represented tractably or not. Moreover, this recovers known results on the tractability of various structured classes of sequential decision-making problems such as MDPs, POMDPs, Dec-POMDPs, etc. The result centers around a graph-theoretic quantity of the DAG representation of the information structure, which can be interpreted as an \textit{effective} information-structural state, generalizing the typical notion of a Markovian state.

\subsubsection*{Learning theory: information-structural analysis of statistical hardness}

A key challenge in reinforcement learning is how to construct robust and efficient representations of probabilities of the form $\bbP(\mathtt{future}\ggiven \mathtt{past})$. Without good representations, modeling and learning these probabilities would be intractable. Predictive state representations (PSRs) give a powerful way to represent these probabilities, predicting future observations given the past, without explicitly modeling a latent state.
% In a PSR, the prediction feature $\psi(\mathtt{past})$ encodes all the information about the past that can affect the future, and allows $\bbP(\mathtt{future}\ggiven \mathtt{pst})$ to be computed via an inner product of the form $\iprod{\psi(\mathtt{past})}{m(\mathtt{future})}$.
Since standard PSRs cannot represent our POST and POSG models, we formalize a generalization of predictive state representations which can. This generalized PSR formulation may be of independent interest since the analysis techniques which have recently proven successful for standard PSRs~\parencite[e.g.,][]{zhangReinforcementLearningMultiagent2021,ueharaProvablyEfficientReinforcement2022,chenPartiallyObservableRL2022,liuWhenPartiallyObservable2022,zhanPACReinforcementLearning2022,huangProvablyEfficientUCBtype2023} carry over to our generalized PSR formulation. We identify a class of POSTs and POSGs which admit a (well-conditioned) generalized PSR representation, and explicitly construct this representation through the information structure.

We prove an upper bound on the achievable sample complexity of learning general sequential decision-making problems as a function of the information structure. This is a result that roughly says ``any sequential decision-making problem with an information structure $\calI$ can be learned with a sample complexity at most $f(\calI)$''. 
The key quantity characterizing the sample complexity is the cardinality of the aforementioned information-structural state space.
We prove this result by exhibiting an algorithm achieving the upper bound, adapting recent work on learning in standard PSRs~\parencite{huangProvablyEfficientUCBtype2023} and extending it to our generalized PSR model. We tackle this question for both the team setting, where we learn an $\epsilon$-optimal policy, and the game setting, where we learn an $\epsilon$-equilibrium.

\subsection{Related Work}

\textbf{The study of information structure in the control literature.} In stochastic control, the construct of the ``information structure'' is used to model the structural properties of a system which may restrict the flow, storage, and processing of information. The role of information structure in decentralized control has been extensively studied since~\citet{witsenhausen1971separation} and~\citet{ho1972team} began investigating information structures in the context of team decision theory. For example, early work showed that the information structure can determine the tractability of optimal decentralized control problems~\parencite{witsenhausen1971separation,papadimitriouComplexityMarkovDecision1987}. More generally, information structure plays an important role in the analysis of multi-agent team decision problems and games, as well as in the design of efficient algorithms, especially in the decentralized setting.
% Specifically, information structure enables the construction of some succinct representations of the problem that addresses the challenges due to partial observability and decentralization. 
See, e.g.,~\citet{nayyar2010optimal, nayyar2013common, nayyar2013decentralized, mahajanInformationStructuresOptimal2012, ouyang2016dynamic, dave2022decentralized, guan2023zero} and the references therein.
%We refer the reader to~\citet{mahajanInformationStructuresOptimal2012} for details about information structures in team decision problems and their role in the tractability of optimal control. 
We also refer the reader to~\citet{yukselStochasticTeamsGames2023} for a comprehensive overview of the interaction between information and control, including recent progress in the field. The models we propose in this paper are closely related to Witsenhausen's intrinsic model~\parencite{witsenhausenIntrinsicModelDiscrete1975a}, but with some added elements to model partial-observability in the context of reinforcement learning. Whereas the above-mentioned work studies the role of information structure in planning and control, we study the role of information structure in \textit{reinforcement learning} (i.e., statistical estimation and sample complexity).

\textbf{Learning under partial observations.} In an MDP, where the system dynamics obey a Markovian property and are fully observable, reinforcement learning has been shown to be both computationally and statistically efficient~\parencite[e.g.,][]{auerNearoptimalRegretBounds2008,agrawalOptimisticPosteriorSampling2017,azarMinimaxRegretBounds2017,rashidinejadBridgingOfflineReinforcement2021}. However, under partial observability, reinforcement learning can be computationally and statistically intractable in the worst case, even when assuming a Markovian latent state. Such worst-case hardness results are well-known. For example,~\parencite{papadimitriouComplexityMarkovDecision1987,mundhenkComplexityFinitehorizonMarkov2000,lassisComputationalComplexityStochastic2012} show that planning is computational intractable and~\parencite{mosselLearningNonsingularPhylogenies2005,krishnamurthyPACReinforcementLearning2016} show that learning is statistically intractable, in the worst-case. In these worst cases, the hardness comes from instances where the observations reveal little information about the latent state, which causes errors in learned representations to be uncontrollable. Accordingly, sub-classes of POMDPs have been identified in recent work where added structural conditions make efficient learning possible. One such condition is decodability~\parencite[see e.g.,][]{krishnamurthyPACReinforcementLearning2016,efroniProvableReinforcementLearning2022, guo2023provably, zhang2023provable}, which assumes that the latent state can be decoded from the current observation (i.e., Block MDP), or an $m$-step history of observations. Another set of conditions is the ``observability'' condition~\parencite{golowich2022learning, golowichPlanningObservablePomdps2022a, golowich2023planning} and its cousin the ``weakly revealing'' condition~\parencite{jinSampleEfficientReinforcementLearning2020,liuWhenPartiallyObservable2022} which require different belief states to induce distinguishable distributions over observations. 
These conditions are further extended to POMDP models in the function approximation setting,  where the state or observation spaces are large and function approximators (e.g., linear functions) are used to represent the model. See, e.g.,~\citet{ueharaProvablyEfficientReinforcement2022, cai2022reinforcement,wangEmbedControlPartially2022, guo2023provably, zhang2023provable}. In our work, we build on the above by identifying a class of POSTs/POSGs that can be learned efficiently. This is significant since POSTs/POSGs are much more general than POMDPs and do not assume the existence of a latent state.

\textbf{Predictive state representations.} Predictive state representations were introduced by \citet{littmanPredictiveRepresentationsState} building on prior work on observable operator models by \citet{jaegerObservableOperatorModels2000} which proposed the idea of predictive representations as an alternative to belief states for modeling HMMs and POMDPs~\parencite[see also][]{singhLearningPredictiveState2003,singhPredictiveStateRepresentations2004,jamesPlanningPredictiveState2004,mccrackenOnlineDiscoveryLearning2005}. PSRs are a way to \textit{represent} the dynamics of a sequential decision-making problem by modeling the (conditional) probabilities of a small set of future trajectories, typically called ``core tests''. In a PSR, the probability of any future trajectory is a deterministic function of the conditional probabilities of the core tests. That is, the probabilities of the core tests encode all the information that the past contains about the future.~\citet{littmanPredictiveRepresentationsState} showed that any POMDP can be represented as a PSR. Various reinforcement learning methods for PSRs have been proposed under the assumption that data distribution is explorative, including spectral algorithms~\parencite{bootsClosingLearningplanningLoop2011,jiangCompletingStateRepresentations2018,zhangReinforcementLearningMultiagent2021} and supervised learning approaches~\parencite{hefnySupervisedLearningDynamical2015}. In addition, when it comes to the online setting where the algorithm needs to explore, there is a line of work that extends the theory and algorithms for online POMDP learning to PSRs. Moreover, some of these works propose generic theory and algorithms that can be applied to a large class of models including MDPs, POMDPs, and two-player zero-sum dynamic games with partial observability.
See, e.g.,~\citet{liuWhenPartiallyObservable2022,liuOptimisticMLEGeneric2022,zhanPACReinforcementLearning2022,chenPartiallyObservableRL2022,huangProvablyEfficientUCBtype2023, zhongGECUnifiedFramework2023, qiu2023posterior, liu2024maximize}. Our work is particularly related to the works that combine the idea of optimism in the face of uncertainty~\citep{sutton2018reinforcement} and maximum likelihood model estimation~\citep{liuWhenPartiallyObservable2022,liuOptimisticMLEGeneric2022,zhanPACReinforcementLearning2022,chenPartiallyObservableRL2022,huangProvablyEfficientUCBtype2023}. Specifically, our algorithm extends the UCB-type algorithm proposed by~\citet{huangProvablyEfficientUCBtype2023} for standard PSRs to learn a generalization of PSRs which captures POSTs/POSGs.

\textbf{Learning in multi-agent systems.} Most applications of interest in reinforcement learning involve the participation of multiple agents in the same environment. Empirical research has achieved striking success in several domains, including for example in the games of Go~\parencite{silverMasteringGameGo2016}, Starcraft~\parencite{vinyalsAlphastarMasteringRealtime2019}, and Poker~\parencite{brownSuperhumanAIMultiplayer2019}, as well as in robotic control~\parencite{koberReinforcementLearningRobotics2013,} and autonomous driving~\parencite{shalev-shwartzSafeMultiAgentReinforcement2016}. There also exists a growing literature of theoretical work. For example,~\citet{brafmanRmaxaGeneralPolynomial2002,baiNearoptimalReinforcementLearning2020,songWhenCanWe2021} tackle learning in Markov games (MGs)---a generalization of single-agent MDPs that assumes the existence of a Markovian state which is observable by all agents. Another model that has been explored in the literature is imperfect-information extensive-form games (IIEFG), which assume tree-structured transitions and deterministic emission, and can be viewed as a subclass of partially-observable Markov games (POMGs). Learning under this model has been studied in~\citet{zinkevichRegretMinimizationGames2007,kozunoModelfreeLearningTwoplayer2021,farinaModelfreeOnlineLearning2021}. More recently,~\citet{liuSampleEfficientReinforcementLearning2022} studied reinforcement learning in POMGs using an MLE-based algorithm building on their previous work in the single-agent setting~\parencite{liuWhenPartiallyObservable2022}. To address the computational intractability of the planning step for such model-based algorithms,~\citet{liu2024partially} proposed a quasi-efficient algorithm for multi-agent POMGs that runs in quasi-polynomial time with quasi-polynomial sample complexity. Their proposed algorithm leverages the common information approach~\parencite[see][]{nayyarCommonInformationApproachDecentralized2014} to construct an approximate Markov game where the state space of this new game corresponds to the space of approximate common information among agents. The idea of leveraging an information-sharing structure in multi-agent reinforcement learning has also appeared in~\citet{subramanian2022approximate, mao2020information, kara2022near, kao2022common, tang2023novel}.

%\citep{golowichPlanningObservablePomdps2022a}
%---this is related to the common information approach of~\parencite{nayyarCommonInformationApproachDecentralized2014} and can be described in the language of information structures.
%Finally, we remark that the multi-agent RL literature involves a complicated taxonomy describing learning paradigms, objective alignment, information structure, etc., with different works tackling different classes of problems and with different approaches. We refer the reader to~\parencite{zhangMultiAgentReinforcementLearning2021} for a review of theoretical progress in the field and a description of this taxonomy.

In each of the above-mentioned models (e.g., MDP, POMDP, MG, IIEFG, POMG, etc.), a particular fixed information structure is assumed. We emphasize that the POST/POSG model proposed in our work allows the information structure to be specified arbitrarily and hence captures these models as special cases within a unifying framework. Moreover, our analysis and proposed algorithm significantly expand the class of multi-agent sequential decision-making problems that can be efficiently learned.

\subsection{Notation}

We use the convention that upper case letters denote random variables and lowercase letters denote realizations of those random variables (e.g., $X_t$ is the random variable denoting the state at time $t$ and $x_t \in \bbX_t$ is a particular realization). When clear from context, $\prob{x_t}$ means $\prob{X_t = x_t}$. We will tend to use blackboard symbols to denote the spaces that variables lie in (e.g., $\bbX_t$ for the space $X_t$ lies in) and calligraphic symbols to denote sets (e.g., $\calS$ for the indices of system variables). We use $\calP\paren{\bbX}$ to denote the space of probability measures on $\bbA$ and $\calP\paren{\bbB \given \bbA}$ to denote the set of stochastic kernels from $\bbA$ to $\bbB$. $i:j$ denotes the set $\set{i, i+1, \ldots, j}$. $\sigma_k(A)$ denotes the $k$-th largest eigenvalue of $A$.

For measures $p, q$ over a (finite) set $\bbX$, we define the total-variation distance as $\tv{p, q} \coloneqq \sum_{x \in \bbX} \aabs{p(x) - q(x)}$ and the hellinger-squared distance as $\hellingersq{p, q} = \frac{1}{2} \sum_{x \in \bbX} (\sqrt{p(x)} - \sqrt{q(x)})^2$. Note that the conventional total variation distance usually has a factor of $1 / 2$ which we omit for convenience. For a vector $x$ and a symmetric positive semi-definite matrix $A$, $\nnorm{x}_A \coloneqq \sqrt{x^\top A x}$. We define the matrix norms $\nnorm{A}_p = \max_{\nnorm{x}_p=1} \nnorm{Ax}_p$, and $\norm{A}_{\max} = \max_{ij} \lvert A_{ij} \rvert$. $A^\dagger$ is the Moore-Penrose pseudo-inverse.

% Consider a sequential decision-making problem (defined in~\Cref{ssec:seq_dec_making}) with observation index set $\calO$ and action index set $\calA$. For a (joint) policy $\pi = (\pi_h: h \in \calA)$, we let $\pi(x_{1}, \ldots, x_{h}) = \prod_{s \in \calA_{1:h}} \pi_s(x_s | x_1, \ldots, x_{s-1})$ and $\pi(x_{h+1}, \ldots, x_{h'} | x_1, \ldots, x_h) = \prod_{s \in \calA_{h+1:h'}} \pi_s(x_s | x_1, \ldots, x_{s-1})$. We define $\obs(\cdot)$ (resp., $\act(\cdot)$) as the operator which takes a trajectory as input and returns the observation (resp., action) component. That is, $\obs(x_i, \ldots, x_j) = (x_s: s \in \calO  \cap \set{i, \ldots, j})$ and $\act(x_i, \ldots, x_j) = \paren{x_s: s \in \calA \cap \{i, \ldots, j\}}$. For a trajectory $\tau_h = (x_1, \ldots, x_h)$, we define $\probbar{\tau_h} \coloneqq \prob{\obs(\tau_h) \given \mathrm{do}(\act(\tau_h))}$ as the probability of the observations in $\tau_h$ given that the actions in $\tau_h$ are executed. Note that the probability of a trajectory $\tau_h$ under a policy $\pi$ is $\bbP^{\pi}(\tau_h) = \probbar{\tau_h} \pi(\tau_h)$. Finally, we let $\nu_h(\pi, \pi')$ denote the policy which uses $\pi$ until time $h$ and uses $\pi'$ from time $h$ onwards.

We refer the reader to~\Cref{sec:summary_notation} for a table summarizing all notation used throughout the paper.

\section{Generic Sequential Decision Making Problems and Generalized PSRs}\label{sec:preliminaries}

In this section, we formulate the \textit{generic} sequential decision-making problem. This is an unstructured model with full generality which is defined in terms of the probability distribution of a sequence of variables---it contains no description of how different variables relate to each other. We then introduce a generalization of predictive state representations for this generic model. This forms the backdrop for the models of sequential decision-making presented in~\Cref{sec:info_struct}, which contain an explicit representation of information structure.

\subsection{Generic Sequential Decision-Making Problems}\label{ssec:seq_dec_making}

% We begin by introducing the generic sequential decision-making problem. This model contains no structure, but will be useful for us to state our results. In~\Cref{ssec:gen_psr} we will define a way to efficiently represent such generic sequential decision-making problems through a generalization of predictive state representations. In~\Cref{sec:info_struct} we will introduce POSTs and POSGs, our main models in this paper which explicitly represent information structure and enable a richer analysis.

Consider a controlled stochastic process $(X_1, \ldots, X_H)$, where $X_h$ is a random variable corresponding to the variable at time $h$. At each time $h \in [H]$, the variable $X_h$ may be either an `observation' (i.e., observable system variable) or an `action'. The dynamics of this stochastic process are described by a tuple $(H, \set{\bbX_h}_h, \calO, \calA, \bbP)$, where $H$ is the time horizon, $\bbX_h$ is the variable space at time $h$ (i.e., $X_h \in \bbX_h$), $\calO \subset [H]$ is the index set of observations (i.e., $X_h$ is an observation if $h \in \calO$), $\calA \subset [H]$ is the index set of actions, and $\bbP = \sset{\bbP_h}_{h \in \calO}$ is a set of probability kernels which describes the the probability of any trajectory $x_1, \ldots, x_H$ given that the actions are executed,
\begin{equation}
    \prob{\sset{x_s : s \in \calO} \given \sset{x_s: s \in \calA}} = \prod_{h \in \calO} \bbP_h \bra{x_h \given x_1, \ldots, x_{h-1}}.
\end{equation}

A choice of policy $\pi = \sset{\pi_h}_{h \in \calA}$ induces a probability distribution on $\bbX_1 \times \cdots \times \bbX_H$ as follows
\begin{equation}
    \bbP^\pi \paren{x_1, \ldots, x_H} = \prod_{h \in \calO} \bbP_h \paren{x_h \given x_1, \ldots, x_{h-1}} \cdot \prod_{h \in \calA} \pi_h \paren{x_h \given x_1, \ldots, x_{h-1}}.
\end{equation}

% A reward function $R: \bbX_1 \times \cdots \times \bbX_H \to [0,1]$ defines the value of each policy,
% \begin{equation}
%     V^R(\pi) = \bbE^\pi \bra{R(X_1, \ldots, X_H)} = \sum_{x_1, \ldots, x_H} R(x_1, \ldots, x_H) \prob{x_1, \ldots, x_H}.
% \end{equation}

% The objective of the agent(s) is to choose a policy which maximizes their reward function.

We now define some notation. Let $\bbH_h = \prod_{s \in 1:h} \bbX_s$ denote the space of histories at time $h$ and $\bbF_h = \prod_{s \in h+1:H} \bbX_s$ denote the space futures at time $h$. Similarly, let $\bbH_h^o = \obs(\bbH_h) = \prod_{s \in \calO_{1:h}} \bbX_s$ denote the observation component of histories and let $\bbH_h^a = \act(\bbH_h) = \prod_{s \in \calA_{1:h}} \bbX_s$ denote the action component. Here, $\calO_{i:j}$ denotes $\calO \cap \sset{i, \ldots, j}$, and similarly for $\calA_{i:j}$. The observation and action components of the futures, $\bbF_h^o$ and $\bbF_h^a$ respectively, are defined similarly.

We define the \textit{system dynamics matrix} $\bm{D}_h \in \reals^{\abs{\bbH_h} \times \abs{\bbF_h}}$ as the matrix giving the probability of each possible pair of history and future at time $h$ given the execution of the actions,
\begin{equation}\label{eq:dynamics_matrix}
    \bra{\bm{D}_h}_{\tau_h, \omega_h} = \probbar{\tau_h, \omega_h} = \prob{\tau_h^o, \omega_h^o \given \mathrm{do}(\tau_h^a, \omega_h^a)}, \quad \tau_h \in \bbH_h, \omega_h \in \bbF_h,
\end{equation}
where $\omega_h^o = \obs(\omega_h)$ are is the observation component of the future $\omega_h$, $\omega_h^a = \act(\omega_h)$ is the action component, and similarly for $\tau_h^o, \tau_h^a$. Note that the actions are actively executed via the $\mathrm{do}$-operation. Hence, the system dynamics matrices are independent of any action-selection criteria. Note that $\bm{D}_H \in \reals^{\abs{\bbH_H} \times 1}$ is defined as $[\bm{D}_H]_{\tau_H} = \probbar{\tau_H}$, and $\bm{D}_0 = \bm{D}_H^\top$.

We introduce the notion of the \textit{rank} of the dynamics. The rank of such a controlled stochastic process is the maximal rank of its dynamics matrices. This is a measure of the complexity of the dynamics.
\begin{definition}[Rank of dynamics]\label{def:rank_dynamics}
    The rank of the dynamics $\set{\bm{D}_h}_{h \in [H]}$ is $r = \max_{h \in [H]} \mathrm{rank}(\bm{D}_h)$.
\end{definition}

This defines the dynamics of the system. A sequential decision-making problem is such a controlled stochastic process together with an \textit{objective}. The objective is defined by a reward function $R: \bbX_1 \times \cdots \times \bbX_H \to [0,1]$ mapping a trajectory to a reward in $[0,1]$. The agent(s) can affect the dynamics of the system through their choice of actions or policies. Each action $X_h, h \in \calA$ may be chosen by either a single agent or one of several agents (e.g., a team). The policy at time $h \in \calA$ is a mapping $\pi_h : \bbH_{h-1} \to \calP(\bbX_h)$ from previous observations to an action (or a distribution over actions, if randomized). The collection of policies at all time steps is denoted $\bm{\pi} = (\pi_h : h \in \calA)$, and induces a probability distribution over trajectories, denoted $\bbP^{\bm{\pi}}$. Then, the value of a policy $\bm{\pi}$ is the expected value of the reward under the measure $\bbP^{\bm{\pi}}$, $V^R(\bm{\pi}) \coloneqq \expectunder{\bm{\pi}}{R(X_1, \ldots, X_H)}$, where $\bbE^{\bm{\pi}}$ is the expectation associated with $\bbP^{\bm{\pi}}$.

% Sequential decision-making problems as defined here capture many widely studied models, including MDPs, POMDPs, Dec-POMDPs, etc. 
The formalism of sequential decision-making problems introduced in this section is highly generic, but does not explicitly model the \textit{information structure}. In the next section, we introduce the models of \textit{partially observable sequential teams/games}, which explicitly represent information structures. We then show that the information structure characterizes the rank of a sequential decision-making problem as per~\Cref{def:rank_dynamics}.
\subsection{Generalized Predictive State Representations}\label{ssec:gen_psr}

Predictive state representations (PSR)~\parencite{littmanPredictiveRepresentationsState,jaegerObservableOperatorModels2000} are a model of dynamical systems and sequential decision-making problems based on predicting future observations given the past, without explicitly modeling a latent state.
% One advantage of PSRs is that they are amenable to learning, as demonstrated by recent research efforts which propose provably-efficient algorithms~\parencite[e.g.,][]{zhangReinforcementLearningMultiagent2021,ueharaProvablyEfficientReinforcement2022,chenPartiallyObservableRL2022,liuWhenPartiallyObservable2022,zhanPACReinforcementLearning2022,huangProvablyEfficientUCBtype2023}.
In this section, we propose and formalize a generalization of standard PSRs.

In the standard formulation of sequential decision-making and predictive state representations, the sequence of variables is such that observations and actions always occur in an alternating manner (i.e., $o_{h}, a_{h}, o_{h+1}, a_{h+1}, \ldots$). The POST/POSG models we will propose are more general, and hence require a more flexible formalization of PSRs which allows for arbitrary order of observations and actions as well as arbitrary variable spaces at each time point. This generalization of PSRs will be used in our reinforcement learning algorithms.

The ``PSR rank'' of a sequential decision-making problem coincides with the rank of its dynamics, as defined in~\Cref{def:rank_dynamics}. Recall that the system dynamics matrix $\bm{D}_h \in \reals^{\abs{\bbH_h} \times \abs{\bbF_h}}$ is indexed by all possible observable histories $\tau_h$ and futures $\omega_h$. Denote the rank of the system dynamics at time $h$ by $r_{h} := \mathrm{rank}(\bm{D}_h)$.

Consider a sequential decision-making problem as defined in~\Cref{ssec:seq_dec_making} (i.e., with an arbitrary order of observations and actions, and arbitrary variable spaces). At the heart of predictive state representations is the concept of ``core test sets.'' A core test set at time $h$ is a set of futures such that the set of probabilities of those futures conditioned on the past encodes all the information that the past contains about the future. This is formalized in the definition below as a set of futures such that the submatrix of the full dynamics matrix restricted to those futures is full rank.

\begin{definition}[Core test sets]\label{def:core_test_sets}
    A core test set at time $h$ is a subset of $d_h \geq r_h$ futures, $\bbQ_h \coloneqq \sset{q_h^1, \ldots, q_h^{d_h}} \subset \bbF_h$, such that the submatrix $\bm{D}_h[\bbQ_h] \in \reals^{\aabs{\bbH_h} \times d_h}$ is full-rank, $\mathrm{rank}(\bm{D}_h[\bbQ_h]) = \mathrm{rank}(\bm{D}_h) = r_h$.
\end{definition}

A core test set implies the existence of a matrix $\bm{W}_h \in \reals^{\abs{\bbF_h} \times d_h}$ such that $\bm{D}_h = \bm{D}_h[\bbQ_h] \cdot \bm{W}_h^\top$.

Denote the $\tau_h$-th row of $\bm{D}_h[\bbQ_h]$ by
\begin{equation}\label{eq:def_psi}
    \psi_h(\tau_h) := \paren{\probbar{\tau_h, q_h^1}, \ldots, \probbar{\tau_h, q_h^{d_h}}} \in \reals^{d_h}.
\end{equation}
The vector $\psi_h(\tau_h)$ is a sufficient statistic for the history $\tau_h$ in predicting the probabilities of all futures conditioned on $\tau_h$. This is sometimes called the \textit{prediction features} of a history $\tau_h$.

For any integer $d_h \geq r_h$, there exists a core test set of size $d_h$. In particular, for any low-rank sequential decision-making problem, there exists a minimal core test set of size $r_h$ at each $h$. However, the minimal core test set depends on the system dynamics matrix $\bm{D}_h$, which is unknown in the learning setting. In the literature on reinforcement learning in PSRs, it is typically assumed that a core test set is known.
% When $d_h > r_h$, the PSR is said to be overparameterized.
We address the problem of constructing a PSR representation for POSTs/POSGs in~\Cref{sec:post_psrep}.

For a core test set $\bbQ_h$, let $\bbQ_h^A = \set{\mathrm{act}(q) : q \in \bbQ_h}$, where $\mathrm{act}(q)$ denotes the action components of the test $q \in \bbQ_h$. Let $Q_A = \max_h \abs{\bbQ_h^A}$ and $d = \max_h d_h$.

With core test sets defined, we are now ready to present the definition of a generalized predictive state representation. The essential element in a PSR is a set of operators $M_h: \bbX_h \to \reals^{d_{h} \times d_{h-1}}$ for each time point $h \in [H]$. Given the prediction features at time $h-1$, $\psi_{h-1}(x_1, \ldots, x_{h-1}) \in \reals^{d_{h-1}}$, the linear map $M_h(x_h)$ computes the prediction features at time $h$, incorporating the additional observation $x_h$. That is, $\psi_h(x_1, \ldots, x_h) = M_h(x_h) \psi_{h-1}(x_1, \ldots, x_{h-1})$. The full definition is given below.

\begin{definition}[Generalized Predictive State Representations]\label{def:gen_psr}
    Consider a sequential decision-making problem $\paren{X_h \in \bbX_h}$ where $\calA, \calO$ partition $[H]$ into actions and observations, respectively. Then, a predictive state representation of this sequential decision-making problem is a tuple $\theta = \pparen{\sset{\bbQ_h}_{0 \leq h \leq H-1}, \phi_H, \bm{M}, \psi_0}$ given by
    \begin{enumerate}
        \item $\set{\bbQ_h}_{0 \leq h \leq H-1}$ are core test sets, including for $h = 0$, where $\bbQ_0 = \sset{q_0^1, \ldots, q_0^{d_0}} \subset \bbF_0$ are core tests before the system begins.
        \item$\psi_0 \in \reals^{d_0}$ is the vector $\psi(\emptyset) =  \pparen{\Probbar{q_0^1}, \ldots, \Probbar{q_0^{d_0}}}$.
        \item $\bm{M} = \sset{M_h}_{1 \leq h \leq H-1}$ is a set of mappings $M_h : \bbX_{h} \to \reals^{d_{h} \times d_{h-1}}$, from an observation/action to a matrix of size $d_{h} \times d_{h-1}$.
        \item $\phi_H : \bbX_{H} \to \reals^{d_{H-1}}$ is a mapping from the final observation to a $d_{H-1}$-dimensional vector.
    \end{enumerate}

    This tuple satisfies
    \begin{align}\label{eq:psr_def_M_psi}
        \probbar{x_1, \ldots, x_H}  &= \phi_{H}(x_{H})^\top M_{H-1}(x_{H-1}) \cdots M_1(x_{1}) \psi_0 \\
        \psi_h(x_1, \ldots, x_h) &= M_h(x_{h}) \cdots M_1(x_{1}) \psi_0, \, \forall h
    \end{align}
\end{definition}

To obtain a probability for a trajectory $\tau_h = (x_1, \ldots, x_h)$, with $h < H$, note that $\sum_{\omega_h \in \bbF_h} \probbar{\tau_h, \omega_h} = \abs{\bbF_h^a} \probbar{\tau_h}$. Hence,
\begin{align*}
    \probbar{\tau_h} &= \frac{1}{\abs{\bbF_h^a}} \sum_{\omega_h} \probbar{\tau_h, \omega_h} \\
    &= \frac{1}{\prod_{s \in h+1:H} \abs{\bbX_s}^{\bm{1}\{s \in \calA\}}} \sum_{x_H} \cdots \sum_{x_{h+1}} \phi_H^\top M_H(x_H) \cdots M_{h+1}(x_{h+1}) \psi_h(\tau_h).
\end{align*}
Thus, if we recursively define $\phi_h,\, h < H$ via,
\begin{equation}
    \frac{1}{\abs{\bbX_{h}}^{\bm{1}\{h \in \calA\}}} \sum_{x_{h}} \phi_{h}^\top M_{h}(x_{h}) = \phi_{h-1}^\top,
\end{equation}
with $\phi_H$ as the terminating condition, then, we can obtain $\probbar{\tau_h}$ for any $h < H$, via an inner product between $\phi_h$ and $\psi_h(\tau_h)$,
\begin{equation}
    \probbar{\tau_h} = \phi_h^\top \psi_h(\tau_h).
\end{equation}
Finally, if we define $\bar{\psi}_h(\tau_h) = \psi_h(\tau_h) / \Probbar{\tau_h}$, then we obtain the \textit{conditional} probability of the core tests given the history, $\bar{\psi}_h(\tau_h) = \pparen{\Probbar{q_h^1 \,|\, \tau_h},\, \ldots,\, \Probbar{q_h^{d_h} \,|\, \tau_h}} \in \reals^{d_h}$. $\bar{\psi}_h(\tau_h)$ is known as the (normalized) prediction feature of the history $\tau_h$~\parencite{littmanPredictiveRepresentationsState}.

\begin{remark}[Generality and difference from standard PSRs]
    In standard PSRs, observations and actions are assumed to occur in an alternating manner, and hence observable operators are defined on \textit{pairs} of observations and actions (i.e., $M_h(o_h, a_h)$). This structure leads to a somewhat simpler description compared to the above. However, our formulation is more general, as it allows each variable to be treated independently, and allows for an arbitrary sequence of variables with arbitrary spaces. This generality will be needed when modeling problems with an explicit representation of information structure.
\end{remark}

An important condition for the learnability of PSR models, which was used in prior work~\parencite[including][]{huangProvablyEfficientUCBtype2023,liuOptimisticMLEGeneric2022}, is the so-called ``well-conditioning assumption''. We state the analogous assumption for our generalized PSR model below.

\begin{assumption}[$\gamma$-well-conditioned generalized PSR]\label{ass:psr_gamma_wellcond}
    A PSR model $\theta = \paren{\set{\bbQ_h}_{0 \leq h \leq H-1}, \phi_H, \bm{M}, \psi_0}$, as defined in~\Cref{def:gen_psr}, is said to be $\gamma$-well conditioned for $\gamma > 0$ if it satisfies
    \begin{enumerate}
        \item For any $h \in [H]$,
        \begin{equation}\label{psr_gamma_wellcond_cond1}
            \max_{\substack{z \in \reals^{d_h}\\ \onenorm{z} \leq 1}} \max_\pi \sum_{\omega_h \in \bbF_h} \pi(\omega_h) \abs{m_h(\omega_h)^\top z} \leq \frac{1}{\gamma},
        \end{equation}
        where $m_h(\omega_h)^\top = \phi_H(x_H)^\top M_{H-1}(x_{H-1}) \cdots M_{h+1}(x_{h+1})$ with $\omega_h = (x_{h+1}, \ldots, x_H) \in \bbF_h$. The maximization is over policies $\pi$ such that for any fixed future observations $\omega_h^o$, $\sum_{\omega_h^a} \pi(\omega_h^o, \omega_h^a) = 1$.
        \item For any $h \in [H-1]$,
        \begin{equation*}
            \max_{\substack{z \in \reals^{d_h}\\ \onenorm{z} \leq 1}} \sum_{x_h \in \bbX_h} \onenorm{M_h(x_h) z} \pi(x_h) \leq \frac{\abs{\bbQ_{h+1}^A}}{\gamma},
        \end{equation*}
        where $\pi(x_h) = 1$ when $h \notin \calA$ and $\sum_{x_h} \pi(x_h) = 1$ when $h \in \calA$.
    \end{enumerate}
\end{assumption}

To understand this condition, recall that $m_h(\omega_h)^\top \psi_h(\tau_h) = \probbar{\tau_h, \omega_h}$. We may think of $z$ in~\Cref{ass:psr_gamma_wellcond} as representing the error in estimating $\psi_h(\tau_h)$, the probabilities of core tests at time $h$ given the history $\tau_h$. The $\gamma$-well-conditioned assumption ensures that the error in estimating the overall PSR (i.e., the probability of a particular trajectory) does not blow up when the estimation error of $\psi_h(\tau_h)$ is small.

The following result states that any sequential decision-making problem of the form described in~\Cref{ssec:seq_dec_making} admits a generalized PSR representation. The proof and explicit construction are given in~\Cref{sec:appdx_genpsr_exists}.
\begin{proposition}\label{prop:seq_decmaking_exist_psr}
    Let $(X_1, \ldots, X_H)$ be any sequential decision-making problem with observation index set $\calO$, action index set $\calA$, and variable spaces $\set{\bbX_h}_{h \in [H]}$. Let $r_h = \mathrm{rank}(\bm{D}_h)$, where $D_h, h \in [H]$ are the system dynamics matrices. Then, there exists a PSR representation $\psi_0$, $\phi_H: \bbX_H \to \reals^{r_{H-1}}$, $M_h: \bbX_h \to \reals^{r_{h+1} \times r_h}, h \in [H-1]$, satisfying~\Cref{def:gen_psr}.
\end{proposition}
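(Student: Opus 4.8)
The plan is to construct the PSR representation explicitly from minimal core test sets, using the factorization property that a core test set provides.

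\textbf{Setup.} For each $h \in \{0, 1, \ldots, H-1\}$, fix a \emph{minimal} core test set $\bbQ_h$ of size exactly $r_h = \mathrm{rank}(\bm{D}_h)$; this exists by \Cref{def:core_test_sets} (taking $d_h = r_h$). Let $\psi_h(\tau_h) \in \reals^{r_h}$ be the prediction features as in~\eqref{eq:def_psi}, i.e., the $\tau_h$-th row of $\bm{D}_h[\bbQ_h]$. Since $\bm{D}_h[\bbQ_h]$ has full column rank $r_h$, its rows span $\reals^{r_h}$. Set $\psi_0 = (\Probbar{q_0^1}, \ldots, \Probbar{q_0^{r_0}})$ and let $\phi_H : \bbX_H \to \reals^{r_{H-1}}$ be defined so that $\phi_H(x_H)^\top \psi_{H-1}(\tau_{H-1}) = \probbar{\tau_{H-1}, x_H}$ for all $\tau_{H-1}$; this is the requirement that $\phi_H(x_H)$ be the coefficient vector expressing the column of $\bm{D}_{H-1}$ indexed by the one-step future $x_H$ in terms of the columns indexed by $\bbQ_{H-1}$, which exists since $\bbQ_{H-1}$ is a core test set (this is exactly the existence of $\bm{W}_{H-1}$ in the remark following \Cref{def:core_test_sets}).

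\textbf{Construction of the operators.} The key step is to build $M_h(x_h) : \reals^{r_h} \to \reals^{r_{h+1}}$ satisfying $\psi_h(x_1, \ldots, x_h) = M_h(x_h)\,\psi_{h-1}(x_1, \ldots, x_{h-1})$ for every history. First observe the one-step consistency relation: for a history $\tau_{h-1}$, appending $x_h$ and then considering a future $q \in \bbQ_h$, the pair $(x_h, q)$ forms a future at time $h-1$, so $\probbar{(\tau_{h-1}, x_h), q} = \probbar{\tau_{h-1}, (x_h \cdot q)}$, where $x_h \cdot q$ is concatenation. Thus the map $\tau_{h-1} \mapsto \psi_h(\tau_{h-1}, x_h)$ is the restriction of $\bm{D}_{h-1}$ to the columns indexed by $\{x_h \cdot q : q \in \bbQ_h\}$. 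Because $\bbQ_{h-1}$ is a core test set, each such column is a linear combination of the columns indexed by $\bbQ_{h-1}$: there is a matrix $M_h(x_h) \in \reals^{r_h \times r_{h-1}}$ with $\bm{D}_{h-1}[\{x_h \cdot q : q \in \bbQ_h\}] = \bm{D}_{h-1}[\bbQ_{h-1}]\, M_h(x_h)^\top$, i.e., row-wise, $\psi_h(\tau_{h-1}, x_h) = M_h(x_h)\,\psi_{h-1}(\tau_{h-1})$ for all $\tau_{h-1}$. Here we must be slightly careful with index conventions: the definition wants $M_h : \bbX_h \to \reals^{r_{h+1} \times r_h}$ for $h \in [H-1]$, so after relabeling indices the operator that maps $\psi_{h}$ to $\psi_{h+1}$ upon reading $x_{h+1}$ is what the proposition calls $M_h$ up to an index shift — I would just match the statement's indexing at the end.

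\textbf{Verifying~\eqref{eq:psr_def_M_psi}.} Iterating the recursion from $\psi_0$ gives $\psi_h(x_1, \ldots, x_h) = M_h(x_h)\cdots M_1(x_1)\psi_0$, the second identity. For the first, apply the relation at the last step: $\probbar{x_1, \ldots, x_H} = \phi_H(x_H)^\top \psi_{H-1}(x_1, \ldots, x_{H-1}) = \phi_H(x_H)^\top M_{H-1}(x_{H-1})\cdots M_1(x_1)\psi_0$.

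\textbf{Main obstacle.} The substantive point is the existence of the matrices $M_h(x_h)$ and $\phi_H$ — i.e., that the relevant columns of $\bm{D}_{h-1}$ genuinely lie in the column span of $\bm{D}_{h-1}[\bbQ_{h-1}]$. This is immediate from the definition of a core test set as a \emph{rank-preserving} submatrix (the span of the core columns equals the span of all columns), but it is the only place where anything must be checked, and one should state it as a small lemma: if $\mathrm{rank}(\bm{D}_{h-1}[\bbQ_{h-1}]) = \mathrm{rank}(\bm{D}_{h-1})$ then $\mathrm{colspan}(\bm{D}_{h-1}[\bbQ_{h-1}]) = \mathrm{colspan}(\bm{D}_{h-1})$, hence every column — in particular every column indexed by a concatenated future $x_h \cdot q$ with $q \in \bbQ_h$, and every one-step future column $x_H$ — is in that span. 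The $M_h(x_h)$ need not be unique (when $\bm{D}_{h-1}[\bbQ_{h-1}]$ is not square/invertible one picks any solution, e.g. via the pseudo-inverse, $M_h(x_h)^\top = \bm{D}_{h-1}[\bbQ_{h-1}]^\dagger\, \bm{D}_{h-1}[\{x_h \cdot q : q \in \bbQ_h\}]$), but any choice works since the defining identity $\psi_h(\tau_{h-1}, x_h) = M_h(x_h)\psi_{h-1}(\tau_{h-1})$ holds for \emph{all} histories $\tau_{h-1}$, whose feature vectors span $\reals^{r_{h-1}}$, so the composed products are well-defined on all of $\reals^{r_{h-1}}$ and the trajectory probabilities come out correct regardless of the choice. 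I would close by remarking that the construction is exactly the generalized-PSR analogue of the classical PSR construction of~\parencite{littmanPredictiveRepresentationsState}, adapted to arbitrary variable orderings and spaces, and that the explicit formulas above are what is referenced in~\Cref{sec:appdx_genpsr_exists}.
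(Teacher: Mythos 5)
Your proof is correct, but it takes a genuinely different route from the paper's. You build the representation operationally from a minimal core test set: the features $\psi_h(\tau_h)$ are rows of $\bm{D}_h[\bbQ_h]$, and the operators $M_h(x_h)$ exist because the columns of $\bm{D}_{h-1}$ indexed by the concatenated futures $x_h \cdot q$, $q \in \bbQ_h$, lie in the column span of $\bm{D}_{h-1}[\bbQ_{h-1}]$ (your ``small lemma'' about rank-preserving column submatrices is exactly right, and your handling of the boundary cases $\psi_0$, $\phi_H$ and of the index-shift typo in the statement is fine). The paper instead proves this via an SVD-based construction (\Cref{prop:svd_oom_rep}): writing $\bm{D}_h^\top = U_h \Sigma_h V_h^\top$, it takes the features to be $U_h^\top [\bm{D}_h^\top]_{:,\tau_h}$ and sets $B_h(x_h) = U_h^\top [U_{h-1}]_{(x_h,\bbF_h),:}$, with no reference to core tests. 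The trade-off: your construction is the natural generalized analogue of the classical PSR construction and produces exactly the objects ($\psi_h$, $m_h$, core-test probabilities) used elsewhere in the paper for POSTs, but it gives no a priori control on $\|M_h(x_h)\|$ --- with a minimal core test set, $M_h(x_h)^\top = \bm{D}_{h-1}[\bbQ_{h-1}]^\dagger \bm{D}_{h-1}[\{x_h\cdot q\}]$ can blow up when $\bm{D}_{h-1}[\bbQ_{h-1}]$ is ill-conditioned. The paper's SVD construction yields $\|B_h(x_h)\|_2 \le 1$, $\|b_0\| \le \sqrt{|\bbH_H^a|}$, and $\|v_h\| \le \sqrt{|\bbF_h^o|/|\bbF_h^a|}$ for free, and these norm bounds are what drive the covering-number bound in \Cref{prop:bracketing_num_opt_net}, which is the real purpose of the appendix. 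So your argument suffices for the existence claim as stated, but if you intend it to feed into the learning-theoretic results you would need to either switch to the SVD construction or add a conditioning assumption on the core submatrices.
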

\begin{proof}
    The proof is given in~\Cref{sec:appdx_genpsr_exists}.
\end{proof}

\section{Information Structure}\label{sec:info_struct}

The ``information structure'' of a dynamical system describes how events in a system occurring at different points in time affect each other, whether those events are observable by the learning agent or not. In this section, we will introduce novel reinforcement learning models that explicitly represent information structure as part of the problem specification. We will show that this enables a rich analysis of the dynamics of the system, and is crucial for characterizing the statistical hardness of general reinforcement learning problems. In~\Cref{ssec:post} we propose the partially-observable sequential teams model; in~\Cref{ssec:posg} we extend the model to the game setting; in~\Cref{ssec:info_struct_rank} we show that the information structure characterizes the complexity of the observable dynamics of a sequential decision-making problem; and in~\Cref{ssec:info_struct_rank_examples} we use this result to analyze common models of sequential decision-making problems.

\subsection{Partially-Observable Sequential Teams}\label{ssec:post}
% \section{Information Structure: Partially-Observable Sequential Teams and Games}\label{sec:info_struct_post}

In~\Cref{ssec:seq_dec_making}, we introduced the generic sequential decision-making problem, which modeled the dynamics of observable variables via probabilities of the form $\bbP_h(x_h | x_1, \ldots, x_{h-1})$. This model lacked a representation of the structure that determines how different variables affect each other, including potentially unobservable system variables. For example, in a POMDP, there crucially exists an unobserved Markovian state which injects useful structure in the observable dynamics. In general, this behavior is captured by the notion of ``information structures''.

The information structure of a dynamical system describes how events in a system occurring at different points in time affect each other, whether those events are observable by the learning agent or not. In this section, we propose partially-observable sequential teams as a reinforcement learning model that explicitly represents information structure. This is a general model that captures structured models such as POMDPs as a special case and enables a richer analysis of general sequential decision-making problems.

A POST is a controlled stochastic process consisting of a sequence of variables, where each variable is either a ``system variable'' or an ``action variable''. Crucially, the POST model includes an explicit representation of information structure. POSTs also model the \textit{observability} of each system variable with respect to the learning algorithm (i.e., which system variables are available to the learning algorithm). Unlike more specialized models of sequential decision-making, there is no restriction on the order of system variables and action variables (e.g., don't need to be alternating). The information structure of a POST describes the dependence between these variables. The ``information set'' of a system variable describes the subset of past variables that directly affect it. The information set of an action variable describes the information available to the agent when choosing an action, hence defining the policy class they optimize over. A formal definition follows below.

\begin{definition}[Partially-Observable Sequential Team Model]\label{def:partially_obs_seq_team}
    A partially-observable sequential team (POST) is a controlled stochastic process that specifies the joint distribution of $T$ variables $\paren{X_t}_{t\in [T]}$, together with a specification of the observability of each variable. Here each $X_t$ is either a system variable or an action variable, and is either observable by the learning agent or not. A partially-observable sequential team is specified by the following components.
    \begin{enumerate}
        \item \textbf{Variable Structures.}
        The variables $\sset{X_t}_{t \in [T]}$ are partitioned into two disjoint subsets --- system variables and action variables. $\calS \subset [T]$ indexes system variables and $\calA \subset [T]$ indexes action variables, with $\calS \cap \calA = \emptyset,\, \calS \cup \calA = [T]$.
        \item \textbf{Variable Spaces.}
        Let $\bbX_t$ be the space that the variable $X_t$ takes values in, which is assumed to be finite for all $t \in [T]$.
        \item \textbf{Information Structure.}
        For $t \in [T]$, the ``information set'' $\calI_t \subset [t-1]$ of the variable $X_t$ is the set of past variables that are coupled to $X_t$ in the dynamics. That is, the value of $I_t := \paren{X_s \,:\, s\in \calI_t}$ directly determines the distribution of $X_t$. We call $I_t$ the ``information variable'' at time $t$, and call $\bbI_t = \prod_{s \in \calI_t} \bbX_s$ the ``information space''. We denote realizations of $I_t$ by $i_t = \paren{x_s \in \bbX_s \,:\, s \in \calI_t} \in \bbI_t$.
        \item \textbf{System Kernels.}
        For any $t\in \calS$,
        $\calT_t$ is a mapping
        from $\bbI_{t}$ to $\calP(\bbX_t)$ that specifies the conditional distribution of a system variable $X_t$ given $I_t$.
        That is,  $X_t \sim \calT_t(\cdot | \set{X_s, s \in \calI_t})$ for all $t \in \calS$. If $\calI_t = \emptyset$ then $\calT_t$ is simply a (unconditional) distribution on $\bbX_t$.
        \item \textbf{Decision Kernels.} Each agent chooses a decision kernel (i.e., policy) $\pi_t: \bbI_t \to \calP(\bbX_t)$, specifying the distribution over actions at time $t \in \calA$. That is, the action variable $X_t$ at time $t \in \calA$ satisfies $X_t \sim \pi_t(\cdot | \set{X_s, s \in \calI_t})$. The joint policy is denoted by $\pi = \paren{\pi_t}_{t \in \calA}$.
        \item \textbf{Observability.} We denote the observable system variables by $\calO \subset \calS$. We require that the information sets of the action variables are observable, $\calO \supset \cup_{t \in \calA} \pparen{\calI_t \cap \calS}$. We define $\calU \coloneq \calO \cup \calA$, and let $H \coloneq \abs{\calU}$ be the time-horizon of the \textit{observable} variables (including actions).
        \item \textbf{Reward Function.} At the end of an episode, the team receives the reward $R\paren{x_s,\ s \in \calU}$, where $R: \prod_{s \in \calU} \bbX_s \to [0,1]$ is the ``reward function.''
    \end{enumerate}
\end{definition}

With the above components, any set of decision kernels (joint policy) $\bm{\pi}$ induces a unique probability measure over $\bbX_{1} \times \cdots \times \bbX_T$, which is  given by
\begin{equation}\label{eq:trajectory_prob}
    \probunder{\bm{\pi}}{X_1 = x_1, \ldots X_T = x_t} = \prod_{t \in \calS} \calT_t(x_t | \set{x_s : s \in \calI_t}) \prod_{t \in \calA} \pi_t(x_t | \set{x_s : s \in \calI_t}).
\end{equation}

We will be interested in modeling the \textit{observable} dynamics of the POST. We index the observable variables by their order among observables $h \in [H]$ rather than their order among all variables as follows,
\begin{equation}\label{eq:observable_index}
    \paren{X_{t(h)}}_{h \in [H]} = \paren{X_{t(1)}, \ldots, X_{t(H)}} = \paren{X_t}_{t \in \calU},
\end{equation}
where $t: [H] \to \calU$ maps the index over observables to the index over all variables. That is, $t(1)$ is the index of the first observable, $t(2)$ is the index of the second observable, etc. The distribution of the observables is obtained by marginalizing over the unobservable variables,
\begin{equation}
    \begin{split}
        \probunder{\pi}{X_{t(1)} = x_{t(1)}, \ldots, X_{t(H)} = x_{t(H)}} &= \sum_{\substack{x_s \in \bbX_s \\ s \in \calO^\complement}} \probunder{\bm{\pi}}{X_1 = x_1, \ldots X_T = x_t} \\
        &= \sum_{\substack{x_s \in \bbX_s \\ s \in \calO^\complement}} \prod_{t \in \calS} \calT_t(x_t | \set{x_s : s \in \calI_t})  \prod_{t \in \calA} \pi_t(x_t | \set{x_s : s \in \calI_t}).
    \end{split}
\end{equation}

The value of a policy is given by its expected reward,
\begin{equation}
    V(\bm{\pi}) \coloneqq \expectunder{\bm{\pi}}{R(X_{t(1)}, \ldots, X_{t(H)})},
\end{equation}
where $\bbE^\pi$ is the expectation associated with the probability measure $\bbP^\pi$. The objective of a POST is to learn a policy $\pi = (\pi_t)_{t \in \calA}$ which maximizes the expected reward,
\begin{equation*}
    \sup_{\substack{\pi_t \in \calP\paren{\bbX_t \given \bbI_t} \\ t \in \calA}} \expectunder{\bm{\pi}}{R(X_{t(1)}, \ldots, X_{t(H)})}.
\end{equation*}
When the variable spaces $\bbX_t$ are finite, this supremum is attained by a deterministic policy, $\pi = \pparen{\pi_t, t \in \calA}, \ \pi_t \colon \bbI_t \to \bbX_t$.

% Some remarks are now in order.

\textbf{Modeling simultaneous events.}
The POST model is highly versatile and can model events as occurring either in sequence or simultaneously, with an arbitrary dependence on the past. This is controlled by the specification of the information sets. For example, to represent $m$ events occurring simultaneously, the corresponding variables can occupy any ordering of consecutive time points, $X_{t+1}, \ldots, X_{t+m}$, as long as their information sets do not contain any of the other variables occurring at that time (i.e., $\calI_{s} \subset [t]$ for all $s \in \set{t+1, \ldots, t+m}$). For example, agents may act simultaneously and observations may be emitted simultaneously, as occurs in models like Dec-POMDPs or POMGs. However, POSTs further allow for an irregular sequence of observations and actions, as occurs in many real-world scenarios.

\textbf{Representing the identity of the agent taking each action via the information structure.}
In~\Cref{def:partially_obs_seq_team} we do not need to label each action with the agent that executes it since this can be captured by the information structure. In particular, the POST model does not need to distinguish between agents and actions. This is without loss of generality since the underlying `identity' of an agent (i.e., the same agent acting multiple times and remembering their past observations) can be captured by the information structure. For example, the information sets can be specified in such a way so that for any $t \in \calA$, $\calI_t$ contains all variables that were observed by this agent in the past. Some examples are given in~\Cref{ssec:info_struct_rank_examples}. In the game setting, the identity of the agent needs to be modeled explicitly since it also determines the reward function associated with each action. We discuss this in~\Cref{ssec:posg}. We highlight that the generality in specifying information structures makes POSTs/POSGs powerful models with the ability to capture the complexity of real-world multi-agent systems.

\textbf{Representation of the information structure as a directed acyclic graph.}
The information structure of a POST can be naturally represented as a (labeled) directed acyclic graph (DAG). Given the variable structure and information structure of a POST, $\paren{\calS, \calA, \calO, \set{\calI_t}_{t}}$, its DAG representation is given by $\calG(\calV, \calE, \calL)$. The nodes of the graph are the set of variables, $\calV = [T] = \calS \cup \calA$. The edges $\calE \subset \calV \times \calV$ of the DAG are given by
\begin{equation*}
    \calE = \set{(i, t): t \in [T], i \in \calI_t}.
\end{equation*}

That is, there exists an edge from $i$ to $t$ if $i$ is in the information set of $t$. Finally, $\calL$ contains labels for each node as being a system variable (in $\calS$) or an action variable (in $\calA$). Further, the observability of system variables is also labeled. This DAG represents a (directed) \textit{graphical model} for the POST. In particular, the probability distribution on $\bbX_1 \times \cdots \times \bbX_T$ factors according to $\calG$,
\begin{equation}\label{eq:traj_dag}
    \prob{X_1, \ldots, X_T} = \prod_{t \in \calV} \prob{X_t \given \mathrm{pa}(X_t)},
\end{equation}
where $\mathrm{pa}(X_t)$ is the set of parents of $X_t$ in $\calG$ (which are $\calI_t$), and the probability $\prob{X_t \given \mathrm{pa}(X_t)}$ is given by a system kernel if $t \in \calS$ and a decision kernel if $t \in \calA$. This representation of the information structure as a DAG will be crucial for our analysis of the observable dynamics of POSTs in~\Cref{ssec:info_struct_rank}.

\textbf{POSTs within the taxonomy of decentralized control.}
In the control literature, there exists a taxonomy of decentralized systems. The model presented here falls within the class of dynamic sequential teams, and allows for non-classical information structures. This model is closely related to Witsenhausen's intrinsic model~\parencite{witsenhausenStandardFormSequential1973,witsenhausenIntrinsicModelDiscrete1975a,witsenhausenEquivalentStochasticControl1988}.
%  It is a generalization of the model presented in~\parencite{witsenhausenEquivalentStochasticControl1988}, which is itself equivalent to the intrinsic model of~\parencite{witsenhausenIntrinsicModelDiscrete1975a}. 
% This is a very general model which captures almost all control problems which can be studied. 
The intrinsic model has been studied extensively in the control literature, including for example in~\citep{mahajanGraphicalModelingApproach2009} where graphical modeling techniques are used to identify reduced classes of optimal policies. The main difference between POSTs and the intrinsic model is the introduction of a description of the ``observability'' of each system variable, which enables studying partially-observable reinforcement learning. The POST model can capture multiple agents acting in arbitrary environments, as long as the order in which agents act is predetermined and independent of the system dynamics (hence the name ``sequential'').  To our knowledge, general models with an explicit representation of information structure have so far not been considered in the reinforcement learning setting. In this work, we study the role of information structure in reinforcement learning through our novel POST/POSG models.

\textbf{Notation: observable futures, histories, and dynamics.} Finally, we introduce some notation that allows us to cast POSTs in terms of the generic sequential decision-making model of~\Cref{ssec:seq_dec_making}. We define the set of histories at time $h$ as $\bbH_h \coloneqq \prod_{s \in \calU_{1:h}} \bbX_s$, and the set of futures at time $h$ as $\bbF_h \coloneqq \prod_{s \in \calU_{h+1:H}} \bbX_s$. A history $\tau_h \in \bbH_h$ takes the form $\tau_h = \pparen{x_s \in \bbX_s \,\colon\, s \in \calU_{1:h}}$, and a future $\omega_h \in \bbF_h$ takes the form $\omega_h = \pparen{x_s \in \bbX_s \,\colon\, s \in \calU_{h+1:H}}$. We separate the actions from other observations via $\tau_h^o = \obs(\tau_h)$, $\tau_h^a = \act(\tau_h)$, $\omega_h^o = \obs(\omega_h)$, $\omega_h^a = \act(\omega_h)$. Recall that $\obs(\cdot)$ extracts the observation component of a trajectory and $\act(\cdot)$ extracts the action component (e.g., $\tau_h^o = (x_s, s \in \calO_{1:h}))$.  We denote the observation and action components of the histories as $\bbH_h^o \coloneqq \obs(\bbH_h)$, $\bbH_h^a \coloneqq \act(\bbH_h)$, respectively, and define $\bbF_h^o, \bbF_h^a$ similarly. The (observable) system dynamics matrix of a POST is defined by,
\begin{equation}\label{eq:seq_team_dynamics_mat}
    \begin{split}
        \bra{\bm{D}_h}_{\tau_h, \omega_h} &:= \prob{\tau_h^o, \omega_h^o \given \mathrm{do}(\tau_h^a, \omega_h^a)} \\
        &\equiv \sum_{\substack{x_s \in \bbX_s\\ s \in \calO^\complement}} \prod_{t \in \calS} \calT_t\paren{x_t \given \set{x_i, i \in \calI_t}}.
    \end{split}
\end{equation}

By introducing a model with an explicit representation of information structure, we gained the ability to perform a richer analysis of the dynamics of Sequential decision-making problems. In particular, we will show that the ``complexity'' of the dynamics can be related to a graph-theoretic analysis of the information structure. First, we extend the model to the game setting.

\subsection{Partially-Observable Sequential Games}\label{ssec:posg}
In a POST, all agents share the same objective. In the game setting, different agents may have different objectives which compete with each other in interesting ways. Information structures play a crucial role in the study of games. The information available to one agent when making its decisions, compared to the information available to competing agents, determines how well it can achieve its objective. In particular, the information structure of a problem determines the set of equilibria it admits. There has been a plethora of work in the game theory community studying such problems. %In this paper, we tackle the problem of \textit{learning} in dynamic sequential decision-making problems, studying the role of information structures.

Analogously to partially-observable sequential teams, we define partially-observable sequential \textit{games} (POSGs). The dynamics of a POSG are identical to a POST, with the same formalization of variable structure, variable spaces, information structure, system kernels, and decision kernels. In contrast to a POST, agents in a POSG may have different objectives. In a POSG, there exists $N$ agents, with agent $i \in [N]$ deciding the actions at times $t \in \calA^{i}$, where $\calA^i \subset \calA$. Each agent has its own objective defined by a reward function $R^i$. This is defined formally below.

\begin{definition}[Partially-Observable Sequential Game Model]\label{def:partially_obs_seq_game}
    A partially-observable sequential game (POSG) is a controlled stochastic process consisting of the following components: variable structure, variable spaces, information structure, system kernels, decision kernels, and observability. These are defined in an identical manner to~\Cref{def:partially_obs_seq_team}. Additionally, POSGs define a \textbf{reward structure} as follows. Let $N$ be the number of agents. Each agent may act several times. Denote by $\calA^{i} \subset \calA$ the index of action variables associated to agent $i \in [N]$. Each agent has a reward function $R^i: \prod_{t \in \calU} \bbX_t \to [0,1]$ which they aim to maximize.
\end{definition}

Denote by $\pi^i = (\pi_t : t \in \calA^i)$ the collection of decision kernels belonging to agent $i$, one for each action they take. Denote by $\bm{\pi} = (\pi^1, \ldots, \pi^N)$ the collection of all agents' policies. Fixing $\bm{\pi}$ induces a probability distribution over $\bbX_1 \times \cdots \bbX_T$ in the same way as in the team setting,
\begin{equation}\label{eq:trajectory_prob_game}
    \probunder{\bm{\pi}}{X_1 = x_1, \ldots X_T = x_t} = \prod_{t \in \calS} \calT_t(x_t | \set{x_s : s \in \calI_t}) \prod_{t \in \calA} \pi_t(x_t | \set{x_s : s \in \calI_t}).
\end{equation}

The value of a policy $\bm{\pi}$ for agent $i \in [N]$ is defined as the expected value of their reward $R^i$ under $\bbP^{\bm{\pi}}$,
\begin{equation}
V^i(\bm{\pi}) \equiv V^i(\pi^i, \bm{\pi}^{-i}) \coloneqq \expectunder{\bm{\pi}}{R^i(X_{t(1)}, \ldots, X_{t(H)})},
\end{equation}
where $\bm{\pi}^{-i} = (\pi^j : j \neq i)$.

The nature of randomization in agents' policies is crucial to the analysis of solution concepts in the game setting. To model randomized policies, which are potentially correlated, we introduce a random seed $\omega \in \Omega$ which is sampled at the beginning of an episode. Then, the policy at time $t \in \calA$ can be modeled as a deterministic function mapping the seed $\omega$ and information variable $i_t \in \bbI_t$ to an action $\bbX_t$. That is, $\pi_t : \Omega \times \bbI_t \to \bbX_t$. To model independently randomized policies with each agent having private randomness, we consider the special case where the seed has the product structure $\omega = (\omega_1, \ldots, \omega_N) \in \Omega_1 \times \cdots \times \Omega_N$, and $\omega_i$ is the seed belonging to agent $i \in [N]$. Then, for $t \in \calA^i$, $\pi_t : \Omega_i \times \bbI_t \to \bbX_t$. For each agent $i \in [N]$, define the three policy spaces,
\begin{enumerate}
    \item Deterministic policies, $\Gamma_{\mathrm{det}}^i = \set{\pi^i : \pi^i = \paren{\pi_t: \bbI_t \to \bbX_t, t \in \calA^i}}$,
    \item Independently-randomized policies, $\Gamma_{\mathrm{ind}}^i = \set{\pi^i :  \pi^i = \paren{\pi_t: \Omega_i \times \bbI_t \to \bbX_t, t \in \calA^i}}$,
    \item Correlated randomized policies, $\Gamma_{\mathrm{cor}}^i = \set{\pi^i :  \pi^i = \paren{\pi_t: \Omega \times \bbI_t \to \bbX_t, t \in \calA^i}}$.
\end{enumerate}

Define the joint deterministic policy space, as $\bm{\Gamma}_{\mathrm{det}} = \Gamma_{\mathrm{det}}^1 \times \cdots \times \Gamma_{\mathrm{det}}^N$, and similarly for the independently-randomized policy space $\bm{\Gamma}_{\mathrm{ind}}$, and the correlated randomized policy space $\bm{\Gamma}_{\mathrm{cor}}$.

When studying games, a common question is to find an \textit{equilibrium} within a particular policy space. At a high level, an equilibrium is a joint policy where no agent can do better by deviating from their policy when the other agents keep their policies fixed. We will consider several notions of equilibrium. We begin by defining the notion of a \textit{best-response}. Suppose that agent $i$'s policy space is $\Gamma^i$ (e.g., $\Gamma_{\mathrm{det}}^i$, $\Gamma_{\mathrm{ind}}^i$, or $\Gamma_{\mathrm{cor}}^i$). Then, we say that agent $i$'s policy $\pi^i$ is a best response to $\bm{\pi}^{-i}$ if there is no policy in $\Gamma^i$ which achieves a higher value. This is formalized in the definition below.

\begin{definition}[Best response]\label{def:best_response}
    For a joint policy $\bm{\pi}$, $\pi^i$ is said to be a best-response to $\bm{\pi}^{-i}$ in the policy space $\Gamma^i$ (e.g., $\Gamma_{\mathrm{det}}^i$, $\Gamma_{\mathrm{ind}}^i$, or $\Gamma_{\mathrm{cor}}^i$), if $V^i (\pi^i, \bm{\pi}^{-i}) = \max_{\tilde{\pi}^i \in \Gamma^i} V^i (\tilde{\pi}^i, \bm{\pi}^{-i}) =: V^{i, \dagger}(\bm{\pi}^{-i})$.
\end{definition}

This leads to the definition of two notions of equilibria. A \textit{Nash Equilibrium} (NE) is a joint policy where all agents are best-responding in the space of independently-randomized policies. A \textit{Coarse Correlated Equilibrium} (CCE) is a joint policy where all agents are best-responding in the space of correlated randomized policies. The difference between NE and CCE is that the randomness in the joint policy must be independent in an NE but can be correlated in a CCE. Since $\Gamma_{\mathrm{ind}} \subset \Gamma_{\mathrm{cor}}$, coarse correlated equilibria are a generalization of Nash equilibria. We define them formally below.

\begin{definition}[Nash Equilibrium]
    A joint policy $\bm{\pi} \in \Gamma_{\mathrm{ind}}$ is said to be a Nash equilibrium if for all agents $i \in [N]$, $V^i(\bm{\pi}) = \max_{\tilde{\pi}^i \in \Gamma_{\mathrm{ind}}^i} V^i (\tilde{\pi}^i, \bm{\pi}^{-i}) =: V^{i, \dagger}(\bm{\pi}^{-i})$. A joint policy $\bm{\pi} \in \Gamma_{\mathrm{ind}}$ is said to an $\varepsilon$-approximate Nash equilibrium if $V^i(\bm{\pi}) \geq  V^{i, \dagger}(\bm{\pi}^{-i}) - \varepsilon$ for all $i \in [N]$.
\end{definition}

\begin{definition}[Coarse Correlated Equilibrium]
    A joint policy $\bm{\pi} \in \Gamma_{\mathrm{cor}}$ is said to be a coarse correlated equilibrium if for all agents $i \in [N]$, $V^i(\bm{\pi}) = \max_{\tilde{\pi}^i \in \Gamma_{\mathrm{cor}}^i} V^i (\tilde{\pi}^i, \bm{\pi}^{-i}) =: V^{i, \dagger}(\bm{\pi}^{-i})$. A joint policy $\bm{\pi} \in \Gamma_{\mathrm{cor}}$ is said to an $\varepsilon$-approximate Nash equilibrium if $V^i(\bm{\pi}) \geq  V^{i, \dagger}(\bm{\pi}^{-i}) - \varepsilon$ for all $i \in [N]$.
\end{definition}

Since we consider finite-space sequential games, an equilibrium is guaranteed to exist~\parencite{nash1951non}.

\begin{remark}[Notion of equilibrium can be represented through information structure]
    The policy classes defined above (i.e., deterministic, independently-randomized, correlated randomized) can be directly modeled by the information structure. For example, to represent correlated randomized policies, the random seed $\omega \in \Omega$ can be modeled as an observable variable at time $t = 0$ which is in all agents' information sets. Similarly, independently randomized policies can be represented through a different random seed for each agent at time $t=0$, and including the appropriate random seed in each action's information set. Hence, the information structure itself can decide which equilibrium notion we are interested in. Moreover, this allows us to consider additional notions of equilibrium where, for example, only subsets of agents can be correlated with each other (e.g., this may be useful in modeling multi-team problems). Note that adding random seeds in order to model randomized policies does not affect the information-structural state $\calI_h^\dagger$ since the seeds don't appear in $\calG^\dagger$. For concreteness, we focus on NE and CCE in our presentation.
\end{remark}

\subsection{Information Structure Determines the Rank of POSTs/POSGs}\label{ssec:info_struct_rank}
For any sequential decision-making problem, the rank of the dynamics as defined in~\Cref{def:rank_dynamics} is a measure of the ``complexity'' of the observable dynamics. In the case of POSTs and POSGs, where the information structure is explicitly represented, we can explicitly characterize the rank of the dynamics as a function of the information structure. This provides a systematic means of identifying tractable sequential decision-making problems.

In this section, we will show that the information structure of POSTs/POSGs can be used to obtain a bound on the rank of the observables system dynamics matrices $\bm{D}_h$. This coincides with the PSR rank as shown in~\Cref{prop:seq_decmaking_exist_psr}, hence characterizing the complexity of the \textit{representation}. To motivate this, we recall the following classic result on the rank of POMDPs~\parencite[Theorem 1]{littmanPredictiveRepresentationsState}.

\begin{example*}[POMDPs have rank bounded by $\abs{\bbS}$]
    Consider a POMDP with states $s_t \in \bbS$, observations $o_t \in \bbO$, and actions $a_t \in \bbA$. The system dynamics are given by $\prob{s_{t+1}, o_{t+1} \given s_{1:t}, a_{1:t}, o_{1:t}} = \prob{s_{t+1} \given s_t, a_t} \prob{o_{t+1} \given s_{t+1}}$. We will derive a bound on the PSR rank of this partially observable system. For each history $\tau_t = \paren{o_1, a_1, \ldots, o_t, a_t}$ and a future $\omega_t = \paren{o_{t+1}, a_{t+1}, \ldots o_T, a_T}$, we have,
    \begin{equation*}
            \bm{D}_{\tau_t, \omega_t} = \prob{\tau_{t}^o, \omega_t^o \given \mathrm{do}(\tau_t^a, \omega_t^a)} = \sum_{s_{t+1} \in \bbS} \prob{\omega_{t} \given s_{t+1}} \prob{s_{t+1} \given \tau_t} \prob{\tau_t^o \given \tau_t^a}.
    \end{equation*}

    Hence, defining $\bm{D}_{t, 1} := \bra{\prob{\omega_{t} \given s_{t+1}}}_{\omega_{t}, s_{t+1}}$ and $\bm{D}_{t,2} := \bra{\prob{s_{t+1} \given \tau_t} \prob{\tau_t^o \given \tau_t^a}}_{s_{t+1}, \tau_t}$, we have that $\bm{D}_t = \bm{D}_{t,1} \bm{D}_{t,2}$. Thus, $\mathrm{rank}(\bm{D}_t) \leq \abs{\bbS}$ for all $t$. Hence, the rank of the observable dynamics of a POMDP is bounded by the number of states.

    \hfill $\square$
\end{example*}

In the above, the existence of a latent state implied a simplification of the system dynamics and a bound on the rank. We will use the same high-level idea to generalize the analysis to arbitrary sequential decision-making problems, bounding the rank of the observable system dynamics of POSTs and POSGs via their information structure. Our analysis relies on the literature of directed graphical models.~\citet{wrightMethodPathCoefficients1934} was the first to use DAGs to represent causal relationships. We refer the reader to~\citep{spirtesCausationPredictionSearch2000,pearlModelsReasoningInference2000,kollerProbabilisticGraphicalModels2009} for modern texts on the subject.

The main tools in our analysis will be the DAG representation $\calG$ of the information structure $\set{\calI_t, t \in [T]}$. Recall that the probability distribution of trajectories in $\bbX_1 \times \cdots \times \bbX_T$ factors according to $\calG$ (\Cref{eq:traj_dag}), forming a directed graphical model. We begin by defining the central quantity in our analysis which we call the ``information-structural state,'' hinting at the role it will play. The information structural state is defined for each point in time as a subset of the past, whether observed or latent, which forms a sufficient statistic for the future.

\begin{definition}[Information-structural state]\label{def:I_dagger}
    Let $\calG^\dagger$ be the DAG obtained from $\calG$ by removing all edges directed towards actions. That is, it consists of the edges $\calE^\dagger \coloneqq \calE \setminus \set{(x, a): x \in \calN, a \in \calA}$. For each $h \in [H]$, let $\calI_h^\dagger \subset [t(h)]$ be the minimal set of past variables (observed or unobserved) which $d$-separates the past observations $\pparen{X_{t(1)}, \ldots, X_{t(h)}}$ from the future observations $\pparen{X_{t(h+1)}, \ldots, X_{t(H)}}$ in the DAG $\calG^\dagger$. Define $\bbI_h^\dagger:= \prod_{s \in \calI_h^\dagger} \bbX_s$ as the joint space of those variables.
\end{definition}

The notation $\calI_h^\dagger$ is chosen to emphasize that this set depends on the information structure, $\calI = \set{\calI_t, t \in \calN}$, and that it simplifies or ``inverts'' the dynamics in some sense. Recall that the notation $t(h)$ denotes the index of the $h$-th observable, as defined in~\Cref{eq:observable_index}. $D$-separation is a property between nodes in DAGs which is central to identifying conditional independence relations~\citep{danAxiomsAlgorithmsInferences1989,vermaCausalNetworksSemantics1990,geigerIdentifyingIndependenceBayesian1990}.

We emphasize that $\bbI_h^\dagger$ may contain observable variables as well as \textit{unobservable} system variables, and $\bbI_h^\dagger \not\subset \bbH_h$ in general. As we will see, unobservable system variables can introduce crucial structure that simplifies the observable system dynamics. Note that $\calG^\dagger$, and hence $\calI_h^\dagger$, are independent of the information sets of action variables. That is, they only depend on the information structure of system variables.

The following theorem states that the rank of the observable system dynamics of POSTs and POSGs is bounded by the cardinality of $\bbI_h^\dagger$. In particular, $i_h^\dagger \in \bbI_h^\dagger$ can be thought of as an \textit{information-structural state} in the sense that it describes a set of system variables, either observable or latent, which provide a sufficient statistic of the past at time $h$ for predicting future observations---$I_h^\dagger$ ``separates'' the past from the future. Hence, the quantity $\aabs{\bbI_h^\dagger}$ admits an interpretation as the size of an effective state space at time $h$.

\begin{theorem}[Rank of observable system dynamics of POSTs and POSGs]\label{theorem:post_posg_rank}
    The rank of the observable system dynamics of a POST or POSG is bounded by
    \[r \leq \max_{h \in [H]} \Big\lvert \bbI_h^\dagger \Big\rvert .\]
\end{theorem}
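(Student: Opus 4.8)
The plan is to mimic the POMDP example given just before the theorem, but with the latent ``state'' replaced by the information-structural state $I_h^\dagger$, and to justify the factorization of $\bm{D}_h$ purely from the $d$-separation property in $\calG^\dagger$. Fix $h \in [H]$, write $n := t(h)$, and set $O_{\le h} := (X_{t(1)}, \ldots, X_{t(h)})$ for the past observables and $O_{>h} := (X_{t(h+1)}, \ldots, X_{t(H)})$ for the future observables. The entry $[\bm{D}_h]_{\tau_h,\omega_h}$ is $\prob{\tau_h^o, \omega_h^o \mid \mathrm{do}(\tau_h^a,\omega_h^a)}$; the first step is to argue that intervening on the actions (the $\mathrm{do}$-operation) is exactly the operation of deleting edges into action nodes, i.e.\ computing probabilities in $\calG^\dagger$ with the action variables set to their intervened values. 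This is the standard truncated-factorization / $g$-formula identity for DAG interventions, and it lets us replace $\bm{D}_h$ by an ordinary conditional-probability object in the mutilated graph $\calG^\dagger$.

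The second step is the key structural claim: because $\calI_h^\dagger$ $d$-separates $O_{\le h}$ from $O_{>h}$ in $\calG^\dagger$, the $d$-separation theorem for directed graphical models gives $O_{\le h} \perp O_{>h} \mid I_h^\dagger$ under the (interventional) distribution. Hence, conditioning on the intervened actions throughout,
\begin{equation*}
    \prob{\tau_h^o,\omega_h^o \mid \mathrm{do}(\cdot)} \;=\; \sum_{i_h^\dagger \in \bbI_h^\dagger} \prob{\omega_h^o \mid i_h^\dagger,\, \mathrm{do}(\cdot)} \,\prob{i_h^\dagger,\, \tau_h^o \mid \mathrm{do}(\cdot)},
\end{equation*}
where I am being slightly informal about which intervened action-coordinates each factor depends on (the future actions $\omega_h^a$ only enter the first factor, the past actions $\tau_h^a$ only the second; one must check that $I_h^\dagger \subset [n]$, which holds by construction). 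Defining $\bm{A}_h \in \reals^{\abs{\bbF_h}\times \abs{\bbI_h^\dagger}}$ by $[\bm{A}_h]_{\omega_h, i_h^\dagger} = \prob{\omega_h^o \mid i_h^\dagger, \mathrm{do}(\omega_h^a)}$ and $\bm{B}_h \in \reals^{\abs{\bbI_h^\dagger}\times\abs{\bbH_h}}$ by $[\bm{B}_h]_{i_h^\dagger,\tau_h} = \prob{i_h^\dagger,\tau_h^o \mid \mathrm{do}(\tau_h^a)}$, we get $\bm{D}_h = \bm{A}_h \bm{B}_h$, and therefore $\mathrm{rank}(\bm{D}_h) \le \abs{\bbI_h^\dagger}$. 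Taking the maximum over $h$ and invoking \Cref{def:rank_dynamics} finishes the proof. The POMDP example is recovered by noting that in that case $\calI_h^\dagger = \{s_{t+1}\}$, so $\abs{\bbI_h^\dagger} = \abs{\bbS}$.

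I expect the main obstacle to be the careful bookkeeping around the $\mathrm{do}$-operation: one must verify that ``marginalizing out latent system variables'' and ``intervening on actions'' commute with the $d$-separation argument, i.e.\ that $d$-separation in $\calG^\dagger$ (which is $\calG$ with edges into $\calA$ deleted, and which is \emph{defined independently of the action information sets}) really does certify the conditional independence of the observable marginals \emph{after} the action coordinates are fixed by intervention. The subtlety is that $d$-separation is a statement about the full joint over all of $X_1,\dots,X_T$, whereas $\bm{D}_h$ involves a marginal over $\calO^\complement \setminus \calA$ together with an intervention on $\calA$; one should phrase this as: in $\calG^\dagger$, set the nodes in $\calA$ as root nodes with point-mass values (the intervention), keep all system kernels, and then $I_h^\dagger$ $d$-separating the observable past from the observable future in this graph implies the needed factorization of the marginal. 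A second, more minor point is confirming that the minimality in \Cref{def:I_dagger} is not actually needed for the upper bound — \emph{any} $d$-separating set works, and minimality only makes the bound as tight as possible — so the proof may as well be stated for the minimal set directly. I would also remark that the argument is identical for POSTs and POSGs since $\calG^\dagger$ and the system kernels are the same in both, and the reward structure plays no role in the dynamics matrix.
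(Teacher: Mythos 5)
Your proposal is correct and follows essentially the same route as the paper's proof: factor $\bm{D}_h$ through $\bbI_h^\dagger$ via the law of total probability, invoke the $d$-separation theorem (Verma--Pearl) in $\calG^\dagger$ to justify that the future observations decouple from the past given $i_h^\dagger$ and that $i_h^\dagger$ does not depend on the intervened future actions, and conclude $\mathrm{rank}(\bm{D}_h) \leq \aabs{\bbI_h^\dagger}$ from the resulting product of two matrices with inner dimension $\aabs{\bbI_h^\dagger}$. Your additional remarks on the $\mathrm{do}$-operation bookkeeping and on minimality being inessential for the upper bound are accurate but do not change the argument.
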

\begin{proof}
    The proof is given in~\Cref{sec:appdx_seq_team_rank_proof}.
\end{proof}

This result shows that the complexity of the observable system dynamics, and hence the complexity of the sequential decision-making problem, is characterized by the information structure through $\bbI_h^\dagger$. This is significant because the rank of a generic sequential decision-making problem can be exponential in the horizon, in general. This result identifies the conditions under which the rank is manageable, and hence the problem can be represented in a tractable manner. Note that the ``information-structural state'' is a generalization of the standard notion of a latent state. For example, in the case of POMDPs, the information-structural state is indeed the latent Markovian state (as depicted in~\Cref{fig:pomdp_infostruct_state}). Additional examples are discussed next.

\subsection{Examples of Information Structures and their Rank}\label{ssec:info_struct_rank_examples}
The analysis in the previous section characterizes the rank of any sequential decision-making problem as a function of its information structure. In this section, we illustrate this on several sequential decision-making problems, characterizing the information-structural complexity of their dynamics. The procedure is as follows: \textbf{1)} formulate the sequential decision-making problem as a POST/POSG; \textbf{2)} represent the information structure as a labeled directed acyclic graph $\calG$; \textbf{3)} remove incoming edges into the action variables to produce $\calG^\dagger$; \textbf{4)} apply~\Cref{theorem:post_posg_rank} to find the information structural state at each point in time through a $d$-separation analysis.

\textbf{Illustration: translating to the POST/POSG framework.} We begin by illustrating how an arbitrary sequential decision-making problem can be formulated in the POST/POSG framework. Consider a POMDP with variables $(s_1, o_1, a_1, s_2, o_2, a_2, ...)$. This can be formulated as a POST/POSG by a simple relabelling of variables as follows.
\begin{equation*}
    \begin{matrix} s_1 \\ \downarrow \\ x_1 \end{matrix} \quad
    \begin{matrix} o_1 \\ \downarrow \\ x_2 \end{matrix} \quad
    \begin{matrix} a_1 \\ \downarrow \\ x_3 \end{matrix} \quad
    \begin{matrix} s_2 \\ \downarrow \\ x_4 \end{matrix} \quad
    \begin{matrix} o_2 \\ \downarrow \\ x_5 \end{matrix} \quad
    \begin{matrix} a_2 \\ \downarrow \\ x_6 \end{matrix} \quad
    \begin{matrix} \hphantom{a_2} \\ \cdots \\ \hphantom{x_6} \end{matrix} \quad
    \begin{matrix} s_t \\ \downarrow \\ x_{3t-2} \end{matrix} \quad
    \begin{matrix} o_t \\ \downarrow \\ x_{3t-1} \end{matrix} \quad
    \begin{matrix} a_t \\ \downarrow \\ x_{3t} \end{matrix} \quad
    \begin{matrix} \hphantom{a_2} \\ \cdots \\ \hphantom{x_6} \end{matrix} \quad
\end{equation*}
Here, the system variables $\calS$ are the $s$-type and $o$-type variables, with system index set $\calS = \sset{1,2,4,5,7,8, ...}$, and the action variables are the $a$-type variables with action index set $\calA = \sset{3, 6, 9, ...}$. The observable system variables are the $o$-type variables only, with index set $\calO = \sset{2, 5, 8, ...} \subset \calS$. This can be done for any sequential decision-making problem.

To ease notation, let us not explicitly write the indices in this section, but rather use the original notation for the variables in the problem formulation. For example, we'll write $\calS = \sset{s_t, o_t, t \in [T]}$. Similarly, we use the notation $\calI(x)$ to mean the information set corresponding to the variable $x$. Similarly, $\calI^\dagger(x)$ denotes the information-structural state at the time when $x$ occurs. For example, in a POMDP $\calI(s_t) = \sset{s_{t-1}, a_{t-1}}$, $\calI(o_t) = \sset{s_t}$, and $\calI(a_t) = \sset{o_{1:t}, a_{1:t}}$.

Below, we will consider several examples of sequential decision-making problems, and apply the information-structural analysis of~\Cref{theorem:post_posg_rank} to obtain a bound on the rank of the observable dynamics.

\textbf{Decentralized POMDPs and POMGs.} At each time $t$, the system variables of a decentralized POMDP (or POMG) consist of a latent state $s_t$, observations for each agent $o_t^1, \ldots, o_t^N$, and actions of each agent $a_t^1, \ldots, a_t^N$. The latent state transitions are Markovian and depend on the agents' joint action. The observations are sampled via a kernel conditional on the latent state. Each agent can use their own history of observations to choose an action. Thus, the information structure is given by,
\begin{equation*}
    \calI(s_t) = \set{s_{t-1}, a_{t-1}^1, \ldots, a_{t-1}^N},\, \calI(o_t^i) = \set{s_t},\, \calI(a_t^i) = \set{o_{1:t-1}^i, a_{1:t-1}^i}.
\end{equation*}
Here, the observable variables are $\calU = \set{o_{1:T}^i, a_{1:T}^i,\, i \in [N]}$\footnote{Here, since we don't explicitly write the index sets $\calS, \calO, \calA$, we use the notation $\calI(x)$ to mean the information set corresponding to the variable $x$. Similarly, $\calI^\dagger(x)$ denotes the information-structural state at the time when $x$ occurs. Since events may occur simultaneously, there is not a unique ordering of variables. For example, in a Dec-POMDP/POMG $(s_t, o_t^1, o_t^2)$ and $(s_t, o_t^2, o_t^1)$ are both valid orderings. When mapping such models onto the POST/POSG framework, we may choose any ordering arbitrarily. Similarly, we slightly abuse notation when defining the set of observables $\calU$, where what we mean is the ``time indices'' of the variables in $\{\cdot\}$.}. By~\Cref{theorem:post_posg_rank}, we have $\calI^\dagger(o_t^i) = \set{s_t}, \, \forall t, i $, as shown in~\Cref{fig:decpomdp_I_dagger}. Thus, the rank of a Dec-POMDP is bounded by $\abs{\bbS}$, where $\bbS$ is the state space. Note that in the case of models with a true latent state (e.g., POMDPs, Dec-POMDPs, and POMGs), the information-structural state coincides with the true latent state.

\begin{figure}
    \centering
    \begin{subfigure}[t]{0.49\textwidth}
        \includegraphics[width=\textwidth]{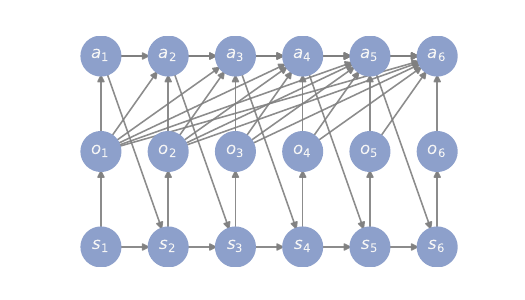}
    \end{subfigure}
    \begin{subfigure}[t]{0.49\textwidth}
        \includegraphics[width=\textwidth]{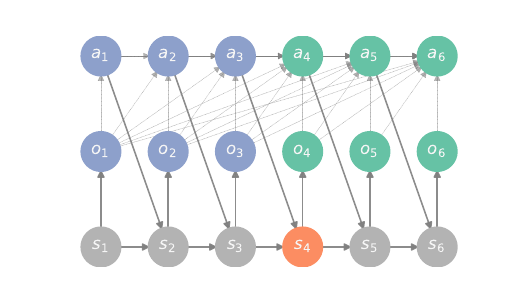}
    \end{subfigure}
    \caption{An illustrative example of the information-structural state for POMDPs. \textbf{Left.} The DAG representation of the information structure $\calG$. \textbf{Right.} The DAG $\calG^\dagger$ is depicted by drawing the edges corresponding to the information sets of the action variables with dotted lines. The information-structural state coincides with the Markovian state $s_t$, and is depicted in red. Future observables are drawn in green, and past observables are drawn in blue.}\label{fig:pomdp_infostruct_state}
\end{figure}

\begin{figure}[!ht]
    \centering
    \begin{subfigure}[t]{0.6\textwidth}
        \centering
        \includegraphics[width=\textwidth]{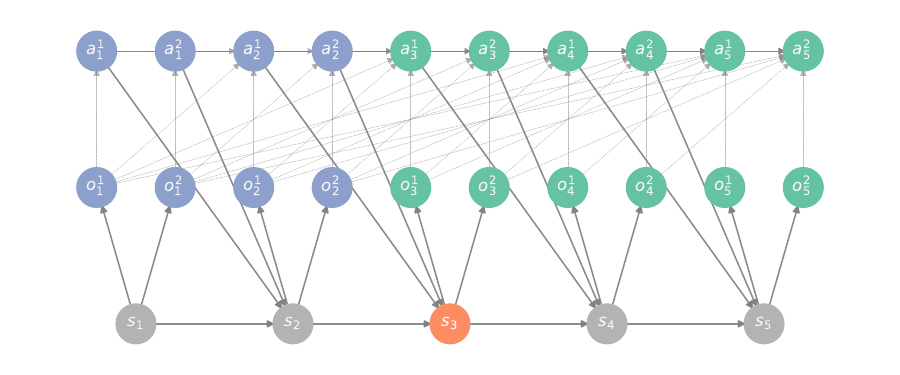}
        \caption{Decentralized POMDP/POMG information-structure.}\label{fig:decpomdp_I_dagger}
    \end{subfigure}
    \begin{subfigure}[t]{0.61\textwidth}
        \centering
        \includegraphics[width=\textwidth]{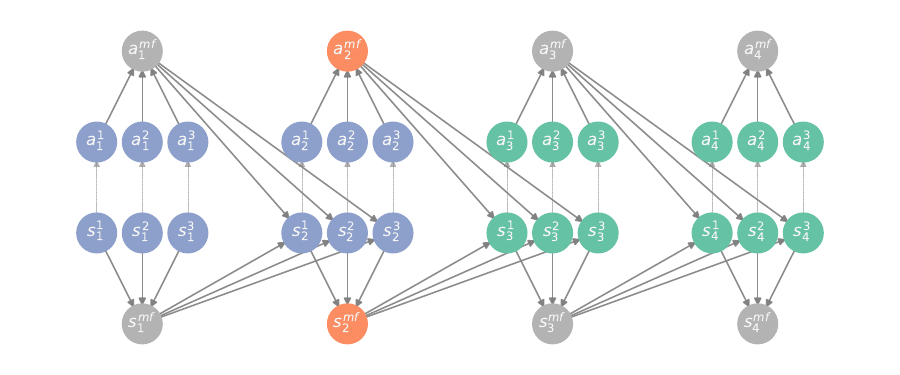}
        \caption{``Mean-field'' information structure.}\label{fig:meanfield_I_dagger}
    \end{subfigure}
    \begin{subfigure}[t]{0.36\textwidth}
        \centering
        \includegraphics[width=\textwidth]{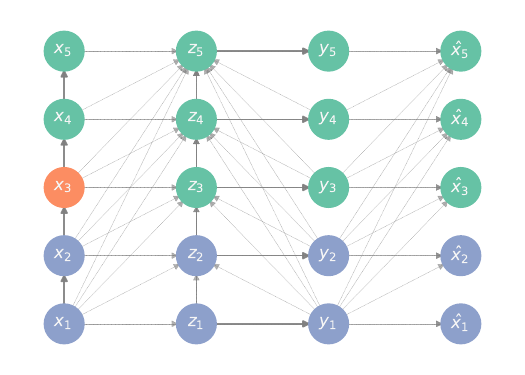}
        \caption{Point-to-point real-time communication with feedback information structure.}\label{fig:pt2pt_com_I_dagger}
    \end{subfigure}

    \begin{subfigure}[t]{0.4\textwidth}
        \includegraphics[width=\textwidth]{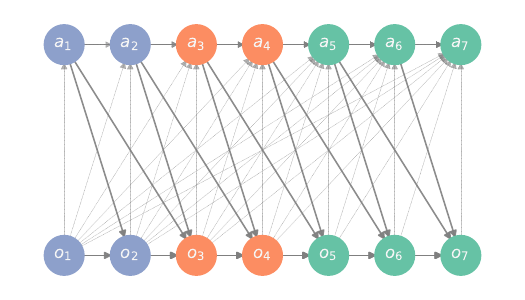}
        \caption{Limited-memory ($m=2$) information structures.}\label{fig:2step_mem_I_dagger}
    \end{subfigure}
    \begin{subfigure}[t]{0.4\textwidth}
        \includegraphics[width=\textwidth]{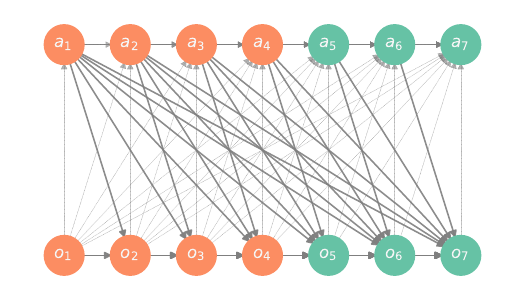}
        \caption{Fully connected information structure.}\label{fig:fully_connected_I_dagger}
    \end{subfigure}
    \caption{DAG representation of various information structures. Solid edges indicate the edges in $\calE^\dagger$ and light edges indicate the information sets of action variables. Grey nodes represent unobservable variables, blue nodes represent past observable variables, green nodes represent future observable variables, and red nodes represent the information structural state $\calI_h^\dagger$. To find $\calI_h^\dagger$, as per~\Cref{theorem:post_posg_rank}, we first remove the incoming edges into the action variables, then we find the minimal set among all past variables (both observable and unobservable) which $d$-separates the past observations from the future observations.}\label{fig:info_structs_dag}
\end{figure}

\textbf{Limited-memory information structures.} Consider a sequential decision making problem with variables $o_t, a_t, t \in [T]$ and an information structure with $m$-length memory. That is, observations can only depend directly on at most $m$ of the most recent observations and actions.
% Actions may still depend on the entire history of observations and past actions.
That is, the information structure is
\begin{equation*}
    \calI(o_t) = \set{o_{t-m:t-1}, a_{t-m:t-1}},\, \calI(a_t) = \set{o_{1:t}, a_{1:t-1}}.
\end{equation*}

The observables are all observations and actions, $\calU = \set{o_{1:T}, a_{1:T}}$. By~\Cref{theorem:post_posg_rank} we have that $\calI^\dagger(o_t) = \set{o_{t-m:t-1}, a_{t-m:t-1}}$, as shown in~\Cref{fig:2step_mem_I_dagger}. Hence, the rank of this sequential decision-making process is bounded by $\abs{\bbO}^{m} \abs{\bbA}^{m}$.

\textbf{Symmetric / ``Mean-field'' Information Structures.} Consider a sequential decision-making problem with $N$ agents. Each agent has their own local state, $s_t^i \in \bbS_{\mathrm{local}}$. Similarly, at each time point, each agent takes an action $a_i^t \in \bbA_{\mathrm{loc}}$. The global state $s_t = (s_t^1, \ldots, s_t^N) \in \bbS_{\mathrm{loc}}^N =: \bbS$ is composed by of all agents' local states. Similarly, the joint action space is $\bbA \coloneqq \bbA_{\mathrm{loc}}^N$. Consider a symmetric information structure where the evolution of each agent's local state depends only on a symmetric aggregation of all agents' states and actions, rather than on the local state/action of any particular agent. That is, the identity of who is in what state or takes which action does not matter---only the distribution of states and actions. This is often referred to as a ``mean-field'' setting (in the limit). Here, the transition depends only on the distribution of local states and actions, defined as $s_t^{\mathrm{mf}} = \mathrm{dist}(s_t) \coloneq \frac{1}{N} \delta_{s_t^i}$, $a_t^{\mathrm{mf}} = \mathrm{dist}(a_t) \coloneq \frac{1}{N} \delta_{a_t^i}$, for $s_t \in \bbS, a_t \in \bbA$. Different agents can have different transition kernels for their local state.
% The observation variables are $\calO = \sset{s_{1:T}^i, i \in [N]}$ and the action variables are $\calA = \sset{a_{1:T}^i, i \in [N]}$.
Hence, by introducing $\mathrm{dist}(s_t), \mathrm{dist}(a_t)$ as auxiliary unobserved variables at each time $t$, we obtain the following information structure,
\begin{equation*}
    \calI(s_t^i) = \set{\mathrm{dist}(s_{t-1}), \mathrm{dist}(a_{t-1})}, \ \calI(a_t^i) = \set{s_t^i}
\end{equation*}
and an application of~\Cref{theorem:post_posg_rank} bounds the rank by
\begin{equation*}
    \abs{\bbI^\dagger(s_t^i)} < \aabs{\bbS_{\mathrm{loc}}} \aabs{\bbA_{\mathrm{loc}}} \paren{\frac{N}{\aabs{\bbS_{\mathrm{loc}}} - 1} + 1}^{\aabs{\bbS_{\mathrm{loc}}} - 1} \paren{\frac{N}{\aabs{\bbA_{\mathrm{loc}}} - 1} + 1}^{\aabs{\bbA_{\mathrm{loc}}} - 1}.
\end{equation*}
This is compared to $\aabs{\bbS_{\mathrm{loc}}}^N \cdot \aabs{\bbA_{\mathrm{loc}}}^N$ (e.g., if we modeled this as an MDP with the state $s_t$), which is much larger when the number of agents is large. The information structure and $d$-separation decomposition are depicted in~\Cref{fig:meanfield_I_dagger}.

\textbf{Point-to-Point Real-Time Communication with Feedback.} Consider the following model of real-time communication with feedback. Let $x_t$ be the Markov source. At time $t$, the encoder receives the source $x_t \in \bbX$ and encodes sending a symbol $z_t \in \bbZ$. The symbol is sent through a memoryless noisy channel which outputs $y_t$ to the receiver. The decoder produces the estimate $\hat{x}_t$. The output of the noisy channel is also fed back to the encoder. The encoder and decoder have full memory of their observations and previous ``actions''. The observation variables are $\calO = \sset{x_{1:T},\, y_{1:T}}$ and the ``actions'' are $\calA = \sset{z_{1:T},\, \hat{x}_{1:T}}$. Hence, the information structure is given by the following,
\begin{equation*}
    \calI(x_t) = \set{x_{t-1}}, \ \calI(z_t) = \set{x_{1:t}, y_{1:t-1}, z_{1:t-1}}, \ \calI(y_t) = \set{z_{t}}, \ \calI(\hat{x}_t) = \set{y_{1:t}}.
\end{equation*}
By~\Cref{prop:seq_decmaking_exist_psr}, we have that,
\begin{equation*}
    \calI^\dagger(x_t) = \set{x_{t}}, \ \calI^\dagger(z_t) = \set{x_{t}}, \ \calI^\dagger(y_t) = \set{x_t, z_{t}}, \ \calI^\dagger(\hat{x}_t) = \set{x_{t}}.
\end{equation*}
Hence, the rank is bounded by $\aabs{\bbX} \aabs{\bbZ}$. This is depicted in~\Cref{fig:pt2pt_com_I_dagger}.

\textbf{Fully-Connected Information Structures.} Consider a sequential decision making problem with variables $o_t, a_t, t \in [T]$ and a fully-connected information structure. That is, each observation directly depends on the entire history of observations and actions. Thus, the information structure is
\begin{equation*}
    \calI(o_t) = \set{o_{1:t-1}, a_{1:t-1}},\ \calI(a_t) = \set{o_{1:t}, a_{1:t-1}}
\end{equation*}
The observables are all observations and actions, $\calU = \set{o_{1:T}, a_{1:T}}$. By~\Cref{theorem:post_posg_rank} we have that $\calI^\dagger(o_t) = \set{o_{1:t-1}, a_{1:t-1}}$, as shown in~\Cref{fig:fully_connected_I_dagger}. Hence, the rank of this sequential decision-making process can be exponential in the time horizon.

The examples above show that the tractability of a sequential decision-making problem in terms of the complexity of its dynamics depends directly on its information structure. This gives an interpretation of why certain models, like POMDPs, are more tractable than those with arbitrary information structures. Previous work primarily considers particular problem classes with fixed and highly regular information structures. In this work, we argue for the importance of explicitly modeling the information structure of a sequential decision-making problem.

% \textbf{Local Interaction on a time-varying network.} In the context of multi-agent systems, one way to interpret ``information structures'' is as a network of local interactions between agents in a (possibly time-varying) network. For example, consider again a sequential decision-making problem with $N$ agents. Suppose that the interaction between agents is ``local'' in the sense that agent $j$'s local state and action at time $t$ only directly affects agent $i$'s local state transition at time $t+1$, if the two agents are connected in a (possibly time-varying) network, denoted by $\calN_t$. That is, we define the information structure as,
% \begin{equation*}
%     \calI(s_t^i) = \set{s_{t-1}^j, a_{t-1}^j : (j, i) \in \calN_t}.
% \end{equation*}

\begin{remark}[Necessity of generalized PSRs]\label{remark:necessity_gen_psr}
    The formalization of generalized PSRs in~\Cref{ssec:gen_psr} was necessary to enable the study of information structure through POSTs/POSGs. An alternative (naive) solution to construct PSR representations for models with non-alternating observations and actions is to aggregate consecutive observations and actions to force them to obey the standard formulation of PSRs. This approach results in a loss of ``resolution'' in the information structure. That is, when you aggregate consecutive system variables, you also aggregate the DAG which represents the information structure, losing potentially important structure. In particular, in the worst case, such aggregation could result in an exponential increase in the rank of the dynamics. The examples given above elucidate this. Consider for example the ``mean-field'' information structure. If we aggregated local states and actions into a combined global state and joint action, the PSR rank would indeed be $\aabs{\bbS_{\mathrm{loc}}}^N \aabs{\bbA_{{\mathrm{loc}}}}^N$. By comparison, by considering each local state separately without aggregation, we are able to obtain a decomposition with a much smaller PSR rank.
\end{remark}

\section{Constructing a PSR parameterization for POSTs and POSGs}\label{sec:post_psrep}
A key challenge in reinforcement learning is constructing robust and efficient representations that enable modeling probabilities of system trajectories. That is, probabilities of the form $\prob{\mathtt{future} \ggiven \mathtt{history}}$. Observable operator models~\parencite{jaegerObservableOperatorModels2000} and predictive state representations~\parencite{littmanPredictiveRepresentationsState} are a class of representations for dynamical systems with several useful properties making them amenable to learning.

The standard formulation of PSRs is too restrictive to represent POSTs/POSGs due to its strict variable structure. However, generalized PSRs as introduced in~\Cref{def:gen_psr} are able to faithfully capture the POST/POSG models. In~\Cref{ssec:info_struct_rank}, we showed that the information structure of a POST/POSG can be used to characterize the rank of the observable system dynamics, and hence its PSR rank. In this section, we explicitly construct a \textit{generalized} predictive state representation for a class of POSTs and POSGs, ultimately enabling sample-efficient reinforcement learning.

\subsection{Core test sets for POSTs/POSGs}

A crucial ingredient for modeling partially-observable systems in the predictive state representation is the notion of a core test set, as defined in~\Cref{def:core_test_sets}.  Recall that a core test set is a set of futures such that the probabilities of those futures given the past encode all the information that the past contains about the future. For systems with a simple and regular information structure such as a POMDP, a core test set may be simple to obtain. For example, undercomplete POMDPs with a full rank 1-step emission matrix admit the 1-step observation space as a core test set. %The regularity of the information structure enables us to find a core test set without knowing much about the system dynamics.

For POSTs/POSGs with arbitrary information structures, obtaining a core test set is much more challenging without knowing the system dynamics. In this section, we identify a condition in terms of the information structure under which $m$-step futures are a core test set for POSTs/POSGs.

% We begin by introducing some notation. 
For each $h \in [H]$, we denote the candidate core test set of $m$-step future observations by
\begin{equation}
    \bbQ_h^m := \prod_{s \in \calU_{h+1:\min(h+m, H)}} \bbX_s.
\end{equation}

Further, we define the matrix $\bm{G}_h \in \reals^{\aabs{\bbQ_h^m} \times \aabs{\bbI_h^\dagger}}$ as encoding the probability of observing each $m$-step future conditioned on the separating information set $\bbI_h^\dagger$,
\begin{equation}
    \begin{split}
        \bm{G}_h &\coloneq \bra{\probbar{q \given i_h^\dagger}}_{q \in \bbQ_h^m, \, i_h^\dagger \in \bbI_h^\dagger}  = \bra{\prob{\obs(q) \given i_h^\dagger;\, \mathrm{do}(\act(q))}}_{q \in \bbQ_h^m, \, i_h^\dagger \in \bbI_h^\dagger}, \\
        % &\equiv \bra{\prob{x_s, s \in \calO_{h+1:h+m} \given \set{x_s, \, s \in \calI_h^\dagger};\, \mathrm{do}\paren{x_s, \, s \in \calA_{h+1:h+m}}}}_{q \in \bbQ_h^m, \, i_h^\dagger \in \bbI_h^\dagger},
    \end{split}
\end{equation}
where $q = \pparen{x_s, \, s \in \calU_{h+1:h+m}} \in \bbQ_h^m$, and $i_h^\dagger = \pparen{x_s, \, s \in \calI_h^\dagger} \in \bbI_h^\dagger$. The operational meaning of $\bm{G}_h$ is depicted in~\Cref{fig:post_genpsr_construction}.

\begin{figure}
    \centering
    \includegraphics[width=\textwidth]{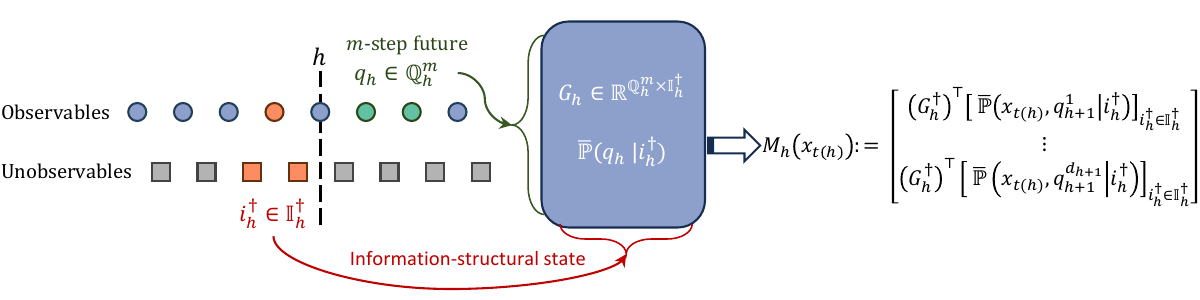}
    \caption{A depiction of the construction of a generalized predictive state representation for POST/POSG models.}\label{fig:post_genpsr_construction}
\end{figure}

We identify a condition in terms of information structure, named ``$m$-step $\calI^\dagger$-weakly revealing'', which we will show implies that the $m$-step futures are core test sets.

\begin{assumption}[$m$-step $\calI^\dagger$-weakly revealing]\label{ass:calIdagger_weakly_revealing}
    We say that a POST/POSG is $m$-step $\calI^\dagger$-weakly revealing if for all $h \in [H]$, $\mathrm{rank}(\bm{G}_h) = \lvert \bbI_h^\dagger \rvert$. Furthermore, we say that the POST/POSG is $\alpha$-robustly $m$-step $\calI^\dagger$-weakly revealing if for all $h \in [H-m+1]$, $\sigma_{\lvert \bbI_h^\dagger \rvert}(\bm{G}_h) \geq \alpha$.
\end{assumption}

The $\calI^\dagger$-weakly revealing condition is essentially an identifiability condition. If a POST/POSG is $\calI^\dagger$-weakly revealing, then, at any time point, for any two mixtures of the information-structural state with disjoint support, the distributions of the $m$-step futures are distinct. Formally, for any $\nu_1, \nu_2 \in \calP(\bbI_h^\dagger)$ with $\mathrm{supp}(\nu_1) \cap \mathrm{supp}(\nu_2) = \emptyset$, we have $\bm{G}_h \nu_1 \neq \bm{G}_h \nu_2$. That is, the future observations contain information that can distinguish between mixtures of the latent information-structural state. This description is equivalent to the condition that $\bm{G}_h$ is full-rank in~\Cref{ass:calIdagger_weakly_revealing}. The $\alpha$-robust version of the $\calI^\dagger$-weakly revealing condition requires that $\bm{G}_h$ is not only full rank, but that its $\aabs{\bbI_h^\dagger}$-th eigenvalue is bounded away from zero.

The condition holds whenever there exists a sequence of actions within the $m$-step futures such that executing these actions results in a sequence of observations which is informative about the information-structural state $i_h^\dagger \in \bbI_h^\dagger$. In general, this condition will be harder to satisfy when $\bbI_h^\dagger$ is large since it would require the $m$-step future observations to encode more information. In particular, $\bm{G}_h$ cannot be full rank when $\lvert\bbQ_h^m\rvert < \lvert\bbI_h^\dagger\rvert$. As a heuristic, when we don't have prior knowledge about the dynamics (e.g., in the learning setting), we can choose $m$ such that $\lvert\bbQ_h^m\rvert \geq \lvert \bbI_h^\dagger \rvert$. In general, it will be possible to find a smaller core test set when the $d$-separating set $\calI_h^\dagger$ is small. This happens when the system dynamics contain state-like variables which are low-dimensional.

The $\calI^\dagger$-weakly-revealing condition is a generalization of the ``weakly-revealing'' condition for POMDPs introduced in~\parencite{liuWhenPartiallyObservable2022}.~\cite{liuSampleEfficientReinforcementLearning2022} proposed an algorithm for learning weakly-revealing POMGs. Our analysis here recovers weakly-revealing POMGs as a special case and enables learning a much more general class of problems.

Recall that the vector of core test set probabilities for the history $\tau_h$ is given by the mappings $\psi_h, \bar{\psi}_h: \bbH_h \to \reals^{\abs{\bbQ_h^m}}$,
\begin{equation*}
    \psi_h(\tau_h) = \bra{\prob{q^o, \tau_h^o \given \mathrm{do}(\tau_h^a),\, \mathrm{do}(q^a)}}_{q \in \bbQ_h^m}, \ \bar{\psi}_h(\tau_h) = \bra{\prob{q^o \given \tau_h^o;\, \mathrm{do}(\tau_h^a),\, \mathrm{do}(q^a)}}_{q \in \bbQ_h^m}.
\end{equation*}
Define the mapping $m_h: \bbF_h \to \reals^{\abs{\bbQ_h^m}}$ as,
\begin{equation}
    m_h(\omega_h) \coloneqq (\bm{G}_h^\dagger)^\top \bra{\Probbar{\omega_h \,|\, i_h^\dagger}}_{i_h^\dagger \in \bbI_h^\dagger}
\end{equation}

The following lemma shows that the $m$-step futures $\bbQ_h^m$ are core test sets for any $m$-step $\calI^\dagger$-weakly revealing POST. In particular, given any future $\omega_h \in \bbF_h$ and history $\tau_h \in \bbH_h$, the conditional probability $\probbar{\omega_h \given \tau_h}$ can be written as a linear combination of the probabilities of the core tests given the history in $\bar{\psi}_h(\tau_h)$, with weights given by $m_h(\omega_h)$, depending only on $\omega_h$.
\begin{lemma}[Core test set for POSTs]\label{lemma:core_test_set}
    Suppose that the POST/POSG is $m$-step $\calI^\dagger$-weakly revealing. Then, $\bbQ_h^m$ is a core test set for all $h \in [H]$. Furthermore, we have
    \begin{equation}
        \probbar{\tau_h, \omega_h} = \iprod{m_h(\omega_h)}{\psi_h(\tau_h)}, \text{ and } \ \, \probbar{\omega_h \given \tau_h} = \iprod{m_h(\omega_h)}{\bar{\psi}_h(\tau_h)}.
    \end{equation}
\end{lemma}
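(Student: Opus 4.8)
The plan is to run everything off two facts: the $d$-separation property that \emph{defines} $\calI_h^\dagger$ (\Cref{def:I_dagger}), and the hypothesis that $\bm{G}_h$ has full column rank. The first step is to record the ``latent-state'' factorization
\[
    \probbar{\tau_h,\omega_h}\;=\;\sum_{i_h^\dagger\in\bbI_h^\dagger}\probbar{\tau_h,i_h^\dagger}\,\probbar{\omega_h\mid i_h^\dagger},
\]
valid for every $\tau_h\in\bbH_h$ and every $\omega_h\in\bbF_h$, and equally with an $m$-step fragment $q\in\bbQ_h^m$ in place of $\omega_h$. This is the identity that drives \Cref{theorem:post_posg_rank}: after the $\mathrm{do}$-interventions on the (all observable) action variables, the joint law of the system variables factorizes according to the mutilated DAG $\calG^\dagger$, so the $d$-separation of the past observations from the future observations by $I_h^\dagger$ makes $\tau_h^o$ and $\omega_h^o$ conditionally independent given $I_h^\dagger$; and because every ancestor of a variable of index $\le t(h)$ again has index $\le t(h)$, the intervention in the first factor reduces to $\mathrm{do}(\tau_h^a)$, matching the definition of $\probbar{\tau_h,i_h^\dagger}$. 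In matrix form, with $\bm{V}_h\in\reals^{\abs{\bbH_h}\times\abs{\bbI_h^\dagger}}$ defined by $[\bm{V}_h]_{\tau_h,i_h^\dagger}=\probbar{\tau_h,i_h^\dagger}$ and $v_h(\tau_h)$ its $\tau_h$-th row, this reads $\bm{D}_h[\bbQ_h^m]=\bm{V}_h\bm{G}_h^\top$, hence $\psi_h(\tau_h)=\bm{G}_h\,v_h(\tau_h)$, and likewise $\bm{D}_h=\bm{V}_h\bm{B}_h^\top$ with $\bm{B}_h$ the full-future analogue of $\bm{G}_h$. The one point requiring care here is carrying this out for the \emph{partial} futures $q\in\bbQ_h^m$, whose own action components $\act(q)$ are $\mathrm{do}$-intervened and whose observation components are a sub-tuple of the future observations; this is the main obstacle, and everything downstream of it is linear algebra.

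Next, to see that $\bbQ_h^m$ is a core test set: \Cref{ass:calIdagger_weakly_revealing} gives $\mathrm{rank}(\bm{G}_h)=\abs{\bbI_h^\dagger}$, so $\bm{G}_h$ has full column rank and $\bm{G}_h^\top$ full row rank, whence $\mathrm{rank}(\bm{D}_h[\bbQ_h^m])=\mathrm{rank}(\bm{V}_h\bm{G}_h^\top)=\mathrm{rank}(\bm{V}_h)$; moreover $\abs{\bbQ_h^m}\ge\mathrm{rank}(\bm{G}_h)=\abs{\bbI_h^\dagger}$. On the other side, each column of $\bm{D}_h[\bbQ_h^m]$ is a nonnegative combination of columns of $\bm{D}_h$ (marginalizing over the part of the future beyond step $m$), so $\mathrm{rank}(\bm{D}_h[\bbQ_h^m])\le\mathrm{rank}(\bm{D}_h)=:r_h$, while $\bm{D}_h=\bm{V}_h\bm{B}_h^\top$ forces $r_h\le\mathrm{rank}(\bm{V}_h)$. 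Chaining these inequalities collapses them to equalities: $\mathrm{rank}(\bm{D}_h[\bbQ_h^m])=\mathrm{rank}(\bm{V}_h)=r_h=\mathrm{rank}(\bm{D}_h)$, and since $\abs{\bbQ_h^m}\ge\abs{\bbI_h^\dagger}\ge r_h$, this is exactly \Cref{def:core_test_sets}.

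Finally, for the two identities, put $b_h(\omega_h):=\big(\probbar{\omega_h\mid i_h^\dagger}\big)_{i_h^\dagger\in\bbI_h^\dagger}$, so that $m_h(\omega_h)=(\bm{G}_h^\dagger)^\top b_h(\omega_h)$ and, by the factorization of the first step, $\probbar{\tau_h,\omega_h}=\iprod{b_h(\omega_h)}{v_h(\tau_h)}$. Then
\[
    \iprod{m_h(\omega_h)}{\psi_h(\tau_h)}\;=\;b_h(\omega_h)^\top\bm{G}_h^\dagger\bm{G}_h\,v_h(\tau_h)\;=\;b_h(\omega_h)^\top v_h(\tau_h)\;=\;\probbar{\tau_h,\omega_h},
\]
where the middle step uses $\bm{G}_h^\dagger\bm{G}_h=I_{\abs{\bbI_h^\dagger}}$, which holds precisely because $\bm{G}_h$ has full column rank. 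Dividing through by $\Probbar{\tau_h}$ and recalling $\bar{\psi}_h(\tau_h)=\psi_h(\tau_h)/\Probbar{\tau_h}$ and $\probbar{\omega_h\mid\tau_h}=\probbar{\tau_h,\omega_h}/\Probbar{\tau_h}$ gives $\probbar{\omega_h\mid\tau_h}=\iprod{m_h(\omega_h)}{\bar{\psi}_h(\tau_h)}$, as required.
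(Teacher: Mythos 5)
Your proof is correct and follows essentially the same route as the paper's: both rest on the $d$-separation factorization from~\Cref{theorem:post_posg_rank}, recast it as $\psi_h(\tau_h)=\bm{G}_h v_h(\tau_h)$ (the paper writes the conditional version $\bar{\psi}_h(\tau_h)=\bm{G}_h p_h(\tau_h)$), and then use the left inverse $\bm{G}_h^\dagger\bm{G}_h=I$ guaranteed by the full-column-rank hypothesis to obtain $m_h(\omega_h)=(\bm{G}_h^\dagger)^\top\tilde{m}_h(\omega_h)$ and the two inner-product identities. Your explicit rank chain $r_h\le\mathrm{rank}(\bm{V}_h)=\mathrm{rank}(\bm{D}_h[\bbQ_h^m])\le r_h$ together with $\abs{\bbQ_h^m}\ge\abs{\bbI_h^\dagger}\ge r_h$ verifies~\Cref{def:core_test_sets} more explicitly than the paper, which leaves that final step implicit in the linear-combination property.
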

\begin{proof}
    The proof is given in~\Cref{sec:appdx_post_psrep_proofs}.
\end{proof}

Finally, we remark that the information structure enables us to construct core tests in a time-dependent way. Note that the size of the information-structural state space  $\bbI_h^\dagger$ may vary with $h$. Accordingly, we can consider core test sets consisting of $m(h)$-step futures where $m(h)$ varies with $h$. Operationally, we can choose $m(h)$ based on the size of $\bbI_h^\dagger$, despite not knowing the dynamics. The information structure may also enable a more tailored construction of core tests. For simplicity of presentation, we will only consider fixed $m$ in this paper.

\subsection{Generalized PSR parameterization of POST/POSG}\label{ssec:psr_param_post}

Consider a POST/POSG which is $m$-step $\calI^\dagger$-weakly revealing.~\Cref{lemma:core_test_set} shows that the $m$-step futures $\bbQ_h^m$ are core test sets. In this section we will explicitly construct a generalized PSR parameterization (according to~\Cref{def:gen_psr}) for this class sequential decision-making problems. Moreover, we will show that this generalized PSR representation is well-conditioned when the weakly revealing condition is robust. In the next sections, we will show that well-conditioned generalized PSRs can be learned sample-efficiently.

Let $d_h \coloneqq \abs{\bbQ_h^m}$. The first observation is that the vector mappings $m_h: \bbF_h \to \reals^{d_h}$ and $\psi_h: \bbH_h \to \reals^{d_h}$ can be used to derive a recursive form of the dynamics of the POST/POSG. A direct corollary of~\Cref{lemma:core_test_set} is the following.

\begin{lemma}
    For any $h \in [H],\, \tau_{h-1} \in \bbH_{h-1},\, x_{t(h)} \in \bbX_{t(h)},\, \omega_{h} \in \bbF_{h}$, we have
    \begin{equation}
        \probbar{\tau_{h-1}, x_{t(h)}, \omega_{h}} = \iprod{m_{h-1}(x_{t(h)}, \omega_{h})}{\psi_h(\tau_{h-1})}.
    \end{equation}
\end{lemma}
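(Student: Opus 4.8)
The plan is to derive this as an immediate corollary of~\Cref{lemma:core_test_set}, applied at the observable time index $h-1$ rather than $h$. The only thing that needs to be said is a bookkeeping identity about how the observable trajectory space factors: writing $\bbH_g = \bbX_{t(1)} \times \cdots \times \bbX_{t(g)}$ for the space of histories and $\bbF_g = \bbX_{t(g+1)} \times \cdots \times \bbX_{t(H)}$ for the space of futures at (observable) time $g$, we have the trivial re-association
\[
\bbF_{h-1} \;=\; \bbX_{t(h)} \times \bbF_h, \qquad \bbH_h \;=\; \bbH_{h-1} \times \bbX_{t(h)},
\]
so that a triple $(\tau_{h-1}, x_{t(h)}, \omega_h)$ with $\tau_{h-1} \in \bbH_{h-1}$, $x_{t(h)} \in \bbX_{t(h)}$, $\omega_h \in \bbF_h$ is nothing but a history--future pair $(\tau_{h-1}, \omega_{h-1})$ at time $h-1$, where $\omega_{h-1} := (x_{t(h)}, \omega_h) \in \bbF_{h-1}$.

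Concretely, for $h \geq 2$: since the POST/POSG is $m$-step $\calI^\dagger$-weakly revealing (\Cref{ass:calIdagger_weakly_revealing}), the hypothesis of~\Cref{lemma:core_test_set} holds at every index in $[H]$, in particular at $h-1$. Hence $\bbQ_{h-1}^m$ is a core test set and
\[
\probbar{\tau_{h-1}, \omega_{h-1}} \;=\; \iprod{m_{h-1}(\omega_{h-1})}{\psi_{h-1}(\tau_{h-1})}.
\]
Substituting $\omega_{h-1} = (x_{t(h)}, \omega_h)$ gives exactly $\probbar{\tau_{h-1}, x_{t(h)}, \omega_h} = \iprod{m_{h-1}(x_{t(h)}, \omega_h)}{\psi_{h-1}(\tau_{h-1})}$, the claimed identity; note both arguments of the inner product lie in $\reals^{d_{h-1}}$, since $m_{h-1}$ takes values in $\reals^{\abs{\bbQ_{h-1}^m}} = \reals^{d_{h-1}}$. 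For the boundary case $h=1$ one uses the conventions $\bbH_0 = \set{\emptyset}$, $\psi_0(\emptyset) = \psi_0$, and $\bbF_0$ being the full trajectory space, together with $\bbQ_0$ being a core test set; then the statement reads $\probbar{x_{t(1)}, \omega_1} = \iprod{m_0(x_{t(1)}, \omega_1)}{\psi_0}$, again an instance of~\Cref{lemma:core_test_set}.

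I do not anticipate any genuine obstacle: the substantive content is entirely contained in~\Cref{lemma:core_test_set}, and what remains is just the index shift plus the factorization of the product space over observable time steps. The only points meriting a moment's care are (i) confirming that~\Cref{ass:calIdagger_weakly_revealing} is imposed uniformly over $h$, so the lemma may legitimately be invoked at $h-1$, and (ii) the $h=1$ boundary, handled via the $\psi_0$ and $\bbQ_0$ conventions above.
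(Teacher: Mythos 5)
Your proposal is correct and matches the paper, which simply states this lemma as "a direct corollary of" Lemma~\ref{lemma:core_test_set}: the identity is Lemma~\ref{lemma:core_test_set} applied at index $h-1$ with the future $\omega_{h-1}=(x_{t(h)},\omega_h)\in\bbF_{h-1}$. Your dimensional remark also correctly identifies that the right-hand side should read $\psi_{h-1}(\tau_{h-1})$ (as in your write-up), consistent with $m_{h-1}$ taking values in $\reals^{d_{h-1}}$.
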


Hence, given a history $\tau_{h-1} = (x_{t(1)}, \ldots, x_{t(h-1)})$, having observed another variable $x_{t(h)}$, we can update our predictions of the future and obtain the probability of trajectories of the form $(\tau_{h-1}, x_{t(h)}, \omega_{h})$ for any future $\omega_{h} \in \bbF_{h}$. Note that $x_{t(h)}$ may be either an observation or an action. Hence, we can update our prediction of the future after deciding an action, and before receiving the next observation. This is in contrast to the standard PSR formulation where predictions of the future can only be updated with a \textit{pair} of observation and action. Our formulation provides additional flexibility, which is crucial for the general information structures modeled by POSTs and POSGs.

This means that, after observing $x_{t(h)}$, we can use the $m_{h}: \bbF_{h} \to \reals^{d_{h}}$ mapping constructed in~\Cref{lemma:core_test_set} to update the probability of any candidate future $\omega_{h}$. We are particularly interested in updating the probabilities of the futures corresponding to the \textit{core test set} at the next time point, since this provides a sufficient statistic of the past. Thus, we define the matrix mapping $\bm{M}_h: \bbX_{t(h)} \to \reals^{d_{h} \times d_{h-1}}$ by,
\begin{equation}\label{eq:def_M_h}
    \bra{M_h(x_{t(h)})}_{q, \cdot} = m_{h-1}(x_{t(h)}, q)^\top, \, q \in \bbQ_{h}.
\end{equation}

That is, $M_h(x_{t(h)})$ is the matrix whose rows are indexed by the core tests at the $h$-th observable step, where the $q \in \bbQ_{h}$ row is the weights given by the $m_{h-1}$ mapping for the future consisting of $x_{t(h)}$ followed by $q$. This mapping enables us to update the probabilities of the core test sets.

\begin{lemma}
    For any $h \in [H-1],\, \tau_h \in \bbH_h,\, x_{t(h+1)} \in \bbX_{t(h+1)}$, we have
    \begin{equation}
        \psi_{h}(\tau_{h-1}, x_{t(h)}) = M_h(x_{t(h)}) \psi_{h-1}(\tau_{h-1}).
    \end{equation}
Hence, for a history $\tau_h = \paren{x_{t(1)}, \ldots, x_{t(h)}} \in \bbH_h$, we have
    \begin{equation}\label{eq:post_psr_psi}
        \psi_h(\tau_h) = M_h(x_{t(h)}) M_{h-1}(x_{t(h-1)}) \cdots M_1(x_{t(1)}) \psi_0,
    \end{equation}
where $\psi_0 = \psi_0(\emptyset)$.
\end{lemma}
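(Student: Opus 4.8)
The plan is to first establish the one-step recursion $\psi_h(\tau_{h-1}, x_{t(h)}) = M_h(x_{t(h)})\,\psi_{h-1}(\tau_{h-1})$ and then iterate it from the empty history to obtain the closed form \eqref{eq:post_psr_psi}. Both steps are essentially bookkeeping on top of \Cref{lemma:core_test_set} (available here since we are under the standing $m$-step $\calI^\dagger$-weakly revealing assumption): no new probabilistic estimate is required, only the definition \eqref{eq:def_psi} of the prediction features, the definition \eqref{eq:def_M_h} of the operators $M_h$, and the index bookkeeping relating the ``observable'' timeline to the global timeline.

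\emph{Step 1 (one-step recursion).} I would fix $h$, a history $\tau_{h-1} \in \bbH_{h-1}$, and a variable $x_{t(h)} \in \bbX_{t(h)}$, and note that $(\tau_{h-1}, x_{t(h)}) \in \bbH_h$ is a history at step $h$. By \eqref{eq:def_psi}, for each core test $q \in \bbQ_h$ the corresponding coordinate of $\psi_h(\tau_{h-1}, x_{t(h)})$ equals $\probbar{\tau_{h-1}, x_{t(h)}, q}$. The key observation is that, since $\bbF_{h-1} = \bbX_{t(h)} \times \bbF_h$ and $\bbQ_h \subseteq \bbF_h$, the concatenation $(x_{t(h)}, q)$ is a future at step $h-1$; hence \Cref{lemma:core_test_set}, applied at the previous step with this future, gives
\begin{equation*}
    \probbar{\tau_{h-1}, x_{t(h)}, q} = \iprod{m_{h-1}(x_{t(h)}, q)}{\psi_{h-1}(\tau_{h-1})}.
\end{equation*}
By \eqref{eq:def_M_h}, the row of $M_h(x_{t(h)})$ indexed by $q$ is exactly $m_{h-1}(x_{t(h)}, q)^\top$, so the right-hand side is the $q$-coordinate of $M_h(x_{t(h)})\,\psi_{h-1}(\tau_{h-1})$. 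As $q$ ranges over all of $\bbQ_h$, the two vectors agree coordinatewise, which is the recursion.

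\emph{Step 2 (telescoping).} I would then prove \eqref{eq:post_psr_psi} by induction on $h$, with base case $h=0$ being the identity $\psi_0(\emptyset) = \psi_0$ from \Cref{def:gen_psr}. For the inductive step, write $\tau_h = (\tau_{h-1}, x_{t(h)})$ with $\tau_{h-1} = \paren{x_{t(1)}, \ldots, x_{t(h-1)}}$, apply Step 1, and substitute the induction hypothesis for $\psi_{h-1}(\tau_{h-1})$:
\begin{equation*}
    \psi_h(\tau_h) = M_h(x_{t(h)})\,\psi_{h-1}(\tau_{h-1}) = M_h(x_{t(h)})\, M_{h-1}(x_{t(h-1)}) \cdots M_1(x_{t(1)})\,\psi_0 .
\end{equation*}
This is exactly \eqref{eq:post_psr_psi}; as a byproduct it confirms that the operators built in \eqref{eq:def_M_h} satisfy the second identity of \Cref{def:gen_psr}.

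\emph{Where the difficulty lies.} There is no substantive obstacle: all of the analytic content has already been packaged into \Cref{lemma:core_test_set}. The points that require care are purely notational: the two index shifts (the map $t(\cdot)$ from observable time to global time, and the bookkeeping making $(\tau_{h-1}, x_{t(h)})$ a length-$h$ history while $(x_{t(h)}, q)$ is a future at step $h-1$); the convention, inherited from the construction of the $m$-step core tests $\bbQ_h^m$, that $\probbar{\cdot}$ evaluated on a partial future denotes the corresponding marginal $\mathrm{do}$-probability, so that the coordinates of $\psi_h$ and the entries of $M_h$ are well-defined and compose as claimed; and the initial step $h=1$, where $\psi_1(x_{t(1)}) = M_1(x_{t(1)})\psi_0$ follows from the initial-time instance of \Cref{lemma:core_test_set} together with $\psi_0(\emptyset) = \psi_0$. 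Once these conventions are pinned down, the proof reduces to the short computation in Step 1 plus the induction in Step 2.
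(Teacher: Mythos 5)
Your proposal is correct and matches the paper's (implicit) argument exactly: the paper presents this lemma as a direct consequence of the preceding corollary of \Cref{lemma:core_test_set} applied to the future $(x_{t(h)}, q)$ for each $q \in \bbQ_h$, combined with the row-wise definition of $M_h$ in \eqref{eq:def_M_h}, followed by the obvious induction. Your attention to the marginal $\mathrm{do}$-probability convention for the truncated $m$-step tests is a point the paper itself glosses over, but it is handled correctly.
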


Finally, observe that $\bbQ_{H-1}^m = \bbX_{t(H)}$. Hence,
\begin{equation*}
    \psi_{H-1}(\tau_{H-1}) = \paren{\probbar{\tau_{h-1}, x_{t(H)}}}_{x_{t(H)} \in \bbX_{t(H)}} \in \reals^{\abs{\bbX_{t(H)}}}.
\end{equation*}
Thus, letting $\phi_H: \bbX_{t(H)} \to \reals^{\abs{\bbX_t(H)}}$ be $\phi_H(x_{t(H)}) = \bm{e}_{x_{t(H)}}$ (the canonical basis vector), yields
\begin{equation}\label{eq:post_psr_joint_prob}
    \probbar{x_{t(h)} \,\colon\, h \in [H]}  = \phi_{H}(x_{t(H)})^\top M_{H-1}(x_{t(H-1)}) \cdots M_1(x_{t(1)}) \psi_0.
\end{equation}

Hence,~\Cref{eq:post_psr_joint_prob} together with~\Cref{eq:post_psr_psi} implies that $\paren{\bm{M}, \phi_H, \psi_0}$ is a valid generalized PSR representation for the POST/POSG (as per~\Cref{def:gen_psr}). Moreover, when the $\calI^\dagger$-weakly-revealing is robust (as per~\Cref{ass:calIdagger_weakly_revealing}), then the generalized PSR is well-conditioned. We summarize this in the following result.

\begin{theorem}[Generalized PSR representation for POST/POSG]
    \label{theorem:constructing_gen_psr}
    Consider an $m$-step $\calI^\dagger$-weakly revealing POST/POSG. Let $\sset{M_h}_{h \in [H-1]}$ be defined as in \Cref{eq:def_M_h}
    and let
    \[\psi_0 = \bra{\probbar{q}}_{q \in \bbQ_0^m},\ \phi_H(x_{t(H)}) = \bm{e}_{x_{t(H)}}.\]
    Then, $(\sset{\bbQ_h^m}_h, \phi_H, \sset{M_h}_h, \psi_0)$ forms a generalized predictive state representation. In particular,
    \begin{align*}
        \probbar{x_{t(1)}, \ldots, x_{t(H)}}  &= \phi_{H}(x_{t(H)})^\top M_{H-1}(x_{t(H-1)}) \cdots M_1(x_{t(1)}) \psi_0, \\
        \psi_h(x_{t(1)}, \ldots, x_{t(h)}) &= M_h(x_{t(h)}) \cdots M_1(x_{t(1)}) \psi_0, \ \forall h.
    \end{align*}
    Moreover, if the $\calI^\dagger$-weakly-revealing property is $\alpha$-robust, then this generalized PSR is $\gamma$-well-conditioned with $\gamma = \alpha / \max_h \aabs{\bbI_h^\dagger}^{1/2}$.
\end{theorem}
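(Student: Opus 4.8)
The plan is to separate the statement into its structural content --- that $(\{\bbQ_h^m\}_h, \phi_H, \{M_h\}_h, \psi_0)$ really is a generalized PSR in the sense of \Cref{def:gen_psr}, satisfying the two displayed identities --- and its quantitative content, the $\gamma$-well-conditioning bound. The structural half is essentially bookkeeping built on top of the lemmas preceding the theorem: \Cref{lemma:core_test_set} (which uses \Cref{ass:calIdagger_weakly_revealing}) gives that each $\bbQ_h^m$ is a core test set and that $\probbar{\tau_h,\omega_h} = \iprod{m_h(\omega_h)}{\psi_h(\tau_h)}$; the two corollaries stated just before the theorem promote this to the one-step recursion $\psi_h(\tau_h) = M_h(x_{t(h)})\,\psi_{h-1}(\tau_{h-1})$ through the definition \Cref{eq:def_M_h} of $M_h$, hence to $\psi_h(\tau_h) = M_h(x_{t(h)})\cdots M_1(x_{t(1)})\psi_0$, and then, using $\bbQ_{H-1}^m = \bbX_{t(H)}$ and $\phi_H(x_{t(H)}) = \bm{e}_{x_{t(H)}}$, to the trajectory identity \Cref{eq:post_psr_joint_prob}. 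So for this half I would only verify that the pieces have the shapes required by \Cref{def:gen_psr}: $\psi_0 = [\probbar{q}]_{q\in\bbQ_0^m}$ is the initial feature $\psi(\emptyset)$, $M_h \colon \bbX_{t(h)}\to\reals^{d_h\times d_{h-1}}$ (immediate from \Cref{eq:def_M_h} since $m_{h-1}$ is $\reals^{d_{h-1}}$-valued and its rows range over $\bbQ_h^m$), and $\phi_H$ maps into $\reals^{d_{H-1}}$ because $d_{H-1} = \abs{\bbQ_{H-1}^m} = \abs{\bbX_{t(H)}}$. No new idea is needed here.

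For the well-conditioning half I would check the two inequalities of \Cref{ass:psr_gamma_wellcond} directly from the explicit formula $m_h(\omega_h) = (\bm{G}_h^\dagger)^\top [\probbar{\omega_h\given i_h^\dagger}]_{i_h^\dagger\in\bbI_h^\dagger}$. Fix $h$ and $z$ with $\onenorm{z}\le 1$, and set $y := \bm{G}_h^\dagger z$, so that $m_h(\omega_h)^\top z = \sum_{i_h^\dagger}\probbar{\omega_h\given i_h^\dagger}\,y_{i_h^\dagger}$; since the conditionals are nonnegative, the triangle inequality gives
\[
    \sum_{\omega_h}\pi(\omega_h)\,\abs{m_h(\omega_h)^\top z} \;\le\; \sum_{i_h^\dagger}\abs{y_{i_h^\dagger}}\sum_{\omega_h}\pi(\omega_h)\,\probbar{\omega_h\given i_h^\dagger}.
\]
The key observation is that $\sum_{\omega_h}\pi(\omega_h)\,\probbar{\omega_h\given i_h^\dagger}$ is precisely the total probability, under the non-anticipatory future policy $\pi$ and conditioned on the information-structural state $i_h^\dagger$, of all future observation trajectories, and therefore equals $1$; hence the left-hand side is at most $\onenorm{y} = \onenorm{\bm{G}_h^\dagger z}$. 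For the second inequality of \Cref{ass:psr_gamma_wellcond} (read with the index matched to the construction) the same computation is run with $\omega_h$ replaced by $(x_{t(h)},q)$, $q\in\bbQ_h^m$; now the action components of the $m$-step tests indexing the rows of $M$ are summed freely instead of being averaged by a policy, so the analogous marginalization produces a factor $\abs{\bbQ_h^{m,A}}$ in place of $1$, matching the $\abs{\bbQ^A}$ on the right-hand side of \Cref{ass:psr_gamma_wellcond}, and one is left bounding $\abs{\bbQ_h^{m,A}}\,\onenorm{\bm{G}_{h-1}^\dagger z}$. Finally, the $\alpha$-robust $\calI^\dagger$-weakly-revealing hypothesis does the rest: $\bm{G}_h$ has full column rank $\abs{\bbI_h^\dagger}$ with $\sigma_{\abs{\bbI_h^\dagger}}(\bm{G}_h)\ge\alpha$, hence $\nnorm{\bm{G}_h^\dagger}_2\le 1/\alpha$, and combining $\onenorm{v}\le\sqrt{\abs{\bbI_h^\dagger}}\,\nnorm{v}_2$ with $\nnorm{z}_2\le\onenorm{z}\le 1$ gives $\onenorm{\bm{G}_h^\dagger z}\le\sqrt{\abs{\bbI_h^\dagger}}/\alpha\le 1/\gamma$ for $\gamma = \alpha/\max_h\abs{\bbI_h^\dagger}^{1/2}$. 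Substituting back verifies both conditions of \Cref{ass:psr_gamma_wellcond} with this $\gamma$.

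The main obstacle I anticipate is making the marginalization identity $\sum_{\omega_h}\pi(\omega_h)\,\probbar{\omega_h\given i_h^\dagger} = 1$ (and its counterpart for the second inequality) genuinely rigorous: this is where one must be careful about the $\mathrm{do}$-calculus semantics of $\probbar{\cdot\given i_h^\dagger}$, about the exact class of future ``policies'' $\pi$ that \Cref{ass:psr_gamma_wellcond} maximizes over, and about which observation versus action components of a partial future $q$ are weighted by $\pi$ versus summed out --- this is cleanest via an induction over the length of the future, and without the non-anticipatory structure the quantity could blow up by a factor exponential in the horizon. Everything else --- the $\ell_1$--$\ell_2$ and pseudo-inverse estimates, the index conventions, and the assembly of the structural part --- is routine once that identity is pinned down.
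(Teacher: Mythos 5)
Your proposal matches the paper's proof essentially step for step: the structural half is delegated to \Cref{lemma:core_test_set} and the recursion lemmas exactly as in the main text, and the well-conditioning half uses the same ingredients --- the factorization $m_h(\omega_h)^\top z = \tilde{m}_h(\omega_h)^\top \bm{G}_h^\dagger z$, the marginalization identity $\sum_{\omega_h}\pi(\omega_h)\,\probbar{\omega_h \given i_h^\dagger}=1$ (with the extra factor $\aabs{\bbQ_{h+1}^A}$ for the second condition, from summing the action components of the tests freely), and the estimate $\onenorm{\bm{G}_h^\dagger z}\le \sqrt{\aabs{\bbI_h^\dagger}}\,\onenorm{z}/\alpha$ from the $\alpha$-robustness. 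The only detail you omit is the boundary case $h > H-m$, where the core tests are the full (shorter-than-$m$-step) futures and no robustness of $\bm{G}_h$ is assumed; the paper handles it separately by observing that $m_h$ and $M_h$ are then coordinate-selection operators, so both conditions hold with constant $1$.
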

\begin{proof}
    The proof is given in~\Cref{sec:appdx_post_psrep_proofs}.
\end{proof}

We note that well-conditioning through $\alpha$-robustness of the $\calI^\dagger$-weakly revealing is necessary in the learning setting. Without the well-conditioning of the generalized PSR, small estimation errors in the parameters of the PSR can result in unbounded errors in the estimated probabilities of trajectories.

\section{Characterizing the Sample Complexity of General Reinforcement Problems via of Information Structure}\label{sec:rl_post}
% \section{Bound on Sample Complexity of Reinforcement Learning for Dynamic Sequential Teams}\label{sec:rl_post}

In this section, we establish an upper bound on the achievable sample complexity of general reinforcement learning problems in terms of their information structure. In plain language, this is a result that says ``any sequential decision-making problem with an information structure $\calI$ can be learned with a sample complexity at most $f(\calI)$''. We prove this result by exhibiting an algorithm that achieves this upper bound for any sequential decision-making problem with a particular information structure. This identifies a class of sequential decision-making problems that are statistically tractable via conditions on the information structure, expanding the set of known-tractable problems while recovering existing tractability results as a special case (e.g., observable POMDPs). This section focuses on the team setting, whereas the next section extends the result to the game setting.

Recall that in~\Cref{def:gen_psr}, we proposed a generalization of predictive state representations which extends the standard formulation of PSRs, and in~\Cref{sec:post_psrep} we explicitly constructed a generalized PSR representation for POST/POSG models as a function of their information structure. Our approach in this section will be to introduce an algorithm for generalized PSRs and prove a corresponding sample complexity result, which will in turn imply a bound on the sample complexity of learning POST/POSG models via the information-structural characterization of the rank of observable dynamics established in~\Cref{ssec:info_struct_rank}. We note that the construct of generalized PSRs and the corresponding algorithms may be of independent interest, but can be thought of as a proof device for the purposes of this paper.

We will adapt the model-based UCB-type algorithm of~\citet{huangProvablyEfficientUCBtype2023}, extending it to generalized PSRs, including those representing POSTs. The algorithm involves the estimation of an upper confidence bound which captures the uncertainty in the estimated model and drives exploration so as to minimize this uncertainty. The UCB-based approach has the advantage of providing a last-iterate guarantee and requiring a weaker notion of a planning oracle (a standard planning oracle instead of an optimistic planning oracle as required by similar algorithms). The technical contribution of this section is to extend the algorithm and its theoretical guarantees to generalized PSRs. The tools developed in doing so can be used to directly extend any other PSR-based algorithm to generalized PSRs.

% We now introduce our model-based algorithm for learning generalized PSRs, including those representing POSTs. The algorithm is based on~\parencite{huangProvablyEfficientUCBtype2023}. It extends their UCB-type algorithm to our generalized notion of PSRs with observable operators defined at the level of individual observations or actions rather than pairs of observations and actions. The algorithm involves the estimation of an upper confidence bound which captures the uncertainty in the estimated model and drives exploration so as to minimize this uncertainty. The UCB-based approach has the advantage of providing a last-iterate guarantee and being computationally efficient. Our contribution in this section is to extend the algorithm and its theoretical guarantees to generalized PSRs.

When learning generalized PSRs, we suppose that the core test sets $\set{\bbQ_h}_{0 \leq h \leq H-1}$ are known by the algorithm. For example, if the sequential decision-making problem is a POST,~\Cref{sec:post_psrep} provides conditions under which $m$-step futures form core test sets. Let $\Theta$ be the set of $\gamma$-well-conditioned generalized PSR representations with $\sset{\bbQ_h}_{0 \leq h \leq H-1}$ as core test sets. Denote by $\bar{\Theta}_\epsilon$ an optimistic $\epsilon$-cover of $\Theta$ (defined formally in~\Cref{sec:appdx_genpsr_exists}).

Recall that $d_h \coloneqq \abs{\bbQ_h}$ and $d = \max_h d_h$. Moreover, $\bbQ_h^A \coloneqq \act(\bbQ_h)$ are the action components of the core test sets and $Q_A \coloneqq \max_h \abs{\bbQ_h^A}$ is the maximal size of those action components. We define the exploration action sequences at time $h$ to be $\bbQ_{h-1}^{\mathrm{exp}} = \act(\bbX_{h} \times \bbQ_h \cup \bbQ_{h-1})$. Moreover, we define $\uhexp{h-1}$ as the policy, defined from time $h-1$ onwards, in which each selection of action sequences in $\bbQ_{h-1}^{\mathrm{exp}}$ are chosen uniformly at random. For a model $\theta$ and reward function $R$, we define the value of a policy under this model and reward as $V_\theta^R(\pi) \coloneq \sum_{\tau_H} R(\tau_H) \probb{\theta}{\pi}(\tau_H)$.

The algorithmic description is given in~\Cref{alg:post_ucb}. At each iteration $k$, the learner collects a trajectory $\tau_H^{k,h}$ for each time index $h \in [H]$ by using a particular policy that drives exploration so as to better estimate the parameters associated with the $h$-th time step. To collect the trajectory $\tau_{H}^{k,h}$, the learner executes the policy at the previous iteration, $\pi^{k-1}$, until time $h-1$ collecting the trajectory $\tau_{h-1}^{k,h}$ then executes $\uhexp{h-1}$ which samples action sequences from $\bbQ_{h-1}^{\mathrm{exp}}$ uniformly. The particular choice of the exploratory action sequences $\bbQ_{h-1}^{\mathrm{exp}}$ comes out of the proof (see proof of~\Cref{lemma:ucb_tv} in the appendix). Intuitively, $\act(\bbQ_{h-1})$ allows us to estimate the prediction features $\bar{\psi}^*(\tau_{h-1}^{k,h}) = [\bar{\bbP}(q \,|\, \tau_{h-1}^{k,h})]_{q \in \bbQ_{h-1}}$, and $\act(\bbX_h \times \bbQ_{h})$ allows us to estimate $M_h^*(x_h) \bar{\psi}^*(\tau_{h-1}^{k,h})$.

% In our model-based reinforcement learning algorithm for POSTs, the model class is a set $\Theta$ parametrizing PSR models for the POST. The PSR models in $\Theta$ will share the same core test sets $\set{\bbQ_h}_{0 \leq h \leq H-1}$ (which are pre-specified) but will parameterize $\bm{M}, \psi_0, \phi_H$. In~\Cref{sec:seq_teams_core_tests} we derive a condition under which POSTs admit a certain class of core test sets, which in turn enables the construction of a PSR model $\paren{\bm{M}, \psi_0, \phi_H}$ and a parameterization of a model class $\Theta$.

The collected trajectories are added to the dataset, together with the policies used to collect them. The next step is model estimation via (constrained) maximum likelihood estimation. The algorithm estimates a model $\hat{\theta}^k$ by selecting any model in a constrained set $\calB^k$ defined as
\begin{equation}
    \begin{split}
        &\Theta_{\min}^k = \set{\theta \in \Theta: \forall h,\, (\tau_h, \pi) \in \calD_h^k,\ \probb{\theta}{\pi}(\tau_h) \geq p_{\min}},\\
        &\calB^k = \set{\theta \in \Theta_{\min}^k : \sum_{(\tau_H, \pi) \in \calD^k} \log \probb{\theta}{\pi}(\tau_H) \geq \max_{\theta' \in \Theta_{\min}^k}  \sum_{(\tau_H, \pi) \in \calD^k} \log \probb{\theta'}{\pi}(\tau_H) - \beta}.
    \end{split}
\end{equation}

The introduction of $\Theta_{\min}^k$ ensures that $\probb{\theta^*}{\pi^{k-1}}(\tau_{h-1}^{k,h})$ is not too small so that the estimates of the prediction features $\bar{\psi}^*(\tau_{h-1}^{k,h}) = [\bar{\bbP}(q \,|\, \tau_{h-1}^{k,h})]_{q \in \bbQ_{h-1}}$ are accurate. This design differs from other MLE-based estimators~\parencite[e.g.,][]{liuWhenPartiallyObservable2022,liuOptimisticMLEGeneric2022,chenPartiallyObservableRL2022} due to the estimation of parameters capturing \textit{conditional} probabilities.

Next, the algorithm chooses a policy that drives the algorithm to trajectories $\tau_h$ whose prediction features have so far been unexplored. To do this,~\Cref{alg:post_ucb} constructs an upper confidence bound on the total variation distance between the estimated model and the true model. This is done via a bonus function $\hat{b}^k(\tau_H)$,
\begin{equation}
    \begin{split}
        \hat{b}^k(\tau_H) &= \min\set{\alpha \sqrt{\sum_{h=0}^{H-1} \norm{\hatbar{\psi}(\tau_h)}_{(\hat{U}_h^k)^{-1}}^2}, 1}, \quad \text{where},\\
        \hat{U}_h^k &= \lambda I + \sum_{\tau_h \in \calD_{h^k}} \hatbar{\psi}^k(\tau_h) \hatbar{\psi}^k(\tau_h)^\top,
    \end{split}
\end{equation}
where $\lambda$ and $\alpha$ are pre-specified parameters to the algorithm. Thus, the bonus function captures the degree of uncertainty in the estimated prediction features $\hatbar{\psi}(\tau_h)$. In particular, the bonus $\hat{b}(\tau_H)$ %$\nnorm{\hatbar{\psi}(\tau_h)}_{(\hat{U}_h^k)^{-1}}^2$ 
will be large for trajectories whose prediction feature $\hatbar{\psi}(\tau_h)$ lie far away from the empirical distribution of prediction features sampled in the dataset $\calD_h^k$. This is captured by computing the norm with respect to the covariance $\hat{U}_h^k$.

The algorithm then chooses an exploration policy for the next iteration which maximizes this upper confidence bound, hence collecting trajectories that have high uncertainty in their prediction features. When the estimated model is sufficiently accurate on all trajectories, the algorithm terminates and returns the optimal policy with respect to the reward function $R$ under the estimated model.

\begin{algorithm}[ht]
    \caption{Learning Generalized PSRs (e.g., POSTs) via MLE and Exploration with UCB}\label{alg:post_ucb}

    \For{$k \gets 1, \ldots, K$}{
        \For{$h \gets 1, \ldots, H$}{
            Collect $\tau_H^{k,h} = (\omega_{h-1}^{k,h}, \tau_{h-1}^{k,h})$ using $\nu(\pi^{k-1}, \uhexp{h-1})$.

             $\calD_{h-1}^k \gets \calD_{h-1}^{k-1} \cup \set{\paren{\tau_H^{k,h}, \nu\paren{\pi^{k-1}, \uhexp{h-1}}}}$.
        }

       $\calD^k = \set{\calD_h^k}_{h=0}^{H-1}$

       Compute MLE $\hat{\theta} \in \calB^k$, where
       \begin{align*}
        % \begin{split}
            &\Theta_{\min}^k = \set{\theta: \forall h, (\tau_h, \pi) \in \calD_h^k, \probb{\theta}{\pi}(\tau_h) \geq p_{\min}},\\
            &\calB^k = \set{\theta \in \Theta_{\min}^k : \sum_{(\tau_H, \pi) \in \calD^k} \log \probb{\theta}{\pi}(\tau_H) \geq \max_{\theta' \in \Theta_{\min}^k}  \sum_{(\tau_H, \pi) \in \calD^k} \log \probb{\theta'}{\pi}(\tau_H) - \beta}.
        % \end{split}
       \end{align*}

       Define the bonus function, $\hat{b}^k(\tau_H) = \min\set{\alpha \sqrt{\sum_{h=0}^{H-1} \norm{\hatbar{\psi}(\tau_h)}_{(\hat{U}_h^k)^{-1}}^2}, 1}$, where $\hat{U}_h^k = \lambda I + \sum_{\tau_h \in \calD_{h^k}} \hatbar{\psi}^k(\tau_h) \hatbar{\psi}^k(\tau_h)^\top$.

       Solve the planning problem to maximize the bonus function $\pi^k = \argmax_{\pi} V_{\hat{\theta}^k}^{\hat{b}^k}(\pi)$.

       \If{$V_{\hat{\theta}^k}^{\hat{b}^k}(\pi^k) \leq \epsilon /2$}{
            $\theta^\epsilon = \hat{\theta}^k$. \textbf{break.}
       }
    }
    \Return{$\pi = \argmax_\pi V_{\theta^\epsilon}^{R}(\pi)$}
\end{algorithm}

We extend~\citeauthor{huangProvablyEfficientUCBtype2023}'s theoretical guarantees to show that~\Cref{alg:post_ucb} enjoys polynomial sample complexity for \textit{generalized} PSRs (\Cref{def:gen_psr}).

\begin{theorem}\label{theorem:post_ucb}
    Suppose~\Cref{ass:psr_gamma_wellcond} holds. Suppose the parameters $p_{\min}, \lambda, \alpha, \beta$ are chosen appropriately. In particular, let 
    \begin{align*}
        p_{\min} \leq \frac{\delta}{K H \prod_{h=1}^H \abs{\bbX_{h}}},\ \lambda = \frac{{\gamma \max_{s \in \calA} \aabs{\bbX_{s}}^2 Q_A \beta \max\sset{\sqrt{r}, Q_A \sqrt{H} / \gamma}}}{\sqrt{dH}},\\
        \alpha = O\paren{\frac{Q_A \sqrt{d H \lambda}}{\gamma^2} + \frac{\max_{s \in \calA} \abs{\bbX_{s}} Q_A \sqrt{\beta}}{\gamma}}, \ \beta = O\paren{\log \abs{\bar{\Theta}_\varepsilon}}, \ \varepsilon \leq \frac{p_{\min}}{K H}.
    \end{align*}
    % In particular, let $p_{\min} = O\pparen{\delta \pparen{K H \prod_{h=1}^H \abs{\bbX_{h}}}^{-1}}$, $\lambda = {\gamma \max_{s \in \calA} \aabs{\bbX_{s}}^2 Q_A \beta \max\sset{\sqrt{r}, Q_A \sqrt{H} / \gamma}}\pparen{{d H}}^{-1/2}$, $\alpha = O\pparen{{\gamma^{-2}}{Q_A \sqrt{H d}} \sqrt{\lambda} + {\gamma}^{-1}{\max_{s \in \calA} \abs{\bbX_{s}} Q_A \sqrt{\beta} }}$, and let $\beta = O(\log \abs{\bar{\Theta}_\varepsilon})$, where $\varepsilon = O({p_{\min}}\pparen{K H}^{-1})$.
    Then, with probability at least $1 - \delta$,~\Cref{alg:post_ucb} returns a model $\theta^\epsilon$ and a policy $\pi$ that satisfy
    \begin{equation*}
        V_{\theta^\epsilon}^{R}(\pi^*) - V_{\theta^\epsilon}^{R}(\pi) \leq \varepsilon, \ \text{and} \ \forall \tilde{\pi}, \ \tv{\probb{\theta^\epsilon}{\tilde{\pi}}(\tau_H), \probb{\theta^*}{\pi}(\tau_H)} \leq \varepsilon.
    \end{equation*}
    In addition, the algorithm terminates with a sample complexity of,
    \begin{equation*}
        \tilde{O}\paren{\paren{r + \frac{Q_A^2 H}{\gamma^2}} \cdot \frac{r d H^3 \cdot \max_{s \in \calA} \abs{\bbX_{s}}^2 \cdot Q_A^4 \beta}{\gamma^4 \epsilon^2}}.
    \end{equation*}
\end{theorem}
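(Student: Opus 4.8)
The plan is to port the analysis of the UCB-type PSR algorithm of \citet{huangProvablyEfficientUCBtype2023} to the generalized PSR model of \Cref{def:gen_psr}, re-deriving its core estimation and error-propagation lemmas so that they hold when the operators $M_h$ act on a single variable of an arbitrary space rather than on an observation--action pair. The argument rests on four ingredients: \textbf{(i)} an MLE concentration bound for the confidence set $\calB^k$; \textbf{(ii)} a translation from closeness of joint trajectory laws to accuracy of the estimated \emph{conditional} prediction features $\hatbar{\psi}^k(\tau_h)$; \textbf{(iii)} a proof that the bonus $\hat{b}^k$ upper-bounds $\tv{\probb{\hat{\theta}^k}{\pi}(\tau_H),\probb{\theta^*}{\pi}(\tau_H)}$ uniformly over $\pi$ (the analogue of \Cref{lemma:ucb_tv}); and \textbf{(iv)} an elliptical-potential argument controlling the number of iterations $K$.

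For \textbf{(i)}, using the optimistic cover $\bar{\Theta}_\varepsilon$ (defined in \Cref{sec:appdx_genpsr_exists}) and the truncation to $\Theta_{\min}^k$, which forces all relevant log-likelihoods to be bounded by $\log(1/p_{\min})$, a standard martingale-plus-union-bound argument gives that with probability at least $1-\delta$ every $\theta\in\calB^k$ satisfies $\sum_{(\tau_H,\pi)\in\calD^k}\hellingersq{\probb{\theta}{\pi},\probb{\theta^*}{\pi}}=O(\beta)$ with $\beta=O(\log\aabs{\bar{\Theta}_\varepsilon})$; the stated values of $p_{\min}$ and of the cover resolution $\varepsilon$ are exactly what keeps the truncation and discretization errors lower order. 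For \textbf{(ii)}, converting Hellinger to total variation and using that the stage-$h$ data of round $k$ are collected under $\nu(\pi^{k-1},\uhexp{h-1})$, I would show that on histories $\tau_{h-1}$ drawn from $\pi^{k-1}$ with $\probb{\theta^*}{\pi^{k-1}}(\tau_{h-1})\ge p_{\min}$ the error $\expectunder{\pi^{k-1}}{\nnorm{\hatbar{\psi}^k(\tau_{h-1})-\bar{\psi}^*(\tau_{h-1})}_2^2}$ is controlled by the MLE error of \textbf{(i)}, inflated by $p_{\min}^{-1}$ (from conditioning) and by the well-conditioning constants. This is precisely where the design $\bbQ_{h-1}^{\mathrm{exp}}=\act(\bbX_h\times\bbQ_h\cup\bbQ_{h-1})$ is used: sampling $\act(\bbQ_{h-1})$ uniformly identifies $\bar{\psi}^*(\tau_{h-1})$, sampling $\act(\bbX_h\times\bbQ_h)$ identifies $M_h^*(x_{t(h)})\bar{\psi}^*(\tau_{h-1})$, and \Cref{ass:psr_gamma_wellcond} bounds the amplification ($Q_A$, $\max_{s\in\calA}\aabs{\bbX_s}$, $\gamma^{-1}$) incurred in inverting these relations; the $\Theta_{\min}^k$ truncation keeps the conditioning denominators away from zero so these estimates are meaningful.

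For \textbf{(iii)}, a performance-difference / telescoping decomposition writes $\probb{\hat{\theta}^k}{\pi}(\tau_H)-\probb{\theta^*}{\pi}(\tau_H)$ as a sum over $h$ of terms of the form $\phi_h^\top(\hat{M}_h(x_{t(h)})-M_h^*(x_{t(h)}))\bar{\psi}^*(\tau_{h-1})$ composed with normalized suffixes; summing against an arbitrary $\pi$ and invoking both parts of \Cref{ass:psr_gamma_wellcond} bounds $\tv{\probb{\hat{\theta}^k}{\pi},\probb{\theta^*}{\pi}}$ by $C\sum_h\expectunder{\pi}{\nnorm{\hatbar{\psi}^k(\tau_h)}_{(\hat{U}_h^k)^{-1}}}$ for a constant $C$ polynomial in $Q_A,\gamma^{-1},\max_{s\in\calA}\aabs{\bbX_s}$, plus an in-sample term absorbed by the $\lambda I$ regularization of $\hat{U}_h^k$ (which is what fixes the stated $\lambda$). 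With $\alpha$ as in the statement this gives $\tv{\probb{\hat{\theta}^k}{\pi}(\tau_H),\probb{\theta^*}{\pi}(\tau_H)}\le V_{\hat{\theta}^k}^{\hat{b}^k}(\pi)$ for every $\pi$; combined with the termination test $V_{\hat{\theta}^k}^{\hat{b}^k}(\pi^k)\le\epsilon/2$ and the fact that $\pi^k$ is the bonus-maximizer, $\theta^\epsilon$ is $\epsilon$-accurate in total variation under every policy, and the claimed value gap $V_{\theta^\epsilon}^{R}(\pi^*)-V_{\theta^\epsilon}^{R}(\pi)\le\epsilon$ is then immediate since $\pi=\argmax_\pi V_{\theta^\epsilon}^{R}(\pi)$.

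For \textbf{(iv)}, on every non-terminating round $V_{\hat{\theta}^k}^{\hat{b}^k}(\pi^k)>\epsilon/2$, and since $\pi^k$ is the policy executed (up to time $h-1$) when collecting the stage-$h$ trajectories of round $k+1$, a realized-versus-expected concentration together with the elliptical potential lemma yields $\sum_{k\le K}\sum_h\expectunder{\pi^k}{\nnorm{\hatbar{\psi}^k(\tau_h)}_{(\hat{U}_h^k)^{-1}}^2}=\tilde{O}(rdH)$ with $r=\max_h r_h$, $d=\max_h d_h$. Cauchy--Schwarz then forces $K\cdot(\epsilon/2)=\tilde{O}(\sqrt{K}\cdot\alpha\sqrt{rdH})$, so substituting $\alpha$ and $\lambda$ gives $K=\tilde{O}\big((r+Q_A^2H/\gamma^2)\,rdH^2\max_{s\in\calA}\aabs{\bbX_s}^2 Q_A^4\beta/(\gamma^4\epsilon^2)\big)$, and multiplying by the $H$ trajectories collected per round gives the stated sample complexity. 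The main obstacle is ingredients \textbf{(ii)}--\textbf{(iii)}: because the generalized operators act one variable at a time on arbitrary spaces, the exploration schedule $\uhexp{h-1}$ and the truncation $\Theta_{\min}^k$ must take over the structural role played by alternating observation--action pairs in \citet{huangProvablyEfficientUCBtype2023}, and carefully tracking the $Q_A$, $\max_{s\in\calA}\aabs{\bbX_s}$, and $\gamma^{-1}$ dependencies through the re-derived error-propagation inequalities is the delicate part of the argument.
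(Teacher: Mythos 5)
Your proposal follows essentially the same route as the paper's proof: it adapts the UCB analysis of \citet{huangProvablyEfficientUCBtype2023} to generalized PSRs via the same four pillars --- the constrained-MLE guarantee over the optimistic cover $\bar{\Theta}_\varepsilon$, the feature-accuracy step driven by the exploration design $\bbQ_{h-1}^{\mathrm{exp}}$ and the two well-conditioning inequalities, the telescoping operator-difference decomposition certifying $\hat{b}^k$ as an upper confidence bound on the total variation distance, and the elliptical-potential plus pigeonhole accounting --- and your parameter bookkeeping reproduces the stated $\lambda, \alpha, \beta$ and the $H^2 \to H^3$ conversion from episodes to samples.

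One caution on your ingredient (ii): if you literally inflate the MLE error by $p_{\min}^{-1}$ to pass from joint to conditional accuracy in expectation under $\pi^{k-1}$, the resulting bound is vacuous, since $p_{\min} \leq \delta / (KH\prod_h \abs{\bbX_h})$ is exponentially small in $H$. The paper incurs no $p_{\min}^{-1}$ factor: it bounds the \emph{empirical} sum $\sum_h \sum_{(\tau_h,\pi)\in\calD_h^k} \tvsq{\probb{\theta}{\pi}(\omega_h \given \tau_h), \probb{\theta^*}{\pi}(\omega_h\given\tau_h)}$ directly by $O(\beta)$ uniformly over $\theta \in \Theta_{\min}^k$, and it is these in-sample conditional errors (not population expectations divided by $p_{\min}$) that enter the quantities $\alpha_{h-1}^k$ in the UCB lemma and the term $I_3$ in its proof. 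The constraint $\probb{\theta}{\pi}(\tau_h) \geq p_{\min}$ serves only to make that conditional-MLE argument valid and to keep $\theta^*$ inside $\Theta_{\min}^k$ with high probability; $p_{\min}$ otherwise affects the final rate only logarithmically through $\varepsilon \leq p_{\min}/(KH)$ and $\beta = O(\log\abs{\bar{\Theta}_\varepsilon})$. With the feature errors tracked as sums over the collected dataset rather than as $p_{\min}^{-1}$-weighted expectations, the remainder of your argument goes through exactly as in the paper.
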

\begin{proof}
    The proof is given in~\Cref{sec:post_ucb_proof}.
\end{proof}

This result shows that the sample complexity of learning a generalized PSR depends on the problem size through a few key quantities. In particular, the sample complexity scales polynomially in the underlying rank $r$, the dimension of the PSR parameterization $d$, the size of the action component of the core tests $Q_A$, the time horizon $H$, the conditioning number $\gamma^{-1}$, the size of the action spaces $\max_{s \in \calA} \aabs{\bbX_s}$, the log covering number $\log \aabs{\bar{\Theta}_\epsilon}$, and the desired suboptimality error $\epsilon$. Note that $\tilde{O}$ omits logarithmic dependence.

To apply this algorithm to a POST, we can use the generalized PSR parameterization constructed in~\Cref{sec:post_psrep}. By~\Cref{theorem:post_posg_rank} the PSR rank is bounded by $r \leq \max_{h} \lvert \bbI_h^\dagger \rvert$. If this POST is $\alpha$-robustly $\calI^\dagger$-weakly revealing, then by~\Cref{theorem:constructing_gen_psr} it admits a $\gamma$-well-conditioned generalized PSR parameterization with $\gamma = \alpha / \max_h \lvert\bbI_h^\dagger\rvert^{1/2}$ and the $m$-step futures as core test sets. Moreover, we have $d = \max_h d_h = \max_h \abs{\bbQ_h^m}$. The following corollary states that~\Cref{alg:post_ucb} can learn a partially-observable sequential team with a sample complexity that is polynomial in the size of the information-structural state space $\max_h \aabs{\bbI_h^\dagger}$.

\begin{corollary}\label{cor:post_ucb}
    Suppose a partially-observable sequential team is $m$-step $\alpha$-robustly $\calI^\dagger$-weakly revealing (\Cref{ass:calIdagger_weakly_revealing}). Applying~\Cref{alg:post_ucb} to this PSR representation, with parameters $p_{\min}, \lambda, \alpha, \beta$ chosen as in~\Cref{theorem:post_ucb}, returns a $\varepsilon$-optimal policy with a sample complexity of,
    \begin{equation*}
        \tilde{O}\paren{\paren{1 + \frac{Q_A^2 H}{\alpha^2}} \frac{\max_{h} \aabs{\bbI_h^\dagger}^7  \cdot \max_h{\abs{\bbQ_h^m}} \cdot H^5\cdot  \max_{s \in \calA}  \abs{\bbX_{s}}^2 \cdot \max_{s \in \calU} \abs{\bbX_s} \cdot Q_A^4}{\alpha^4 \epsilon^2}}.
    \end{equation*}
\end{corollary}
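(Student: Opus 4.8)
The plan is to obtain \Cref{cor:post_ucb} by instantiating \Cref{theorem:post_ucb} on the generalized PSR parameterization of a POST constructed in \Cref{sec:post_psrep}. Write $M := \max_{h \in [H]} \abs{\bbI_h^\dagger}$. The first step assembles the three structural ingredients that \Cref{theorem:post_ucb} consumes. (i) By \Cref{theorem:post_posg_rank}, the PSR rank obeys $r \le M$. (ii) Since the POST is $m$-step $\alpha$-robustly $\calI^\dagger$-weakly revealing, \Cref{theorem:constructing_gen_psr} yields an explicit generalized PSR whose core test sets are the $m$-step futures $\set{\bbQ_h^m}_h$ --- in particular these are known to the algorithm without knowledge of the dynamics --- and which is $\gamma$-well-conditioned with $\gamma = \alpha / M^{1/2}$, so that \Cref{ass:psr_gamma_wellcond} holds. (iii) For this parameterization, $d = \max_h \abs{\bbQ_h^m}$ and $Q_A = \max_h \abs{\act(\bbQ_h^m)}$, which are exactly the quantities $d$ and $Q_A$ appearing in \Cref{theorem:post_ucb}.

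The second step is to run \Cref{alg:post_ucb} on this parameterization with the parameter schedule of \Cref{theorem:post_ucb}; this is admissible precisely because the core tests are known and \Cref{ass:psr_gamma_wellcond} is in force, and one checks that the prescribed $p_{\min}, \lambda, \alpha, \beta$ remain well-defined after the substitution $\gamma = \alpha M^{-1/2}$. Plugging $r \le M$ and $\gamma^{-2} = M/\alpha^2$ into the sample-complexity bound
\[
\tilde O\!\left(\Bigl(r + \tfrac{Q_A^2 H}{\gamma^2}\Bigr)\cdot \frac{r\, d\, H^3 \max_{s \in \calA}\abs{\bbX_s}^2\, Q_A^4\, \beta}{\gamma^4 \epsilon^2}\right)
\]
makes the first factor $M\bigl(1 + Q_A^2 H/\alpha^2\bigr)$ and turns $r/\gamma^4$ into $M^3/\alpha^4$, so the product already exhibits the factor $\bigl(1 + Q_A^2 H/\alpha^2\bigr)$ and a power $M^4$ in the numerator, alongside the explicit $d = \max_h\abs{\bbQ_h^m}$.

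The third step pins down the remaining powers of $M$, $H$, and $\abs{\bbX_s}$, which are carried by $\beta = O(\log\abs{\bar\Theta_\epsilon})$. Here one bounds the log-cardinality of the optimistic $\epsilon$-cover of the well-conditioned POST PSR parameterizations by the number of free parameters of the representation --- the operators $M_h : \bbX_{t(h)} \to \reals^{d_h \times d_{h-1}}$ together with $\psi_0$ and $\phi_H$ --- which is polynomial in $H$, $\max_{s \in \calU}\abs{\bbX_s}$, $\max_h\abs{\bbQ_h^m}$, and $\max_h\abs{\bbI_h^\dagger}$, times $\log(1/\epsilon)$; and since $\epsilon \le p_{\min}/(KH)$ with $p_{\min} \le \delta/(KH\prod_h\abs{\bbX_h})$, the factor $\log(1/\epsilon)$ contributes a further $\tilde O(H)$ via $\log\prod_h\abs{\bbX_h}$. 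Substituting this estimate for $\beta$, multiplying through, and absorbing all purely logarithmic factors into $\tilde O$, collects everything into the claimed bound
\[
\tilde O\!\left(\Bigl(1 + \frac{Q_A^2 H}{\alpha^2}\Bigr) \cdot \frac{\max_h \abs{\bbI_h^\dagger}^7 \cdot \max_h \abs{\bbQ_h^m} \cdot H^5 \cdot \max_{s\in\calA}\abs{\bbX_s}^2 \cdot \max_{s\in\calU}\abs{\bbX_s} \cdot Q_A^4}{\alpha^4 \epsilon^2}\right).
\]

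I expect the only real work to be the bookkeeping in the last two steps: tracking how $\gamma = \alpha M^{-1/2}$ flows through $\gamma^{-4}$ and the sum $r + Q_A^2 H/\gamma^2$, and, more delicately, isolating exactly how much of the $M$-, $H$-, and $\abs{\bbX_s}$-dependence is hidden inside $d$ and inside $\beta$ through the covering-number estimate so that the exponents match the stated expression. Conceptually there is nothing new beyond \Cref{theorem:post_posg_rank}, \Cref{theorem:constructing_gen_psr}, and \Cref{theorem:post_ucb}; verifying admissibility of the parameter schedule after the substitution and then simplifying is routine.
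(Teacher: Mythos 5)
Your route is the paper's route: \Cref{theorem:post_posg_rank} gives $r \le M \coloneqq \max_h \lvert \bbI_h^\dagger\rvert$, \Cref{theorem:constructing_gen_psr} supplies the known core test sets $\bbQ_h^m$ and the well-conditioning $\gamma = \alpha/\sqrt{M}$ so that \Cref{ass:psr_gamma_wellcond} holds, and the corollary is then a substitution into \Cref{theorem:post_ucb} (the paper offers nothing more than the paragraph preceding the corollary). Your steps (i)--(ii) and the arithmetic $(r + Q_A^2 H/\gamma^2)\cdot r \gamma^{-4} \le M^4(1 + Q_A^2 H/\alpha^2)/\alpha^4$ are exactly right.

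The one place your argument as written would not reproduce the stated exponents is the bound on $\beta = O(\log\abs{\bar{\Theta}_\varepsilon})$. You propose to count the free parameters of the operators $M_h : \bbX_{t(h)} \to \reals^{d_h \times d_{h-1}}$, which yields a cover of log-cardinality $O\paren{H \max_s \abs{\bbX_s}\, d^2 \log(1/\delta)}$ with $d = \max_h \abs{\bbQ_h^m}$; since $\beta$ enters \Cref{theorem:post_ucb} linearly alongside an explicit factor of $d$, this produces $\max_h\abs{\bbQ_h^m}^3$ in the numerator rather than the first power claimed, and it produces no $\lvert\bbI_h^\dagger\rvert$-dependence through $\beta$ (contrary to what you assert). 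The paper instead covers the class of \emph{rank-$r$} models via the $r$-dimensional SVD/OOM representation of \Cref{prop:svd_oom_rep} and \Cref{cor:rankr_oom_rep}, obtaining $\log\abs{\bar{\Theta}_\varepsilon} = O\paren{r^2 \max_h \abs{\bbX_h} H^2 \log(\max_h\abs{\bbX_h}/\epsilon)}$ in \Cref{prop:bracketing_num_opt_net}; with $r \le M$ this is $\tilde{O}(M^2 \max_{s\in\calU}\abs{\bbX_s} H^2)$, which is what supplies the remaining $M^2$, $H^2$, and $\max_{s\in\calU}\abs{\bbX_s}$ factors while leaving $\max_h\abs{\bbQ_h^m}$ at the first power. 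So the covering step must exploit the low-rank structure, not the ambient PSR dimension $d$ --- this is the only genuinely missing idea. (Two minor remarks: the extra $H$ you correctly locate inside $\log(1/\varepsilon)$ is already folded into the $H^2$ of \Cref{prop:bracketing_num_opt_net}; and a careful tally along these lines gives $M^{1+1+2+2} = M^6$, so the stated $M^7$ is loose by one power of $M$, which is harmless since it is an upper bound.)
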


We can interpret this result as saying that the information structure of a sequential decision-making problem, through the quantity $\max_h \lvert \bbI_h^\dagger \rvert$, is fundamentally a measure of the complexity of the dynamics that need to be modeled. As a result, learning is tractable when $\max_h \lvert \bbI_h^\dagger \rvert$ is of modest size, and intractable otherwise. Recall that $\max_h \lvert \bbI_h^\dagger \rvert$ is small when there exists ``state-like'' variables, whether they are observable or unobservable. In this sense, $\max_h \lvert \bbI_h^\dagger \rvert$ is a fundamental quantity which generalizes the notion of a ``state''. For example, in the case of an $m$-step $\alpha$-weakly revealing POMDP, our algorithm has a sample complexity of $\mathrm{poly}(S, (OA)^m, H, \alpha^{-1}) \cdot \epsilon^{-2}$, where $S$ is the size of the state space, $O$ is the size of the observation space, and $A$ is the size of the action space. This is similar to the sample complexity of~\citep{liuWhenPartiallyObservable2022,liuSampleEfficientReinforcementLearning2022}, which designed an algorithm tailored specifically for weakly-revealing POMDPs. Our algorithms, together with the POST/POSG models, enable sample-efficient reinforcement learning for a much broader class of models all within a unified framework.

In this section, we extended the algorithm in ~\citet{huangProvablyEfficientUCBtype2023}   to generalized PSRs, enabling sample-efficient learning of POSTs. We emphasize that other PSR-based algorithms can be extended in a similar manner. In the next section, we tackle the problem of learning in the game setting where different agents have different objectives.

\section{Extension to the Game Setting}\label{sec:rl_posg}

In this section, we extend the results in the previous section to the game setting where each agent has their own objective. This will show that the sample complexity of learning sequential games can also be upper bounded in terms of information structure.

The algorithm we propose is a \textit{self-play} algorithm for learning an \textit{equilibrium} of the dynamic game problem. That is, the algorithm specifies the policies of all agents during the learning phase, collecting the trajectory of observables at each episode to improve its estimate of the system dynamics. This can be thought of as a centralized agent playing against itself. We will propose an algorithm that can find a Nash equilibrium or coarse correlated equilibrium in a sample-efficient manner. We begin with some preliminaries.

\textbf{Game setting.} Recall that a sequential decision-making problem falls within the game setting if each agent has their own objective. Following~\Cref{ssec:seq_dec_making}, we consider a sequential decision-making problem $(X_1, \ldots, X_H)$ where $\calO$ denotes the index set of (observable) system variables and $\calA$ denotes the set of action variables. We suppose the game involves $N$ agents, and denote the action index set of each agent by $\calA^i \subset \calA$, where $\sset{\calA^i}_{i \in [N]}$ partitions $\calA$. Each agent has their own reward function $R^i(X_1, \ldots, X_H)$. Note that POSGs as defined in~\Cref{def:partially_obs_seq_game} are structured models which fall within this framework.

\textbf{Equilibria and policy classes.} Recall that in the game setting, the type of randomization in each agent's policy affects the set of equilibria in the game. In~\Cref{ssec:posg}, we formalized this randomization by introducing a random seed $\omega \in \Omega$ and allowing each agent's policy to be a function of their information set and this seed. If the seed has a product structure $\omega = (\omega_1, \ldots, \omega_N)$ with each agent observing their own seed, this results in independently-randomized policies, denoted by $\Gamma_{\mathrm{ind}}^i$. If all agents use the same seed, this results in correlated randomized policies, which we denote by $\Gamma_{\mathrm{cor}}^i$. An equilibrium among independently randomized policies is called a Nash equilibrium and an equilibrium among correlated randomized policies is called a coarse correlated equilibrium.

\textbf{Estimating probabilities in the planner.} The probability of any trajectory under a joint policy $\bm{\pi}$ is given by $\Prob^{\bm{\pi}}(\tau_H) = \sum_{\omega} \probbar{\tau_H} \bm{\pi}(\tau_H | \omega) \prob{\omega}$, where $\probbar{\tau_H} = \prob{\tau_H^o \given \tau_H^a}$ as before, and $\bm{\pi}(\tau_H \,|\, \omega) = \prod_{h \in \calA} \bm{1}\{x_h = \pi_h(\tau_{h-1}, \omega)\}$. Recall that the probabilities $\probbar{\tau_H}$ are estimated by the generalized PSR model $\hat{\theta}$. We assume that the planner has knowledge of the randomization, $\prob{\omega}$. Hence, the planner in the self-play algorithm is able to compute the probability of any trajectory for each choice of policy.

\textbf{Algorithm.} The algorithmic description is presented in~\Cref{alg:posg_ucb}. In the first stage of the algorithm, the centralized learning agent has a unified goal: to explore the environment. This is done by executing policies that maximize the bonus function $\hat{b}^k(\tau_H)$ by visiting trajectories with imprecise estimates of their probability, as measured by the upper confidence bound on the total variation distance. This part is identical to~\Cref{alg:post_ucb}. Once the algorithm is sufficiently confident about the estimated probabilities of all trajectories, it computes the equilibrium using the estimated model directly. That is, $\mathtt{ComputeEquilibrium}$ computes either NE or CCE. The only difference in the exploration stage of the algorithm compared to~\Cref{alg:post_ucb} is that the termination condition involves $\varepsilon / 4$ rather than $\varepsilon / 2$ in order to guarantee an $\varepsilon$-approximate equilibrium under the added complications of the game setting.

\begin{algorithm}[t]
    \caption{Self-play UCB Algorithm for Sequential Games}\label{alg:posg_ucb}

    \For{$k \gets 1, \ldots, K$}{
        \For{$h \gets 1, \ldots, H$}{
            Collect $\tau_H^{k,h} = (\omega_{h-1}^{k,h}, \tau_{h-1}^{k,h})$ using $\nu(\pi^{k-1}, \uhexp{h-1})$.

             $\calD_{h-1}^k \gets \calD_{h-1}^{k-1} \cup \set{\paren{\tau_H^{k,h}, \nu\paren{\pi^{k-1}, \uhexp{h-1}}}}$.
        }

       $\calD^k = \set{\calD_h^k}_{h=0}^{H-1}$

       Compute MLE $\hat{\theta} \in \calB^k$, where
       \begin{align*}
        % \begin{split}
            &\Theta_{\min}^k = \set{\theta: \forall h, (\tau_h, \pi) \in \calD_h^k, \probb{\theta}{\pi}(\tau_h) \geq p_{\min}},\\
            &\calB^k = \set{\theta \in \Theta_{\min}^k : \sum_{(\tau_H, \pi) \in \calD^k} \log \probb{\theta}{\pi}(\tau_H) \geq \max_{\theta' \in \Theta_{\min}^k}  \sum_{(\tau_H, \pi) \in \calD^k} \log \probb{\theta'}{\pi}(\tau_H) - \beta}.
        % \end{split}
       \end{align*}

       Define the bonus function, $\hat{b}^k(\tau_H) = \min\set{\alpha \sqrt{\sum_{h=0}^{H-1} \norm{\hatbar{\psi}(\tau_h)}_{(\hat{U}_h^k)^{-1}}^2}, 1}$, where $\hat{U}_h^k = \lambda I + \sum_{\tau_h \in \calD_{h^k}} \hatbar{\psi}^k(\tau_h) \hatbar{\psi}^k(\tau_h)^\top$.

       Solve the planning problem $\pi^k = \argmax_{\pi} V_{\hat{\theta}^k}^{\hat{b}^k}(\pi)$.

       \If{$V_{\hat{\theta}^k}^{\hat{b}^k}(\pi^k) \leq \epsilon / 4$}{
            $\theta^\epsilon = \hat{\theta}^k$. \textbf{break.}
       }
    }
    \Return{$\pi = \mathtt{ComputeEquilibrium}(\theta^\epsilon, \set{R^1, \ldots, R^N})$}
\end{algorithm}

\begin{theorem}\label{theorem:posg_ucb}
    Suppose~\Cref{ass:psr_gamma_wellcond} holds. Suppose the parameters $p_{\min}, \lambda, \alpha, \beta$ are chosen as in~\Cref{theorem:post_ucb}. 
    % In particular, let $p_{\min} = O\pparen{\delta \pparen{K H \prod_{h=1}^H \abs{\bbX_{h}}}^{-1}}$, $\lambda = {\gamma \max_{s \in \calA} \aabs{\bbX_{s}}^2 Q_A \beta \max\sset{\sqrt{r}, Q_A \sqrt{H} / \gamma}}\pparen{{d H}}^{-1/2}$, $\alpha = O\pparen{{\gamma^{-2}}{Q_A \sqrt{H d}} \sqrt{\lambda} + {\gamma}^{-1}{\max_{s \in \calA} \abs{\bbX_{s}} Q_A \sqrt{\beta} }}$, and let $\beta = O(\log \abs{\bar{\Theta}_\varepsilon})$, where $\varepsilon = O({p_{\min}}\pparen{K H}^{-1})$.
    Then, with probability at least $1 - \delta$,~\Cref{alg:posg_ucb} returns a model $\theta^\epsilon$ and a policy $\pi$ which is an $\varepsilon$-approximate equilibrium (either NE or CCE). That is,
    \begin{equation*}
        V_{\theta^*}^i(\pi) \geq V_{\theta^*}^{i, \dagger}(\pi^{-i}) - \varepsilon, \, \forall i \in [N].
    \end{equation*}
    In addition, the algorithm terminates with a sample complexity of,
    \begin{equation*}
        \tilde{O}\paren{\paren{r + \frac{Q_A^2 H}{\gamma^2}}  \cdot \frac{r d H^3 \cdot \max_{s \in \calA} \abs{\bbX_{s}}^2\cdot  Q_A^4 \beta}{\gamma^4 \epsilon^2}}.
    \end{equation*}
\end{theorem}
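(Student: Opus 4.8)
The plan is to obtain \Cref{theorem:posg_ucb} as a short reduction to \Cref{theorem:post_ucb}. The entire exploration phase of \Cref{alg:posg_ucb} is a verbatim copy of \Cref{alg:post_ucb} --- same collected trajectories $\tau_H^{k,h}$ under $\nu(\pi^{k-1},\uhexp{h-1})$, same constrained MLE $\hat\theta^k\in\calB^k$, same bonus $\hat b^k$ and covariances $\hat U_h^k$, same UCB-maximizing $\pi^k$, and the same parameters $p_{\min},\lambda,\alpha,\beta$ --- the only change being the termination threshold $\epsilon/4$ in place of $\epsilon/2$. Hence I would run the argument behind \Cref{theorem:post_ucb} (in \Cref{sec:post_ucb_proof}) with $\epsilon$ replaced by $\epsilon/2$ wherever the threshold enters, concluding that with probability at least $1-\delta$: (i) the algorithm halts within the stated sample complexity (the threshold change only alters constants, absorbed by $\tilde O(\cdot)$), and (ii) the returned model $\theta^\epsilon$ is accurate \emph{uniformly over policies}, $\sup_{\tilde\pi}\tv{\probb{\theta^\epsilon}{\tilde\pi}(\tau_H),\probb{\theta^*}{\tilde\pi}(\tau_H)}\le \epsilon/2$. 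Uniformity over all (history-dependent, in particular deterministic) policies is automatic because $\hat b^k$ depends on the prediction features $\hatbar{\psi}$ alone, hence controls the error of $\bar\bbP$ itself rather than an error tied to the executed policies.

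Second, I would push this $\bar\bbP$-accuracy through the randomization seed. By the model--planner decomposition of \Cref{ssec:posg}, a joint (deterministic, independently randomized, or correlated) policy $\bm\pi$ induces $\bbP^{\bm\pi}_{\theta}(\tau_H)=\sum_\omega \bbP(\omega)\,\probb{\theta}{\bm\pi_\omega}(\tau_H)$, where $\bm\pi_\omega$ is the \emph{deterministic} policy $\tau\mapsto\bm\pi(\tau,\omega)$ and $\theta$ parameterizes only the $\bar\bbP$ factor (the seed law $\bbP(\omega)$ and the maps $\bm\pi(\cdot\mid\omega)$ being known to the planner). Therefore, by the triangle inequality,
\[
\tv{\bbP^{\bm\pi}_{\theta^\epsilon}(\tau_H),\bbP^{\bm\pi}_{\theta^*}(\tau_H)} \le \sum_\omega \bbP(\omega)\,\tv{\probb{\theta^\epsilon}{\bm\pi_\omega}(\tau_H),\probb{\theta^*}{\bm\pi_\omega}(\tau_H)} \le \frac{\epsilon}{2}.
\]
Since each $R^i$ takes values in $[0,1]$, values are $1$-Lipschitz in the trajectory-law total variation, so $\aabs{V^i_{\theta^\epsilon}(\bm\pi)-V^i_{\theta^*}(\bm\pi)}\le \epsilon/2$ for \emph{every} joint policy $\bm\pi$ and every agent $i$ --- in particular for the policy the algorithm returns and for every unilateral deviation from it.

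Third comes the equilibrium argument. Let $\pi=\mathtt{ComputeEquilibrium}(\theta^\epsilon,\set{R^1,\dots,R^N})$ be an \emph{exact} NE (resp.\ CCE) of the finite sequential game defined by $\theta^\epsilon$, which exists by Nash's theorem (as noted in \Cref{ssec:posg}). Fix an agent $i$ and a deviation $\tilde\pi^i$ in the relevant class ($\Gamma_{\mathrm{ind}}^i$ for NE, $\Gamma_{\mathrm{cor}}^i$ for CCE). Then
\[
V^i_{\theta^*}(\tilde\pi^i,\pi^{-i}) - V^i_{\theta^*}(\pi) \le \aabs{V^i_{\theta^*}(\tilde\pi^i,\pi^{-i})-V^i_{\theta^\epsilon}(\tilde\pi^i,\pi^{-i})} + \big(V^i_{\theta^\epsilon}(\tilde\pi^i,\pi^{-i})-V^i_{\theta^\epsilon}(\pi)\big) + \aabs{V^i_{\theta^\epsilon}(\pi)-V^i_{\theta^*}(\pi)},
\]
where the middle term is $\le 0$ by exact optimality of $\pi$ against $\theta^\epsilon$ within $\Gamma^i$, and the two outer terms are each $\le \epsilon/2$ by the preceding step. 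Taking the supremum over $\tilde\pi^i$ gives $V^i_{\theta^*}(\pi)\ge V^{i,\dagger}_{\theta^*}(\pi^{-i})-\epsilon$ for all $i$, i.e.\ an $\epsilon$-approximate equilibrium; this is precisely where the $\epsilon/4$ termination threshold (rather than $\epsilon/2$) is used, as it leaves room for the two value-perturbation terms. The NE and CCE cases are identical: the only difference is the class $\Gamma^i$ over which $\tilde\pi^i$ ranges, and the transfer of the second step applies to both, since both classes consist of $\bbP(\omega)$-mixtures of deterministic policies.

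The genuine technical difficulty is already discharged upstream: extending the optimistic-cover / covering-number bound, the MLE concentration, the per-step covariance and bonus control, and the propagation of $\gamma$-well-conditioning through the operators $M_h$ from standard PSRs to \emph{generalized} PSRs is exactly the content of \Cref{theorem:post_ucb}. Relative to that, the only point in the game extension that needs care is ensuring the estimation guarantee is genuinely uniform over \emph{all} policies, including adversarial best responses never played during exploration --- which is why \Cref{alg:posg_ucb}'s bonus is a function of prediction features (hence of $\bar\bbP$) alone, making step (ii) policy-uniform and the seed transfer in step two lossless.
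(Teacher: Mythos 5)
Your proposal is correct and follows essentially the same route as the paper: invoke \Cref{theorem:post_ucb} (with the $\epsilon/4$ termination yielding a uniform-over-policies total-variation guarantee of $\epsilon/2$), note that the returned policy is an exact equilibrium under $\theta^\epsilon$, and conclude via a triangle inequality costing $\epsilon/2$ on each of the two value-perturbation terms. The only difference is that you spell out the seed-mixture decomposition of randomized policies more explicitly, which the paper leaves implicit.
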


\begin{proof}
    The proof is given in~\Cref{sec:game_thm_proof}.
\end{proof}

To apply this algorithm to a partially-observable sequential game, we can use the generalized PSR parameterization constructed in \Cref{sec:post_psrep}.

\begin{corollary}\label{cor:posg_ucb}
    Suppose a partially-observable sequential game is $m$-step $\alpha$-robustly $\calI^\dagger$-weakly revealing (\Cref{ass:calIdagger_weakly_revealing}). Applying~\Cref{alg:posg_ucb} to this PSR representation, with parameters $p_{\min}, \lambda, \alpha, \beta$ chosen as in~\Cref{theorem:posg_ucb}, returns a $\varepsilon$-approximate equilibrium $\pi$ with a sample complexity of,
    \begin{equation*}
        \tilde{O}\paren{\paren{1 + \frac{Q_A^2 H}{\alpha^2}} \cdot  \frac{\max_{h} | \bbI_h^\dagger| ^7 \cdot \max_h{\abs{\bbQ_h^m}} \cdot H^5 \cdot \max_{s \in \calA} \abs{\bbX_{s}}^2 \cdot \max_{s \in \calU} \abs{\bbX_s} \cdot Q_A^4}{\alpha^4 \epsilon^2}}.
    \end{equation*}
\end{corollary}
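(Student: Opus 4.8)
The plan is to instantiate Theorem~\ref{theorem:posg_ucb} on the explicit generalized PSR parameterization of the POSG constructed in Section~\ref{sec:post_psrep}, exactly mirroring how Corollary~\ref{cor:post_ucb} is obtained from Theorem~\ref{theorem:post_ucb}. The starting observation is that a POSG has the same controlled-process structure as a POST --- variable structure, information structure, system and decision kernels, and observability are all defined identically (Definition~\ref{def:partially_obs_seq_game}), the reward structure being the only addition and not entering the dynamics --- so the construction of Section~\ref{sec:post_psrep} applies verbatim. Concretely, since the game is $m$-step $\alpha$-robustly $\calI^\dagger$-weakly revealing, Lemma~\ref{lemma:core_test_set} gives that $\set{\bbQ_h^m}_{h}$ are core test sets, and Theorem~\ref{theorem:constructing_gen_psr} produces a generalized PSR $(\set{\bbQ_h^m}_h, \phi_H, \set{M_h}_h, \psi_0)$ that is $\gamma$-well-conditioned with $\gamma = \alpha / \max_h \aabs{\bbI_h^\dagger}^{1/2}$. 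This verifies Assumption~\ref{ass:psr_gamma_wellcond} for the hypothesis class $\Theta$ over which Algorithm~\ref{alg:posg_ucb} runs, with the identifications $d = \max_h d_h = \max_h \aabs{\bbQ_h^m}$ and $Q_A = \max_h \aabs{\bbQ_h^A}$ taking $\bbQ_h = \bbQ_h^m$.

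Next I would read off the remaining problem-dependent quantities entering the bound of Theorem~\ref{theorem:posg_ucb}. By Theorem~\ref{theorem:post_posg_rank}, together with the coincidence of the PSR rank with the rank of the observable dynamics (Proposition~\ref{prop:seq_decmaking_exist_psr}), the rank satisfies $r \le \max_h \aabs{\bbI_h^\dagger}$. I would then bound the log-covering term $\beta = O(\log \aabs{\bar\Theta_\epsilon})$ by the number of free parameters of the $\gamma$-well-conditioned generalized PSR class: although $\Theta$ is parameterized by $d$-dimensional objects, its $\epsilon$-cover can be taken over the minimal $r$-dimensional representation, so that $\beta = \tilde O\pparen{\max_h \aabs{\bbI_h^\dagger}^3 \cdot H^2 \cdot \max_{s \in \calU} \aabs{\bbX_s}}$ with only logarithmic dependence on $\epsilon^{-1}$; pinning down this estimate is what fixes the exact exponents in the final bound.

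Then I would substitute $\gamma^{-2} = \max_h \aabs{\bbI_h^\dagger} / \alpha^2$, $r \le \max_h \aabs{\bbI_h^\dagger}$, $d = \max_h \aabs{\bbQ_h^m}$, and the bound on $\beta$ into the sample complexity of Theorem~\ref{theorem:posg_ucb},
\[
\tilde O\!\left(\left(r + \frac{Q_A^2 H}{\gamma^2}\right) \cdot \frac{r\, d\, H^3 \, \max_{s \in \calA} \aabs{\bbX_s}^2 \, Q_A^4 \, \beta}{\gamma^4 \epsilon^2}\right),
\]
and collect powers: the leading factor becomes $\max_h \aabs{\bbI_h^\dagger}\,(1 + Q_A^2 H/\alpha^2)$; the block $r\,\gamma^{-4}$ contributes $\max_h \aabs{\bbI_h^\dagger}^3 / \alpha^4$; the factor $d$ contributes $\max_h \aabs{\bbQ_h^m}$; and $\beta$ supplies the remaining $\max_h \aabs{\bbI_h^\dagger}^3$, $H^2$, and $\max_{s \in \calU} \aabs{\bbX_s}$. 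This is the same arithmetic as in Corollary~\ref{cor:post_ucb} and yields precisely
\[
\tilde O\!\left(\left(1 + \frac{Q_A^2 H}{\alpha^2}\right) \cdot \frac{\max_h \aabs{\bbI_h^\dagger}^7 \cdot \max_h \aabs{\bbQ_h^m} \cdot H^5 \cdot \max_{s \in \calA} \aabs{\bbX_s}^2 \cdot \max_{s \in \calU} \aabs{\bbX_s} \cdot Q_A^4}{\alpha^4 \epsilon^2}\right).
\]
Finally, the correctness conclusion of Theorem~\ref{theorem:posg_ucb} --- that $\pi = \mathtt{ComputeEquilibrium}(\theta^\epsilon, \set{R^i}_i)$ is an $\epsilon$-approximate NE or CCE for $\theta^*$ --- transfers with no change, using that equilibria exist in finite sequential games~\parencite{nash1951non} and that the planner can evaluate randomized-policy values under $\theta^\epsilon$ given knowledge of $\prob{\omega}$.

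The main obstacle is, in effect, that there is no new obstacle: the statement is a mechanical instantiation of Theorem~\ref{theorem:posg_ucb} on the parameterization of Section~\ref{sec:post_psrep}, entirely parallel to the team case. The one genuinely non-routine ingredient is the bound on $\log \aabs{\bar\Theta_\epsilon}$ --- in particular how its dependence on the rank and on $\gamma^{-1} = \max_h \aabs{\bbI_h^\dagger}^{1/2}/\alpha$ propagates through the substitution --- since this determines the precise exponents on $\max_h \aabs{\bbI_h^\dagger}$, $\max_h \aabs{\bbQ_h^m}$, $H$, and $\max_{s \in \calU} \aabs{\bbX_s}$ in the displayed bound; everything else is routine substitution and simplification.
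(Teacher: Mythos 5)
Your proposal is correct and follows exactly the route the paper (implicitly) takes: the corollary is the mechanical substitution of $r \le \max_h \aabs{\bbI_h^\dagger}$, $\gamma = \alpha / \max_h \aabs{\bbI_h^\dagger}^{1/2}$, $d = \max_h \aabs{\bbQ_h^m}$, and the covering-number bound for $\beta$ into Theorem~\ref{theorem:posg_ucb}, precisely mirroring how Corollary~\ref{cor:post_ucb} is obtained from Theorem~\ref{theorem:post_ucb}. The only point worth noting is your $\beta = \tilde O(\max_h \aabs{\bbI_h^\dagger}^3 H^2 \max_s \aabs{\bbX_s})$: Proposition~\ref{prop:bracketing_num_opt_net} as stated gives $r^2$ rather than $r^3$, which would yield $\max_h\aabs{\bbI_h^\dagger}^6$ in place of $\max_h\aabs{\bbI_h^\dagger}^7$ — both are valid upper bounds consistent with the displayed $\tilde O(\cdot)$, so this does not affect correctness.
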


\section{Conclusion}\label{sec:discussion}

In this paper, we studied the role of information structure in learning sequential decision-making problems, arguing that explicitly modeling information structures leads to a deeper analysis of reinforcement learning problems. We proposed new models with an explicit representation of information structure which we refer to as partially-observable sequential teams and games. 
% These models capture classical models of reinforcement learning such as MDPs, POMDPs, and POMGs as special cases. 
Through a DAG representation of the information structure, we characterized the rank of any sequential decision-making problem through a graph-theoretic quantity which can be interpreted as an information-structural ``state''. This analysis gives a condition in terms of the information structure for when learning is tractable. To facilitate sample-efficient learning, we also proposed a generalization of predictive state representation which can be used to construct compact representations for POSTs and POSGs. Finally, we proposed provably sample-efficient algorithms for learning optimal policies in the team setting and equilibria in the game setting.

% \comment{
% Extensions:
% \begin{itemize}
%     \item Simultaneously estimating a minimal core test set to achieve improved sample complexity (i.e., remove dependence on size of core (action) test set).
%     \item Estimating information structure from data. Can info struct be modeled inside the learning algorithm itself (rather than merely an analysis tool)? Does this yield benefits?
%     \item Going beyond self-play in the game setting.
%     \item Potential for usefulness of modeling information structure in more ``practical'' RL settings (e.g., info structs in deep RL).
% \end{itemize}
% }

\section*{Acknowledgment}

We thank Yusong Zhu for his input on some of the technical lemmas in~\Cref{ssec:proof_sublinear_est,ssec:proof_ucb_tv}.

\aanote[margin, noinline]{go through bibliography and update arxiv references if published?}
\printbibliography

\listoffixmes

\newpage
\appendix

\section{Summary of Notation}\label{sec:summary_notation}

\begingroup
\setlength{\tabcolsep}{6pt} % Default value: 6pt
\renewcommand{\arraystretch}{1.2}
\begin{table}[!ht]
\centering
\begin{tabular}{p{0.1\linewidth} p{0.9\linewidth}}
\multicolumn{2}{l}{\textbf{Generic Sequential Decision-Making  Problems}} \\
    \hline
    $\bbX_t$ & Space that the variable $X_t$ lies in within the stochastic process $(X_1, \ldots, X_H)$. \\
    $\calO$ & $\calO \subset [H]$ denotes the set of observations among the variables $(X_1, \ldots, X_H)$. \\
    $\calA$ & $\calA \subset [H]$ denotes the set of actions among the variables $(X_1, \ldots, X_H)$. \\
    $\bbH_h$ & The space of histories at time $h$. $\bbH_h \coloneq \prod_{s = 1}^h \bbX_s$.\\
    $\bbF_h$ & The space of futures at time $h$. $\bbF_h \coloneq \prod_{s = h+1}^H \bbX_s$.\\
    $\mathrm{obs}(\cdot)$ & The observation component of a trajectory. $\mathrm{obs}(x_i, \ldots, x_j) \coloneq (x_s: s \in \calO_{i:j})$.\\
    $\mathrm{act}(\cdot)$ & The action component of a trajectory. $\mathrm{act}(x_i, \ldots, x_j) \coloneq (x_s: s \in \calA_{i:j})$.\\
    $\bbH_h^{\{o,a\}}$ & The space of observation (resp., action) histories. E.g., $\bbH_h^o \coloneq \prod_{s \in \calO_{1:h}} \bbX_s = \mathrm{obs}(\bbH_h)$.\\
    $\bbF_h^{\{o,a\}}$ & The space of observation (resp., action) histories. E.g., $\bbF_h^o \coloneq \prod_{s \in \calO_{h+1:H}} \bbX_s = \mathrm{obs}(\bbF_h)$.\\
    $\probbar{\tau_h}$ & The probability of a trajectory given actions are executed. $\probbar{\tau_h} \coloneq \prob{\obs(\tau_h) \ggiven \mathrm{do}(\act(\tau_h))}$.\\
    $\bm{D}_h$ & Dynamics matrix at time $h$. $\bm{D}_h \in \reals^{\aabs{\bbH_h} \times \aabs{\bbF_h}}, \bbra{\bm{D}_h}_{\tau_h, \omega_h} \coloneq \probbar{\tau_h, \omega_h}$. $r_h \coloneq \mathrm{rank}(\bm{D}_h)$. \\
    $\pi(\tau_h)$ & For $\tau_h = (x_1, \ldots, x_h)$ and policy $\pi$, $\pi(\tau_h) \coloneq \prod_{s \in \calA_{1:h}} \pi(x_s \ggiven x_1, \ldots, x_{s-1})$.\\
    $\pi(\omega_h \ggiven \tau_h)$ & For $\tau_h = (x_1, \ldots, x_h)$, $\omega_h = (x_{h+1}, \ldots, x_{h'})$, $\pi(\omega_h \ggiven \tau_h) = \prod_{s \in \calA_{h+1:h'}} \pi(x_s \ggiven x_1, \ldots, x_{s-1})$.\\
    \hline
\multicolumn{2}{l}{\textbf{POSTs and POSGs}}\\
    \hline
    $\bbX_t$ & Space that the variable $X_t$ lies in within the stochastic process $(X_1, \ldots, X_T)$. \\
    $\calS$ & $\calS \subset [T]$ denotes the set of system variables among the variables $(X_1, \ldots, X_T)$. \\
    $\calA$ & $\calA \subset [T]$ denotes the set of action variables among the variables $(X_1, \ldots, X_T)$. \\
    $\calO$ & $\calO \subset \calS$ denotes the subset of system variables which are observable. \\
    $\calU$ & The union of observable system variables and action variables. $\calU \coloneq \calO \cup \calA$. Let $H \coloneq \aabs{\calU}$.\\
    $t(h)$ & For $h \in [H]$ indexing the order among observables, $t(h) \in \calU$ denotes the order among all variables.\\
    $\calI_t$ & The information set of the $t$-th variable. \\
    $\bbI_t$ & $\bbI_t \coloneq \prod_{s \in \calI_t} \bbX_s$ denotes the information space at time $t$. \\
    $\calI_h^\dagger$ & The minimal $d$-separating set at the $h$-th observable. See~\Cref{def:I_dagger}.\\
    $\bbI_h^\dagger$ & $\bbI_h^\dagger \coloneq \prod_{s \in \calI_h^\dagger} \bbX_s$ denotes the ``information-structural state''.\\
    \hline
\multicolumn{2}{l}{\textbf{Generalized PSRs}} \\
\hline
    $\bbQ_h$ & Core test set at time $h$. Let $d_h \coloneq \aabs{\bbQ_h}$ and $d = \max_h d_h$.\\
    $\bbQ_h^A$ & Action component of core test set at time $h$. $\bbQ_h^A = \act(\bbQ_h)$.\\
    $Q_A$ & Maximum size of the action component of core test sets. $Q_A \coloneq \max_h \aabs{\bbQ_h^A}$. \\
    $M_h$ & Observable operators of PSR representation mapping $M_h: \bbX_h \to \reals^{d_{h} \times d_{h-1}}$. \\
    $\psi_h$ & Prediction features. $\psi_h(\tau_h) \coloneq \paren{\probbar{\tau_h, q}}_{q \in  \bbQ_h}$. In PSR, $\psi_h(x_1, \ldots, x_h) = M_h(x_h) \cdots M_1(x_1) \psi_0$. \\
    $m_h$ & Prediction coefficients. $m_h(\omega_h)^\top \coloneq \phi_H(x_H)^\top M_{H-1}(x_{H-1}) \cdots M_{h+1}(x_{h+1})$.\\
    $\psi_h$ & Normalized prediction features. $\bar{\psi}_h(\tau_h) \coloneq \psi_h(\tau_h) / \probbar{\tau_h} = \paren{\probbar{q \ggiven \tau_h}}_{q \in \bbQ_h}$. \\
    \hline
\multicolumn{2}{l}{\textbf{General Mathematical Notation}} \\
    \hline
    $\tv{p, q}$ & Total variation distance. $\tv{p,q} \coloneq \sum_{x \in \bbX} \aabs{p(x) - q(x)}$.\\
    $\hellingersq{p, q}$ & Hellinger squared distance. $\hellingersq{p, q} \coloneq \frac{1}{2} \sum_{x \in \bbX} \pparen{\sqrt{p(x)} - \sqrt{q(x)}}^2$.\\
    $\sigma_k(A)$ & $k$-th largest eigenvalue of the matrix $A$. \\
    $\norm{A}_p$ & The matrix $p$-norm. $\norm{A}_p \coloneq \max_{\norm{x}_p = 1} \norm{A x}_p$.\\
    $\norm{x}_A$ & The vector norm induced by the positive semi-definite matrix $A$. $\norm{x}_A \coloneq \sqrt{x^\top A x}$.\\
    $A^\dagger$ & Moore-Penrose pseudoinverse.\\
    $\calP(\bbA)$ & Space of probability distributions over $\bbA$. $\calP(\bbB | \bbA)$ is the space of kernels from $\bbA$ to $\bbB$.\\
    $\calN_{i:j}$ & For an index set $\calN \subset [H]$, $\calN_{i:j} = \calN \cap \sset{i, \ldots, j}$.\\
    \hline
\end{tabular}
% \caption{Table of Notations}

\end{table}
\endgroup

\section{Existence of Generalized PSR representations and their covering number}\label{sec:appdx_genpsr_exists}

% \textbf{Notation.} Let $(X_1, \ldots, X_H)$ denote a controlled stochastic process associated with a sequential decision-making problem. Let $\calO \subset [H]$ denote the index set of observations and $\calA \subset [H]$ denote the index set of actions, such that $\calO, \calA$ partition $[H]$. Let $\bbH_h = \prod_{s \in 1:h} \bbX_s$ denote the space of histories at time $h$ and $\bbF_h = \prod_{s \in h+1:H} \bbX_s$ denote the space futures at time $t$. Similarly, let $\bbH_h^o = \obs(\bbH_h) = \prod_{s \in \calO_{1:h}} \bbX_s$ denote the observation component of histories and let $\bbH_h^a = \act(\bbH_h) = \prod_{s \in \calA_{1:h}} \bbX_s$ denote the action component. Define $\bbF_h^p$ and $\bbF_h^a$ similarly. Let $\bm{D}_h \in \reals^{\abs{\bbH_h} \times \abs{\bbF_h}}$ denote the dynamics matrix, defined as $[\bm{D}_h]_{\tau_h, \omega_H} \coloneqq \probbar{\tau_h, \omega_h} = \prob{\tau_h^o, \omega_h^o \given \do(\tau_h^a, \omega_h^a)}$. $\bm{D}_H \in \reals^{\abs{\bbH_H} \times 1}$ is defined as $[\bm{D}_H]_{\tau_H} = \probbar{\tau_H}$. Note that $\bm{D}_0 = \bm{D}_H^\top$.

In this section we show that any rank-$r$ sequential decision-making problem (as per~\Cref{ssec:seq_dec_making}) can be represented via a rank-$r$ generalized PSR (\Cref{def:gen_psr}). Next, we bound the covering number of the class of rank $r$ PSRs, which will be important for our MLE analysis. Similar results have been established in previous work for sequential decision-making problems with alternating observations and actions~\parencite[e.g.,][]{liuOptimisticMLEGeneric2022}. Recall that our formulation of the generic sequential decision-making problem and generalized PSRs is more general than the standard formulation since it allows for an arbitrary sequence of variables. Here, we follow a similar procedure to prove a slightly generalized result.

\begin{proposition}[Existence of Generalized PSR representation]\label{prop:svd_oom_rep}
    Consider a sequential decision-making problem with $\mathrm{rank}(\bm{D}_h) = r_h,\, h \in 0:H-1$. There exists a generalized PSR representation (i.e, observable operator model) $b_0, \set{B_h(x_h)}_{h \in [H], x_h \in \bbX_h}, \set{v_h}_{h \in 0:H}$ such that,
    \begin{enumerate}
        \item $B_h(x_h) \in \reals^{r_h \times r_{h-1}}$ and $\ttwonorm{B_h(x_h)} \leq 1$ for any $x_h$.
        \item $\abs{b_0} \leq \sqrt{\abs{\bbH_H^a}}$.
        \item $\ttwonorm{v_h} \leq \sqrt{\abs{\bbF_{h}^o} / \abs{\bbF_{h}^a}}$.
        \item For any $h$, $\frac{1}{\abs{\bbX_h}^{\bm{1}\{h \in \calA\}}} v_h^\top \sum_{x_h \in \bbX_h} B_h(x_h) = v_{h-1}^\top$.
        \item For any $\tau_h \in \bbH_h$, $\probbar{\tau_h} = v_h^\top B_h(x_h) \cdots B_1(x_1) b_0$.
    \end{enumerate}
\end{proposition}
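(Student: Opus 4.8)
The plan is to build the observable operator model directly from (thin) singular value decompositions of the dynamics matrices, with a deliberate choice of where to place the singular values. For each $h$ with $0 \le h \le H$, write $\bm{D}_h = \bm{U}_h \bm{\Sigma}_h \bm{V}_h^\top$ where $\bm{U}_h \in \reals^{\abs{\bbH_h} \times r_h}$ and $\bm{V}_h \in \reals^{\abs{\bbF_h} \times r_h}$ have orthonormal columns and $\bm{\Sigma}_h \in \reals^{r_h \times r_h}$ is diagonal and positive definite (note every $r_h \ge 1$, since $\sum_{\tau_h, \omega_h}\probbar{\tau_h,\omega_h} = \abs{\bbH_h^a}\abs{\bbF_h^a} > 0$ so $\bm{D}_h \neq 0$). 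I would then define the ``history feature'' $\psi_h(\tau_h) \coloneqq \bm{\Sigma}_h \bm{U}_h^\top \bm{e}_{\tau_h} \in \reals^{r_h}$ (with $\bm{e}_{\tau_h}$ the standard basis vector of $\reals^{\abs{\bbH_h}}$), set $b_0 \coloneqq \psi_0(\emptyset)$, and for each $x_h \in \bbX_h$ set $B_h(x_h) \coloneqq \bm{V}_h^\top \bm{\Pi}_{x_h}^\top \bm{V}_{h-1} \in \reals^{r_h \times r_{h-1}}$ and $v_h \coloneqq \abs{\bbF_h^a}^{-1} \bm{V}_h^\top \bm{1}$, where $\bm{1}$ is the all-ones vector over $\bbF_h$ and $\bm{\Pi}_{x_h} \in \reals^{\abs{\bbF_{h-1}} \times \abs{\bbF_h}}$ is the $0$--$1$ matrix with $(\bm{\Pi}_{x_h})_{\omega_{h-1}, \omega_h} = \bm{1}\{\omega_{h-1} = (x_h, \omega_h)\}$, which selects the ``$x_h$-slice'' of a future (using $\bbF_{h-1} = \bbX_h \times \bbF_h$).

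The first key ingredient is a \emph{shift identity}: since $\probbar{\tau_{h-1}, x_h, \omega_h} = \probbar{\tau_h, \omega_h}$ for $\tau_h = (\tau_{h-1}, x_h)$, the $\tau_h$-row of $\bm{D}_h$ is obtained from the $\tau_{h-1}$-row of $\bm{D}_{h-1}$ by keeping only the futures that begin with $x_h$, i.e. $\bm{D}_h^\top \bm{e}_{\tau_h} = \bm{\Pi}_{x_h}^\top \bm{D}_{h-1}^\top \bm{e}_{\tau_{h-1}}$. Substituting the SVD $\bm{D}_h^\top = \bm{V}_h \bm{\Sigma}_h \bm{U}_h^\top$ (and likewise at $h-1$) into both sides and left-multiplying by $\bm{V}_h^\top$ (using $\bm{V}_h^\top \bm{V}_h = \bm{I}_{r_h}$) gives the recursion $\psi_h(\tau_h) = B_h(x_h)\,\psi_{h-1}(\tau_{h-1})$, hence $\psi_h(\tau_h) = B_h(x_h) \cdots B_1(x_1)\, b_0$ by induction (base case $\psi_0 = b_0$). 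Then I would verify $\probbar{\tau_h} = v_h^\top \psi_h(\tau_h)$: since $\sum_{\omega_h} \probbar{\tau_h, \omega_h} = \abs{\bbF_h^a}\, \probbar{\tau_h}$ and $\bm{V}_h \bm{\Sigma}_h \bm{U}_h^\top = \bm{D}_h^\top$, the right side equals $\abs{\bbF_h^a}^{-1} \bm{1}^\top \bm{D}_h^\top \bm{e}_{\tau_h} = \abs{\bbF_h^a}^{-1}\sum_{\omega_h}\probbar{\tau_h, \omega_h} = \probbar{\tau_h}$; this is property (5). For property (4), applying $\abs{\bbX_h}^{-\bm{1}\{h\in\calA\}} v_h^\top \sum_{x_h} B_h(x_h)$ to an arbitrary $\psi_{h-1}(\tau_{h-1})$ and using the two identities just shown turns it into $\abs{\bbX_h}^{-\bm{1}\{h\in\calA\}}\sum_{x_h}\probbar{(\tau_{h-1}, x_h)}$, which equals $\probbar{\tau_{h-1}}$ in both cases: when $h \in \calO$ the sum over $x_h$ marginalizes out the next observation, and when $h \in \calA$ the past observations are causally independent of the action $x_h$, so every term equals $\probbar{\tau_{h-1}}$ and the resulting factor $\abs{\bbX_h}$ is cancelled by the prefactor. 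Since $\{\psi_{h-1}(\tau_{h-1}) : \tau_{h-1} \in \bbH_{h-1}\}$ spans $\reals^{r_{h-1}}$ (because $\bm{U}_{h-1}^\top$ is onto $\reals^{r_{h-1}}$ and $\bm{\Sigma}_{h-1}$ is invertible), the two linear functionals agree, giving (4).

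It then remains to check the norm bounds. For (1): $\bm{\Pi}_{x_h}$ has orthonormal columns ($\bm{\Pi}_{x_h}^\top \bm{\Pi}_{x_h} = \bm{I}_{\abs{\bbF_h}}$), so $\ttwonorm{\bm{\Pi}_{x_h}} = 1$, and with $\ttwonorm{\bm{V}_h} = \ttwonorm{\bm{V}_{h-1}} = 1$ this yields $\ttwonorm{B_h(x_h)} \le 1$. For (2): $\bm{D}_0$ is the $1 \times \abs{\bbH_H}$ row of all trajectory probabilities, so $r_0 = 1$ and $\abs{b_0} = \ttwonorm{\bm{D}_0} = \big(\sum_{\omega_0} \probbar{\omega_0}^2\big)^{1/2} \le \big(\sum_{\omega_0} \probbar{\omega_0}\big)^{1/2} = \abs{\bbH_H^a}^{1/2}$, using $\probbar{\omega_0} \le 1$ and $\sum_{\omega_0}\probbar{\omega_0} = \abs{\bbF_0^a} = \abs{\bbH_H^a}$. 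For (3): $\ttwonorm{v_h} \le \abs{\bbF_h^a}^{-1} \ttwonorm{\bm{V}_h}\,\ttwonorm{\bm{1}} = \abs{\bbF_h^a}^{-1}\abs{\bbF_h}^{1/2} = (\abs{\bbF_h^o}/\abs{\bbF_h^a})^{1/2}$. The boundary cases $h = 0$ and $h = H$ (where $\bbH_h$ or $\bbF_h$ is a singleton) are handled by direct substitution into all five claims.

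I expect the main obstacle to be getting $\ttwonorm{B_h(x_h)} \le 1$ in (1) to hold simultaneously with (2)--(3): this is exactly what forces the design choice above. If one instead placed the singular values on the ``future'' factor $\bm{V}_h$ (so that $\psi_h$ would be a sub-vector of core-test probabilities, which is the more natural convention), then $B_h(x_h)$ would pick up a factor $\bm{\Sigma}_h^{-1}$ and one would only get $\ttwonorm{B_h(x_h)} \le \sigma_1(\bm{D}_{h-1})/\sigma_{r_h}(\bm{D}_h)$, which is not controlled in general. Keeping $\bm{\Sigma}_h$ with $\bm{U}_h$ instead makes $B_h$ a product of an orthonormal-column matrix, a partial-permutation matrix, and another orthonormal-column matrix; the price is that $\psi_h$ is no longer literally a probability vector, but its norm is never needed in the statement, so this is harmless.
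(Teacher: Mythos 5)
Your proposal is correct and is essentially the paper's own construction: the same SVD of the dynamics matrix with the singular values kept on the history factor, the same operators $B_h(x_h)$ (your $\bm{V}_h^\top \bm{\Pi}_{x_h}^\top \bm{V}_{h-1}$ is exactly the paper's $U_h^\top [U_{h-1}]_{(x_h,\bbF_h),:}$ written with an explicit partial-permutation matrix), the same $v_h$ and $b_0$, and the same norm bounds. The only differences are cosmetic --- you decompose $\bm{D}_h$ rather than $\bm{D}_h^\top$, and you verify property (4) by evaluating both functionals on the spanning set $\{\psi_{h-1}(\tau_{h-1})\}$ instead of invoking uniqueness of the solution to the full-rank linear system $x^\top U_h^\top \bm{D}_h^\top = \abs{\bbF_h^a}^{-1}\bm{1}^\top\bm{D}_h^\top$, which is the same argument in different clothing.
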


\begin{proof}
    We construct the representation via the singular value decomposition of the matrix $\bm{D}_h^\top$. Let $U_h \in \reals^{\abs{\bbF_h} \times r_h}, \Sigma_h \in \reals^{r_h \times r_h}, V_h^\top \in \reals^{r_h \times \abs{\bbH_h}}$ be the SVD such that $\bm{D}_h^\top = U_h \Sigma_h V_h^\top$. Define $b_0, B_h, v_h^\top$ as follows,
    \begin{equation*}
        b_0 = \twonorm{\bm{D}_0}, \ \ B_h(x_h) = U_h^\top \bra{U_{h-1}}_{(x_h, \bbF_h), :}, \ \ v_h^\top = \frac{1}{\abs{\bbF_h^a}} \bm{1}^\top U_h.
    \end{equation*}
    Here, $\bra{U_{h-1}}_{(x_h, \Omega_h), :}$ denotes an $\abs{\bbF_h}$ by $r_{h-1}$ submatrix of $U_{h-1}$ consisting of the rows $(x_h, \omega_h), \, \omega_h \in \bbF_h$ (i.e., the set of futures where the variable at time $h$ is $x_h$). Note that $\abs{\bbF_H^a} = 1$ by convention, a product over an empty set. We verify each property in turn.

    First, $\ttwonorm{B_h(x_h)} = \ttwonorm{U_h^\top \bbra{U_{h-1}}_{(x_h, \bbF_h), :}} \leq 1$ since $U_{h}, U_{h-1}$ are unitary matrices. Second, 
    \begin{align*}
        \abs{b_0} &= \twonorm{\bm{D}_0} = \sqrt{\sum_{\tau_H} \probbar{\tau_H}^2} \\
        &\leq \sqrt{\sum_{\tau_H} \probbar{\tau_H}} = \sqrt{\sum_{\tau_H^a} \sum_{\tau_H^o} \prob{\tau_H^o \given \tau_H^a}} = \sqrt{\sum_{\tau_H^a} 1} = \sqrt{\prod_{s \in \calA} \abs{\bbX_s}},
    \end{align*}
    where the inequality is since $\probbar{\tau_H} \in [0,1]$. For property 3, we have
    \begin{align*}
        \twonorm{v_h} &= \frac{1}{\abs{\bbF_h^a}} \twonorm{\bm{1}^\top U_h}\\
        &\leq \frac{1}{\abs{\bbF_h^a}} \twonorm{\bm{1}} = \frac{\sqrt{\abs{\bbF_h}}}{\abs{\bbF_h^a}} = \sqrt{\abs{\bbF_{h}^o} / \abs{\bbF_{h}^a}},
    \end{align*}
    where the inequality is since $U_h$ is unitary, and the final equality is since $\abs{\bbF_h} = \abs{\bbF_h^o} \abs{\bbF_h^a}$.

    Next, to prove properties 4 and 5, we first show the following claim.
    \begin{claim*}
        For any history $\tau_h = (x_1, \ldots, x_h) \in \bbH_h$, $h \in 0:H$, we have $B_h(x_h) \cdots B_1(x_1) b_0 = U_h^\top \bra{\bm{D}_h^\top}_{:, \tau_h}$.
    \end{claim*}
    \begin{proof}[Proof of claim.]
        We prove the claim by induction. In the base case, $h = 0$, $\bm{D}_0^\top$ is a vector in $\reals^{\bbF_0}$ (note that $\bbF_0 = \bbH_H$). Hence, $U_0$ is simply the normalized vector $U_0 = \bm{D}_0^\top / \ttwonorm{\bm{D}_0^\top}$, and hence $U_0^\top \bm{D}_0^\top = \bm{D}_0 \bm{D}_0^\top / \ttwonorm{\bm{D}_0} = \ttwonorm{\bm{D}_0} = b_0$. Proceeding by induction, suppose the claim holds for $h-1$. Then, we have,
        \begin{align*}
            B_h(x_h) \cdots B_1(x_1) b_0 &= B_h(x_h) U_{h-1}^\top \bra{\bm{D}_{h-1}^\top}_{:, \tau_{h-1}} \\
            &= U_h^\top \bra{U_{h-1}}_{(x_h, \bbF_h), :} U_{h-1}^\top \bra{\bm{D}_{h-1}^\top}_{:, \tau_{h-1}} \\
            &= U_h^\top \bra{U_{h-1} U_{h-1}^\top \bm{D}_{h-1}^\top}_{(x_h, \bbF_h), \tau_{h-1}} \\
            &= U_h^\top \bra{\bm{D}_{h-1}^\top}_{(x_h, \bbF_h), \tau_{h-1}} \\
            &= U_h^\top \bra{\bm{D}_{h}^\top}_{:, \tau_{h}},
        \end{align*}
        where the final equality is because $\bra{\bm{D}_{h-1}^\top}_{(x_h, \omega_h), \tau_{h-1}} = \probbar{\tau_{h-1}, x_h, \omega_h} = \probbar{\tau_h, \omega_h} = \bra{\bm{D}_{h}^\top}_{\omega_h, \tau_{h}}$.
    \end{proof}
    Using this fact, we can now show property 5 as follows,
    \begin{align*}
        v_h^\top B_h(x_h) \cdots B_1(x_1) b_0 &= \frac{1}{\abs{\bbF_h^a}} \bm{1}^\top U_h U_h^\top \bra{\bm{D}_h^\top}_{:, \tau_h} = \frac{1}{\abs{\bbF_h^a}} \bm{1}^\top \bra{\bm{D}_h^\top}_{:, \tau_h}\\
        &= \frac{1}{\abs{\bbF_h^a}} \sum_{\omega_h \in \bbF_h} \probbar{\tau_h, \omega_h} = \frac{1}{\abs{\bbF_h^a}} \sum_{\omega_h^a \in \bbF_h^a} \sum_{\omega_h^o \in \bbF_h^o} \prob{\tau_h^o, \omega_h^o \given \tau_h^a, \omega_h^a} \\
        &= \frac{1}{\abs{\bbF_h^a}} \prob{\tau_h^o \given \tau_h^a} \sum_{\omega_h^a \in \bbF_h^a} \sum_{\omega_h^o \in \bbF_h^o} \prob{\omega_h^o \given \omega_h^a, \tau_h^a, \tau_h^o} \\
        &= \frac{1}{\abs{\bbF_h^a}} \prob{\tau_h^o \given \tau_h^a} \sum_{\omega_h^a \in \bbF_h^a} 1 \\
        &= \prob{\tau_h^o \given \tau_h^a}.
    \end{align*}

    Finally, it remains to show property 4. Consider the linear equation $x^\top U_h^\top \bm{D}_h^\top = {\aabs{\bbF_h^a}}^{-1} \bm{1}^\top \bm{D}_h^\top$. Note that $ U_h^\top \bm{D}_h^\top \in \reals^{r_h \times \aabs{\bbH_h}}$ is rank $r_h$. Thus, this equation has a unique solution. Our strategy is to show that $v_h^\top$ and $v_{h+1}^\top \sum_{x_{h+1}} B_{h+1}(x_h)$ are both solutions to this linear equation, and hence $v_h^\top = v_{h+1}^\top \sum_{x_{h+1}} B_{h+1}(x_h)$. That $v_h^\top$ is a solution is clear by definition of $v_h$, $v_h^\top U_h^\top \bm{D}_h^\top = {\aabs{\bbF_h^a}}^{-1} \bm{1}^\top U_h U_h^\top \bm{D}_h^\top = {\aabs{\bbF_h^a}}^{-1} \bm{1}^\top \bm{D}_h^\top$. First, recall by the calculation above that ${\aabs{\bbF_h^a}}^{-1} \bm{1}^\top \bm{D}_h^\top$ is a vector in $\reals^{\bbH_h}$ where the $\tau_h$-th entry is $\prob{\tau_h^o \given \tau_h^a}$. We will calculate the $\tau_h$-th entry of the vector $x^\top U_h^\top \bm{D}_h$ when $x^\top = v_{h+1}^\top \sum_{x_{h+1}} B_{h+1}(x_{h+1})$,
    \begin{align*}
        \paren{v_{h+1}^\top \sum_{x_{h+1}} B_{h+1}(x_{h+1})} \bra{U_h^\top D_h^\top}_{:, \tau_h} &= \frac{1}{\abs{\bbF_{h+1}^a}} \sum_{x_{h+1}} \bm{1}^\top U_{h+1} U_{h+1}^\top \bra{U_{h}}_{(x_{h+1}, \bbF_{h+1}), :} \bra{U_h^\top D_h^\top}_{:, \tau_h} \\
        &= \frac{1}{\abs{\bbF_{h+1}^a}} \sum_{x_{h+1}}  \bm{1}^\top U_{h+1} U_{h+1}^\top \bra{U_{h} U_h^\top D_h^\top}_{(x_{h+1}, \bbF_{h+1}), \tau_h} \\
        &= \frac{1}{\abs{\bbF_{h+1}^a}} \sum_{x_{h+1}} \bra{\bm{1}^\top D_h^\top}_{(x_{h+1}, \bbF_{h+1}), \tau_h} \\
        &= \frac{1}{\abs{\bbF_{h+1}^a}} \sum_{x_{h+1}} \sum_{\omega_{h+1}} \probbar{\tau_h, x_{h+1}, \omega_{h+1}} = \frac{1}{\abs{\bbF_{h+1}^a}} \sum_{\omega_{h} \in \bbF_h} \probbar{\tau_h, \omega_h} \\
        &= \frac{1}{\abs{\bbF_{h+1}^a}} \prob{\tau_h^o \given \tau_h^a} \sum_{\omega_{h}^a \in \bbF_h^a} \sum_{\tau_h^o \in \bbF_h^o} \prob{\omega_h^o \given \tau_h, \omega_h^a} \\
        &= \frac{1}{\abs{\bbF_{h+1}^a}} \prob{\tau_h^o \given \tau_h^a} \sum_{\omega_{h}^a \in \bbF_h^a} 1 = \frac{1}{\abs{\bbF_{h+1}^a}} \abs{\bbF_{h}^a} \prob{\tau_h^o \given \tau_h^a} \\
        &= \frac{1}{\abs{\bbX_{h+1}}^{\bm{1}\{h+1 \in \calA\}}} \prob{\tau_h^o \given \tau_h^a},
    \end{align*}
    where the final inequality is since $\abs{\bbF_h^a} = \prod_{s \in h+1:H} (\abs{\bbX_s}^{\bm{1}\{s \in \calA\}})$.
\end{proof}

\begin{corollary}\label{cor:rankr_oom_rep}
    Consider a sequential decision-making problem with $\mathrm{rank}(D_h) \leq r$. Then, there exists a generalized PSR $b_0 \in \reals^r$, $\set{B_h(x_h)}_{h \in [H], x_h \in \bbX_h} \subset \reals^{r \times r}$, $v_H \in \reals^r$ such that,
    \begin{enumerate}
        \item $\twonorm{B_h(x_h)} \leq 1,\, \forall h, x_h \in \bbX_h$, $\twonorm{b_0} \leq \sqrt{\abs{\bbH_H^a}}$, and $\twonorm{v_H} \leq 1$.
        \item For any $\tau_H \in \bbH_H$, $\probbar{\tau_H} = v_H^\top B_H(x_H) \cdots B_1(x_1) b_0$.
    \end{enumerate}
\end{corollary}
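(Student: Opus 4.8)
The plan is to obtain the corollary from Proposition~\ref{prop:svd_oom_rep} by a zero-padding argument that makes all dimensions uniformly equal to $r$. First I would invoke Proposition~\ref{prop:svd_oom_rep} to get a representation $b_0 \in \reals^{r_0}$, $\{B_h(x_h) \in \reals^{r_h \times r_{h-1}}\}_{h \in [H]}$, $\{v_h \in \reals^{r_h}\}_{h \in 0:H}$ satisfying its five properties, where $r_h = \mathrm{rank}(\bm{D}_h) \le r$ for every index (using the hypothesis $\mathrm{rank}(\bm{D}_h) \le r$, together with $\bm{D}_0 = \bm{D}_H^\top$ so that $r_0 = r_H \le 1 \le r$).

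Next I would embed every object into dimension $r$. Let $\iota_k : \reals^k \hookrightarrow \reals^r$ denote the canonical inclusion onto the first $k$ coordinates, set $\tilde{b}_0 = \iota_{r_0}(b_0) \in \reals^r$, $\tilde{v}_H = \iota_{r_H}(v_H) \in \reals^r$, and let $\tilde{B}_h(x_h) \in \reals^{r \times r}$ be $B_h(x_h)$ placed in the top-left $r_h \times r_{h-1}$ block with zeros elsewhere. The norm bounds (property 1 of the corollary) then follow immediately: $\iota_k$ is an $\ell_2$-isometry, so $\twonorm{\tilde{b}_0} = \twonorm{b_0} \le \sqrt{\abs{\bbH_H^a}}$; zero-padding a matrix leaves its spectral norm unchanged, so $\twonorm{\tilde{B}_h(x_h)} = \twonorm{B_h(x_h)} \le 1$; and property 3 of Proposition~\ref{prop:svd_oom_rep} specialized to $h = H$ gives $\twonorm{v_H} \le \sqrt{\abs{\bbF_H^o}/\abs{\bbF_H^a}} = 1$, since $\bbF_H = \prod_{s \in H+1:H} \bbX_s$ is an empty product and hence a singleton, so $\twonorm{\tilde{v}_H} \le 1$.

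For property 2 of the corollary, the key point is that since all the padded matrices keep their support in the top-left corner, their products compose ``block-consistently'' with the original products. Concretely, a short induction on $h$ shows that for any history $\tau_H = (x_1, \ldots, x_H) \in \bbH_H$,
\[
\tilde{B}_h(x_h) \cdots \tilde{B}_1(x_1)\, \tilde{b}_0 \;=\; \iota_{r_h}\!\big(B_h(x_h) \cdots B_1(x_1)\, b_0\big),
\]
the inductive step using only that $\tilde{B}_h$ agrees with $B_h$ on the first $r_{h-1}$ coordinates and sends them into the first $r_h$. Taking $h = H$ and pairing with $\tilde{v}_H$, which reads only the first $r_H$ coordinates, yields $\tilde{v}_H^\top \tilde{B}_H(x_H) \cdots \tilde{B}_1(x_1)\, \tilde{b}_0 = v_H^\top B_H(x_H) \cdots B_1(x_1)\, b_0 = \probbar{\tau_H}$ by property 5 of Proposition~\ref{prop:svd_oom_rep}. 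Renaming $\tilde{b}_0, \tilde{B}_h, \tilde{v}_H$ back to $b_0, B_h, v_H$ completes the proof.

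\textbf{Main obstacle.} There is no substantive difficulty; the argument is bookkeeping around zero-padding. The only two points needing care are (i) that the padding is applied consistently (always in the top-left block) so that the matrix chain still multiplies correctly across time steps, and (ii) the bound $\twonorm{v_H} \le 1$, which must be read off from property 3 of Proposition~\ref{prop:svd_oom_rep} at $h = H$ using that the terminal future space is trivial.
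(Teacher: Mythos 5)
Your proposal is correct and takes exactly the paper's approach: the paper's own proof of this corollary is the one-line observation that the representation from Proposition~\ref{prop:svd_oom_rep} can be padded with zero rows/columns since $r_h \le r$, and you have simply spelled out the bookkeeping (block-consistency of the padded products, the isometry of the inclusions, and the terminal bound $\twonorm{v_H}\le 1$ from the trivial future space). Nothing is missing.
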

\begin{proof}
    In~\Cref{prop:svd_oom_rep} we constructed such a representation with dimensions in terms of $r_h$ instead of $r$. Since $r_h \leq r$, we can pad this representation with dummy columns and/or rows filled with zeros to obtain a representation with dimensions in terms of $r$.
\end{proof}

An important part of maximum likelihood analysis is the notion of a ``bracketing number'' which controls the complexity of the model class $\Theta$~\parencite[e.g.,][]{geerRatesConvergenceMaximum2006}. In our analysis, the model class is the set of generalized PSRs of a given rank. As shown in the results above, rank-$r$ generalized PSRs can represent any rank-$r$ sequential decision-making problem, with operators whose norm is bounded. In the next result, we will consider a closely related notion to the bracketing number which crucially incorporates optimism. $\bar{\Theta}_\varepsilon$ is said to be an ``optimistic $\varepsilon$-cover'' for $\Theta$ if for each $\theta \in \Theta$, there exists $\hat{\theta} \in \bar{\Theta}_\varepsilon$ with an associated probability measure $\bar{\bbP}_{\hat{\theta}}^{\varepsilon}$ such that,
\begin{align*}
    \forall h, \tau_h,\ \bar{\bbP}_{\hat{\theta}}^{\varepsilon}(\tau_h) \geq \probbarunder{\theta}{\tau_h}, \\
    \forall h, \tau_h,\ \sum_{\tau_h} \abs{\bar{\bbP}_{\hat{\theta}}^{\varepsilon}(\tau_h) - \probbarunder{\theta}{\tau_h}} \leq \varepsilon.
\end{align*}
The first condition ensures optimism and the second condition ensures that $\bar{\Theta}_\varepsilon$ $\varepsilon$-covers $\Theta$, in the sense that the probability of any trajectory is approximated within an error $\varepsilon$. Recall that the parameter $\beta$ in~\Cref{alg:post_ucb,alg:posg_ucb}, which appears in the sample complexity results in~\Cref{theorem:post_ucb,theorem:posg_ucb}, is defined in terms of $\aabs{\bar{\Theta}_{\varepsilon}}$. The next proposition bounds the size of $\abs{\bar{\Theta}_\varepsilon}$.

\begin{proposition}[Optimistic cover of sequential decision making problems]\label{prop:bracketing_num_opt_net}
    Let $\mathfrak{M}$ be the set of all rank-$r$ sequential decision-making problems with a horizon of length $H$, observation index set $\calO \subset [H]$, action index set $\calA \subset [H]$, and variable spaces $\bbX_1, \ldots, \bbX_H$.
    Then, there exists an optimistic $\varepsilon$-cover $\bar{\Theta}_{\varepsilon}$ of $\Theta$ with cardinality bounded by,
    \begin{equation*}
        \log \abs{\bar{\Theta}_\varepsilon} \leq O\paren{r^2 \max_{h} \abs{\bbX_h} H^2 \log\paren{\frac{\max_h \abs{\bbX_h}}{\epsilon}}}.
        % \log \calN_\varepsilon(\mathfrak{M}) \leq O\paren{r^2 \max_{s}\abs{\bbX_s} H^2 \log (r \max_s \abs{\bbX_s} H / \varepsilon)}.
    \end{equation*}
\end{proposition}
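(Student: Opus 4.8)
The approach is to build a Euclidean $\varepsilon'$-net over the (bounded-norm) operators of a rank-$r$ generalized PSR representation, show that operator-level proximity propagates to proximity of all trajectory probabilities $\probbar{\tau_h}$ simultaneously over $h$, and then shift each net element upward by the propagation error to obtain optimism. By~\Cref{cor:rankr_oom_rep}, every rank-$r$ sequential decision-making problem in $\mathfrak{M}$ (and hence every $\theta\in\Theta$) admits a representation $\theta = \paren{b_0, \sset{B_h(x_h)}_{h\in[H],\, x_h\in\bbX_h}, v_H}$ with $b_0, v_H\in\reals^r$, $B_h(x_h)\in\reals^{r\times r}$, satisfying $\ttwonorm{B_h(x_h)}\le 1$, $\ttwonorm{b_0}\le\sqrt{\aabs{\bbH_H^a}}$, $\ttwonorm{v_H}\le 1$, and $\probbar{\tau_H} = v_H^\top B_H(x_H)\cdots B_1(x_1) b_0$; for $h<H$ the reduced vector $v_h$ (the analogue of $\phi_h$ in~\Cref{def:gen_psr}) is obtained from $v_H$ and the $B$'s by the backward recursion of~\Cref{prop:svd_oom_rep}(4), and then $\probbar{\tau_h}=v_h^\top B_h(x_h)\cdots B_1(x_1) b_0$ with $\ttwonorm{v_h}\le\sqrt{\aabs{\bbF_h^o}/\aabs{\bbF_h^a}}$. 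So it suffices to cover the finite-dimensional parameter space of such operator tuples; this follows the standard bracketing/optimistic-cover constructions for PSRs, adapted to the arbitrary-variable-order setting.

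\textbf{Step 1: the net.} The free parameters are the $\sum_{h\in[H]}\aabs{\bbX_h}\le H\max_h\aabs{\bbX_h}$ matrices $B_h(x_h)$, each in the Frobenius ball of radius $\sqrt r$ in $\reals^{r^2}$, together with $b_0$ in the ball of radius $\sqrt{\aabs{\bbH_H^a}}\le(\max_h\aabs{\bbX_h})^{H/2}$ in $\reals^r$ and $v_H$ in the unit ball of $\reals^r$. A standard volumetric bound gives, for each, an $\varepsilon'$-net of size at most $(3R/\varepsilon')^{D}$ ($R$ the radius, $D$ the dimension). Taking the product of these nets and using $r\le(\max_h\aabs{\bbX_h})^H$ (so $\log\sqrt r = O(H\log\max_h\aabs{\bbX_h})$) yields
\[
\log\aabs{\bar\Theta_{\varepsilon'}}\ \le\ O\!\paren{ H\max_h\aabs{\bbX_h}\cdot r^2\cdot\log\tfrac{(\max_h\aabs{\bbX_h})^{H}}{\varepsilon'} }.
\]

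\textbf{Step 2: error propagation, optimism, and choice of $\varepsilon'$.} Let $\hat\theta=(\hat b_0,\sset{\hat B_h(x_h)},\hat v_H)$ be the net element with $\ttwonorm{\hat b_0 - b_0}\le\varepsilon'$, $\ttwonorm{\hat B_h(x_h)-B_h(x_h)}\le\varepsilon'$, $\ttwonorm{\hat v_H - v_H}\le\varepsilon'$. Using $\ttwonorm{B_h(x_h)}\le1$ and $\ttwonorm{\hat B_h(x_h)}\le 1+\varepsilon'\le2$, a telescoping bound on $\hat B_h(x_h)\cdots\hat B_1(x_1)\hat b_0 - B_h(x_h)\cdots B_1(x_1) b_0$ controls its norm by $(h+1)2^h\varepsilon'\sqrt{\aabs{\bbH_H^a}}$; propagating $\ttwonorm{\hat v_H-v_H}\le\varepsilon'$ through the backward recursion (expanding by at most $\aabs{\bbX_s}$ per step) and using $\ttwonorm{v_h}\le\sqrt{\aabs{\bbF_h^o}/\aabs{\bbF_h^a}}$ bounds $\ttwonorm{\hat v_h-v_h}$. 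Combining these gives a uniform estimate $\sup_{h,\tau_h}\aabs{\probbarunder{\hat\theta}{\tau_h}-\probbarunder{\theta}{\tau_h}}\le\Delta:=\mathrm{poly}(H)\,(\max_h\aabs{\bbX_h})^{O(H)}\,\varepsilon'$. Define the optimistic measure $\bar\bbP^{\varepsilon}_{\hat\theta}(\tau_h):=\probbarunder{\hat\theta}{\tau_h}+\Delta$: then $\probbarunder{\theta}{\tau_h}\le\bar\bbP^{\varepsilon}_{\hat\theta}(\tau_h)$, and $\sum_{\tau_h}\aabs{\bar\bbP^{\varepsilon}_{\hat\theta}(\tau_h)-\probbarunder{\theta}{\tau_h}}\le 2\aabs{\bbH_h}\Delta\le(\max_h\aabs{\bbX_h})^{O(H)}\mathrm{poly}(H)\,\varepsilon'$. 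Choosing $\varepsilon'=\varepsilon/\!\paren{(\max_h\aabs{\bbX_h})^{O(H)}\mathrm{poly}(H)}$ makes this at most $\varepsilon$, so $\bar\Theta_\varepsilon:=\bar\Theta_{\varepsilon'}$ is an optimistic $\varepsilon$-cover, with $\log(1/\varepsilon')=O(H\log\max_h\aabs{\bbX_h}+\log(1/\varepsilon))$. Substituting into Step 1 and using $H\log a+\log(1/\varepsilon)\le H\log(a/\varepsilon)$ gives $\log\aabs{\bar\Theta_\varepsilon}\le O\paren{r^2\max_h\aabs{\bbX_h}\,H^2\,\log\paren{\max_h\aabs{\bbX_h}/\varepsilon}}$, as claimed.

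\textbf{Main obstacle.} The delicate point is the error propagation in Step 2: a careless bound on the product of $H$ perturbed operators, together with the backward $\phi_h/v_h$ recursion, could force $\varepsilon'$ to be doubly-exponentially small in $H$, blowing up $\log(1/\varepsilon')$ to $\mathrm{poly}((\max_h\aabs{\bbX_h})^H)$ and ruining the bound. One must carefully exploit the norm controls $\ttwonorm{B_h(x_h)}\le 1$ and $\ttwonorm{v_h}\le\sqrt{\aabs{\bbF_h^o}/\aabs{\bbF_h^a}}$ from~\Cref{cor:rankr_oom_rep}/\Cref{prop:svd_oom_rep} so that the propagated error is only $(\max_h\aabs{\bbX_h})^{O(H)}\varepsilon'$, keeping $\log(1/\varepsilon')$ linear in $H$; and the additive optimism shift must be applied uniformly over all $(h,\tau_h)$ in a way that preserves the $\ell_1$ closeness required by the definition of an optimistic cover.
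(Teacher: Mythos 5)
Your proposal is correct and follows essentially the same route as the paper's proof: represent each model by the bounded-norm operator tuple of~\Cref{cor:rankr_oom_rep}, take a finite net over that parameter space, telescope the perturbation through the product of operators to control the $\ell_1$ error on trajectory probabilities, and add a uniform upward shift to enforce optimism before tuning the net resolution. The only (cosmetic) differences are that you use a volumetric $\ell_2$-ball net where the paper uses an entrywise $\ell_\infty$ cover, and you are somewhat more explicit about handling partial histories $\tau_h$ with $h<H$ via the backward recursion for $v_h$ — a point the paper's written argument verifies only for full trajectories $\tau_H$.
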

\begin{proof}
    Define the set of generalized PSR representations constructed in~\Cref{cor:rankr_oom_rep},
    \begin{equation*}
        \begin{split}
            \Theta \coloneqq \Biggl\{ b_0 \in \reals^r, \set{B_h(x_h)}_{h, x_h}, v_H \in \reals^r \,&\colon\, \twonorm{B_h(x_h)} \leq 1,\, \forall h, x_h,\, \twonorm{b_0} \leq \sqrt{\abs{\bbH_H^a}},\, \twonorm{v_H} \leq 1,\\
            & \text{and} \ \forall\, \tau_H \in \bbH_H,\, \probbarunder{m}{\tau_H} = v_H^\top B_H(x_H) \cdots B_1(x_1) b_0, \\
            &\text{where $m$ is a sequential decision making problem in} \ \mathfrak{M} \Biggr\}.
        \end{split}
    \end{equation*}

    Let $\calC_\delta$ be a $\delta$-cover of the above set with respect to the $\ell_\infty$-norm. For $\hat{\theta} = (b_0, \set{B_h(x_h)}, v_H) \in \calC_\delta$, define the $\varepsilon$-optimistic probabilities as,
    \begin{equation*}
        \bar{\bbP}_{\hat{\theta}}^{\varepsilon}(\tau_H) \coloneqq v_H^\top B_H(x_h) \cdots B_1(x_1) b_0 + \varepsilon/2
    \end{equation*}

    We will show that for an appropriate choice of $\delta$, $\calC_\delta$ is an optimistic $\varepsilon$-cover. In particular, for each $\theta \in \Theta$, there exists $\hat{\theta} \in \calC_{\delta}$ such that,
    \begin{align*}
        \forall h, \tau_h,\ \bar{\bbP}_{\hat{\theta}}^{\varepsilon}(\tau_h) \geq \probbarunder{\theta}{\tau_h}, \\
        \forall h, \tau_h,\ \sum_{\tau_h} \abs{\bar{\bbP}_{\hat{\theta}}^{\varepsilon}(\tau_h) - \probbarunder{\theta}{\tau_h}} \leq \varepsilon.
    \end{align*}

    To choose the value of $\delta$ for which the above holds, observe that
    \begin{align*}
        &\sum_{\tau_H} \abs{\hat{v}_H^\top \hat{B}_H(x_H) \cdots B_{1}(x_{1}) \hat{b}_0 - v_H^\top B_H(x_H) \cdots B_{1}(x_{1})b_0} \\
        &\leq \sum_{h=1}^H \sum_{\tau_H} \abs{\hat{v}_H^\top \hat{B}_H(x_H) \cdots \hat{B}_{h+1}(x_{h+1}) (\hat{B}_h(x_h) - B_h(x_h)) B_{h-1}(x_{h-1}) \cdots B_{1}(x_1) b_0}\\
        &\quad + \sum_{\tau_H} \abs{\hat{v}_H^\top B_H(x_H) \cdots B_{1}(x_{1})  (\hat{b}_0 - b_0)} \\
        &\leq \sum_h \sum_{\tau_H} r \norm{\hat{B}_h(x_h) - B_h(x_h)}_{\max} \sqrt{\abs{\bbH_H^a}} + \sum_{\tau_H} \sqrt{r} \norm{\hat{b}_0 - b_0}_{\infty}\\
        &\leq H \max_{h} \abs{\bbX_h}^{H + \abs{\calA}/2} r \delta + \max_{h}\abs{\bbX_h}^H \sqrt{r} \delta,
    \end{align*}
    where the second inequality uses $\ttwonorm{\hat{v}_H} = \ttwonorm{v_H} = \ttwonorm{B_h(x_h)} = 1$, $\ttwonorm{\hat{B}_h(x_h) - B_h(x_h)} \leq r \nnorm{\hat{B}_h(x_h) - B_h(x_h)}_{\max} \leq r \delta$, $\ttwonorm{b_0} \leq \sqrt{\aabs{\bbH_H^a}}$, and $\ttwonorm{\hat{b}_0 - b_0} \leq \sqrt{r} \iinfnorm{\hat{b}_0 - b_0} \leq \sqrt{r} \delta$. Hence, choosing $\delta \coloneq \varepsilon \cdot \max_h \abs{\bbX_h}^{-cH}$ for $c$ an absolute constant large enough achieves a $\varepsilon$-optimistic covering of $\Theta$. Hence, we let $\bar{\Theta}_\varepsilon = \calC_{\delta}$, with $\delta = \varepsilon \cdot \max_h \cdot \abs{\bbX_h}^{-cH}$. It remains to bound the size of $\aabs{\bar{\Theta}_\varepsilon}$.

    Recall that $\infnorm{\cdot} \leq \ttwonorm{\cdot}$ and that an interval $[-x, x]$ in $\reals$ admits a $\delta$-cover of size bounded by $2 x / \delta$. Now, observe that $\max_{ij} \abs{[B_h(x_h)]_{ij}} \leq \ttwonorm{B_h(x_h)} \leq 1$. Hence, for a fixed $h$, $\set{B_h(x_h)}_{x_h}$ admits a cover of size bounded by $(2/\delta)^{r^2 \abs{\bbX_h}}$. Considering all $h$, the cover is bounded by $(2 / \delta)^{r^2 \sum_h \abs{\bbX_h}} \leq (2 / \delta)^{r^2 \max_h \abs{\bbX_h} H}$. For, $b_0$, we have $\infnorm{b_0} \leq \ttwonorm{b_0} \leq \sqrt{\abs{\bbH_H^a}}$, hence the covering number is bounded by $(2 \sqrt{\abs{\bbH_H^a}}/\delta)^r$. Finally for $v_H$, we have $\infnorm{v_H} \leq \ttwonorm{v_H} \leq 1$, hence the covering number is bounded by $(2 / \delta)^r$. Thus, we have,
    \begin{equation*}
        \log \abs{\bar{\Theta}_\varepsilon} \leq O\paren{r^2 \max_{h} \abs{\bbX_h} H \log\paren{\frac{1}{\delta}}}.
    \end{equation*}
    Recalling that $\delta = \varepsilon \max_{s} \abs{\bbX_s}^{-cH}$, we obtain that,
    \begin{equation*}
        \log \abs{\bar{\Theta}_\varepsilon} \leq O\paren{r^2 \max_{h} \abs{\bbX_h} H^2 \log\paren{\frac{\max_h \abs{\bbX_h}}{\epsilon}}}.
    \end{equation*}
\end{proof}

\section{Proofs of Section~\ref{ssec:info_struct_rank}}\label{sec:appdx_seq_team_rank_proof}

\begin{theorem*}[Restatement of~\Cref{theorem:post_posg_rank}]
    The rank of the observable system dynamics of a POST or POSG is bounded by
    \[r \leq \max_{h \in [H]} \Big\lvert \bbI_h^\dagger \Big\rvert .\]
\end{theorem*}
\begin{proof}
    We have
    \begin{equation*}
        \begin{split}
            \bra{\bm{D}_h}_{\tau_h, \omega_h} &=  \prob{\tau_h^o, \omega_h^o \given \mathrm{do}(\tau_h^a, \tau_h^a)} \\
            &= \prob{\tau_h^o \given \mathrm{do}\paren{\tau_h^a}} \prob{\omega_h^o \given \tau_h^o;\,\mathrm{do}\paren{\tau_h^a, \omega_h^a}} \\
            &\stepa{=} \prob{\tau_h^o \given \mathrm{do}\paren{\tau_h^a}} \sum_{\substack{x_k \in \bbX_k \\ k \in \calI_h^\dagger}} \prob{\set{x_k,\, k \in \calI_h^\dagger} \given \tau_{h}^o;\,\mathrm{do}\paren{\tau_h^a, \omega_h^a}} \prob{\omega_h^o \given \set{x_k,\, k \in \calI_h^\dagger}, \tau_h^o;\,\mathrm{do}\paren{\tau_h^a, \omega_h^a}} \\
            &\stepb{=} \sum_{\substack{x_k \in \bbX_k \\ k \in \calI_h^\dagger}} \prob{\tau_h^o \given \mathrm{do}\paren{\tau_h^a}} \prob{\set{x_k,\, k \in \calI_h^\dagger} \given \tau_{h}^o;\,\mathrm{do}\paren{\tau_h^a}} \prob{\omega_h^o \given \set{x_k,\, k \in \calI_h^\dagger}, \tau_h^o;\,\mathrm{do}\paren{\tau_h^a, \omega_h^a}} \\
            &\stepc{=} \sum_{\substack{x_k \in \bbX_k \\ k \in \calI_h^\dagger}} \prob{\tau_h^o \given \mathrm{do}\paren{\tau_h^a}} \prob{\set{x_k,\, k \in \calI_h^\dagger} \given \tau_{h}^o;\,\mathrm{do}\paren{\tau_h^a}} \prob{\omega_h^o \given \set{x_k,\, k \in \calI_h^\dagger};\,\mathrm{do}\paren{\omega_h^a}},
        \end{split}
    \end{equation*}
    where step (a) is simply the law of total probability, step (b) is that $\{x_k, k \in \calI_h^\dagger\}$ is conditionally independent of $\mathrm{do}(\omega_h^a)$ (future actions) given $(\tau_{h}^o;\,\mathrm{do}(\tau_h^a))$ (the past), and step (c) is that $\omega_h^o$ is conditionally independent of $(\tau_h^o;\,\mathrm{do}(\tau_h^a))$ given $\{x_k,\, k \in \calI_h^\dagger\}$. 
    This is due to a result by~\citet{vermaCausalNetworksSemantics1990} which states: for three sets of variables $A, B, C$ in a directed graphical model, if $A$ and $B$ are $d$-separated by $C$, then $A \perp B \,|\, C$. 
    Recall that $\calI_h^\dagger$ is defined as the minimal set which $d$-separates $(X_{t(1)}, \ldots, X_{t(h)})$ from $(X_{t(h+1)}, \dots, X_{t(H)})$. 

    As a technical remark, note that $i_h^\dagger = \pparen{x_k, k \in \calI_h^\dagger}$ may include actions and hence,
    \begin{equation*}
        \prob{\set{x_k,\, k \in \calI_h^\dagger} \given \tau_{h}^o;\,\mathrm{do}\paren{\tau_h^a}} = \prob{\set{x_k,\, k \in \calI_h^\dagger \cap \calS} \given \tau_{h}^o;\,\mathrm{do}\paren{\tau_h^a}} \bm{1}\set{\paren{x_k,\, k \in \calI_h^\dagger \cap \calA} \ \text{matches} \ \tau_h^a},
    \end{equation*}
    since the action  components of $i_h^\dagger$ are contained in the history $\tau_h$.

    Now define two matrices
    \begin{equation*}
        \begin{split}
            \bm{D}_{h, 1} &\coloneqq \bra{\prob{\tau_h^o \given \mathrm{do}\paren{\tau_h^a}} \prob{\set{x_k,\, k \in \calI_h^\dagger} \given \tau_{h}^o;\,\mathrm{do}\paren{\tau_h^a}}}_{\tau_h, i_h^\dagger}, \quad \tau_h \in \bbH_h,\, i_h^\dagger \equiv \paren{x_k, \, k \in \calI_h^\dagger} \in \bbI_h^\dagger, \\%\set{(x_k)_{k \in \calI_h^\dagger} \in \prod_{k \in \calI_h^\dagger} \bbX_k} \\
            \bm{D}_{h, 2} &\coloneqq \bra{\prob{\omega_h^o \given \set{x_k,\, k \in \calI_h^\dagger};\,\mathrm{do}\paren{\omega_h^a}}}_{i_h^\dagger, \omega_h}, \quad i_h^\dagger \equiv \paren{x_k, \, k \in \calI_h^\dagger} \in \bbI_h^\dagger,\, \omega_h \in \bbF_h.
        \end{split}
    \end{equation*}

    We have that $\bm{D}_{h} = \bm{D}_{h,1} \bm{D}_{h,2}$, where both $\bm{D}_{h,1}$ and $\bm{D}_{h,2}$ have rank upper bounded by $\aabs{\bbI_h^\dagger} = \prod_{s \in \calI_h^\dagger} \aabs{\bbX_s}$. Hence, $\mathrm{rank}(\bm{D}_{h}) \leq \aabs{\bbI_h^\dagger}$, and the result follows.
\end{proof}

\section{Proofs of Section~\ref{sec:post_psrep}}\label{sec:appdx_post_psrep_proofs}

\begin{lemma*}[Restatement of~\Cref{lemma:core_test_set}]
    Suppose that the POST/POSG is $m$-step $\calI^\dagger$-weakly revealing. Then, $\bbQ_h^m$ is a core test set for all $h \in [H]$. Furthermore, we have
    \begin{equation}
        \probbar{\tau_h, \omega_h} = \iprod{m_h(\omega_h)}{\psi_h(\tau_h)}, \text{ and } \ \, \probbar{\omega_h \given \tau_h} = \iprod{m_h(\omega_h)}{\bar{\psi}_h(\tau_h)}.
    \end{equation}
\end{lemma*}

\begin{proof}
    Let $\tau_h \in \bbH_h, \, \omega_h \in \bbF_h$ be any history and future, respectively. By~\Cref{theorem:post_posg_rank}, recall that we have
    \begin{equation}\label{eq:lemma_core_test_set_recall}
        \Probbar{\omega_h \,|\, \tau_h} = \sum_{i_h^\dagger \in \bbI_h^\dagger} \probbar{\omega_h \given i_h^\dagger} \prob{i_h^\dagger \given \tau_h}.
    \end{equation}
    Recall that $i_h^\dagger$ may overlap with $\tau_h$. In particular, the action component of $i_h^\dagger$ is contained in $\tau_h$. Thus, $\Prob[i_h^\dagger \,|\, \tau_h] = \Prob[\{x_k,\, k \in \calI_h^\dagger \setminus \calU_{1:h}\} \,|\, \tau_h] \cdot \bm{1}\{(x_k,\, k \in \calI_h^\dagger \cap \calU_{1:h}) \ \text{matches} \ \tau_h\}$. Note that $\calI_h^\dagger \setminus \calU_{1:h} \subset \calS$ does not contain any actions. Hence, the summation over $\bbI_h^\dagger$ is equivalent to summing over its unobservable components with the restriction that its observable components match $\tau_h$.

    Define the mappings $\tilde{m}_h \colon \bbF_h \to \reals^{\abs{\bbI_h^\dagger}}$ and $p_h \colon \bbH_h \to \reals^{\abs{\bbI_h^\dagger}}$ by
    \begin{equation*}
        \tilde{m}_h(\omega_h) =  \bra{\probbar{\omega_h \given i_h^\dagger}}_{i_h^\dagger \in \bbI_h^\dagger}, \quad  p_h(\tau_h) = \bra{\prob{i_h^\dagger \given \tau_h}}_{i_h^\dagger \in \bbI_h^\dagger}.
    \end{equation*}
    Then, we have that the conditional probability of the future $\omega_h$ given the past $\tau_h$ is given by the inner product of the above mappings, $\probbar{\omega_h \given \tau_h} = \iprod{\tilde{m}_h(\omega_h)}{p_h(\tau_h)}$.
    % \begin{equation*}
    %     \probbar{\omega_h \given \tau_h} = \iprod{\tilde{m}_h(\omega_h)}{p_h(\tau_h)}.
    % \end{equation*}
    Recall that the vector of (conditional) core test set probabilities for the history $\tau_h$ is given by $\bar{\psi}_h(\tau_h) = \bra{\prob{q^o \given \tau_h^o;\, \mathrm{do}(\tau_h^a),\, \mathrm{do}(q^a)}}_{q \in \bbQ_h^m} \in \reals^{\abs{\bbQ_h^m}}$.
    % \begin{equation*}
    %     \begin{split}
    %         \bar{\psi}_h(\tau_h) &\equiv \bra{\prob{q^o \given \tau_h^o;\, \mathrm{do}(\tau_h^a),\, \mathrm{do}(q^a)}}_{q \in \bbQ_h^m} \in \reals^{\abs{\bbQ_h^m}}. \\
    %     \end{split}
    % \end{equation*}
    % \noindent where in the second line we have $q = \paren{x_s, \, s \in \calO_{t+1:t+m}} \in \bbQ_h^m$ and $u^o, u^a$ are its observation and action components, respectively.
    By the definition of $\bm{G}_h$ and \Cref{eq:lemma_core_test_set_recall}, we have $\bm{G}_h\, p_h(\tau_h) = \psi_h(\tau_h)$, since, for $q \in \bbQ_h^m$,
    \begin{equation*}
    \begin{split}
        \paren{\bm{G}_h\, p_h(\tau_h)}_q &= \sum_{i_h^\dagger} \paren{\bm{G}_h}_{q, i_h^\dagger} \paren{p_h(\tau_h)}_{i_h^\dagger} \\
        &= \sum_{i_h^\dagger} \prob{q^o \given i_h^\dagger; \, \mathrm{do}(q^a)} \prob{i_h^\dagger \given \tau_h}\\
        &= \prob{q^o \given \tau_h^o; \, \mathrm{do}(\tau_h^a), \mathrm{do}(q^a)} \\
        &=: \bra{\bar{\psi}_h(\tau_h)}_q
    \end{split}
    \end{equation*}

    Since by assumption $\mathrm{rank}(\bm{G}_h) = \abs{\bbI_h^\dagger}$, its pseudo-inverse $\bm{G}_h^\dagger$ is a left inverse of $\bm{G}_h$ (i.e., $\bm{G}_h^\dagger \bm{G}_h = I$). Hence, multiplying on the left by $\bm{G}_h^\dagger$, we obtain $p_h(\tau_h) = \bm{G}_h^\dagger \bar{\psi}_h(\tau_h)$.
    % \begin{equation*}
    %     p_h(\tau_h) = \bm{G}_h^\dagger \bar{\psi}_h(\tau_h).
    % \end{equation*}
    Hence,
    \begin{equation*}
        \begin{split}
            \probbar{\omega_h \given \tau_h} &= \iprod{\tilde{m}_h(\omega_h)}{\bm{G}_h^\dagger \bar{\psi}_h(\tau_h)} \\
            &= \iprod{\underbrace{\paren{\bm{G}_h^\dagger}^\top \tilde{m}_h(\omega_h)}_{m_h(\omega_h)}}{\bar{\psi}_h(\tau_h)}. 
        \end{split}
    \end{equation*}

    That $\probbar{\tau_h, \omega_h} = \iprod{m_h(\omega_h)}{\psi_h(\tau_h)}$ follows directly by noting the definition of $\bar{\psi}_h(\tau_h) := \psi_h(\tau_h) / \probbar{\tau_h}$.

    Hence, we have shown that for the test set $\bbQ_h^m$, the probability of each future $\omega_h$ given a history $\tau_h$ is a linear combination of the probabilities of each test in the core test set with weights $m_h(\omega_h) \coloneqq (\bm{G}_h^\dagger)^\top \tilde{m}_h(\omega_h) \in \reals^{\aabs{\bbQ_h^m}}$ depending only on the future and not the history.
\end{proof}

\begin{theorem*}[Restatement of~\Cref{theorem:constructing_gen_psr}]
    Suppose a POST/POSG is $\alpha$-robustly $m$-step $\calI^\dagger$-weakly revealing. Then, the corresponding generalized PSR as constructed in~\Cref{sec:post_psrep} is $\gamma$-well-conditioned with $\gamma = \alpha / \max_h \aabs{\bbI_h^\dagger}^{1/2}$.
\end{theorem*}
\begin{proof}
    We first show condition (1) in~\Cref{ass:psr_gamma_wellcond}. Suppose $h > H - m$ and hence the core tests are the full futures, which have length smaller than $m$. Then for any $x \in \reals^{d_{h}}$, $d_h = \prod_{s=h}^H \abs{\bbX_s}$, we have
    \begin{equation*}
        \max_\pi \sum_{\omega_h} \abs{m_h(\omega_h)^\top x} \cdot \pi(\omega_h) = \max_\pi \sum_{\omega_h} \abs{x[\omega_h]} \pi(\omega_h) \leq \onenorm{x},
    \end{equation*}
    where $x[\omega_h]$ indexes the component of the vector $x$ corresponding to the future $\omega_h$.

    Now suppose $h \leq H - m$ (and hence the core tests consist of $m$-step futures). Then, we have,
    \begin{align*}
        \max_\pi \sum_{\omega_h} \abs{m_h(\omega_h)^\top x} \pi(\omega_h) &= \max_\pi \sum_{\omega_h} \abs{m(\omega_h)^\top \bm{G}_h \bm{G}_h^\dagger x} \cdot \pi(\omega_h) \\
        &\leq \max_\pi \sum_{\omega_h} \sum_{i^\dagger \in \bbI_h^\dagger} \abs{m(\omega_h)^\top \bm{G}_h \bm{e}_{i^\dagger}} \abs{\bm{e}_{i^\dagger}^\top \bm{G}_h^\dagger x} \cdot \pi(\omega_h).
    \end{align*}
    Now observe that for any policy $\pi$ and any $i^\dagger \in \bbI_h^\dagger$, we have
    \begin{align*}
        \sum_{\omega_h} \abs{m(\omega_h)^\top \bm{G}_h \bm{e}_{i^\dagger}} \cdot \pi(\omega_h) &= \sum_{\omega_h} \abs{\tilde{m}(\omega_h)^\top \bm{G}_h^\dagger \bm{G}_h \bm{e}_{i^\dagger}} \cdot \pi(\omega_h) \\
        &= \sum_{\omega_h} \probbar{\omega_h \given i^\dagger} \pi(\omega_h) \\
        &= \sum_{\omega_h} \probunder{\pi}{\omega_h \given i^\dagger} = 1,
    \end{align*}
    where we used the definition of $m_h(\omega_h) \coloneqq \tilde{m}_h(\omega_h)^\top \bm{G}_h^\dagger$, and $\bra{\tilde{m}_h(\omega_h)}_{i^\dagger} \coloneqq \Probbar{\omega_h \,|\, i^\dagger}$. Recall that $\pi(\omega_h)$ is such that for any fixed sequence of observations $\omega_h^o$, $\sum_{\omega_h^a} \pi(\omega_h^o, \omega_h^a) = 1$.

    Putting this observation together with the preceding inequality yields
    \begin{align*}
        \max_\pi &\sum_{\omega_h} \abs{m_h(\omega_h)^\top x} \pi(\omega_h)\\
        &\leq \sum_{i^\dagger \in \bbI_h^\dagger} \abs{\bm{e}_{i^\dagger}^\top \bm{G}_h^\dagger x} \\
        &= \onenorm{\bm{G}_h^\dagger x} \leq \onenorm{\bm{G}_h^\dagger} \cdot \onenorm{x} \\
        &\leq \frac{\sqrt{\abs{\bbI_h^\dagger}}}{\alpha} \onenorm{x},
    \end{align*}
    where the final inequality is from the relation between the one-norm and two-norm $\onenorm{\bm{G}_h^\dagger} \leq \sqrt{\abs{\bbI_h^\dagger}} \twonorm{\bm{G}_h^\dagger}$, and $\twonorm{\bm{G}_h^\dagger} \leq \frac{1}{\alpha}$, by the assumption on its eigenvalues.

    Now we show condition (2) in~\Cref{ass:psr_gamma_wellcond}. For ease of notation, we denote $x_{t(h)}$ by $x_h$. When $h > H -m$, note that $\bra{M_h(x_h)}_{q_{h+1}, q_h} = \Ind{q_h = (x_{h}, q_{h+1})}$, for all $q_h \in \bbQ_h, q_{h+1} \in \bbQ_{h+1}$. Hence,we have
    \begin{equation*}
        \max_\pi \sum_{x_h} \onenorm{M_h(x_h) z} \pi(x_h) = \onenorm{z}.
    \end{equation*}
    Now, when $h \leq H - m$, by a similar line of reasoning to the proof for condition (1), we have,
    \begin{align*}
        \max_\pi \sum_{x_h} \onenorm{M_h(x_h) z} \pi(x_h | \tau_{h-1}) &\leq \max_\pi \sum_{(x_h, q_{h+1}) \in \bbX_h \times \bbQ_{h+1}} \sum_{i^\dagger \in \bbI_h^\dagger} \abs{e_{q_{h+1}}^\top M_h(x_h) \bm{G}_h e_{i^\dagger}} \cdot \abs{e_{i^\dagger} \bm{G}_h^\dagger z} \pi(x_h | \tau_{h-1}) \\
        &\stepa{=} \max_\pi \sum_{(x_h, q_{h+1}) \in \bbX_h \times \bbQ_{h+1}} \sum_{i^\dagger \in \bbI_h^\dagger} \abs{m_h(x_{t(h)}, q_{h+1}) \bm{G}_h e_{i^\dagger}} \cdot \abs{e_{i^\dagger} \bm{G}_h^\dagger z} \pi(x_h | \tau_{h-1}) \\
        &\stepb{=} \max_\pi \sum_{i^\dagger} \paren{\sum_{(x_h, q_{h+1})} \probbar{x_h, q_{h+1} \given i^\dagger} \pi(x_h | \tau_{h-1}) } \abs{e_{i^\dagger}^\top \bm{G}_h^\dagger z}
    \end{align*}
    where step (a) uses the definition of $M_h$ and step (b) uses the definition of $m_h(\omega_h)^\top \coloneqq \tilde{m}_h(\omega_h)^\top \bm{G}_h^\dagger$ and $\bra{\tilde{m}_h(\omega_h)}_{i^\dagger} \coloneqq \Probbar{\omega_h \,|\, i^\dagger}$. Now note that,
    \begin{align*}
        \sum_{(x_h, q_{h+1})} \probbar{x_h, q_{h+1} \given i^\dagger} \pi(x_h | \tau_{h-1}) &= \sum_{x_h} \sum_{\act(q_{h+1})} \sum_{\obs(q_{h+1})} \probbar{x_h, \obs(q_{h+1}) \given i^\dagger, \act(q_{h+1})} \pi(x_h) \\
        &= \sum_{\act(q_{h+1})} 1 \\
        &= \abs{\bbQ_{h+1}^A},
    \end{align*}
    where the second line is since for any fixed action sequence, the sum over the probabilities of all observation sequences is 1.

    Thus, putting this together, we obtain the following,
    \begin{align*}
        \max_\pi \sum_{x_h} \onenorm{M_h(x_h) z} \pi(x_h | \tau_{h-1}) &\leq \abs{\bbQ_{h+1}^A} \cdot \onenorm{\bm{G}_h^\dagger z}\\
        &\leq \frac{\sqrt{\abs{\bbI_h^\dagger}} \abs{\bbQ_{h+1}^A}}{\alpha} \onenorm{z},
    \end{align*}
    where the last line again follows by the assumption on the eigenvalues of $\bm{G}_h$.
\end{proof}

\section{Proof of Theorem~\ref{theorem:post_ucb}: UCB Algorithm for Generalized PSRs (Team Setting)}\label{sec:post_ucb_proof}

In this section, we prove~\Cref{theorem:post_ucb} which states that~\Cref{alg:post_ucb} returns a near-optimal policy in a polynomial number of iterations. The proof is adapted from~\parencite{huangProvablyEfficientUCBtype2023} and generalized to our setting with generalized PSRs (\Cref{def:gen_psr}). The proof is organized into several subsections. In~\Cref{ssec:proof_gen_psr_properties}, we show that the total variation distance between trajectories under the true model and the estimated model can be bounded in terms of the estimation error of the observable operators $\sset{M_h}_h$. In~\Cref{ssec:proof_mle_results} we state some general results on maximum likelihood estimation which show that the MLE model has small error on the collected dataset. In~\Cref{ssec:proof_ucb_tv} we prove that the bonus term is an upper confidence bound for the total variation distance. In~\Cref{ssec:proof_sublinear_est} we show that the estimation error is sublinear in the number of iterations (i.e., $O(\sqrt{K}))$. Finally, in~\Cref{ssec:team_thm_proof} we put this all together to prove the theorem.

\subsection{Properties of Generalized PSRs}\label{ssec:proof_gen_psr_properties}

Recall that a PSR model $\theta = \paren{\bm{M}, \psi_0, \phi_H}$ consists of operators $\bm{M} = \set{M_h}_{h=1}^{H-1}$, $M_h: \bbX_{h} \to \reals^{d_{h} \times d_{h-1}}$, $\phi_H: \bbX_H \to \reals^{d_{H-1}}$ (assumed to be the identity mapping), and $\psi_0$ (assumed to be known for the purposes of presentation). Recall that, for any trajectory $\tau_{h-1} = (x_{1}, \ldots, x_{h-1})$, under model $\theta$, we have
\begin{equation}\label{eq:Mpsi_psi_relation}
    \begin{split}
        M_h(x_h) \bar{\psi}_{h-1}(\tau_{h-1}) &= \frac{\psi_h(\tau_h)}{\bar{\bbP}_\theta\paren{\tau_{h-1}}} \\
        &= \frac{\psi_h(\tau_h)}{\bar{\bbP}_{\theta}\paren{x_h \given \tau_{h-1}} \bar{\bbP}_\theta\paren{\tau_{h-1}}} \bar{\bbP}_\theta\paren{x_h \given \tau_{h-1}} \\
        &= \bar{\psi}_h(\tau_h) \bar{\bbP}_\theta\paren{x_h \given \tau_{h-1}}
    \end{split}
\end{equation}

Here, the notation $\bar{\bbP}_\theta\paren{x_h \given \tau_{h-1}}$ means the probability of $x_h$ conditioned on the history $\tau_{h-1}$, with all actions executed. In particular, if $x_h$ is an action, then $\bar{\bbP}_\theta\paren{x_h \given \tau_{h-1}} = 1$ and $M_h(x_h) \bar{\psi}_{h-1}(\tau_{h-1}) = \bar{\psi}_h(\tau_h)$.

The following proposition shows that the total variation distance between the distribution of trajectories of two PSR models can be bounded in terms of the difference in their observable operators.

\begin{proposition}\label{prop:tv_dist_leq_est_err}
% Under~\Cref{ass:psr_gamma_wellcond}, 
For any policy $\pi$ and $\theta, \hat{\theta} \in \Theta$, we have,
\begin{align*}
    \tv{\probb{\hat{\theta}}{\pi},\, \probb{\theta}{\pi}} &\leq \sum_{h=1}^H \sum_{\tau_H \in \bbH_H} \pi(\tau_h) \abs{\hat{m}_h(\omega_h)^\top \paren{\hat{M}_h(x_h) - M_h(x_h)} \psi_{h-1}(\tau_{h-1})},\\
    \tv{\probb{\hat{\theta}}{\pi},\, \probb{\theta}{\pi}} &\leq \sum_{h=1}^H \sum_{\tau_H \in \bbH_H} \pi(\tau_h) \abs{m_h(\omega_h)^\top \paren{\hat{M}_h(x_h) - M_h(x_h)} \hat{\psi}_{h-1}(\tau_{h-1})},\\
\end{align*}
\end{proposition}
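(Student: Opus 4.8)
The plan is to reduce both inequalities to a single telescoping identity for the un-normalized trajectory probabilities $\probbarunder{\theta}{\tau_H} = \phi_H(x_H)^\top M_{H-1}(x_{H-1}) \cdots M_1(x_1) \psi_0$. First I would note that the policy weight attached to a trajectory does not depend on the PSR model: for any $\theta$ and any policy $\pi$, $\probb{\theta}{\pi}(\tau_H) = \pi(\tau_H)\,\probbarunder{\theta}{\tau_H}$ with $\pi(\tau_H) = \prod_{s \in \calA} \pi(x_s \mid x_1, \ldots, x_{s-1})$ (for randomized policies the same holds conditionally on the random seed, and one averages over the seed at the end). Consequently
\[
\tv{\probb{\hat\theta}{\pi}, \probb{\theta}{\pi}} = \sum_{\tau_H \in \bbH_H} \pi(\tau_H)\,\abs{\probbarunder{\hat\theta}{\tau_H} - \probbarunder{\theta}{\tau_H}},
\]
so it suffices to control $\abs{\probbarunder{\hat\theta}{\tau_H} - \probbarunder{\theta}{\tau_H}}$ pointwise in $\tau_H$.

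For the pointwise bound I would run the standard hybrid-model telescoping. For $j \in \{0, 1, \ldots, H\}$, let $g_j(\tau_H)$ be the product $\phi_H(x_H)^\top M_{H-1}(x_{H-1}) \cdots M_1(x_1) \psi_0$ with each operator $M_i$, $i > j$, replaced by $\hat M_i$ and each operator $M_i$, $i \le j$, kept as is (using the convention $M_H := \phi_H^\top$, which is shared between the two models, as is $\psi_0$). Then $g_0 = \probbarunder{\hat\theta}{\tau_H}$, $g_H = \probbarunder{\theta}{\tau_H}$, and $g_{j-1}$ and $g_j$ differ only in the $j$-th factor, so
\[
g_{j-1} - g_j = \hat m_j(\omega_j)^\top \paren{\hat M_j(x_j) - M_j(x_j)} \psi_{j-1}(\tau_{j-1}),
\]
where $\hat m_j(\omega_j)^\top = \phi_H(x_H)^\top \hat M_{H-1}(x_{H-1}) \cdots \hat M_{j+1}(x_{j+1})$ and $\psi_{j-1}(\tau_{j-1}) = M_{j-1}(x_{j-1}) \cdots M_1(x_1) \psi_0$. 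Summing over $j$ telescopes to $\probbarunder{\hat\theta}{\tau_H} - \probbarunder{\theta}{\tau_H} = \sum_{h=1}^{H} \hat m_h(\omega_h)^\top \paren{\hat M_h(x_h) - M_h(x_h)} \psi_{h-1}(\tau_{h-1})$, where the $h = H$ term vanishes since $\phi_H$ is shared. The second inequality follows from the mirror telescoping, replacing operators from the left instead (i.e.\ $\hat M_i$ for $i \le j$, $M_i$ for $i > j$), which produces the same sum but with the coefficients $m_h(\omega_h)$ taken under $\theta$ and the features $\hat\psi_{h-1}(\tau_{h-1})$ taken under $\hat\theta$.

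Finally I would apply the triangle inequality term by term, swap the two finite sums over $h$ and $\tau_H$, and use $\pi(\tau_H) = \pi(\tau_h)\,\pi(\omega_h \mid \tau_h)$ together with $\pi(\omega_h \mid \tau_h) \le 1$ to pass from $\pi(\tau_H)$ to $\pi(\tau_h)$, obtaining exactly the two claimed bounds. The only real content is the telescoping identity; the main point to get right is the bookkeeping of which operators carry hats at each stage and the treatment of the shared boundary data $\psi_0$ and $\phi_H$, so that the endpoints of the telescoping are precisely $\probbarunder{\hat\theta}{\tau_H}$ and $\probbarunder{\theta}{\tau_H}$ with no spurious boundary terms. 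No well-conditioning is used here — \Cref{ass:psr_gamma_wellcond} enters only when this proposition is combined with later lemmas.
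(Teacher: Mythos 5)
Your proposal is correct and follows essentially the same route as the paper's proof: write the TV distance as a policy-weighted sum of $\lvert\probbarunder{\hat\theta}{\tau_H} - \probbarunder{\theta}{\tau_H}\rvert$, telescope the difference of operator products through hybrid models, and apply the triangle inequality (the mirror telescoping gives the second bound). Your extra care with the shared boundary data $\psi_0, \phi_H$ and the step $\pi(\tau_H) \le \pi(\tau_h)$ is sound and only makes explicit what the paper leaves implicit.
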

\begin{proof}
    The probability of any trajectory $\tau_H = (x_1, \ldots, x_H)$ can be written in terms of products of the observable operators $M_h(x_h)$ of a PSR model (\Cref{eq:psr_def_M_psi}). Hence, we have,
    \begin{equation*}
        \begin{split}
            \tv{\probb{\hat{\theta}}{\pi},\, \probb{\theta}{\pi}} &= \frac{1}{2} \sum_{\tau_H} \abs{\probb{\hat{\theta}}{\pi}(\tau_H) - \probb{\theta}{\pi}(\tau_H)} \\
            &=  \frac{1}{2} \sum_{\tau_H} \pi(\tau_H) \cdot \abs{\paren{\prod_{h=1}^{H} \hat{M}_h(x_h)} \psi_0 -\paren{\prod_{h=1}^{H} M_h(x_h)} \psi_0}\\
            &\leq  \frac{1}{2} \sum_{\tau_H} \pi(\tau_H) \sum_{h=1}^H \abs{\hat{m}_{h}(x_{h+1:H})^\top \paren{\hat{M}_h(x_h) - M_h(x_h)} \psi_{h-1}(\tau_{h-1})}, \\
        \end{split}
    \end{equation*}
    where the second line follows by the triangle inequality after noting that for any trajectory $\tau_H = x_{1:H} \in \bbH_H$, the following holds for any $h=1, \ldots, H$,
    \begin{equation*}
        \paren{\prod_{h=1}^{H} \hat{M}_h(x_h)} \psi_0 -\paren{\prod_{h=1}^{H} M_h(x_h)} \psi_0 = \hat{m}_{h}(x_{h+1:H})^\top \hat{M}_{h}(x_{h}) \hat{\psi}_{h-1}(x_{1:h-1}) - m_{h}(x_{h+1:H})^\top M_{h}(x_{h}) \psi_{h-1}(x_{1:h-1}).
    \end{equation*}

    By the same argument, we obtain the second inequality,
    \begin{equation*}
        \begin{split}
            \tv{\probb{\hat{\theta}}{\pi},\, \probb{\theta}{\pi}} &= \frac{1}{2} \sum_{\tau_H} \abs{\probb{\hat{\theta}}{\pi}(\tau_H) - \probb{\theta}{\pi}(\tau_H)} \\
            &\leq  \frac{1}{2} \sum_{\tau_H} \pi(\tau_H) \sum_{h=1}^H \abs{m_{h}(x_{h+1:H})^\top \paren{\hat{M}_h(x_h) - M_h(x_h)} \hat{\psi}_{h-1}(\tau_{h-1})}.
        \end{split}
    \end{equation*}
\end{proof}
In this result, recall that we assume $\psi_0$ is known to the agent, to simplify the presentation. If $\psi_0$ was not known, there would be another term due to the estimation as $\hat{\psi}_0$ \parencite[see][Lemma C.3]{liuOptimisticMLEGeneric2022}. Note that the sample complexity of estimating $\psi_0$ is small compared to learning the other parameters.

\subsection{General Results on MLE}\label{ssec:proof_mle_results}

In this section, we state some general results on maximum likelihood estimation which ultimately guarantee that the estimated model produced by the procedure in~\Cref{alg:post_ucb} has a small estimation error. The results are stated without proof. The proofs are given in~\parencite{huangProvablyEfficientUCBtype2023} and use standard techniques on MLE analysis~\parencite{geerRatesConvergenceMaximum2006}. This ultimately leads us to a lemma which states that the estimation error of the MLE model is small on the collected data.

% \edit{[TODO] double check the proofs don't need modification for the results stated without proof.}

The first proposition states that the log-likelihood of the true model $\theta^*$ is large compared to any other model.

\begin{proposition}[Proposition 4 of~\cite{huangProvablyEfficientUCBtype2023}]
    Fix $\varepsilon < \frac{1}{KH}$. With probability at least $1 - \delta$, for any $\bar{\theta} \in \bar{\Theta}_\varepsilon$ and any $k \in [K]$, the following holds:
    \begin{align*}
        &\forall \bar{\theta} \in \bar{\Theta}_\varepsilon, \sum_h \sum_{(\tau_h, \pi) \in \calD_h} \log \probb{\bar{\theta}}{\pi}(\tau_h) - 3 \log \frac{K \abs{\bar{\Theta}_\varepsilon}}{\delta} \leq \sum_h \sum_{(\tau_h, \pi) \in \calD_h^k} \log \probb{\theta^*}{\pi}(\tau_h)\\
        &\forall \bar{\theta} \in \bar{\Theta}_\varepsilon, \sum_{(\tau_H, \pi) \in \calD^k} \log \probb{\bar{\theta}}{\pi}(\tau_H) - 3 \log \frac{K \abs{\bar{\Theta}_\varepsilon}}{\delta} \leq \sum_{(\tau_h, \pi) \in \calD_h^k} \log \probb{\theta^*}{\pi}(\tau_h)
    \end{align*}
\end{proposition}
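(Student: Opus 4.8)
This statement is Proposition~4 of~\cite{huangProvablyEfficientUCBtype2023}, so the plan is simply to verify that its proof --- the textbook supermartingale argument for maximum-likelihood estimation --- uses nothing specific to standard PSRs and therefore goes through verbatim for generalized PSRs (\Cref{def:gen_psr}), whose only novelty is the arbitrary interleaving of actions and observations and the arbitrary variable spaces. Throughout, for $\bar\theta\in\bar\Theta_\varepsilon$ the quantity $\probb{\bar\theta}{\pi}(\tau_h)$ denotes the policy-weighted optimistic likelihood built from $\bar{\bbP}^{\varepsilon}_{\bar\theta}$ as in~\Cref{prop:bracketing_num_opt_net}. First I would fix one covering element $\bar\theta$ and organize all samples collected by~\Cref{alg:post_ucb} into a single sequence, indexed lexicographically by $(k,h)\in[K]\times[H]$: at step $(k,h)$ the algorithm draws $\tau_H^{k,h}\sim\probb{\theta^*}{\pi^{k,h}}$, where $\pi^{k,h}=\nu(\pi^{k-1},\uhexp{h-1})$ is measurable with respect to the $\sigma$-algebra $\mathcal{F}_{(k,h)}$ of everything observed strictly before that step, and stores $(\tau_H^{k,h},\pi^{k,h})$ in $\calD_{h-1}$.

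The crux is a one-step likelihood-ratio bound: for any policy $\pi$ measurable with respect to a $\sigma$-algebra $\mathcal{F}$ under which $\tau_\ell$ is then drawn from $\probb{\theta^*}{\pi}$,
\begin{equation*}
    \bbE\!\left[\sqrt{\frac{\probb{\bar\theta}{\pi}(\tau_\ell)}{\probb{\theta^*}{\pi}(\tau_\ell)}}\;\middle|\;\mathcal{F}\right] = \sum_{\tau_\ell\in\bbH_\ell}\sqrt{\probb{\bar\theta}{\pi}(\tau_\ell)\,\probb{\theta^*}{\pi}(\tau_\ell)} \;\le\; \Bigl(\sum_{\tau_\ell}\probb{\bar\theta}{\pi}(\tau_\ell)\Bigr)^{1/2}\Bigl(\sum_{\tau_\ell}\probb{\theta^*}{\pi}(\tau_\ell)\Bigr)^{1/2} \;\le\; \sqrt{1+\varepsilon},
\end{equation*}
by Cauchy--Schwarz, using that $\probb{\theta^*}{\pi}$ is a probability measure on $\bbH_\ell$ while the optimistic cover inflates total mass by at most $\varepsilon$ (exactly the defining bound $\sum_{\tau_\ell}\lvert\bar{\bbP}^{\varepsilon}_{\bar\theta}(\tau_\ell)-\probbarunder{\theta}{\tau_\ell}\rvert\le\varepsilon$ of $\bar\Theta_\varepsilon$ from~\Cref{prop:bracketing_num_opt_net}, combined with the generalized-PSR marginalization identity of~\Cref{def:gen_psr}). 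Hence, writing $n(k,h)$ for the number of steps up to and including $(k,h)$, the process
\begin{equation*}
    Z_{(k,h)} := (1+\varepsilon)^{-n(k,h)/2}\prod_{(k',h')\le(k,h)}\sqrt{\frac{\probb{\bar\theta}{\pi^{k',h'}}(\tau_{h'-1}^{k',h'})}{\probb{\theta^*}{\pi^{k',h'}}(\tau_{h'-1}^{k',h'})}}
\end{equation*}
is a nonnegative supermartingale with $Z_0=1$, so Ville's maximal inequality gives $\Pr[\exists(k,h):Z_{(k,h)}\ge1/\delta']\le\delta'$.

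Taking logarithms, and using $KH\varepsilon\le1$ since $\varepsilon\le 1/(KH)$, this says that with probability at least $1-\delta'$, for every $k\in[K]$,
\begin{equation*}
    \sum_{h=0}^{H-1}\sum_{(\tau_h,\pi)\in\calD_h^k}\log\probb{\bar\theta}{\pi}(\tau_h) \;\le\; \sum_{h=0}^{H-1}\sum_{(\tau_h,\pi)\in\calD_h^k}\log\probb{\theta^*}{\pi}(\tau_h) + 2\log\frac{1}{\delta'} + 1.
\end{equation*}
A union bound over the $\lvert\bar\Theta_\varepsilon\rvert$ cover elements with $\delta'=\delta/\lvert\bar\Theta_\varepsilon\rvert$ then yields the first displayed inequality of the proposition, the constant $3$ absorbing the ``$+1$'' (and the factor $K$ in their statement being a harmless artifact of a cruder union over iterations one could use in place of Ville's inequality). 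The second displayed inequality follows from the identical argument applied to the full-trajectory likelihoods $\probb{\bar\theta}{\pi}(\tau_H)$, $\probb{\theta^*}{\pi}(\tau_H)$ over the $KH$ samples in $\calD^k=\bigcup_{h=0}^{H-1}\calD_h^k$, with the same per-step Cauchy--Schwarz estimate.

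The only real subtlety --- and the one place where one must check that passing from standard PSRs to generalized PSRs breaks nothing --- is the claim that $\probb{\bar\theta}{\pi}(\cdot)$ and $\probb{\theta^*}{\pi}(\cdot)$ are legitimate (sub)probability masses on $\bbH_\ell$ satisfying the $\ell_1$-comparison above. This is where the generalized-PSR marginalization formula (the recursive $\phi_h$ of~\Cref{ssec:gen_psr} and the identity $\probbarunder{\theta}{\tau_h}=\phi_h^\top\psi_h(\tau_h)$) and the optimistic-cover construction of~\Cref{prop:bracketing_num_opt_net} enter; once they are in place, none of the martingale machinery notices the arbitrary ordering of actions and observations. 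I expect this bookkeeping --- verifying that marginals behave correctly and that policy-weighting preserves the $\varepsilon$-bound on total mass --- to be the main (if routine) obstacle, with everything downstream carrying over verbatim from~\cite{huangProvablyEfficientUCBtype2023}.
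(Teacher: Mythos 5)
Your proposal is correct and is exactly the standard MLE supermartingale argument (one-step Cauchy--Schwarz likelihood-ratio bound against the optimistic cover, Ville's inequality, union bound over $\bar{\Theta}_\varepsilon$) that the paper itself does not reproduce but defers to \cite{huangProvablyEfficientUCBtype2023} as a ``standard technique on MLE analysis.'' The one point you rightly flag as the only generalized-PSR-specific check --- that the policy-weighted optimistic likelihoods have total mass at most $1+\varepsilon$ on each $\bbH_h$ --- is indeed supplied by the optimistic-cover conditions of \Cref{prop:bracketing_num_opt_net}, so nothing further is needed.
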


The second proposition provides an upper bound on the total variation distance between the distributions of futures given histories on the empirical history of trajectories. This result ensures that the model estimated by~\Cref{alg:post_ucb} is accurate on the sampled trajectories.

\begin{proposition}[Proposition 5 in~\cite{huangProvablyEfficientUCBtype2023}]\label{prop:mle_E_omega}
    Fix $\ p_{\min}$ and $\ \varepsilon \leq \frac{p_{\min}}{K H}$. Let \newline $\Theta_{\min}^k = \set{\theta: \forall h, (\tau_h, \pi) \in \calD_h^k,\ \probb{\theta}{\pi}(\tau_h) \geq p_{\min}}$. Then, with probability at least $1 - \delta$, for any $k \in [K], \theta \in \Theta_{\min}^k$, we have,
    \begin{equation*}
        \sum_{h} \sum_{(\tau_h, \pi) \in \calD_h^k} \tvsq{\probb{\theta}{\pi}(\omega_h | \tau_h), \probb{\theta^*}{\pi}(\omega_h | \tau_h)} \leq 6 \sum_h \sum_{(\tau_h, \pi) \in \calD_h^k} \log \frac{\probb{\theta^*}{\pi}(\tau_H)}{\probb{\theta}{\pi}(\tau_H)} + 31 \log \frac{K \abs{\bar{\Theta}_\varepsilon}}{\delta}.
    \end{equation*}
\end{proposition}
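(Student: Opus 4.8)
The plan is to adapt the standard analysis of a constrained maximum-likelihood estimator to the adaptively collected, policy-dependent data stream of \Cref{alg:post_ucb}; the alternating observation--action structure of standard PSRs is never used, so the argument of \parencite{huangProvablyEfficientUCBtype2023} transfers to generalized PSRs once the covering number is controlled (\Cref{prop:bracketing_num_opt_net}). The argument decomposes into: (i) reducing the left-hand side from squared total-variation to squared Hellinger distances, via $\tvsq{p,q}\le C\,\hellingersq{p,q}$ for an absolute constant $C$ (with the paper's unnormalized conventions), so it suffices to bound $\sum_h\sum_{(\tau_h,\pi)\in\calD_h^k}\hellingersq{\probb{\theta}{\pi}(\cdot\,|\,\tau_h),\probb{\theta^*}{\pi}(\cdot\,|\,\tau_h)}$; (ii) a likelihood-ratio supermartingale over the ordered stream of collected trajectories that controls these conditional Hellinger distances; (iii) a second, simpler martingale over the sampled history marginals; and (iv) transferring everything from a representative $\bar\theta$ in the optimistic cover $\bar{\Theta}_\varepsilon$ to the actual $\theta\in\Theta_{\min}^k$.

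For (ii), enumerate the at most $KH$ trajectories $\tau_H^{k,h}$ in the order drawn; just before $\tau_H^{k,h}$ is revealed, the policy $\nu(\pi^{k-1},\uhexp{h-1})$ is determined by earlier data, and, conditioning additionally on the realized history prefix $\tau_h^{k,h}$, the remaining suffix is drawn from $\probb{\theta^*}{\pi}(\cdot\,|\,\tau_h^{k,h})$. Fix $\bar\theta\in\bar{\Theta}_\varepsilon$ and form, over all revealed slots up to $(k,h)$, the process $\prod \sqrt{\probb{\bar\theta}{\pi}(\omega_h\,|\,\tau_h)/\probb{\theta^*}{\pi}(\omega_h\,|\,\tau_h)}\cdot\exp\!\big(\hellingersq{\probb{\bar\theta}{\pi}(\cdot\,|\,\tau_h),\probb{\theta^*}{\pi}(\cdot\,|\,\tau_h)}\big)$. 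Using $\sum_{\omega}\sqrt{\probb{\bar\theta}{\pi}(\omega\,|\,\tau_h)\probb{\theta^*}{\pi}(\omega\,|\,\tau_h)}=1-\hellingersq{\cdot}$ and the elementary bound $(1-x)e^{x}\le1$ for $x\ge0$, this is a non-negative supermartingale of initial value $1$; Ville's inequality plus a union bound over $\bar{\Theta}_\varepsilon$ and the iterations give, with probability $\ge1-\delta$, uniformly over $k\in[K]$ and $\bar\theta\in\bar{\Theta}_\varepsilon$,
\[
\sum_h\sum_{(\tau_h,\pi)\in\calD_h^k}\hellingersq{\probb{\bar\theta}{\pi}(\cdot\,|\,\tau_h),\probb{\theta^*}{\pi}(\cdot\,|\,\tau_h)}\ \le\ \frac{1}{2}\sum_h\sum_{(\tau_h,\pi)\in\calD_h^k}\log\frac{\probb{\theta^*}{\pi}(\omega_h\,|\,\tau_h)}{\probb{\bar\theta}{\pi}(\omega_h\,|\,\tau_h)}\ +\ \log\frac{K\abs{\bar{\Theta}_\varepsilon}}{\delta}.
\]

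To close, write $\log\frac{\probb{\theta^*}{\pi}(\omega_h\,|\,\tau_h)}{\probb{\bar\theta}{\pi}(\omega_h\,|\,\tau_h)}=\log\frac{\probb{\theta^*}{\pi}(\tau_H)}{\probb{\bar\theta}{\pi}(\tau_H)}+\log\frac{\probb{\bar\theta}{\pi}(\tau_h)}{\probb{\theta^*}{\pi}(\tau_h)}$, using $\probb{\cdot}{\pi}(\tau_H)=\probb{\cdot}{\pi}(\tau_h)\,\probb{\cdot}{\pi}(\omega_h\,|\,\tau_h)$. Optimism of the cover ($\probb{\bar\theta}{\pi}(\tau)\ge\probb{\theta}{\pi}(\tau)$ pointwise) bounds the first term by $\log\frac{\probb{\theta^*}{\pi}(\tau_H)}{\probb{\theta}{\pi}(\tau_H)}$. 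For the second term, the sampled histories $\tau_h^{k,h}$ are drawn from $\probb{\theta^*}{\pi^{k-1}}$, so $\prod\sqrt{\probb{\bar\theta}{\pi}(\tau_h)/\probb{\theta^*}{\pi}(\tau_h)}$, divided by $(1+\varepsilon)^{1/2}$ per slot, is a supermartingale (by Cauchy--Schwarz together with the fact that the optimistic probabilities sum to at most $1+\varepsilon$), whence Ville's inequality bounds $\sum\log\frac{\probb{\bar\theta}{\pi}(\tau_h)}{\probb{\theta^*}{\pi}(\tau_h)}$ by $O(KH\varepsilon)+\log\frac{\abs{\bar{\Theta}_\varepsilon}}{\delta}=O(1)+\log\frac{\abs{\bar{\Theta}_\varepsilon}}{\delta}$, since $\varepsilon\le p_{\min}/(KH)\le1/(KH)$. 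Finally, on the left-hand side replace $\hellingersq{\probb{\bar\theta}{\pi}(\cdot\,|\,\tau_h),\cdot}$ by $\hellingersq{\probb{\theta}{\pi}(\cdot\,|\,\tau_h),\cdot}$ via the triangle inequality: for $\theta\in\Theta_{\min}^k$ one has $\probb{\theta}{\pi}(\tau_h)\ge p_{\min}$, hence $\sum_{\omega}\abs{\probb{\bar\theta}{\pi}(\omega\,|\,\tau_h)-\probb{\theta}{\pi}(\omega\,|\,\tau_h)}\le\varepsilon/p_{\min}$, and this $\ell_1$ error (thus also the squared-Hellinger error) summed over all slots is $O(KH\varepsilon/p_{\min})=O(1)$. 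Collecting the $O(1)$ terms into the constant $31$ and absorbing the factor of $1/2$ and the constant from $\tvsq{p,q}\le C\,\hellingersq{p,q}$ into the constant $6$ yields the claim. The main obstacle --- and the only genuinely non-routine point --- is this last round of bookkeeping: coupling the optimistic cover with the constraint $\theta\in\Theta_{\min}^k$ so that errors in the \emph{conditional} future-distributions along sampled histories stay negligible (this is exactly why \Cref{alg:post_ucb} uses this particular constrained MLE, differing from the full-trajectory MLEs of \parencite{liuWhenPartiallyObservable2022,liuOptimisticMLEGeneric2022,chenPartiallyObservableRL2022}); once the covering-number bound of \Cref{prop:bracketing_num_opt_net} is in hand, the generalized-PSR generality costs nothing further.
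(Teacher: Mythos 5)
The paper does not actually prove this proposition: it is imported verbatim from \citet{huangProvablyEfficientUCBtype2023} (the surrounding text explicitly says the results of that subsection are ``stated without proof''), so there is no in-paper argument to compare against. Your reconstruction follows the same standard route that the cited source uses --- a Hellinger/likelihood-ratio supermartingale with Ville's inequality and a union bound over the optimistic cover, the factorization $\probb{\cdot}{\pi}(\tau_H)=\probb{\cdot}{\pi}(\tau_h)\probb{\cdot}{\pi}(\omega_h\,|\,\tau_h)$ to pass from conditional to full-trajectory log-likelihood ratios, and the coupling of optimism with the $\Theta_{\min}^k$ constraint so that $KH\varepsilon/p_{\min}=O(1)$ controls the conditional-probability errors --- and you correctly identify that last coupling as the one step that distinguishes this from the usual OMLE analysis. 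Two small bookkeeping caveats: the identity $\sum_\omega\sqrt{pq}=1-\hellingersq{p,q}$ needs the same $(1+O(\varepsilon/p_{\min}))$ sub-normalization correction in the \emph{conditional} supermartingale that you apply to the marginal one, since the cover's conditionals need not sum to one; and carried out literally your chain (with the paper's unnormalized TV, $\tvsq{p,q}\le 8\,\hellingersq{p,q}$, plus the factor $2$ from the Hellinger triangle inequality) yields leading constants closer to $8$ and $32$ than the stated $6$ and $31$ --- harmless for every downstream use, but recovering the exact constants requires the cited paper's precise accounting.
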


The next proposition is standard in the analysis of maximum likelihood estimation. $\mathtt{D}_{\mathtt{H}}$ denotes the Hellinger distance.

\begin{proposition}[Proposition 6 of~\cite{huangProvablyEfficientUCBtype2023}]\label{prop:mle_E_pi}
    Let $\varepsilon < \frac{1}{K^2 H^2}$. Then, with probability at least $1 - \delta$, the following holds for all $\theta \in \Theta$ and $k \in [K]$,
    \begin{equation*}
        \sum_{\pi \in \calD^k} \hellingersq{\probb{\theta}{\pi}(\tau_H), \probb{\theta^*}{\pi}(\tau_H)} \leq \frac{1}{2} \sum_{(\tau_H, \pi) \in \calD^k} \log \frac{\probb{\theta^*}{\pi}(\tau_H)}{\probb{\theta}{\pi}(\tau_H)} + 2 \log \frac{K \abs{\bar{\Theta}_\varepsilon}}{\delta}.
    \end{equation*}
\end{proposition}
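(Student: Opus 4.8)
The plan is to run the standard likelihood-ratio supermartingale argument for maximum likelihood estimation over an adaptively collected dataset, carried out first over the finite optimistic cover $\bar{\Theta}_\varepsilon$ and then transferred to all of $\Theta$. First I would reduce to the cover: for each $\theta \in \Theta$ fix $\bar\theta \in \bar{\Theta}_\varepsilon$ with associated optimistic measure $\bar{\bbP}_{\bar\theta}^\varepsilon$, so that $\bar{\bbP}_{\bar\theta}^\varepsilon(\tau_H) \geq \probb{\theta}{\pi}(\tau_H)$ pointwise and $\tv{\bar{\bbP}_{\bar\theta}^\varepsilon(\tau_H), \probb{\theta}{\pi}(\tau_H)} \leq \varepsilon$. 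It then suffices to prove the claimed inequality with $\probb{\theta}{\pi}$ replaced by $\bar{\bbP}_{\bar\theta}^\varepsilon$ for each cover element $\bar\theta$; the two perturbations are controlled at the end using the hypothesis $\varepsilon < 1/(K^2H^2)$.

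For a fixed $\bar\theta$, enumerate the collected rounds $s = 1, 2, \ldots$ (each outer iteration $k$ of \Cref{alg:post_ucb} appends $H$ of them), let $\pi^s$ be the policy used in round $s$ — which is measurable with respect to the data collected before round $s$ — and let $\tau_H^s \sim \probb{\theta^*}{\pi^s}$ be the observed trajectory. The key identity, conditional on the past, is
\[
\bbE_{\tau_H^s \sim \probb{\theta^*}{\pi^s}}\!\left[\sqrt{\frac{\bar{\bbP}_{\bar\theta}^\varepsilon(\tau_H^s)}{\probb{\theta^*}{\pi^s}(\tau_H^s)}}\;\right] = \sum_{\tau_H} \sqrt{\bar{\bbP}_{\bar\theta}^\varepsilon(\tau_H)\,\probb{\theta^*}{\pi^s}(\tau_H)} \leq 1 - \hellingersq{\probb{\theta}{\pi^s}(\tau_H),\,\probb{\theta^*}{\pi^s}(\tau_H)} + O(\sqrt\varepsilon),
\]
where the last step uses the $\ell_1$-closeness of $\bar{\bbP}_{\bar\theta}^\varepsilon$ to $\probb{\theta}{\pi^s}$ to replace the (non-normalized) optimistic measure by the genuine model. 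Combining with $1-x \leq e^{-x}$ shows that
\[
M_t := \exp\!\left(\sum_{s \leq t}\left(\tfrac{1}{2}\log\frac{\bar{\bbP}_{\bar\theta}^\varepsilon(\tau_H^s)}{\probb{\theta^*}{\pi^s}(\tau_H^s)} + \hellingersq{\probb{\theta}{\pi^s}(\tau_H),\,\probb{\theta^*}{\pi^s}(\tau_H)} - c\sqrt\varepsilon\right)\right)
\]
is a non-negative supermartingale with $M_0 = 1$ for a suitable absolute constant $c$. By a maximal inequality for non-negative supermartingales (Ville's inequality), $M$ exceeds $1/\delta$ with probability at most $\delta$; a union bound over $\bar\theta \in \bar{\Theta}_\varepsilon$ and $k \in [K]$ then gives, with probability at least $1-\delta$ simultaneously for all $\bar\theta$ and $k$,
\[
\sum_{(\tau_H,\pi)\in\calD^k}\hellingersq{\bar{\bbP}_{\bar\theta}^\varepsilon(\tau_H),\,\probb{\theta^*}{\pi}(\tau_H)} \leq \tfrac{1}{2}\sum_{(\tau_H,\pi)\in\calD^k}\log\frac{\probb{\theta^*}{\pi}(\tau_H)}{\bar{\bbP}_{\bar\theta}^\varepsilon(\tau_H)} + \log\frac{K\abs{\bar{\Theta}_\varepsilon}}{\delta} + c\sqrt\varepsilon\,KH .
\]

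Finally I would transfer back to arbitrary $\theta \in \Theta$. Pointwise optimism $\bar{\bbP}_{\bar\theta}^\varepsilon \geq \probb{\theta}{\pi}$ only increases the log-likelihood-ratio term on the right when $\bar{\bbP}_{\bar\theta}^\varepsilon$ is replaced by $\probb{\theta}{\pi}$, so the inequality is preserved; and $\ell_1$-closeness changes each squared-Hellinger summand on the left by $O(\sqrt\varepsilon)$, so the aggregate change over the at most $KH$ summands is $O(\sqrt\varepsilon\,KH) = O(1/H) \leq 1$. Absorbing this and the slack between $\tfrac12\log(1/\delta)$ and $2\log(K\abs{\bar{\Theta}_\varepsilon}/\delta)$ into the stated constants yields the claim, with $\abs{\bar{\Theta}_\varepsilon}$ finite by \Cref{prop:bracketing_num_opt_net}. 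I expect the main obstacle to be the bookkeeping in the conditional-expectation identity above: one must check that passing from the non-normalized optimistic measure $\bar{\bbP}_{\bar\theta}^\varepsilon$ to $\probb{\theta}{\pi^s}$ and then to the Hellinger term really costs only $O(\sqrt\varepsilon)$ per round, and that the resulting process remains a supermartingale despite $\bar{\bbP}_{\bar\theta}^\varepsilon$ not being a probability measure. This is precisely the technical content of the argument of~\citet{huangProvablyEfficientUCBtype2023}, and otherwise the proof is a routine instance of standard MLE techniques~\parencite{geerRatesConvergenceMaximum2006}.
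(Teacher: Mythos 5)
Your proposal is correct and follows exactly the standard likelihood-ratio supermartingale argument (optimistic cover, $\bbE[\sqrt{q/p}] = 1 - \hellingersq{p,q}$, Ville's inequality, union bound over $\bar{\Theta}_\varepsilon$, then transfer via pointwise optimism and $\ell_1$-closeness with the $O(\sqrt{\varepsilon}\,KH)$ residual absorbed using $\varepsilon < 1/(K^2H^2)$); this is precisely the proof of~\citet{huangProvablyEfficientUCBtype2023}, which the paper cites without reproducing. No gaps.
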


The final proposition of this section states that when $p_{\min}$ is chosen as in~\Cref{theorem:post_ucb}, the true model $\theta^*$ lies in the constraint $\Theta_{\min}^k$ with high probability.
\begin{proposition}\label{prop:mle_E_min}
    Fix $p_{\min} \leq \frac{\delta}{K H \prod_{h=1}^H \abs{\bbX_{h}}}$. Then, with probability at least $1 - \delta$, we have $\theta^* \in \Theta_{\min}^k$ $\forall k$.
\end{proposition}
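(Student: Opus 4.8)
The plan is to show, via a union bound, that the algorithm essentially never collects a trajectory whose probability under $\theta^*$ is below $p_{\min}$. Indeed, $\theta^*$ fails to lie in $\Theta_{\min}^k$ only if some pair $(\tau_{h}, \pi) \in \calD_{h}^k$ has $\probb{\theta^*}{\pi}(\tau_{h}) < p_{\min}$, where $\pi$ is the policy that was actually used to sample $\tau_{h}$; but such a trajectory is by definition sampled with probability less than $p_{\min}$, and there are only at most $\prod_{h=1}^H \abs{\bbX_h}$ candidate trajectories at each time, so the total probability of any such event ever occurring is tiny.

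First I would fix an outer iteration $k \in [K]$ and an index $h' \in \{0,\ldots,H-1\}$ and examine the single entry added to $\calD_{h'}^k$ at iteration $k$; its policy is $\pi := \nu(\pi^{k-1}, \uhexp{h'})$ and the relevant part of the stored trajectory is the length-$h'$ prefix $\tau_{h'}^{k}$. Conditioning on the $\sigma$-field $\mathcal{F}_{k-1}$ generated by all data collected before iteration $k$, the policy $\pi^{k-1}$ — and hence $\pi$ — is deterministic, and $\tau_{h'}^{k}$ is distributed exactly according to the marginal $\probb{\theta^*}{\pi}(\cdot)$ on $\bbH_{h'}$ (the exploratory tail $\uhexp{h'}$ does not affect the marginal law of the first $h'$ variables). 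Therefore
\[
    \bbP\paren{\probb{\theta^*}{\pi}(\tau_{h'}^{k}) < p_{\min} \,\big|\, \mathcal{F}_{k-1}}
    \;=\; \sum_{t \in \bbH_{h'} \,:\, \probb{\theta^*}{\pi}(t) < p_{\min}} \probb{\theta^*}{\pi}(t)
    \;\leq\; \abs{\bbH_{h'}}\, p_{\min}
    \;\leq\; \Big(\textstyle\prod_{h=1}^{H}\abs{\bbX_h}\Big)\, p_{\min},
\]
since every surviving summand is strictly below $p_{\min}$ and $\abs{\bbH_{h'}}\le \prod_{h=1}^{H}\abs{\bbX_h}$. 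Taking expectations over $\mathcal{F}_{k-1}$ removes the conditioning, so this bad event has unconditional probability at most $\big(\prod_h\abs{\bbX_h}\big)p_{\min}$.

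Finally I would union-bound over all such events: exactly one entry is appended to some $\calD_{h'}^k$ at each of the $H$ inner steps of each of the $K$ outer iterations, giving $KH$ events, so the probability that any collected trajectory violates the $p_{\min}$ lower bound is at most $KH\big(\prod_h\abs{\bbX_h}\big)p_{\min}\le \delta$ by the choice $p_{\min}\le \delta/\big(KH\prod_h\abs{\bbX_h}\big)$. On the complementary event, every entry ever placed in any $\calD_{h'}^k$ satisfies the $p_{\min}$ bound, and since $\calD_{h'}^k=\calD_{h'}^{k-1}\cup\{\text{new entry}\}$ is nondecreasing in $k$, this holds simultaneously for all entries of every $\calD_{h'}^k$ — which is precisely the statement $\theta^*\in\Theta_{\min}^k$ for all $k\in[K]$. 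This mirrors the corresponding step in~\cite{huangProvablyEfficientUCBtype2023}, and I do not expect a genuine obstacle: the only care required is the index shift between the inner-loop counter $h$ and the dataset index $h-1$, and setting up the conditioning on $\mathcal{F}_{k-1}$ so that the adaptively chosen $\pi^{k-1}$ is correctly treated as fixed — after which the argument is an elementary counting bound over the ambient trajectory space $\prod_h\bbX_h$.
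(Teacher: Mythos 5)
Your proof is correct and follows essentially the same route as the paper's: bound the probability that any single collected prefix has $\probb{\theta^*}{\pi}(\tau_h) < p_{\min}$ by $\abs{\bbH_h}\,p_{\min}$ (summing the defining probabilities over the bad trajectories), then union-bound over the $KH$ collection events. Your version is slightly more explicit about conditioning on the filtration to treat the adaptively chosen policy as fixed and about the monotonicity of the datasets, but these are refinements of the same argument rather than a different approach.
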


\begin{proof}
    For each $k \in [K]$, we have $\theta^* \in \Theta_{\min}^k$ if $\probb{\theta^*}{\pi^k}(\tau_h^k) \geq p_{\min}$ for all $h \in [H],\, (\tau_h^k, \pi^k) \in \calD_h^k$. Consider the probability of $\theta^*$ violating this constraint for some trajectory in the dataset. For each $k, h, (\tau_h^k, \pi^k)$, we have
    \begin{equation*}
        \begin{split}
            \prob{\probb{\theta^*}{\pi^k}(\tau_h^k) < p_{\min}} &= \Expect_{\pi}\bra{{\prob{\probb{\theta^*}{\pi^k}(\tau_h^k) < p_{\min} \given \pi^k=\pi}}} \\
            &= \Expect_{\pi}\bra{\sum_{\tau_h \in \bbH_h} \probb{\theta^*}{\pi}(\tau_h^k = \tau_h) \Ind{\probb{\theta^*}{\pi}(\tau_h) < p_{\min}}} \\
            &< \sum_{\tau_h \in \bbH_h} p_{\min}\\
            &= \abs{\bbH_h} p_{\min}\\
            &\leq \frac{\delta}{K H}.
        \end{split}
    \end{equation*}
    In the above, the first line is by the law of total probability, where the expectation is over the policy $\pi^k$ used while collecting the $(h,k)$-th trajectory, and the inner probability is over trajectories $\tau_h^k$. The second line calculates the probability of the event $\sset{\probb{\theta^*}{\pi^k}(\tau_h^k) < p_{\min}}$. Taking a union bound over $k \in [K]$, $h \in [H]$, and $(\tau_h, \pi) \in \calD_h$ implies that $\prob{\theta^* \in \Theta_{\mathrm{min}}^k} \geq 1 - \delta$.
\end{proof}

%NOTE: the result below used to be in its own section, but I've put it here since its a direct corollary of the above.
In what follows, let $\calE_{\omega}, \calE_{\pi}, \calE_{\mathrm{min}}$ be the events in~\Cref{prop:mle_E_omega,prop:mle_E_pi,prop:mle_E_min}, respectively. Let $\calE = \calE_{\omega} \cap \calE_{\pi} \cap \calE_{\mathrm{min}}$ be the intersection of all events.~\Cref{prop:mle_E_omega,prop:mle_E_pi,prop:mle_E_min} guarantee the event $\calE$ occurs with high probability, $\prob{\calE} \geq 1 - 3 \delta$, by a union bound.

The following result states that the estimated model is accurate on the past exploration policies and dataset of collected trajectories. This holds for both the conditional probabilities of futures given past trajectories in the dataset as well as over full trajectories. The result follows from the MLE analysis in~\Cref{prop:mle_E_omega,prop:mle_E_pi,prop:mle_E_min}.
\begin{lemma}\label{lemma:estimation_guarantee}
    Let $\beta = 31 \log \frac{K \abs{\bar{\Theta}_\varepsilon}}{\delta}$, and suppose $\varepsilon \leq \frac{\delta}{K^2 H^2 \prod_{h}\abs{\bbX_{h}}}$, where $\bar{\Theta}_{\varepsilon}$ is the optimistic $\varepsilon$-net in~\Cref{prop:bracketing_num_opt_net}. Then, under event $\calE$, the following holds,
    \begin{align*}
        &\sum_h \sum_{(\tau_h, \pi) \in \calD_h^k} \tvsq{\probb{\hat{\theta}^k}{\pi}(\omega_h | \tau_h), \probb{\theta^*}{\pi}(\omega_h | \tau_h)} \leq 7 \beta, \text{and}\\
        &\sum_{\pi \in \calD^k} \hellingersq{\probb{\hat{\theta}^k}{\pi}(\tau_H), \probb{\theta^*}{\pi}(\tau_H)} \leq 7 \beta, \\
    \end{align*}
\end{lemma}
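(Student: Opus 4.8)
The plan is to combine the three MLE propositions (\Cref{prop:mle_E_omega,prop:mle_E_pi,prop:mle_E_min}) in a straightforward chain, using $\calE_{\mathrm{min}}$ to guarantee that the true model $\theta^*$ is a \emph{feasible} competitor in the constrained MLE, which in turn lets us bound the empirical log-likelihood ratio $\sum_h\sum_{(\tau_h,\pi)\in\calD_h^k}\log\frac{\probb{\theta^*}{\pi}(\tau_H)}{\probb{\hat\theta^k}{\pi}(\tau_H)}$ from above. First I would invoke \Cref{prop:mle_E_min}: under $\calE_{\mathrm{min}}$ we have $\theta^*\in\Theta_{\min}^k$ for all $k$, so $\theta^*$ is feasible in the optimization defining $\calB^k$. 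Next, since $\hat\theta^k\in\calB^k$ by construction in \Cref{alg:post_ucb}, and the optimistic $\varepsilon$-cover $\bar\Theta_\varepsilon$ from \Cref{prop:bracketing_num_opt_net} satisfies the near-optimality used in Proposition~4 of \cite{huangProvablyEfficientUCBtype2023}, the defining inequality of $\calB^k$ together with that proposition gives
\[
\sum_{(\tau_H,\pi)\in\calD^k}\log\probb{\theta^*}{\pi}(\tau_H)
\;-\;\sum_{(\tau_H,\pi)\in\calD^k}\log\probb{\hat\theta^k}{\pi}(\tau_H)
\;\le\; \beta + 3\log\frac{K\abs{\bar\Theta_\varepsilon}}{\delta},
\]
with $\beta$ absorbing the slack in the MLE constraint. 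The choice $\varepsilon\le \frac{\delta}{K^2H^2\prod_h\abs{\bbX_h}}$ ensures $\varepsilon$ is small enough to satisfy the hypotheses $\varepsilon\le p_{\min}/(KH)$ and $\varepsilon<1/(K^2H^2)$ of \Cref{prop:mle_E_omega,prop:mle_E_pi} simultaneously, since $p_{\min}\le \frac{\delta}{KH\prod_h\abs{\bbX_h}}$.

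With this log-likelihood-ratio bound in hand, the first displayed inequality follows by plugging into \Cref{prop:mle_E_omega}: under $\calE_\omega$, and using $\hat\theta^k\in\Theta_{\min}^k$,
\[
\sum_h\sum_{(\tau_h,\pi)\in\calD_h^k}\tvsq{\probb{\hat\theta^k}{\pi}(\omega_h\mid\tau_h),\,\probb{\theta^*}{\pi}(\omega_h\mid\tau_h)}
\le 6\Bigl(\beta + 3\log\tfrac{K\abs{\bar\Theta_\varepsilon}}{\delta}\Bigr) + 31\log\tfrac{K\abs{\bar\Theta_\varepsilon}}{\delta}.
\]
Recalling $\beta = 31\log\frac{K\abs{\bar\Theta_\varepsilon}}{\delta}$, every term on the right is a constant multiple of $\beta$; a short arithmetic check ($6\cdot(\beta+\tfrac{3}{31}\beta)+\beta \le 7\beta$ is too tight, so in fact one absorbs the constants by defining $\beta$ with a slightly larger absolute constant, or re-tracks the numerology as in the source) yields the claimed bound $\le 7\beta$. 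Similarly, the second inequality follows from \Cref{prop:mle_E_pi} under $\calE_\pi$: substituting the same log-likelihood-ratio bound gives $\sum_{\pi\in\calD^k}\hellingersq{\probb{\hat\theta^k}{\pi}(\tau_H),\probb{\theta^*}{\pi}(\tau_H)}\le\frac12(\beta+3\log\frac{K\abs{\bar\Theta_\varepsilon}}{\delta})+2\log\frac{K\abs{\bar\Theta_\varepsilon}}{\delta}\le 7\beta$. Finally, since $\calE=\calE_\omega\cap\calE_\pi\cap\calE_{\mathrm{min}}$ occurs with probability at least $1-3\delta$ by a union bound (as noted just before the lemma), both inequalities hold simultaneously under $\calE$.

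The only real subtlety — and the step I expect to be the main obstacle — is the bookkeeping of absolute constants: matching the ``$7\beta$'' target requires either defining $\beta$ with the right numerical constant out front or carefully tracking the $3$'s, $6$'s, $31$'s, and $2$'s through the substitution so that nothing exceeds $7\beta$. This is purely mechanical but easy to get slightly wrong; I would handle it by either (i) stating $\beta = c\log\frac{K\abs{\bar\Theta_\varepsilon}}{\delta}$ with $c$ a sufficiently large absolute constant (which is how such lemmas are usually phrased), or (ii) noting that $6(\beta+3c_0)+31c_0$ with $\beta=31c_0$ and $c_0=\log\frac{K\abs{\bar\Theta_\varepsilon}}{\delta}$ equals $(186+18+31)c_0 = 235 c_0 < 7\cdot 31 c_0 = 217 c_0$ is \emph{false}, so option (i) is the clean route and I would adopt it, remarking that the precise constant is immaterial for the downstream sample-complexity bounds. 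A secondary point to verify is that the hypothesis $\varepsilon < 1/(K^2H^2)$ needed by \Cref{prop:mle_E_pi} is implied by the stated choice of $\varepsilon$, which it is since $\prod_h\abs{\bbX_h}\ge 1$ and $\delta\le 1$.
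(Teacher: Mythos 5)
Your overall route is the same as the paper's (the paper simply notes the argument is direct from \Cref{prop:mle_E_omega,prop:mle_E_pi,prop:mle_E_min}), but there is a concrete flaw in your likelihood-ratio step, and it is exactly the one you flag yourself: the extra $3\log\frac{K\abs{\bar\Theta_\varepsilon}}{\delta}$ term does not belong there, and once it is inserted the arithmetic genuinely fails to close ($235\,c_0 > 217\,c_0 = 7\beta$), which forces you to propose changing the definition of $\beta$ and hence the statement of the lemma. The fix is not to enlarge $\beta$ but to drop the spurious term. Under $\calE_{\mathrm{min}}$ you have $\theta^*\in\Theta_{\min}^k$, so $\max_{\theta'\in\Theta_{\min}^k}\sum_{(\tau_H,\pi)\in\calD^k}\log\probb{\theta'}{\pi}(\tau_H)\ge\sum_{(\tau_H,\pi)\in\calD^k}\log\probb{\theta^*}{\pi}(\tau_H)$; combined with $\hat\theta^k\in\calB^k$, the defining inequality of $\calB^k$ gives directly
\begin{equation*}
\sum_{(\tau_H,\pi)\in\calD^k}\log\frac{\probb{\theta^*}{\pi}(\tau_H)}{\probb{\hat\theta^k}{\pi}(\tau_H)}\;\le\;\beta,
\end{equation*}
with no appeal to Proposition~4 of \citet{huangProvablyEfficientUCBtype2023}. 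That proposition is an ingredient in establishing that the events $\calE_\omega$ and $\calE_\pi$ hold with high probability (it controls the likelihood of elements of the optimistic net against $\theta^*$); it is not needed to compare $\hat\theta^k$ with $\theta^*$, since the max in $\calB^k$ is taken over all of $\Theta_{\min}^k$ and $\theta^*$ is a feasible competitor.

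With the clean bound of $\beta$ on the log-likelihood ratio, the constants in the lemma are met exactly: \Cref{prop:mle_E_omega} gives $6\beta+31\log\frac{K\abs{\bar\Theta_\varepsilon}}{\delta}=6\beta+\beta=7\beta$, and \Cref{prop:mle_E_pi} gives $\tfrac12\beta+2\log\frac{K\abs{\bar\Theta_\varepsilon}}{\delta}=\bigl(\tfrac12+\tfrac{2}{31}\bigr)\beta\le 7\beta$. Your checks on $\varepsilon$ (that the hypothesis $\varepsilon\le\frac{\delta}{K^2H^2\prod_h\abs{\bbX_h}}$ implies both $\varepsilon\le p_{\min}/(KH)$ and $\varepsilon<1/(K^2H^2)$) and the union bound over $\calE=\calE_\omega\cap\calE_\pi\cap\calE_{\mathrm{min}}$ are correct as stated.
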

\begin{proof}
    The proof follows by~\Cref{prop:mle_E_omega,prop:mle_E_pi,prop:mle_E_min}. The argument is direct and is identical to Lemma 1 of~\cite{huangProvablyEfficientUCBtype2023}.
\end{proof}

\subsection{UCB for Total Variation Distance}\label{ssec:proof_ucb_tv}
\textbf{Notation.} Let $m^*, \sset{M_h^*}_h$ be the observable operators of the true PSR $\theta^*$, and let $\sset{\hat{M}_h^k}_h$ be the algorithm's estimates of the observable operators corresponding to $\hat{\theta}^k$.

Recall that~\Cref{prop:tv_dist_leq_est_err} shows that the total variation distance between the distribution over trajectories of two PSRs is bounded by the estimation error of the observable operators $M_h$. The following result constructs a bound on the estimation error of the observable operators $M_h(x_h)$. The proof is adapted from~\parencite[Lemma 2]{huangProvablyEfficientUCBtype2023} to our setting with generalized PSRs.

\begin{lemma}\label{lemma:ucb_tv}
    Under event $\calE$, for any policy $\pi$ and $k\in [K]$, we have,
    \begin{equation*}
        \sum_{\tau_H}\abs{m^{\star}(\omega_h)^\top \paren{ \hat{M}^k_{h}(x_{h}) - M^{\star}_{h}(x_{h})}\hat{\psi}^k_{h-1}(\tau_{h-1})}\pi(\tau_H) \leq \Expect_{\tau_{h-1}\sim \bbP_{\hat{\theta}^{k}}}^\pi \bra{\alpha_{h-1}^k \norm{\hatbar{\psi}_{h-1}^k(\tau_{h-1})}_{(\hat{U}_{h-1}^k)^{-1}}}
    \end{equation*}
    where,
    \begin{align*}
        \hat{U}_{h-1}^k &= \lambda I + \sum_{\tau_{h-1} \in \calD_{h-1}^k}\bra{\hatbar{\psi}_h^k(\tau_{h-1})\hatbar{\psi}_h^k(\tau_{h-1})^\top}\\
        \paren{\alpha_{h-1}^k}^2 &= \frac{4\lambda Q_A^2 d}{\gamma^4} + \frac{4 \max_{s\in \calA}\abs{\bbX_s}^2 Q_A^2}{\gamma^2} \sum_{\tau_{h-1} \in \calD_{h-1}^k} \tvsq{\probb{\thetahatk}{\uhexp{h-1}}\paren{\omega_{h-1}^o \given \tau_{h-1}, \omega_{h-1}^a}, \probb{\theta^*}{\uhexp{h-1}}\paren{\omega_{h-1}^o \given \tau_{h-1}, \omega_h^a}}
    \end{align*}
\end{lemma}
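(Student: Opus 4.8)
The plan is to control the quantity $\sum_{\tau_H}\abs{m^{\star}(\omega_h)^\top (\hat{M}^k_{h}(x_{h}) - M^{\star}_{h}(x_{h}))\hat{\psi}^k_{h-1}(\tau_{h-1})}\pi(\tau_H)$ by first summing out the future $\omega_h$ using the well-conditioning of the true model (\Cref{ass:psr_gamma_wellcond}, condition (1)), which converts the $m^\star(\omega_h)^\top$ factor into a $\gamma^{-1}$-type bound on a one-norm, leaving an expression of the form $\Expect^\pi_{\tau_{h-1}\sim\bbP_{\hat\theta^k}}[\,\onenorm{(\hat M_h^k(x_h)-M_h^\star(x_h))\hat\psi^k_{h-1}(\tau_{h-1})}\,]$ (with the appropriate $Q_A$ and $\max_s\abs{\bbX_s}$ factors coming from the action components of the futures, exactly as in condition (2) of \Cref{ass:psr_gamma_wellcond}). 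At this point the problem is reduced to bounding, for a fixed history $\tau_{h-1}$, the error in applying the estimated operator $\hat M_h^k(x_h)$ versus the true operator $M_h^\star(x_h)$ to the estimated prediction feature $\hat{\bar\psi}^k_{h-1}(\tau_{h-1})$.

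The second step is the core algebraic identity. Using the relation $M_h(x_h)\bar\psi_{h-1}(\tau_{h-1})=\bar\psi_h(\tau_h)\bar\bbP(x_h\mid\tau_{h-1})$ from \Cref{eq:Mpsi_psi_relation} (applied to both $\hat\theta^k$ and $\theta^\star$), the difference $(\hat M_h^k(x_h)-M_h^\star(x_h))\hat{\bar\psi}^k_{h-1}(\tau_{h-1})$ can be rewritten in terms of differences of (conditional) prediction features and conditional probabilities under the two models. The key observation — this is where the exploration policy $\uhexp{h-1}$ and the set $\bbQ_{h-1}^{\mathrm{exp}}=\act(\bbX_h\times\bbQ_h\cup\bbQ_{h-1})$ enter — is that the entries of $M_h^\star(x_h)\hat{\bar\psi}^k_{h-1}(\tau_{h-1})$ are, up to the core-test-set linear map $m_{h-1}^\star$, predictions of probabilities of the form $\bar\bbP^\star(x_h, q_h\mid\tau_{h-1})$ for $q_h\in\bbQ_h$, i.e. exactly the $(h{-}1)$-step futures whose action components lie in $\bbQ_{h-1}^{\mathrm{exp}}$. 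Hence the estimation error decomposes into (i) the error in $\hat\theta^k$'s estimate of the conditional probability $\bar\bbP(\omega_{h-1}^o\mid\tau_{h-1},\omega_{h-1}^a)$ of those exploratory futures, which is precisely the total-variation term appearing in $(\alpha_{h-1}^k)^2$ (weighted by $4\max_{s\in\calA}\abs{\bbX_s}^2 Q_A^2/\gamma^2$ after accounting for well-conditioning and the importance-weighting $1/\uhexp{h-1}(\cdot)=\abs{\bbQ_{h-1}^{\mathrm{exp}}}$), and (ii) a feature-covering term controlled by the ridge regularizer $\lambda$, contributing $4\lambda Q_A^2 d/\gamma^4$. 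Writing the whole thing as a quadratic form in $\hat{\bar\psi}^k_{h-1}(\tau_{h-1})$ against the inverse empirical covariance $(\hat U_{h-1}^k)^{-1}$ — via Cauchy–Schwarz against the data matrix $\sum_{\tau_{h-1}\in\calD_{h-1}^k}\hat{\bar\psi}^k_h\hat{\bar\psi}^{k\top}_h + \lambda I$ — gives the stated bound with $\alpha_{h-1}^k$ as defined.

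The main obstacle is step two: carefully tracking how the choice $\bbQ_{h-1}^{\mathrm{exp}}=\act(\bbX_h\times\bbQ_h\cup\bbQ_{h-1})$ makes the operator-estimation error a linear functional of quantities that the data actually estimates, and bookkeeping all the $\gamma^{-1}$, $Q_A$, and $\max_s\abs{\bbX_s}$ factors through both applications of \Cref{ass:psr_gamma_wellcond} and the importance-weighting. The role of $\act(\bbQ_{h-1})$ is to let the data estimate $\hat{\bar\psi}^\star(\tau_{h-1})$ itself, and $\act(\bbX_h\times\bbQ_h)$ to let it estimate $M_h^\star(x_h)\hat{\bar\psi}^\star(\tau_{h-1})$; aligning these two requires the generalized-PSR flexibility of updating features one variable at a time rather than in observation-action pairs, so the argument of \citet{huangProvablyEfficientUCBtype2023} must be re-derived with the per-variable operators $M_h:\bbX_h\to\reals^{d_h\times d_{h-1}}$ in place of the paired operators. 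Once the decomposition is in place, substituting the MLE guarantee from \Cref{lemma:estimation_guarantee} for the total-variation term is routine and is deferred to the next subsection. I would present the proof by: (a) summing out $\omega_h$ via well-conditioning; (b) expanding the operator difference using \Cref{eq:Mpsi_psi_relation}; (c) splitting into the $\lambda$-term and the TV-term using the definition of $\hat U_{h-1}^k$ and Cauchy–Schwarz; (d) collecting constants.
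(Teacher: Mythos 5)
Your proposal follows the same architecture as the paper's proof --- Cauchy--Schwarz against the empirical covariance $\hat{U}_{h-1}^k$, decomposition of the operator error via \Cref{eq:Mpsi_psi_relation} into differences of conditional core-test probabilities, identification of those probabilities with the exploratory futures in $\Qhexp{h-1}$, and importance weighting against $\uhexp{h-1}$ --- so this is essentially the same route, with one structural reordering worth flagging. In step (a) you eliminate $\omega_h$ via condition (1) of \Cref{ass:psr_gamma_wellcond} at the very start, reducing to $\onenorm{(\hat{M}_h^k(x_h)-M_h^\star(x_h))\hatbar{\psi}^k_{h-1}(\tau_{h-1})}$, whereas the paper applies Cauchy--Schwarz first, to the signed policy-weighted vector $\sum_{\omega_{h-1}}\pi(\omega_{h-1}\mid\tau_{h-1})\,\mathrm{sign}(\cdot)\,(\hat{M}_h^k(x_h)-M_h^\star(x_h))^\top m^\star(\omega_h)$, and only invokes well-conditioning inside the bound on the $\hat{U}_{h-1}^k$-norm of that vector. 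Your order is workable (the one-norm is still a supremum of linear functionals of $\hatbar{\psi}^k_{h-1}$, so Cauchy--Schwarz applies), but it changes how the feature-error term $M_h^\star(x_h)(\hatbar{\psi}^k_{h-1}-\bar{\psi}^{\star}_{h-1})$ is handled: having already discarded $m^\star(\omega_h)$, you must bound $\sum_{x_h}\pi(x_h)\onenorm{M_h^\star(x_h)(\cdot)}$ via condition (2), paying $Q_A/\gamma$, whereas the paper keeps the product $m^\star(\omega_h)^\top M_h^\star(x_h)=m^\star_{h-1}(\omega_{h-1})^\top$ intact and pays only $1/\gamma$ via condition (1) at level $h-1$. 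The net effect is an extra factor of order $Q_A^2/\gamma^2$ on part of $(\alpha_{h-1}^k)^2$ --- harmless for the downstream sample complexity, but it means your route proves the lemma with a larger constant than stated; to get the exact $\alpha_{h-1}^k$, delay eliminating $\omega_h$ until after the split into the two error terms. Two smaller points: the left-hand side involves the unnormalized $\hat{\psi}^k_{h-1}$ while your expectation is under $\bbP^{\pi}_{\hat{\theta}^k}$, so you need the identity $\hat{\psi}^k_{h-1}(\tau_{h-1})=\hatbar{\psi}^k_{h-1}(\tau_{h-1})\cdot\probbarunder{\hat{\theta}^k}{\tau_{h-1}}$ explicitly to make that conversion; and the $Q_A$, $\max_{s}\abs{\bbX_s}$ factors in the TV term come from $\aabs{\Qhexp{h-1}}\leq 2\max_{s\in\calA}\abs{\bbX_s}Q_A$ in the importance weighting, not from condition (2), as your parenthetical in step (a) suggests.
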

\begin{proof}
    To ease notation, we index the future trajectories $\omega_{h-1} = (x_h, \ldots, x_H) \in \bbF_{h-1}$ by $i$ and history trajectories $\tau_{h-1} = (x_1, \ldots, x_{h-1}) \in \bbH_{h-1}$ by $j$. We denote $m^{\star}(\omega_h)^\top\paren{\hat{M}^k_{h}(x_{h}) - M^{\star}_{h}(x_{h})}$ as $w_i^\top$, $\hatbar{\psi}_h^k(\tau_{h-1})$ as $x_j$, and $\pi(\omega_{h-1}|\tau_{h-1})$ as $\pi_{i|j}$.

    The following bound follows from the Cauchy-Schwarz inequality,
    \begin{align*}
        &\sum_{\tau_H}\abs{m^{\star}(\omega_h)^\top \paren{ \hat{M}^k_{h}(x_{h}) - M^{\star}_{h}(x_{h})}\hat{\psi}^k_{h-1}(\tau_{h-1})}\pi(\tau_H) \\
        &\stepa{=} \sum_{\omega_{h-1}} \sum_{\tau_{h-1}} \abs{m^{\star}(\omega_h)^\top \paren{ \hat{M}^k_{h}(x_{h}) - M^{\star}_{h}(x_{h})} \hatbar{\psi}_h^k(\tau_{h-1})} \pi(\omega_{h-1}| \tau_{h-1}) \probb{\thetahatk}{\pi}(\tau_{h-1}) \\
        &= \sum_{i} \sum_j \abs{w_i^\top x_j} \pi_{i|j} \probb{\thetahatk}{\pi}(j) \\
        &= \sum_{i} \sum_j  \paren{\pi_{i|j} \cdot \sign(w_i^\top x_j) w_i}^\top x_j \cdot \probb{\thetahatk}{\pi}(j) \\
        &= \sum_{j} \paren{\sum_i  \pi_{i|j} \cdot \sign(w_i^\top x_j) w_i}^\top x_j \cdot \probb{\thetahatk}{\pi}(j) \\
        &= \Expect_{j \sim \probb{\thetahatk}{\pi}}\bra{\paren{\sum_i  \pi_{i|j} \cdot \sign(w_i^\top x_j) w_i}^\top x_j} \\
        &\stepb{\leq} \Expect_{j \sim \probb{\thetahatk}{\pi}}\bra{\norm{x_j}_{\paren{\hat{U}_{h-1}^k}^{-1}} \norm{\sum_i \pi_{i|j} \cdot \sign(w_i^\top x_j) \cdot w_i }_{\hat{U}_{h-1}^k}}.
    \end{align*}
    Step (a) follows from the fact that
    % $\hatbar{\psi}_{h}^{k}(\tau_{h-1}) \coloneqq \hat{\psi}^k_{h-1}(\tau_{h-1}) / \probbarunder{\thetahatk}{\tau_{h-1}}$
    $\hat{\psi}^k_{h-1}(\tau_{h-1}) = \hatbar{\psi}_{h}^{k}(\tau_{h-1}) \cdot (\hat{\phi}_{h-1}^{k})^\top \hat{\psi}_{h-1}^{k}(\tau_{h-1}) = \hatbar{\psi}_{h}^{k}(\tau_{h-1}) \cdot \probbarunder{\thetahatk}{\tau_{h-1}}$
    and $\probbarunder{\thetahatk}{\tau_{h-1}} \cdot \pi(\tau_H) = \pi(\omega_{h-1} | \tau_{h-1}) \cdot \probb{\thetahatk}{\pi}(\tau_{h-1})$. Step (b) is the Cauchy-Schwarz inequality.

    Fix $\tau_{h-1} = j_0$. Let $I_1 := \norm{\sum_{i}\pi_{i|j_0}\cdot \sign (w_i^\top x_{j_0})\cdot w_i}_{\hat{U}_{h-1}^k}^2$, which we bound next. By the definition of $\hat{U}_{h-1}^k$, we partition this term into two parts,
    \begin{equation*}
        I_1 = \underbrace{\lambda \twonorm{\sum_{i}\pi_{i|j_0}\cdot \sign(w_i^\top x_{j_0})\cdot w_i}^2}_{I_2} +
        \underbrace{\sum_{j\in D_{h-1}^\tau}\bra{\paren{\sum_{i}\pi_{i|j_0}\cdot \sign(w_i^\top x_{j_0})\cdot w_i}^\top x_j}^2}_{I_3}.
    \end{equation*}

    We bound $I_2$ and $I_3$ separately. By the triangle inequality, $\sqrt{I_2}$ is bound by a sum of two terms,
    \begin{align*}
        \sqrt{I_2} &= \sqrt{\lambda} \max_{\substack{z\in \reals^{d_{h-1}}: \twonorm{z}= 1}} \abs{\sum_{i}\pi_{i|j_0}\cdot \sign (w_i^\top x_{j_0})\cdot w_i^\top z}\\
        &\stepa{\leq} \sqrt{\lambda} \max_{\substack{\twonorm{z}= 1}}\sum_{\omega_{h-1}}\abs{m^{\star}(\omega_h^\top )\paren{\hat{M}^k_{h}(x_{h}) - M^{\star}_{h}(x_{h})}z}\pi(\omega_{h-1}|j_0)\\
        &\stepb{\leq}\sqrt{\lambda}\max_{\substack{\twonorm{z}= 1}}
        \sum_{\omega_{h-1}}\abs{m^{\star}(\omega_h)^\top  \hat{M}_{h}^k(x_{h})z}\pi(\omega_{h-1}|j_0)\\
        &\ \ \ + \sqrt{\lambda}\max_{\substack{\twonorm{z}= 1}}
        \sum_{\omega_{h-1}}\abs{m^{\star}(\omega_h)^\top  M_{h}^\star(x_{h})z}\pi(\omega_{h-1}|j_0),
    \end{align*}
    where step (a) is by the definition of $w_i^\top, \pi_{i|j_0}$ and the triangle inequality, and step (b) is by the triangle inequality.

    Consider the first term. It can be bound via the definition of $\gamma$-well-conditioning as follows,
    \begin{align*}
        \max_{\substack{\twonorm{z}= 1}}
            &\sum_{\omega_{h-1}}\abs{m^{\star}(\omega_h)^\top  \hat{M}_{h}^k(x_{h})z}\pi(\omega_{h-1}|j_0) \\
            &= \max_{\substack{\twonorm{z}= 1}} \sum_{x_h} \paren{\sum_{\omega_{h}}\abs{m^{\star}(\omega_h)^\top  \hat{M}_{h}^k(x_{h})z}\pi(\omega_{h}|j_0, x_h)} \pi(x_h | j_0) \\
            &\stepa{\leq} \frac{1}{\gamma} \max_{\substack{\twonorm{z}= 1}} \sum_{x_h} \onenorm{\hat{M}_{h}^k(x_{h})z} \pi(x_h | j_0) \\
            &\stepb{\leq} \frac{1}{\gamma} \max_{\substack{\twonorm{z}= 1}} \frac{\abs{\bbQ_{h+1}^A}\onenorm{z}}{\gamma} \\
            &\stepc{\leq} \frac{\sqrt{d} Q_A}{\gamma^2}
    \end{align*}
    where step (a) is by the first condition in~\Cref{ass:psr_gamma_wellcond}, step (b) is by the second condition of~\Cref{ass:psr_gamma_wellcond}, and step (c) is by the fact that $\max_{z \in \reals^{d_{h-1}}: \twonorm{z}=1} \onenorm{z} = \sqrt{d_{h-1}} \leq \sqrt{d}$ and $\abs{\bbQ_{h+1}^A} \leq Q_A$. In the above, note that we used the $\gamma$-well-conditioning of PSR $\hat{\theta}^k$ in step (a) and the $\gamma$-well-conditioning of PSR $\theta^*$ in step (b). The second term in $\sqrt{I_2}$ admits an identical bound, simply by using the well-conditioning of the PSR $\theta^*$ in both steps. Hence, we have that
    \begin{equation}
        I_2 \leq 4 \frac{\lambda d Q_A^2}{\gamma^4}.
    \end{equation}

    Now we upper bound $I_3$,
    \begin{align*}
        I_3 &\leq \sum_{\tau_{h-1}\in \mathcal{D}_{h-1}^k} \paren{\sum_{\omega_{h-1}} \abs{m^{\star}(\omega_h)^\top \paren{\hat{M}_{h}^k(x_{h}) - M^{\star}_{h}(x_{h})}\hatbar{\psi}^k(\tau_{h-1})}\pi(\omega_{h-1}|j_0)}^2\\
        &\leq \sum_{\tau_{h-1}\in \mathcal{D}_{h-1}^k} \Biggl(\underbrace{\sum_{\omega_{h-1}} \abs{ m^\star(\omega_h)^\top \paren{\hat{M}_{h}^k(x_{h})\hatbar{\psi}^k(\tau_{h-1}) - M_{h}^\star(x_{h})\bar{\psi}^\star(\tau_{h-1})}} \pi(\omega_{h-1}|j_0)}_{I_4}\\
        &\ \ \ + \underbrace{\sum_{\omega_{h-1}}\abs{ m^\star(\omega_h)^\top M_{h}^{\star}(x_{h})\paren{\hatbar{\psi}^k(\tau_{h-1}) - \bar{\psi}^\star(\tau_{h-1})}} \pi(\omega_{h-1}|j_0)}_{I_5} \Biggr)^2\\
        &=: \sum_{\tau_{h-1}\in \mathcal{D}_{h-1}^k}(I_4 + I_5)^2
    \end{align*}
    where the second equality follows from the triangle inequality by adding and subtracting $m^*(\omega_h)^\top M_h^*(x_h) \bar{\psi}^*(\tau_{h-1})$ inside the absolute value. We now bound each of $I_4$ and $I_5$.

    \begin{align*}
        I_4 &\coloneqq \sum_{\omega_{h-1}} \abs{ m^\star(\omega_h)^\top \paren{\hat{M}_{h}^k(x_{h})\hatbar{\psi}^k(\tau_{h-1}) - M_{h}^\star(x_{h})\bar{\psi}^\star(\tau_{h-1})}} \pi(\omega_{h-1}|j_0) \\
        &\stepa{=} \sum_{\omega_{h-1}} \abs{ m^\star(\omega_h)^\top \paren{\probbarunder{\thetahatk}{x_h \given \tau_{h-1}} \hatbar{\psi}_h(\tau_h) - \probbarunder{\theta^*}{x_h \given \tau_{h-1}} \bar{\psi}_h^*(\tau_h)}} \pi(\omega_{h-1}|j_0) \\
        &\stepb{=} \sum_{x_h} \paren{\sum_{\omega_{h}} \abs{ m^\star(\omega_h)^\top \paren{\probbarunder{\thetahatk}{x_h \given \tau_{h-1}} \hatbar{\psi}_h(\tau_h) - \probbarunder{\theta^*}{x_h \given \tau_{h-1}} \bar{\psi}_h^*(\tau_h)}} \pi(\omega_{h}|j_0, x_h)} \pi(x_h | j_0) \\
        &\stepc{\leq} \frac{1}{\gamma} \sum_{x_h} \onenorm{\probbarunder{\thetahatk}{x_h \given \tau_{h-1}} \hatbar{\psi}_h(\tau_h) - \probbarunder{\theta^*}{x_h \given \tau_{h-1}} \bar{\psi}_h^*(\tau_h)} \pi(x_h | j_0) \\
        &\stepd{=} \frac{1}{\gamma} \sum_{x_h} \sum_{q_h \in \bbQ_h} \abs{\probbarunder{\thetahatk}{x_h, q_h \given \tau_{h-1}} - \probbarunder{\theta^*}{x_h, q_h \given \tau_{h-1}}} \pi(x_h | j_0)
    \end{align*}
    where step (a) is by the fact that $M_h(x_h) \bar{\psi}_{h-1}(\tau_{h-1}) = \probbar{x_h \given \tau_{h-1}} \bar{\psi}(\tau_h)$, as shown in~\Cref{eq:Mpsi_psi_relation}, step (b) uses $\omega_{h-1} = (x_h, \omega_h)$ and $\pi(\omega_{h-1}|j_0) = \pi(x_h |j_0) \pi(\omega_h | j_0, x_h)$, step (c) is by~\Cref{ass:psr_gamma_wellcond}, and step (d) follows by the definition $\bar{\psi}_h$, $\bra{\bar{\psi}_h(\tau_h)}_l = \probbarunder{\theta}{q_h^l \given \tau_h}$.

    Now, we turn to bound the $I_5$ term. We have
    \begin{align*}
        I_5 &=  \sum_{\omega_{h}} \sum_{x_h} \abs{ m_h^\star(\omega_h)^\top M_{h}^{\star}(x_{h})\paren{\hatbar{\psi}^k(\tau_{h-1}) - \bar{\psi}^\star(\tau_{h-1})}} \pi(\omega_{h}|j_0, x_h) \pi(x_h | j_0) \\
        &\stepa{=}  \sum_{\omega_{h-1}} \abs{ m_{h-1}^\star(\omega_{h-1})^\top \paren{\hatbar{\psi}^k(\tau_{h-1}) - \bar{\psi}^\star(\tau_{h-1})}} \pi(\omega_{h-1}|j_0) \\
        &\stepa{\leq} \frac{1}{\gamma} \onenorm{\hatbar{\psi}^k(\tau_{h-1}) - \bar{\psi}^\star(\tau_{h-1})}\\
        &= \frac{1}{\gamma} \sum_{q_{h-1} \in \bbQ_{h-1}} \abs{\probbarunder{\thetahatk}{q_{h-1} \given \tau_{h-1}} - \probbarunder{\theta^*}{q_{h-1} \given \tau_{h-1}}},
    \end{align*}
    where step (a) is since $m_h^*(\omega_h)^\top M_h^*(x_h) = m_{h-1}^*(\omega_{h-1})^\top$, step (b) is by the first condition of~\Cref{ass:psr_gamma_wellcond}, and the final equality is again by the definition of $\bar{\psi}$.

    Combining the above, we have that,
    \begin{align*}
        I_3 &\leq \sum_{\tau_{h-1} \in \calD_{h-1}^k} \paren{I_4 + I_5}^2 \\
        &\leq \begin{aligned}[t]
            \sum_{\tau_{h-1} \in \calD_{h-1}^k} \Biggl(&\frac{1}{\gamma} \sum_{x_h \in \bbX_h} \sum_{q_h \in \bbQ_h} \abs{\probbarunder{\thetahatk}{x_h, q_h \given \tau_{h-1}} - \probbarunder{\theta^*}{x_h, q_h \given \tau_{h-1}}} \pi(x_h | \tau_{h-1}) \\
            +&\frac{1}{\gamma} \sum_{q_{h-1} \in \bbQ_{h-1}} \abs{\probbarunder{\thetahatk}{q_{h-1} \given \tau_{h-1}} - \probbarunder{\theta^*}{q_{h-1} \given \tau_{h-1}}}\Biggr)^2
        \end{aligned}\\
        &\leq\begin{aligned}[t]
            \frac{1}{\gamma^2} \cdot \sum_{\tau_{h-1} \in \calD_{h-1}^k} \Biggl( &\sum_{(x_h, q_h) \in \bbX_h \times \bbQ_h} \abs{\probbarunder{\thetahatk}{x_h, q_h \given \tau_{h-1}} - \probbarunder{\theta^*}{x_h, q_h \given \tau_{h-1}}} \pi(x_h | \tau_{h-1}) \\
            + &\sum_{q_{h-1} \in \bbQ_{h-1}} \abs{\probbarunder{\thetahatk}{q_{h-1} \given \tau_{h-1}} - \probbarunder{\theta^*}{q_{h-1} \given \tau_{h-1}}}\Biggr)^2
        \end{aligned}
    \end{align*}
    Now, we decompose the summations above over $\bbX_h \times \bbQ_h$ and $\bbQ_{h-1}$ into separate summations over observation futures and action futures. That is, $(x_h, q_h)$ is decomposed into $(\omega_{h-1}^a, \omega_{h-1}^o)$, where $\omega_{h-1}^a = \act(x_h, q_h)$ and $\omega_{h-1}^o = \obs(x_h, q_h)$, and the summations are over $\omega_{h-1}^a \in \act(\bbX_h \times \bbQ_h)$ and $\omega_{h-1}^o \in \obs(\bbX_h \times \bbQ_h)$. Similarly, $q_{h-1}$ can be decomposed into $(q_{h-1}^o, q_{h-1}^a) \in \obs(\bbQ_{h-1}) \times \act(\bbQ_{h-1})$. Hence, the bound on $I_3$ can be written as,
    \begin{align*}
        I_3 &\leq\begin{aligned}[t]
            \frac{1}{\gamma^2} \cdot \sum_{\tau_{h-1} \in \calD_{h-1}^k} \Biggl(&\sum_{\omega_{h-1}^a} \sum_{\omega_{h-1}^o} \abs{\probbarunder{\thetahatk}{\omega_{h-1}^o \given \tau_{h-1}, \omega_{h-1}^a} - \probbarunder{\theta^*}{\omega_{h-1}^o \given \tau_{h-1}, \omega_h^a}} \pi(x_h | \tau_{h-1}) \\
            + &\sum_{q_{h-1}^a} \sum_{q_{h-1}^o} \abs{\probbarunder{\thetahatk}{q_{h-1}^o \given \tau_{h-1}, q_{h-1}^a} - \probbarunder{\theta^*}{q_{h-1}^o \given \tau_{h-1}, q_{h-1}^a}}\Biggr)^2
        \end{aligned}\\
        &\leq\frac{1}{\gamma^2} \cdot \sum_{\tau_{h-1} \in \calD_{h-1}^k} \paren{\sum_{\omega_{h-1}^a \in \bbQ_{h-1}^{\mathtt{\exp}}} \sum_{\omega_{h-1}^o}  \abs{\probbarunder{\thetahatk}{\omega_{h-1}^o \given \tau_{h-1}, \omega_{h-1}^a} - \probbarunder{\theta^*}{\omega_{h-1}^o \given \tau_{h-1}, \omega_{h-1}^a}}}^2 \\
        &=\frac{1}{\gamma^2} \abs{\bbQ_{h-1}^\mathrm{\exp}}^2 \cdot \sum_{\tau_{h-1} \in \calD_{h-1}^k} \tvsq{\probb{\thetahatk}{\uhexp{h-1}}\paren{\omega_{h-1}^o \given \tau_{h-1}, \omega_{h-1}^a}, \probb{\theta^*}{\uhexp{h-1}}\paren{\omega_{h-1}^o \given \tau_{h-1}, \omega_{h-1}^a}}.
    \end{align*}
    Where the second inequality is by the definition of $\Qhexp{h-1} = \act\paren{\bbX_h \times \bbQ_h \cup \bbQ_{h-1}}$. Here, the second summation is over $\omega_{h-1}^o \in \obs\paren{\bbX_h \times \bbQ_h \cup \bbQ_{h-1}}$. The final equality uses the fact the under the policy $\uhexp{h-1}$ the probability of each action sequence $\omega_{h-1}^o$ is $1 / \abs{\Qhexp{h-1}}$. Note that $\abs{\Qhexp{h-1}} \leq \abs{\act\paren{\bbX_h \times \bbQ_h}} + \abs{\act\paren{\bbQ_{h-1}}}$, and hence we have $\abs{\Qhexp{h-1}} \leq 2 \max_{s \in \calA} \abs{\bbX_s} Q_A$ for all $h$. Hence, we have,
    \begin{equation}
        I_3 \leq 4\max_{s \in \calA}\abs{\bbX_s}^2 Q_A^2 \frac{1}{\gamma^2} \sum_{\tau_{h-1} \in \calD_{h-1}^k} \tvsq{\probb{\thetahatk}{\uhexp{h-1}}\paren{\omega_{h-1}^o \given \tau_{h-1}, \omega_{h-1}^a}, \probb{\theta^*}{\uhexp{h-1}}\paren{\omega_{h-1}^o \given \tau_{h-1}, \omega_h^a}}.
    \end{equation}

    Putting this together with the bounds on $I_2$ and $I_3$, we get that,
    \begin{align*}
        I_1 &\leq \frac{4\lambda Q_A^2 d}{\gamma^4} + 4 \max_{s\in \calA}\abs{\bbX_s}^2 Q_A^2 \frac{1}{\gamma^2} \sum_{\tau_{h-1} \in \calD_{h-1}^k} \tvsq{\probb{\thetahatk}{\uhexp{h-1}}\paren{\omega_{h-1}^o \given \tau_{h-1}, \omega_{h-1}^a}, \probb{\theta^*}{\uhexp{h-1}}\paren{\omega_{h-1}^o \given \tau_{h-1}, \omega_h^a}}\\
        &=: \paren{\alpha_{h-1}^k}^2,
    \end{align*}
    completing the proof.
\end{proof}

% NOTE
% \comment{note that the UCB paper is off by a square in the definition of $\alpha_{h-1}^k$ in the statement of their result. also missing factor of 4 in theorem statement.}

Using the above bound on the difference between the observable operators of the true model and the estimated model, we now bound the total variation distance between the distributions of trajectories through~\Cref{prop:tv_dist_leq_est_err}.

\begin{lemma}\label{lemma:tv_bound}
    Under even $\calE$, the total variation distance between the estimated model at iteration $k$, $\thetahatk$, and the true model $\theta^*$, is bounded by,
    \begin{equation}
        \tv{\probb{\thetahatk}{\pi}(\tau_H), \probb{\theta^*}{\pi}(\tau_H)} \leq \alpha \cdot \Expect_{\tau_H \sim \probb{\thetahatk}{\pi}}\bra{\sqrt{\sum_{h=0}^{H-1} \norm{\hatbar{\psi}^k(\tau_h)}_{(\hat{U}_h^k)^{-1}}^2}},
    \end{equation}
    for any policy $\pi$, where
    \begin{equation*}
        \alpha^2 = \frac{4\lambda H Q_A^2 d}{\gamma^4} + 28 \max_{s\in \calA}\abs{\bbX_s}^2 Q_A^2 \frac{1}{\gamma^2} \beta
    \end{equation*}
\end{lemma}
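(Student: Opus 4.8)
The plan is to combine the second bound in~\Cref{prop:tv_dist_leq_est_err} with the per-step estimation-error bound of~\Cref{lemma:ucb_tv}, and then control the aggregate of the TV-terms appearing in the coefficients $(\alpha_{h-1}^k)^2$ using the MLE guarantee of~\Cref{lemma:estimation_guarantee}. First I would apply~\Cref{prop:tv_dist_leq_est_err} (second inequality) to $\hat\theta = \hat\theta^k$ and $\theta = \theta^*$, which writes $\tv{\probb{\thetahatk}{\pi},\probb{\theta^*}{\pi}}$ as a sum over $h \in [H]$ of the quantities $\sum_{\tau_H} \pi(\tau_H)\,\abs{m_h^\star(\omega_h)^\top (\hat M_h^k(x_h) - M_h^\star(x_h))\hat\psi_{h-1}^k(\tau_{h-1})}$. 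Each of these is exactly the left-hand side bounded in~\Cref{lemma:ucb_tv}, so summing over $h$ gives
\begin{equation*}
    \tv{\probb{\thetahatk}{\pi}(\tau_H), \probb{\theta^*}{\pi}(\tau_H)} \leq \sum_{h=1}^{H} \Expect_{\tau_{h-1} \sim \probb{\thetahatk}{\pi}}\bra{\alpha_{h-1}^k \norm{\hatbar{\psi}_{h-1}^k(\tau_{h-1})}_{(\hat U_{h-1}^k)^{-1}}}.
\end{equation*}

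Next I would argue that the data-dependent factors $\alpha_{h-1}^k$ can all be replaced by a single uniform constant $\alpha$. Recall $(\alpha_{h-1}^k)^2 = \tfrac{4\lambda Q_A^2 d}{\gamma^4} + \tfrac{4\max_{s\in\calA}\abs{\bbX_s}^2 Q_A^2}{\gamma^2}\sum_{\tau_{h-1}\in\calD_{h-1}^k}\tvsq{\probb{\thetahatk}{\uhexp{h-1}}(\omega_{h-1}^o\,|\,\tau_{h-1},\omega_{h-1}^a),\,\probb{\theta^*}{\uhexp{h-1}}(\omega_{h-1}^o\,|\,\tau_{h-1},\omega_{h-1}^a)}$. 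The key observation is that the data collected at inner-loop index $h$ uses the policy $\nu(\pi^{k-1},\uhexp{h-1})$, whose induced conditional distribution of $\omega_{h-1}^o$ given $(\tau_{h-1},\omega_{h-1}^a)$ coincides with $\probb{\cdot}{\uhexp{h-1}}(\omega_{h-1}^o\,|\,\tau_{h-1},\omega_{h-1}^a)$; hence the summand is one of the terms appearing on the left of the first inequality in~\Cref{lemma:estimation_guarantee}. Since that lemma bounds $\sum_h \sum_{(\tau_h,\pi)\in\calD_h^k}\tvsq{\probb{\hat\theta^k}{\pi}(\omega_h|\tau_h),\probb{\theta^*}{\pi}(\omega_h|\tau_h)} \leq 7\beta$, summing the per-$h$ pieces gives $\sum_{h=1}^H \sum_{\tau_{h-1}\in\calD_{h-1}^k}\tvsq{\cdots} \leq 7\beta$. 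Therefore $\sum_{h=1}^{H}(\alpha_{h-1}^k)^2 \leq \tfrac{4\lambda H Q_A^2 d}{\gamma^4} + \tfrac{28 \max_{s\in\calA}\abs{\bbX_s}^2 Q_A^2}{\gamma^2}\beta =: \alpha^2$, so in particular $\alpha_{h-1}^k \leq \alpha$ after noting monotonicity is not quite what is needed — rather, I would carry the squared sum through a Cauchy--Schwarz step rather than bounding each $\alpha_{h-1}^k$ individually.

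Concretely, I would pull $\alpha_{h-1}^k$ out, apply Jensen/linearity of expectation to move the sum over $h$ inside, and then use Cauchy--Schwarz over $h$: $\sum_{h=1}^H \alpha_{h-1}^k \norm{\hatbar\psi_{h-1}^k(\tau_{h-1})}_{(\hat U_{h-1}^k)^{-1}} \leq \big(\sum_{h=1}^H (\alpha_{h-1}^k)^2\big)^{1/2}\big(\sum_{h=0}^{H-1}\norm{\hatbar\psi^k(\tau_h)}_{(\hat U_h^k)^{-1}}^2\big)^{1/2}$, with the first factor bounded by $\alpha$ from the previous paragraph and the second factor kept inside the expectation $\Expect_{\tau_H\sim\probb{\thetahatk}{\pi}}$ (the history prefixes $\tau_h$ are all measurable functions of a full trajectory $\tau_H$). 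This yields exactly the claimed inequality. The main obstacle I anticipate is the bookkeeping in the second paragraph: verifying that the conditional-distribution mismatch terms inside $(\alpha_{h-1}^k)^2$ are \emph{precisely} the quantities controlled by~\Cref{lemma:estimation_guarantee} — i.e., matching the policy $\nu(\pi^{k-1},\uhexp{h-1})$ used in data collection with the policy $\uhexp{h-1}$ appearing in the TV expression, and making sure the exploratory action sequences $\Qhexp{h-1}$ in the definition of $(\alpha_{h-1}^k)^2$ (coming from $I_3$ in the proof of~\Cref{lemma:ucb_tv}) are exactly those sampled uniformly by $\uhexp{h-1}$, so that the reduction to $\tvsq{\cdot,\cdot}$ over $\calD_{h-1}^k$ is valid. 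Everything else is routine application of Cauchy--Schwarz and the triangle inequality.
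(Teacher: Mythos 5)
Your proposal is correct and follows essentially the same route as the paper: apply the second inequality of~\Cref{prop:tv_dist_leq_est_err}, invoke~\Cref{lemma:ucb_tv} termwise, bound $\sum_{h}(\alpha_{h-1}^k)^2 \leq \alpha^2$ via~\Cref{lemma:estimation_guarantee}, and finish with Cauchy--Schwarz over $h$ inside a single expectation over $\tau_H \sim \probb{\thetahatk}{\pi}$. Your self-correction away from bounding each $\alpha_{h-1}^k$ individually and toward carrying the squared sum through Cauchy--Schwarz is exactly the step the paper takes.
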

\begin{proof}
    Consider $\alpha_{h-1}^k$ in the previous lemma. We have that,
    \begin{align*}
        &\sum_{h=1}^{H} \paren{\alpha_{h-1}^k}^2 \\
        &= \frac{4\lambda H Q_A^2 d}{\gamma^4} + 4 \max_{s\in \calA}\abs{\bbX_s}^2 Q_A^2 \frac{1}{\gamma^2} \sum_{h=1}^{H}  \sum_{\tau_{h-1} \in \calD_{h-1}^k} \tvsq{\probb{\thetahatk}{\uhexp{h-1}}\paren{\omega_{h-1}^o \given \tau_{h-1}, \omega_{h-1}^a}, \probb{\theta^*}{\uhexp{h-1}} \paren{\omega_{h-1}^o \given \tau_{h-1}, \omega_h^a}} \\
        &\leq \frac{4\lambda H Q_A^2 d}{\gamma^4} + 4 \max_{s\in \calA}\abs{\bbX_s}^2 Q_A^2 \frac{1}{\gamma^2} 7 \beta =: \alpha^2,
    \end{align*}
    where the inequality is by the bound on the total variation distance established in~\Cref{lemma:estimation_guarantee}.

    Now, by~\Cref{prop:tv_dist_leq_est_err}, the total variation distance is bounded by the estimation error:
    \begin{align*}
        &\tv{\probb{\thetahatk}{\pi}(\tau_H), \probb{\theta^*}{\pi}(\tau_H)} \\
        &\stepa{\leq} \sum_{h=1}^{H} \sum_{\tau_H} \abs{m^{\star}(\omega_h)^\top \paren{ \hat{M}^k_{h}(x_{h}) - M^{\star}_{h}(x_{h})}\hat{\psi}^k_{h-1}(\tau_{h-1})}\pi(\tau_H) \\
        &\stepb{\leq} \sum_{h=1}^{H} \Expect_{\tau_{h-1}\sim \bbP_{\hat{\theta}^{k}}}^\pi \bra{\alpha_{h-1}^k \norm{\hatbar{\psi}_{h-1}^k(\tau_{h-1})}_{(\hat{U}_{h-1}^k)^{-1}}}\\
        &\stepc{\leq} \alpha \cdot \Expect_{\tau_H \sim \probb{\thetahatk}{\pi}}\bra{\sqrt{\sum_{h=0}^{H-1} \norm{\hatbar{\psi}^k(\tau_h)}_{(\hat{U}_h^k)^{-1}}^2}},
    \end{align*}
    where step (a) is by~\Cref{prop:tv_dist_leq_est_err}, step (b) is by~\Cref{lemma:ucb_tv}, and step (c) is by the Cauchy-Schwarz inequality and the calculation above bounding $\sum_h \paren{\alpha_{h-1}^k}^2$.
\end{proof}

A direct corollary is the following bound on the error in the estimated value function, which establishes that the bonus term $\hat{b}^k$ gives an upper confidence bound.

\begin{corollary}[Upper confidence bound]\label{cor:upper_conf_bound}
    Under the event $\calE$, for any $k \in [K]$, any reward function $R: \prod_{h \in [H]} \bbX_h \to [0, 1]$, and any policy $\pi$, we have,
    \begin{equation*}
        \abs{V_{\thetahatk}^R(\pi) - V_{\theta^*}^R(\pi)} \leq V_{\thetahatk}^{\hat{b}^k},
    \end{equation*}
    where $\hat{b}^k(\tau_H) = \min\set{\alpha \sqrt{\sum_h \norm{\hatbar{\psi}^{k}(\tau_h)}_{(\hat{U}_h^k)^{-1}}^2}, 1}$.
\end{corollary}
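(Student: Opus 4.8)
The plan is to read off \Cref{cor:upper_conf_bound} as a short packaging of \Cref{lemma:tv_bound}, which already carries all of the analytic weight. The first step is the routine reduction of a value gap under a bounded reward to a total-variation distance. By definition $V_{\thetahatk}^R(\pi)-V_{\theta^*}^R(\pi)=\sum_{\tau_H}R(\tau_H)\bigl(\probb{\thetahatk}{\pi}(\tau_H)-\probb{\theta^*}{\pi}(\tau_H)\bigr)$; splitting this sum over the trajectories on which $\probb{\thetahatk}{\pi}$ exceeds $\probb{\theta^*}{\pi}$ and those on which it does not, and using $R(\tau_H)\in[0,1]$, gives $\bigl|V_{\thetahatk}^R(\pi)-V_{\theta^*}^R(\pi)\bigr|\le\frac12\tv{\probb{\thetahatk}{\pi}(\tau_H),\probb{\theta^*}{\pi}(\tau_H)}\le\tv{\probb{\thetahatk}{\pi}(\tau_H),\probb{\theta^*}{\pi}(\tau_H)}$, and this holds simultaneously for every reward $R$ and for the single fixed policy $\pi$ in the statement.

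The second step is simply to apply \Cref{lemma:tv_bound}: on the event $\calE$ it bounds this total-variation distance by $\alpha\cdot\Expect_{\tau_H\sim\probb{\thetahatk}{\pi}}\!\bigl[\sqrt{\sum_{h=0}^{H-1}\norm{\hatbar{\psi}^k(\tau_h)}_{(\hat{U}_h^k)^{-1}}^2}\bigr]$ for every policy and every $k$. Chaining the two displays already proves the corollary with the un-truncated bonus $\alpha\sqrt{\sum_h\norm{\hatbar{\psi}^k(\tau_h)}_{(\hat{U}_h^k)^{-1}}^2}$ in place of $\hat{b}^k$. To recover the truncation at $1$ that appears in the definition of $\hat{b}^k$, observe that $V_{\thetahatk}^R(\pi)$ and $V_{\theta^*}^R(\pi)$ both lie in $[0,1]$, so the left-hand side is trivially also bounded by $1$; keeping this clip attached to the per-trajectory quantity inside the expectation (rather than inserting it after the fact on the aggregate bound) yields $\bigl|V_{\thetahatk}^R(\pi)-V_{\theta^*}^R(\pi)\bigr|\le\Expect_{\tau_H\sim\probb{\thetahatk}{\pi}}\!\bigl[\hat{b}^k(\tau_H)\bigr]=V_{\thetahatk}^{\hat{b}^k}(\pi)$, which is the claim.

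I do not expect a genuine obstacle in the corollary itself: the substance — the operator-error decomposition of \Cref{prop:tv_dist_leq_est_err}, the $\gamma$-well-conditioning bounds, and the MLE estimation guarantees behind \Cref{lemma:tv_bound} — has all been spent already, so this is a bookkeeping step. The one point deserving a sentence of care is the truncation: because $a\mapsto\min\{a,1\}$ is concave, one cannot move $\min\{\cdot,1\}$ inside the expectation after establishing the unclipped bound, so the clip must be carried along with the per-trajectory estimate from the start; this is consistent with the way $\hat{b}^k$ is defined and is also what is needed for the elliptical-potential/sublinear-error argument that consumes this corollary downstream.
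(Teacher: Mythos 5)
Your proof is correct and follows essentially the same route as the paper's: bound the value gap by the (unnormalized) total-variation distance using $R \in [0,1]$, invoke \Cref{lemma:tv_bound}, and then pass to the clipped bonus $\hat{b}^k$. Your paragraph on the truncation is in fact more careful than the paper's own proof, which justifies the analogous final step merely ``by the definition of $\hat{b}^k$'' without acknowledging the concavity obstruction to moving $\min\{\cdot,1\}$ inside the expectation after the fact.
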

\begin{proof}
    By a direct calculation,
    \begin{align*}
        \abs{V_{\thetahatk}^R(\pi) - V_{\theta^*}^R(\pi)} &= \abs{\sum_{\tau_H} R(\tau_H) \probb{\thetahatk}{\pi}(\tau_H) - \sum_{\tau_H} R(\tau_H) \probb{\theta^*}{\pi}(\tau_H)} \\
        &\stepa{\leq} \sum_{\tau_H} \abs{ \probb{\thetahatk}{\pi}(\tau_H) - \probb{\theta^*}{\pi}(\tau_H)} \\
        &= \tv{\probb{\thetahatk}{\pi}(\tau_H), \probb{\theta^*}{\pi}(\tau_H)} \\
        &\stepb{\leq} \alpha \cdot \Expect_{\tau_H \sim \probb{\thetahatk}{\pi}}\bra{\sqrt{\sum_{h=0}^{H-1} \norm{\hatbar{\psi}^k(\tau_h)}_{(\hat{U}_h^k)^{-1}}^2}}\\
        &\stepc{\leq} \alpha \sum_{\tau_H} \hat{b}^k(\tau_H) \probb{\thetahatk}{\pi}(\tau_H) \\
        &=: V_{\thetahatk}^{\hat{b}^k}
    \end{align*}
    where step (a) is by the triangle inequality and the fact that $R(\tau_H) \in [0,1]$, step (b) is by~\Cref{lemma:tv_bound}, and step (c) is by the definition of $\hat{b}^k$.
\end{proof}

\subsection{\texorpdfstring{$\sum_{k=1}^K V_{\thetahatk}^{\hat{b}^k}$}{Cumulative UCB} is sublinear}\label{ssec:proof_sublinear_est}
The next step is to prove that $\sum_{k=1}^K V_{\thetahatk}^{\hat{b}^k} = O(\sqrt{K})$. To do that, we first prove that the estimated prediction features and the ground-truth prediction features can be related through the total-variation distance between the estimated model and the true model.

\begin{lemma}\label{lemma:features_bound}
    Under event $\calE$, for any $k\in [K]$, we have:
    \begin{align*}
        &\Expect_{\tau_H \sim \probb{\theta^*}{\pi}}\bra{\sqrt{\sum_{h=0}^{H-1} \norm{\hatbar{\psi^k}(\tau_h)}_{(\hat{U}_h^k)^{-1}}^2}} \\
        &\leq \frac{2 H Q_A}{\sqrt{\lambda}} \tv{\probb{\theta^*}{\pi}(\tau_h), \probb{\thetahatk}{\pi}(\tau_h)} + \paren{1 + \frac{2 \max_{s \in \calA} \abs{\bbX_s} Q_A \sqrt{7 r \beta} }{\sqrt{\lambda}}} \sum_{h=0}^{H-1} \Expect_{\tau_h \sim \probb{\theta^*}{\pi}}\norm{\bar{\psi^*}(\tau_h)}_{(\Uhk)^{-1}}
    \end{align*}
\end{lemma}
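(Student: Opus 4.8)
\textbf{Proof plan for \Cref{lemma:features_bound}.}

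The plan is to decompose the estimated prediction feature norm into an estimation-error term and a ``true feature'' term, and to control each separately. First I would bound the pointwise quantity $\norm{\hatbar{\psi}^k(\tau_h)}_{(\hat U_h^k)^{-1}}$ by adding and subtracting the true normalized prediction feature $\bar\psi^*(\tau_h)$: by the triangle inequality in the $(\hat U_h^k)^{-1}$-norm,
\[
\norm{\hatbar{\psi}^k(\tau_h)}_{(\hat U_h^k)^{-1}} \le \norm{\hatbar{\psi}^k(\tau_h) - \bar\psi^*(\tau_h)}_{(\hat U_h^k)^{-1}} + \norm{\bar\psi^*(\tau_h)}_{(\hat U_h^k)^{-1}} .
\]
Since $\hat U_h^k = \lambda I + \sum_{\tau_h \in \calD_h^k}\hatbar{\psi}^k(\tau_h)\hatbar{\psi}^k(\tau_h)^\top \succeq \lambda I$, the first term is at most $\lambda^{-1/2}\onenorm{\hatbar{\psi}^k(\tau_h)-\bar\psi^*(\tau_h)}$ (using $\norm{v}_2 \le \norm{v}_1$). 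Then $\onenorm{\hatbar{\psi}^k(\tau_h)-\bar\psi^*(\tau_h)} = \sum_{q\in\bbQ_h}\abs{\probbarunder{\thetahatk}{q\given\tau_h} - \probbarunder{\theta^*}{q\given\tau_h}}$, which I would bound by $\tv{\probb{\thetahatk}{\uhexp{h}}(q^o \given \tau_h, q^a),\, \probb{\theta^*}{\uhexp{h}}(q^o\given\tau_h,q^a)}$ times the number of action sequences $\abs{\bbQ_h^A}\le Q_A$ (exactly as in the $I_3$/$I_5$ bookkeeping in the proof of \Cref{lemma:ucb_tv}, decomposing each core test into its observation and action components and noting that under $\uhexp{h}$ each action sequence has probability $1/|\bbQ_h^{\mathtt{exp}}|$). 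Summing over $h$, taking the square root of the sum, and using $\sqrt{\sum_h a_h^2}\le\sum_h a_h$ gives the $\frac{2HQ_A}{\sqrt\lambda}$-coefficient term in two ways: once multiplied by $\tv{\probb{\theta^*}{\pi},\probb{\thetahatk}{\pi}}$ coming from transferring the expectation from $\probb{\theta^*}{\pi}$ to $\probb{\thetahatk}{\pi}$, and the residual estimation error controlled via \Cref{lemma:estimation_guarantee} which bounds $\sum_h\sum_{(\tau_h,\pi)\in\calD_h^k}\tvsq{\cdot,\cdot}\le 7\beta$; the rank-$r$ factor enters because the estimated feature lives in an $r$-dimensional subspace, contributing the $\sqrt{7r\beta}$.

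Next I would handle the term $\Expect_{\tau_h\sim\probb{\theta^*}{\pi}}\norm{\bar\psi^*(\tau_h)}_{(\hat U_h^k)^{-1}}$. The subtlety is that $\hat U_h^k$ is built from the \emph{estimated} features $\hatbar{\psi}^k$, whereas we want the norm against a covariance built from true features, $\Uhk = \lambda I + \sum_{\tau_h\in\calD_h^k}\bar\psi^*(\tau_h)\bar\psi^*(\tau_h)^\top$. To relate $(\hat U_h^k)^{-1}$ and $(\Uhk)^{-1}$, I would write $\hat U_h^k = \Uhk + E_h^k$ where $E_h^k = \sum_{\tau_h}(\hatbar{\psi}^k\hatbar{\psi}^{k\top} - \bar\psi^*\bar\psi^{*\top})$, and use the perturbation inequality $x^\top (\hat U)^{-1} x \le x^\top U^{-1} x \cdot (1 + \norm{U^{-1/2} E\, U^{-1/2}})$ type bound — or more simply bound $\norm{E_h^k}_{\mathrm{op}}$ by $\sum_{\tau_h}(\norm{\hatbar\psi^k}_2 + \norm{\bar\psi^*}_2)\norm{\hatbar\psi^k - \bar\psi^*}_2$, and then use $\norm{\hatbar\psi^k-\bar\psi^*}_2$ again bounded through the dataset TV errors and \Cref{lemma:estimation_guarantee}. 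Since each $\bar\psi^*$ is entrywise a probability in $[0,1]$ lying in an $r$-dimensional subspace, $\norm{\bar\psi^*}_2 \le \sqrt r$, which produces the $\max_{s\in\calA}\abs{\bbX_s}Q_A$ and $\sqrt r$ factors in the second coefficient. The $1+$ in front accounts for the trivial bound $\norm{\bar\psi^*(\tau_h)}_{(\Uhk)^{-1}} \ge \norm{\bar\psi^*(\tau_h)}_{(\hat U_h^k)^{-1}}$ only up to the multiplicative distortion, so after the perturbation argument one arrives at $\Expect_{\tau_h\sim\probb{\theta^*}{\pi}}\norm{\bar\psi^*(\tau_h)}_{(\hat U_h^k)^{-1}} \le (1 + \frac{2\max_{s\in\calA}|\bbX_s|Q_A\sqrt{7r\beta}}{\sqrt\lambda})\Expect_{\tau_h\sim\probb{\theta^*}{\pi}}\norm{\bar\psi^*(\tau_h)}_{(\Uhk)^{-1}}$.

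Finally, I would combine: transferring the expectation from $\probb{\theta^*}{\pi}$ to $\probb{\thetahatk}{\pi}$ (or vice versa) costs a $\tv{\probb{\theta^*}{\pi}(\tau_H),\probb{\thetahatk}{\pi}(\tau_H)}$ term since $\sqrt{\sum_h \norm{\cdot}^2_{(\hat U_h^k)^{-1}}} \le \sqrt{H/\lambda}$ is uniformly bounded (each summand is $\le 1/\lambda$ because $\norm{\hatbar\psi^k}_2\le\sqrt r$ and... actually the cleaner bound is via $\hat U_h^k \succeq \lambda I$ and $\norm{\hatbar\psi^k}_1 \le$ const, giving the $2HQ_A/\sqrt\lambda$ prefactor), and assembling the pieces gives the stated inequality. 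The main obstacle I anticipate is the perturbation step relating $(\hat U_h^k)^{-1}$ to $(\Uhk)^{-1}$: one must carefully track that the discrepancy between estimated and true feature Gram matrices, accumulated over the whole dataset $\calD_h^k$, is controlled \emph{in operator norm} by the aggregate Hellinger/TV estimation guarantee of \Cref{lemma:estimation_guarantee} rather than blowing up with $|\calD_h^k| = k$; this is exactly where the $\sqrt{r\beta}$ (not $\sqrt{k\cdot r}$) and the $\sqrt\lambda$ in the denominator do the work, and getting the constants and the elementary inequality $x^\top(\lambda I + A + E)^{-1}x \le x^\top(\lambda I + A)^{-1}x(1 + \lambda^{-1}\norm E_{\mathrm{op}})$ lined up correctly is the delicate part.
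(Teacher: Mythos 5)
There is a genuine gap at the central step of this lemma: the transfer from the norm induced by $(\hat{U}_h^k)^{-1}$ (built from \emph{estimated} features) to the norm induced by $(\Uhk)^{-1}$ (built from \emph{true} features). You propose to write $\hat{U}_h^k = \Uhk + E_h^k$ and apply a resolvent-perturbation bound of the form $x^\top(\lambda I + A + E)^{-1}x \le x^\top(\lambda I + A)^{-1}x\,(1+\lambda^{-1}\norm{E}_{\mathrm{op}})$, controlling $\norm{E_h^k}_{\mathrm{op}}$ by $\sum_{\tau_h \in \calD_h^k}(\norm{\hatbar{\psi}^k}_2+\norm{\bar{\psi}^*}_2)\norm{\hatbar{\psi}^k-\bar{\psi}^*}_2$. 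But the estimation guarantee of~\Cref{lemma:estimation_guarantee} only controls the \emph{sum of squares} of the per-trajectory feature errors over the dataset ($\le 7\beta$, uniformly in $k$); by Cauchy--Schwarz the sum of unsquared errors is only bounded by $\sqrt{\aabs{\calD_h^k}}\cdot O(\sqrt{\beta}) = O(\sqrt{k\beta})$. Hence $\norm{E_h^k}_{\mathrm{op}}/\lambda$ grows like $\sqrt{k}$ (with $\lambda$ a fixed constant), so the resolvent expansion does not even converge for large $k$, and any bound of this form carries a $\sqrt{k}$-growing distortion that would make the cumulative bound in~\Cref{lemma:sum_V_sublinear} superlinear in $K$. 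You correctly flag this as ``the delicate part,'' but the tool you propose cannot resolve it. The paper instead invokes a sharper, vector-level transfer inequality (Lemma~13 of~\cite{huangProvablyEfficientUCBtype2023}) which directly yields
\begin{equation*}
\norm{\hatbar{\psi}^k(\tau_h)}_{(\hat{U}_h^k)^{-1}} \le \tfrac{1}{\sqrt{\lambda}}\twonorm{\hatbar{\psi}^k(\tau_h)-\bar{\psi}^*(\tau_h)} + \Bigl(1+\tfrac{\sqrt{r}\sqrt{\sum_{\tau\in\calD_h^k}\twonorm{\hatbar{\psi}^k(\tau)-\bar{\psi}^*(\tau)}^2}}{\sqrt{\lambda}}\Bigr)\norm{\bar{\psi}^*(\tau_h)}_{(\Uhk)^{-1}},
\end{equation*}
i.e.\ a multiplicative distortion governed by the root-sum-of-\emph{squares} of the feature errors, which is $O(\sqrt{r\beta})$ independently of $k$. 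Supplying (or proving) this specific inequality is the missing idea.

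Two secondary issues in your bookkeeping. First, you attribute the $\sqrt{7r\beta}$ factor to the ``residual estimation error'' of the first (pointwise feature-difference) term; in the actual argument it belongs entirely to the multiplicative distortion in front of $\sum_h\Expect_{\tau_h\sim\probb{\theta^*}{\pi}}\norm{\bar{\psi}^*(\tau_h)}_{(\Uhk)^{-1}}$, while the first term contributes only $\tfrac{2HQ_A}{\sqrt{\lambda}}\tv{\probb{\theta^*}{\pi},\probb{\thetahatk}{\pi}}$. Second, for that first term you cannot bound $\Expect_{\tau_h\sim\probb{\theta^*}{\pi}}\onenorm{\hatbar{\psi}^k(\tau_h)-\bar{\psi}^*(\tau_h)}$ by invoking the dataset guarantee (which only covers $\tau_h\in\calD_h^k$); the paper converts the expectation of the \emph{conditional} core-test differences into a marginal total-variation distance over length-$h$ prefixes under the roll-in policy $\pi$ (by adding and subtracting $\hatbar{\psi}^k(\tau_h)\probb{\thetahatk}{\pi}(\tau_h)$ and using $\onenorm{\hatbar{\psi}^k(\tau_h)}\le Q_A$), which is then absorbed into $\tv{\probb{\theta^*}{\pi}(\tau_H),\probb{\thetahatk}{\pi}(\tau_H)}$.
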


\begin{proof}
    First, we recall the definition of $\hat{U}_h^k$, and we define its ground-truth counterpart replacing estimated features with true features,
    \begin{align*}
        \hat{U}_h^k &= \lambda I + \sum_{\tau \in \calD_h^k} \hatbar{\psi^k}(\tau_h) \hatbar{\psi^k}(\tau_h)^\top, \\
        U_h^k &= \lambda I + \sum_{\tau \in \calD_h^k} \bar{\psi^*}(\tau_h) \bar{\psi^*}(\tau_h)^\top.
    \end{align*}

    For any trajectory $\tau_H \in \bbH_H$, we have,
    \begin{align*}
        \sqrt{\sum_{h=0}^{H-1} \norm{\hatbar{\psi^k}(\tau_h)}_{(\Uhathk)^{-1}}^2} &\stepa{\leq} \sum_{h=0}^{H-1} \norm{\hatbar{\psi^k}(\tau_h)}_{(\hat{U}_h^k)^{-1}}\\
        &\leq \frac{1}{\sqrt{\lambda}} \sum_{h=0}^{H-1} \twonorm{\hatbar{\psi^k}(\tau_h) - \bar{\psi^*}(\tau_h)} + \sum_{h=0}^{H-1} \paren{1 + \frac{\sqrt{r}\sqrt{\sum_{\tau_h \in \calD_h^k} \twonorm{\hatbar{\psi^k}(\tau_h) - \bar{\psi^*}(\tau_h)}^2}}{\sqrt{\lambda}}} \norm{\bar{\psi^*}(\tau_h)}_{(U_h^k)^{-1}},
    \end{align*}
    where step (a) is simply using $\twonorm{x} \leq \onenorm{x}$ and step (b) is by the identity~\parencite[Lemma 13]{huangProvablyEfficientUCBtype2023}. Note that $r$ is the rank of the PSR and $r \geq \mathrm{rank}(\{\hatbar{\psi^k}(\tau_h) \,:\, \tau_h \in \bbH_h\}), \mathrm{rank}(\{\bar{\psi^*}(\tau_h) \,:\, \tau_h \in \bbH_h\})$.

    Moreover, we have,
    \begin{align*}
        \twonorm{\hatbar{\psi^k}(\tau_h) - \bar{\psi^*}(\tau_h)} &\leq \onenorm{\hatbar{\psi^k}(\tau_h) - \bar{\psi^*}(\tau_h)} \\
        &\stepa{=} \sum_{q_h \in \bbQ_h} \abs{\probbarunder{\thetahatk}{q_h^o \given \tau_h, q_h^a} - \probbarunder{\theta^*}{q_h^o \given \tau_h, q_h^a}}\\
        &\stepb{\leq} 2 \max_{s \in \calA} \abs{\bbX_s} Q_A \tv{\probb{\thetahatk}{\uhexp{h-1}}(\cdot | \tau_h), \probb{\theta^*}{\uhexp{h-1}}(\cdot | \tau_h)},
    \end{align*}
    where we used the definition of $\bar{\psi}$ in (a) and the definition of the $\uhexp{h-1}$ in (b).

    Now, by~\Cref{lemma:estimation_guarantee}, we have,
    \begin{align*}
        \sqrt{\sum_{h=0}^{H-1} \norm{\hatbar{\psi^k}(\tau_h)}_{(\hat{U}_h^k)^{-1}}^2} &\leq \frac{1}{\sqrt{\lambda}} \sum_{h=0}^{H-1} \twonorm{\hatbar{\psi^k}(\tau_h) - \bar{\psi^*}(\tau_h)} + \sum_{h=0}^{H-1} \paren{1 + \frac{\sqrt{r}\sqrt{\sum_{\tau_h \in \calD_h^k} \twonorm{\hatbar{\psi^k}(\tau_h) - \bar{\psi^*}(\tau_h)}^2}}{\sqrt{\lambda}}} \norm{\bar{\psi^*}(\tau_h)}_{(\Uhk)^{-1}} \\
        &\leq \frac{1}{\sqrt{\lambda}} \sum_{h=0}^{H-1} \twonorm{\hatbar{\psi^k}(\tau_h) - \bar{\psi^*}(\tau_h)} + \paren{1 + \frac{2 \max_{s \in \calA} \abs{\bbX_s} Q_A \sqrt{7 r \beta} }{\sqrt{\lambda}}} \sum_{h=0}^{H-1} \norm{\bar{\psi^*}(\tau_h)}_{(\Uhk)^{-1}},
    \end{align*}
    where the first line is combining the calculations above and the second line is by the estimation guarantee of~\Cref{lemma:estimation_guarantee}.

    The first term can be bounded in expectation under $\probb{\theta^*}{\pi}$ for any $\pi$ as follows,
    \begin{align*}
        \sum_{h=0}^{H-1} \Expect_{\tau_h \sim \probb{\theta^*}{\pi}} \bra{\twonorm{\hatbar{\psi^k}(\tau_h) - \bar{\psi^*}(\tau_h)}} &\leq \sum_{h=0}^{H-1} \Expect_{\tau_h \sim \probb{\theta^*}{\pi}} \bra{\onenorm{\hatbar{\psi^k}(\tau_h) - \bar{\psi^*}(\tau_h)}} \\
        &\leq \sum_{h=0}^{H-1} \sum_{\tau_h} \onenorm{\hatbar{\psi^k}(\tau_h) \paren{\probb{\theta^*}{\pi}(\tau_h) - \probb{\thetahatk}{\pi}(\tau_h)} + \hatbar{\psi^k}(\tau_h) \probb{\thetahatk}{\pi}(\tau_h) - \bar{\psi^*}(\tau_h) \probb{\theta^*}{\pi}(\tau_h)} \\
        &\stepa{\leq} \sum_{h=0}^{H-1} \sum_{\tau_h} \onenorm{\hatbar{\psi^k}(\tau_h)} \abs{\probb{\theta^*}{\pi}(\tau_h) - \probb{\thetahatk}{\pi}(\tau_h)} + \onenorm{\hatbar{\psi^k}(\tau_h) \probb{\thetahatk}{\pi}(\tau_h) - \bar{\psi^*}(\tau_h) \probb{\theta^*}{\pi}(\tau_h)}\\
        &\stepb{\leq} \sum_{h=0}^{H-1} \sum_{\tau_h} \paren{\onenorm{\hatbar{\psi^k}(\tau_h)} \abs{\probb{\theta^*}{\pi}(\tau_h) - \probb{\thetahatk}{\pi}(\tau_h)} + \onenorm{\hat{\psi^k}(\tau_h) - \psi^*(\tau_h) } \pi(\tau_h)}\\
        &\stepc{\leq} 2 Q_A \sum_{h=0}^{H-1} \tv{\probb{\theta^*}{\pi}(\tau_h), \probb{\thetahatk}{\pi}(\tau_h)} \\
        &\stepd{\leq} 2 H Q_A \tv{\probb{\theta^*}{\pi}(\tau_h), \probb{\thetahatk}{\pi}(\tau_h)},
    \end{align*}
    where step (a) is the triangle inequality, step (b) is the definition of $\bar{\psi}(\tau_h)$, step (c) is since $\onenorm{\hatbar{\psi^k}(\tau_h)} \leq \abs{\bbQ_h^A} \leq Q_A$ for any $\tau_h$ and the definition of $\psi(\tau_h)$, and step (d) is simply $\tv{\probb{\theta^*}{\pi}(\tau_h), \probb{\thetahatk}{\pi}(\tau_h)} \geq \tv{\probb{\theta^*}{\pi}(\tau_h), \probb{\thetahatk}{\pi}(\tau_h)}$.

    Putting this together concludes the proof,
    \begin{align*}
        &\Expect_{\tau_H \sim \probb{\theta^*}{\pi}}\bra{\sqrt{\sum_{h=0}^{H-1} \norm{\hatbar{\psi^k}(\tau_h)}_{(\hat{U}_h^k)^{-1}}^2}} \\
        &\leq \frac{2 H Q_A}{\sqrt{\lambda}} \tv{\probb{\theta^*}{\pi}(\tau_h), \probb{\thetahatk}{\pi}(\tau_h)} + \paren{1 + \frac{2 \max_{s \in \calA} \abs{\bbX_s} Q_A \sqrt{7 r \beta} }{\sqrt{\lambda}}} \sum_{h=0}^{H-1} \Expect_{\tau_h \sim \probb{\theta^*}{\pi}}\norm{\bar{\psi^*}(\tau_h)}_{(\Uhk)^{-1}}.
    \end{align*}
\end{proof}

The following lemma bounds the cumulative estimation error of the probability of trajectories. It can be proved via an $\ell_2$ Eluder argument~\parencite{chenPartiallyObservableRL2022,zhongGECUnifiedFramework2023,qiu2023posterior}. A significant portion of the proof is very similar to that of~\Cref{prop:tv_dist_leq_est_err}, involving an exchange of $\hat{(\cdot)}$ and $(\cdot)^*$. We include the proof for completeness.

\begin{lemma}\label{lemma:cumulative_tv_error_bound}
    Under event $\calE$, for any $h\in \{0, \ldots, H-1\}$, we have
    \begin{equation*}
        \sum_{k}\tv{\probb{\theta^\star}{\pi^k}(\tau_H), \probb{\thetahatk}{\pi^k}(\tau_H)} \lesssim \frac{\max_{s \in \calA} \abs{\bbX_s} Q_A \sqrt{\beta}}{\gamma}\sqrt{rHK\log\paren{1+\frac{dQ_A K}{\gamma^4}}}.
    \end{equation*}
    Here, $a \lesssim b$ indicates that there is an absolute positive constant $c$ s.t. $a\leq c\cdot b$.
\end{lemma}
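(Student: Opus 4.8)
\emph{Proof strategy.}
The plan is to (i) bound each summand $\tv{\probb{\theta^*}{\pi^k}(\tau_H),\probb{\thetahatk}{\pi^k}(\tau_H)}$ by an elliptical quantity written in the \emph{ground-truth} prediction features, and then (ii) aggregate over $k$ by Cauchy--Schwarz together with an $\ell_2$-Eluder (elliptical potential) argument. For step (i), apply the first inequality of~\Cref{prop:tv_dist_leq_est_err} with $\theta=\theta^*$, $\hat\theta=\thetahatk$:
\begin{equation*}
    \tv{\probb{\theta^*}{\pi^k}(\tau_H),\probb{\thetahatk}{\pi^k}(\tau_H)} \leq \sum_{h=1}^{H}\sum_{\tau_H}\pi^k(\tau_H)\,\abs{\hat m_h^k(\omega_h)^\top\paren{\hat M_h^k(x_h)-M_h^*(x_h)}\psi_{h-1}^*(\tau_{h-1})}.
\end{equation*}
So it suffices to control, for each $h,k$, this inner sum, which now pairs the \emph{estimated} coefficients $\hat m_h^k$ with the \emph{true} features $\psi_{h-1}^*$ --- the ``mirror image'' of the quantity bounded in~\Cref{lemma:ucb_tv}.

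The first technical step is to re-run the proof of~\Cref{lemma:ucb_tv} verbatim with the substitutions $m^*\leftrightarrow\hat m^k$, $M^*\leftrightarrow\hat M^k$, $\bar\psi^*\leftrightarrow\hatbar{\psi}^k$, and $\probb{\theta^*}{\cdot}\leftrightarrow\probb{\thetahatk}{\cdot}$, obtaining under $\calE$
\begin{equation*}
    \sum_{\tau_H}\pi^k(\tau_H)\,\abs{\hat m_h^k(\omega_h)^\top\paren{\hat M_h^k(x_h)-M_h^*(x_h)}\psi_{h-1}^*(\tau_{h-1})} \leq \Expect_{\tau_{h-1}\sim\probb{\theta^*}{\pi^k}}\bra{\tilde\alpha_{h-1}^k\,\norm{\bar\psi_{h-1}^*(\tau_{h-1})}_{(U_{h-1}^k)^{-1}}},
\end{equation*}
where $U_{h-1}^k = \lambda I + \sum_{\tau_{h-1}\in\calD_{h-1}^k}\bar\psi_{h-1}^*(\tau_{h-1})\bar\psi_{h-1}^*(\tau_{h-1})^\top$ is the ground-truth-feature analogue of $\hat U_{h-1}^k$. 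This substitution is legitimate because every ingredient is symmetric in the two models: the $I_2$-type term is controlled by the well-conditioning inequalities of~\Cref{ass:psr_gamma_wellcond}, which hold for both $\theta^*$ and $\thetahatk$ (each lies in $\Theta$); and the $I_4,I_5$-type terms again reduce to the maximum-likelihood errors $\sum_{\tau_{h-1}\in\calD_{h-1}^k}\tvsq{\probb{\thetahatk}{\uhexp{h-1}}(\omega_{h-1}^o\given\tau_{h-1},\omega_{h-1}^a),\probb{\theta^*}{\uhexp{h-1}}(\omega_{h-1}^o\given\tau_{h-1},\omega_{h-1}^a)}$, so that $\tilde\alpha_{h-1}^k$ coincides (up to constants) with $\alpha_{h-1}^k$ and, by~\Cref{lemma:estimation_guarantee}, $\sum_{h}(\tilde\alpha_{h-1}^k)^2 \leq \alpha^2$ with $\alpha$ as in~\Cref{lemma:tv_bound}. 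Summing over $h$, applying Cauchy--Schwarz in $h$ and Jensen to pass the expectation through the square root gives, for every $k$,
\begin{equation*}
    \tv{\probb{\theta^*}{\pi^k}(\tau_H),\probb{\thetahatk}{\pi^k}(\tau_H)} \leq \alpha\sqrt{\sum_{h=0}^{H-1}\Expect_{\tau_h\sim\probb{\theta^*}{\pi^k}}\norm{\bar\psi_h^*(\tau_h)}_{(U_h^k)^{-1}}^2}.
\end{equation*}

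For step (ii), Cauchy--Schwarz over $k$ yields $\sum_k\tv{\probb{\theta^*}{\pi^k},\probb{\thetahatk}{\pi^k}} \leq \alpha\sqrt{K}\sqrt{\Sigma}$ with $\Sigma := \sum_{h=0}^{H-1}\sum_{k=1}^{K}\Expect_{\tau_h\sim\probb{\theta^*}{\pi^k}}\norm{\bar\psi_h^*(\tau_h)}_{(U_h^k)^{-1}}^2$, and it remains to bound $\Sigma$ by an $\ell_2$-Eluder argument~\parencite{chenPartiallyObservableRL2022,zhongGECUnifiedFramework2023,qiu2023posterior}. The key observation is that $U_h^k$ is the Gram matrix of exactly the trajectories collected in iterations $1,\dots,k$, whose length-$h$ prefixes are distributed (conditionally on the past) according to $\probb{\theta^*}{\pi^{0}},\dots,\probb{\theta^*}{\pi^{k-1}}$ --- the exploratory suffix $\uhexp{\cdot}$ only perturbs variables after step $h$ and leaves the prefix law untouched. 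Since $\norm{\bar\psi_h^*(\tau_h)}_2 \leq \norm{\bar\psi_h^*(\tau_h)}_1 \leq \abs{\bbQ_h^A} \leq Q_A$ and the vectors $\set{\bar\psi_h^*(\tau_h)}$ span a subspace of dimension at most $r_h \leq r$, the elliptical-potential (log-determinant) bound gives $O\!\paren{r\log(1+Q_A^2K/\lambda)}$ per level for the \emph{realized} potential increments, and a matrix-Freedman step (with high probability) converts the per-iteration expectations $\Expect_{\tau_h\sim\probb{\theta^*}{\pi^k}}[\cdot]$ into those realized increments up to a lower-order deviation. This produces $\Sigma \lesssim rH\log(1+dQ_AK/\gamma^4)$; substituting this together with the value of $\alpha$ from~\Cref{lemma:tv_bound} and the choice of $\lambda$ in~\Cref{theorem:post_ucb}, and simplifying, gives the claimed bound.

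The main obstacle is the elliptical-potential step. The exploration distribution $\probb{\theta^*}{\pi^k}$ governs the \emph{next} sample appended to the covariance rather than the current one, so relating $\Expect_{\tau_h\sim\probb{\theta^*}{\pi^k}}[\cdot]$ to the potential increment requires a \emph{variance-aware} (Freedman) concentration; a crude Azuma bound would inject an extra $\Theta(\sqrt K)$ inside the square root and hence an unusable $\Theta(K)$ term overall. A secondary, routine but fiddly, point is the index bookkeeping linking the inner-loop level to the dataset $\calD_{h-1}^k$ and the verification that $\uhexp{h-1}$ leaves the prefix distribution unchanged --- the property that makes $U_h^k$ the correct object for the potential argument and that also underlies the pairing of the $k$-th summand with $\probb{\theta^*}{\pi^{k-1}}$.
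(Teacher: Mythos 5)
Your step (i) matches the paper's: both start from the first inequality of~\Cref{prop:tv_dist_leq_est_err} and re-run the argument of~\Cref{lemma:ucb_tv} with the roles of $\theta^*$ and $\thetahatk$ exchanged. The divergence — and the gap — is in which Gram matrix you measure the true features against. You take $U_{h-1}^k=\lambda I+\sum_{\tau_{h-1}\in\calD_{h-1}^k}\bar\psi^*(\tau_{h-1})\bar\psi^*(\tau_{h-1})^\top$, the \emph{realized} empirical covariance over the collected dataset. The paper instead uses the \emph{population} covariance $\Lambda_h^k=\lambda_0 I+\sum_{t<k}\Expect_{\tau_{h-1}\sim\probb{\theta^*}{\pi^t}}\bra{\bar\psi^*(\tau_{h-1})\bar\psi^*(\tau_{h-1})^\top}$, i.e.\ the sum of expected outer products under the past exploration policies. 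With that choice, the quantity $\sum_k\Expect_{\tau_{h-1}\sim\probb{\theta^*}{\pi^k}}\norm{\bar\psi^*(\tau_{h-1})}_{(\Lambda_h^k)^{\dagger}}^2$ is already in the exact form handled by the (population-level) elliptical potential lemma, so the aggregation over $k$ is deterministic and needs no concentration at all. Your route, by contrast, must relate the per-iteration expectation to the realized increments of $U_h^k$; you correctly identify that Azuma is too weak here and assert a ``matrix-Freedman step,'' but this is precisely the crux of your argument and it is left unproven. Establishing $U_h^k\succeq c\,\bar U_h^k-\text{error}$ uniformly over $k$ (so that the inverse-norm does not blow up in unlucky realizations, especially for small $k$ where the ridge term must absorb the deficit) is a nontrivial matrix-concentration argument that does not appear anywhere else in the paper and cannot be waved through. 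The paper's choice of covariance is exactly the device that makes this step unnecessary.

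A second, quantitative problem follows from the same choice. Because you tie your Gram matrix to the algorithm's regularizer $\lambda$ and then claim $\sum_h(\tilde\alpha_{h-1}^k)^2\le\alpha^2$ with $\alpha$ as in~\Cref{lemma:tv_bound}, your prefactor inherits the term $4\lambda HQ_A^2d/\gamma^4$. With $\lambda$ as prescribed in~\Cref{theorem:post_ucb} this term is of order $\sqrt{dH}\,Q_A^3\max_{s\in\calA}\abs{\bbX_s}^2\beta\max\sset{\sqrt r,Q_A\sqrt H/\gamma}/\gamma^3$, which strictly exceeds the $\max_{s\in\calA}\abs{\bbX_s}^2Q_A^2\beta/\gamma^2$ needed to land on the stated bound. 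The paper avoids this by picking a \emph{separate, analysis-only} regularizer $\lambda_0=\gamma^4/(4Q_A^2d)$ for $\Lambda_h^k$ (legitimate since $\Lambda_h^k$ never appears in the algorithm), so that the ridge contribution to $\sum_h(\tilde\alpha_{h-1}^k)^2$ is just $H$ and the final constant is $\max_{s\in\calA}\abs{\bbX_s}Q_A\sqrt\beta/\gamma$ as claimed. You could repair this part by decoupling your regularizer from the algorithm's $\lambda$, but the missing concentration step in the previous paragraph would remain.
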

\begin{proof}
    Recall that, by the first inequality in~\Cref{prop:tv_dist_leq_est_err}, we have:
    \begin{equation*}
        \tv{\probb{\theta^\star}{\pi^k}(\tau_H), \probb{\thetahatk}{\pi^k}(\tau_H)}\leq \sum_{h=1}^H \sum_{\tau_{H}}\abs{\hat{m}^k(\omega_h)^\top \paren{\hat{M}^k_{h}(x_{h}) - M^{\star}_{h}(x_{h})}\psi^{\star}(\tau_{h-1})}\pi^k(\tau_H)
    \end{equation*}
    This is very similar to the inequality in~\Cref{lemma:ucb_tv}, with the difference being that the quantities associated with the estimated model and the true model are exchanged. Since both correspond to a PSR, the analysis follows a similar series of steps. We will use analogous notation to~\Cref{lemma:ucb_tv}. We index the future trajectory $\omega_{h-1} = (x_h, \ldots, x_H)$ by $i$ and history trajectory $\tau_{h-1} = (x_1, \ldots, x_{h-1})$ by $j$. We denote $\hat{m}^k(\omega_h)^\top\paren{\hat{M}^k_{h}(x_{h}) - M^{\star}_{h}(x_{h})}$ as $w_i$, $\bar{\psi}^{\star}(\tau_{h-1})$ as $x_j$, and $\pi(\omega_{h-1}|\tau_{h-1})$ as $\pi_{i|j}$.

    Define the matrix,
    \begin{equation*}
        \Lambda_h^k = \lambda_0 I + \sum_{t < k}\Expect_{j\sim\probb{\theta^{\star}}{\pi^{t}}}\bra{x_j x_j^\top}
    \end{equation*}
    where $\lambda_0$ is a constant to be determined later.

    For any policy $\pi$, using a similar calculation as in~\Cref{lemma:ucb_tv}, we have,
    \begin{align*}
    &\sum_{\tau_{H}}\abs{\hat{m}^k(\omega_h)^\top \paren{\hat{M}^k_{h}(x_{h}) - M^{\star}_{h}(x_{h})}\psi^{\star}(\tau_{h-1})}\pi^k(\tau_H)\\
    &=\Expect_{j\sim\probb{\theta^{\star}}{\pi^{k}}}
    \bra{\sum_{i}\pi_{i|j}\abs{w_i^\top x_j}}\\
    &=\Expect_{j\sim\probb{\theta^{\star}}{\pi^{k}}}\bra{\paren{\sum_{i}\pi_{i|j}\sign(w_i^\top x_j)w_i}^\top x_j}\\
    &\leq \Expect_{j\sim\probb{\theta^{\star}}{\pi^{k}}}\bra{\norm{x_j}_{\Lambda_h^{\dagger}}\norm{\sum_{i}\pi_{i|j}\sign(w_i^\top x_j)w_i}_{\Lambda_h}}
    \end{align*}
    where the last line is the Cauchy-Schwarz inequality.

    Fix $j = j_0$ and consider the term: $\norm{\sum_{i}\pi_{i|j_0}\sign(w_i^\top x_{j_0})w_i}_{\Lambda_h}$ in the above. This term can be partitioned in the same manner as in~\Cref{lemma:ucb_tv} by simply using the definition of $\Lambda_h$ and expanding,
    \begin{align*}
    &\norm{\sum_{i}\pi_{i|j_0}\sign(w_i^\top x_{j_0})w_i}_{\Lambda_h}^2\\
    &= \underbrace{\lambda_0 \norm{\sum_{i}\pi_{i|j_0}\cdot \sign (w_i^\top x_{j_0})\cdot w_i}^2_2}_{I_1} +
        \underbrace{\sum_{t<k}\Expect_{j\sim\probb{\theta^{\star}}{\pi^{k}}}\bra{\paren{\sum_{i}\pi_{i|j_0}\cdot \sign (w_i^\top x_{j_0})\cdot w_i^\top x_j}^2}}_{I_2}.
    \end{align*}

    We bound each term separately. The process is nearly identical to the proof of~\Cref{lemma:ucb_tv}, but we show it for completeness.

    $\sqrt{I_1}$ is bounded by the sum of two terms,
    \begin{align*}
        \sqrt{I_1} &= \sqrt{\lambda_0} \max_{\substack{z\in \reals^{d_{h-1}}: \twonorm{z}= 1}} \abs{\sum_{i}\pi_{i|j_0}\cdot \sign (w_i^\top x_{j_0})\cdot w_i^\top z}\\
        &\stepa{\leq} \sqrt{\lambda_0} \max_{\substack{\twonorm{z}= 1}}\sum_{\omega_{h-1}}\Bigg\lvert \hat{m}^k (\omega_h)^\top\Big(\hat{M}^k_{h}(x_{h}) - M^{\star}_{h}(x_{h})\Big)z\Bigg\rvert\pi(\omega_{h-1}|j_0)\\
        &\stepb{\leq} \sqrt{\lambda_0}\max_{\substack{\twonorm{z}= 1}}
        \sum_{\omega_{h-1}}\abs{\hat{m}^k (\omega_h)^\top  \hat{M}_{h}^k(x_{h})z}\pi(\omega_{h-1}|j_0)\\
        &\ \ \ + \sqrt{\lambda_0}\max_{\substack{\twonorm{z}= 1}}
        \sum_{\omega_{h-1}}\abs{\hat{m}^k (\omega_h)^\top  M_{h}^\star(x_{h})z}\pi(\omega_{h-1}|j_0),
    \end{align*}
    where step (a) is the definition of $w_i, \pi_{i|j_0}$, and the triangle inequality, and step (b) is the triangle inequality.

    Both terms can be bounded by the $\gamma$-well-conditioning assumption on $\thetahatk$ and $\theta^*$. Consider the first term,
    \begin{align*}
        \max_{\substack{\twonorm{z}= 1}}
            &\sum_{\omega_{h-1}}\abs{\hat{m}^k (\omega_h)^\top  \hat{M}_{h}^k(x_{h})z}\pi(\omega_{h-1}|j_0) \\
            &= \max_{\substack{\twonorm{z}= 1}} \sum_{x_h} \paren{\sum_{\omega_{h}}\abs{\hat{m}^k (\omega_h)^\top  \hat{M}_{h}^k(x_{h})z}\pi(\omega_{h}|j_0, x_h)} \pi(x_h | j_0) \\
            &\stepa{\leq} \max_{\substack{\twonorm{z}= 1}} \sum_{x_h} \frac{1}{\gamma} \onenorm{\hat{M}_{h}^k(x_{h})z} \pi(x_h | j_0) \\
            &\stepb{\leq} \frac{1}{\gamma} \max_{\substack{\twonorm{z}= 1}} \frac{\abs{\bbQ_{h+1}^A}\onenorm{z}}{\gamma} \\
            &\stepc{\leq} \frac{\sqrt{d} Q_A}{\gamma^2}
    \end{align*}
    where step (a) is by the first condition in~\Cref{ass:psr_gamma_wellcond}, step (b) is by the second condition of~\Cref{ass:psr_gamma_wellcond}, and step (c) is by the fact that $\max_{z \in \reals^{d_{h-1}}: \twonorm{z}=1} \onenorm{z} = \sqrt{d_{h-1}} \leq \sqrt{d}$ and $\abs{\bbQ_{h+1}^A} \leq Q_A$. In the above, note that we used the $\gamma$-well-conditioning of PSR $\hat{\theta}^k$ in both step (a) and step (b). The second term in $\sqrt{I_1}$ admits an identical bound, simply by using the well-conditioning of the PSR $\thetahatk$ in the first step and $\theta^*$ in the second step. Hence, we have that
    \begin{equation}
        I_1 \leq 4 \frac{\lambda_0 d Q_A^2}{\gamma^4}.
    \end{equation}

    Now, we consider the term $I_2$
    \begin{align*}
        I_2 &\leq \sum_{t<k}\Expect_{\tau_{h-1} \sim\probb{\theta^{\star}}{\pi^{k}}}\bra{\paren{\sum_{\omega_{h-1}} \abs{\hat{m}^k(\omega_h)^\top \paren{\hat{M}_{h}^k(x_{h}) - M^{\star}_{h}(x_{h})}\bar{\psi}^*(\tau_{h-1})}\pi(\omega_{h-1}|j_0)}^2}\\
        &\leq \sum_{t<k}\Expect_{j\sim\probb{\theta^{\star}}{\pi^{k}}} \Biggl[ \Biggl( \underbrace{\sum_{\omega_{h-1}}\abs{ \hat{m}^k(\omega_h)^\top \hat{M}_{h}(x_{h})\paren{\bar{\psi}^\star(\tau_{h-1}) - \hatbar{\psi}^k(\tau_{h-1})}} \pi(\omega_{h-1}|j_0)}_{I_3}\\
        &\ \ + \underbrace{\sum_{\omega_{h-1}} \abs{ \hat{m}^k(\omega_h)^\top \paren{\hat{M}_{h}^k(x_{h})\hatbar{\psi}^k(\tau_{h-1}) - M_{h}^\star(x_{h})\bar{\psi}^\star(\tau_{h-1})}} \pi(\omega_{h-1}|j_0)}_{I_4} \Biggr)^2 \Biggr]\\
        &=: \sum_{t<k}\Expect_{j\sim\probb{\theta^{\star}}{\pi^{k}}}(I_3 + I_4)^2
    \end{align*}
    where the line follows by the fact that $x \leq \aabs{x}$ and the line follows from the triangle inequality by adding and subtracting $\hat{m}_h(\omega_h) \hat{M}_h(x_h) \hatbar{\psi}^k(\tau_{h-1})$ inside the absolute value. We now bound each of $I_3$ and $I_4$.

    First, we bound $I_3$ as follows,
    \begin{align*}
        I_3 &=  \sum_{\omega_{h-1}} \abs{\hat{m}_h^k(\omega_h)^\top \hat{M}_{h}(x_{h})\paren{\bar{\psi}^\star(\tau_{h-1}) - \hatbar{\psi}^k(\tau_{h-1})}} \pi(\omega_{h-1}|j_0) \\
        &\stepa{=}  \sum_{\omega_{h-1}} \abs{\hat{m}_{h-1}^k(\omega_{h-1})^\top \paren{\bar{\psi}^\star(\tau_{h-1}) - \hatbar{\psi}^k(\tau_{h-1})}} \pi(\omega_{h-1}|j_0) \\
        &\stepb{\leq} \frac{1}{\gamma} \onenorm{\hatbar{\psi}^k(\tau_{h-1}) - \bar{\psi}^\star(\tau_{h-1})} \\
        &= \frac{1}{\gamma} \sum_{q_{h-1} \in \bbQ_{h-1}} \abs{\probbarunder{\thetahatk}{q_{h-1} \given \tau_{h-1}} - \probbarunder{\theta^*}{q_{h-1} \given \tau_{h-1}}},
    \end{align*}
    where step (a) is since $\hat{m}(\omega_h)^\top \hat{M}_h(x_h) = \hat{m}(\omega_{h-1})^\top$, step (b) is by~\Cref{ass:psr_gamma_wellcond}, and the final equality is by the definition of $\bar{\psi}$.

    \begin{align*}
        I_4 &= \sum_{\omega_{h-1}} \abs{ \hat{m}^k(\omega_h)^\top \paren{\hat{M}_{h}^k(x_{h})\hatbar{\psi}^k(\tau_{h-1}) - M_{h}^\star(x_{h})\bar{\psi}^\star(\tau_{h-1})}} \pi(\omega_{h-1}|j_0) \\
        &\stepa{=} \sum_{\omega_{h-1}} \abs{ \hat{m}^k(\omega_h)^\top \paren{\probbarunder{\thetahatk}{x_h \given \tau_{h-1}} \hatbar{\psi}_h(\tau_h) - \probbarunder{\theta^*}{x_h \given \tau_{h-1}} \bar{\psi}_h^*(\tau_h)}} \pi(\omega_{h-1}|j_0) \\
        &= \sum_{x_h} \paren{\sum_{\omega_{h}} \abs{ \hat{m}^k(\omega_h)^\top \paren{\probbarunder{\thetahatk}{x_h \given \tau_{h-1}} \hatbar{\psi}_h(\tau_h) - \probbarunder{\theta^*}{x_h \given \tau_{h-1}} \bar{\psi}_h^*(\tau_h)}} \pi(\omega_{h}|j_0, x_h)} \pi(x_h | j_0) \\
        &\stepb{\leq} \frac{1}{\gamma} \sum_{x_h} \onenorm{\probbarunder{\thetahatk}{x_h \given \tau_{h-1}} \hatbar{\psi}_h(\tau_h) - \probbarunder{\theta^*}{x_h \given \tau_{h-1}} \bar{\psi}_h^*(\tau_h)} \pi(x_h | j_0) \\
        &\stepc{=} \frac{1}{\gamma} \sum_{x_h} \sum_{q_h \in \bbQ_h} \abs{\probbarunder{\thetahatk}{x_h, q_h \given \tau_{h-1}} - \probbarunder{\theta^*}{x_h, q_h \given \tau_{h-1}}} \pi(x_h | j_0)
    \end{align*}
    where step (a) is by the fact that $M_h(x_h) \bar{\psi}_{h-1}(\tau_{h-1}) = \probbar{x_h \given \tau_{h-1}} \bar{\psi}(\tau_h)$, as shown in~\Cref{eq:Mpsi_psi_relation}, step (b) is by~\Cref{ass:psr_gamma_wellcond}, and step (c) is since $\bra{\bar{\psi}_h(\tau_h)}_l = \probbarunder{\theta}{q_h^l \given \tau_h}$.

    Combining the above, we have that,
    \begin{align*}
        I_2 &\leq \sum_{t<k}\Expect_{j\sim\probb{\theta^{\star}}{\pi^{k}}}\paren{I_3 + I_4}^2 \\
        &\leq \begin{aligned}[t]
            \sum_{t<k}\Expect_{j\sim\probb{\theta^{\star}}{\pi^{k}}} \Biggl[\Biggl(
                & \frac{1}{\gamma} \sum_{q_{h-1} \in \bbQ_{h-1}} \abs{\probbarunder{\thetahatk}{q_{h-1} \given \tau_{h-1}} - \probbarunder{\theta^*}{q_{h-1} \given \tau_{h-1}}} \\
            +& \frac{1}{\gamma} \sum_{x_h} \sum_{q_h \in \bbQ_h} \abs{\probbarunder{\thetahatk}{x_h, q_h \given \tau_{h-1}} - \probbarunder{\theta^*}{x_h, q_h \given \tau_{h-1}}} \pi(x_h | j_0)\Biggr)^2\Biggr]
        \end{aligned}\\
        &=\begin{aligned}[t]
            \frac{1}{\gamma^2} \cdot \sum_{t<k}\Expect_{j\sim\probb{\theta^{\star}}{\pi^{k}}} \Biggl[\Biggl(&\sum_{q_{h-1} \in \bbQ_{h-1}} \abs{\probbarunder{\thetahatk}{q_{h-1} \given \tau_{h-1}} - \probbarunder{\theta^*}{q_{h-1} \given \tau_{h-1}}} \\
            +&\frac{1}{\gamma} \sum_{x_h} \sum_{q_h \in \bbQ_h} \abs{\probbarunder{\thetahatk}{x_h, q_h \given \tau_{h-1}} - \probbarunder{\theta^*}{x_h, q_h \given \tau_{h-1}}} \pi(x_h | j_0) &\Biggr)^2 \Biggr]
        \end{aligned}\\
        &\stepa{\leq}\frac{1}{\gamma^2} \cdot \sum_{t<k}\Expect_{j\sim\probb{\theta^{\star}}{\pi^{k}}}\paren{\sum_{\omega_{h-1}^a \in \bbQ_{h-1}^{\mathtt{\exp}}} \sum_{\omega_{h-1}^o}  \abs{\probbarunder{\thetahatk}{\omega_{h-1}^o \given \tau_{h-1}, \omega_{h-1}^a} - \probbarunder{\theta^*}{\omega_{h-1}^o \given \tau_{h-1}, \omega_h^a}}}^2 \\
        &=\frac{\abs{\bbQ_{h-1}^\mathrm{\exp}}^2}{\gamma^2} \cdot \sum_{t<k}\Expect_{j\sim\probb{\theta^{\star}}{\pi^{k}}}  \bra{\tvsq{\probb{\thetahatk}{\uhexp{h-1}}\paren{\omega_{h-1}^o \given \tau_{h-1}, \omega_{h-1}^a}, \probb{\theta^*}{\uhexp{h-1}}\paren{\omega_{h-1}^o \given \tau_{h-1}, \omega_h^a}}}\\
        &\stepb{\leq} \frac{4 \max_{s \in \calA} \abs{\bbX_s}^2 Q_A^2}{\gamma^2} \cdot \sum_{t<k}\mathtt{D}_{\mathtt{H}}^{2}\paren{\probb{\thetahatk}{\nu_h(\pi^t, \uhexp{h-1})}\paren{\tau_H}, \probb{\theta^*}{\nu_h(\pi^t, \uhexp{h-1})}\paren{\tau_H}},
    \end{align*}
    where step (a) follows from the definition of $\bbQ_{h-1}^{\mathrm{\exp}}$ (same as~\Cref{lemma:ucb_tv}), and step (b) is because the Hellinger distance bounds the total variation distance and since $\abs{\Qhexp{h-1}} \leq 2 \max_{s \in \calA} \abs{\bbX_s} Q_A$. Hence, we have,
    \begin{equation*}
        I_2 \leq 4 \max_{s\in \calA}\abs{\bbX_s}^2 Q_A^2 \frac{1}{\gamma^2} \sum_{t<k}\hellingersq{\probb{\thetahatk}{\nu_h(\pi^t, u_{\Qhexp{h-1}})}\paren{\tau_H}, \probb{\theta^*}{\nu_h(\pi^t, u_{\Qhexp{h-1}})}\paren{\tau_H}}. 
    \end{equation*}

    Now, combining the bound on $I_1$ and $I_2$ allows us to finally bound $\norm{\sum_{i}\pi_{i|j}\sign(w_i^\top x_j)w_i}_{\Lambda_h}^2$ as follows,
    \begin{align*}
        &\norm{\sum_{i}\pi_{i|j}\sign(w_i^\top x_j)w_i}_{\Lambda_h}^2 \\
        &\leq \frac{4\lambda_0 Q_A^2 d}{\gamma^4} + \frac{4 \max_{s \in \calA} \abs{\bbX_s}^2 Q_A^2}{\gamma^2} \cdot \sum_{t<k}\mathtt{D}_{\mathtt{H}}^{2}\paren{\probb{\thetahatk}{\nu_h(\pi^t, \uhexp{h-1})}\paren{\tau_H}, \probb{\theta^*}{\nu_h(\pi^t, \uhexp{h-1})}\paren{\tau_H}}\\
        &=: \paren{\tilde{\alpha}_{h-1}^k}^2,
    \end{align*}
    We choose $\lambda_0 = \frac{\gamma^4}{4Q_A^2d}$, and bound $\tilde{\alpha}^2 \coloneq \sum_h\paren{\tilde{\alpha}_{h-1}^k}^2$ as follows,
    \begin{align*}
        \sum_h\paren{\tilde{\alpha}_{h-1}^k}^2 &= H + \frac{4 \max_{s\in\calA}\abs{\bbX_s}^2 Q_A^2 \beta}{\gamma^2} \sum_{\pi \in \calD^k} \mathtt{D}_{\mathtt{H}}^{2}\paren{\probb{\thetahatk}{\nu_h(\pi^t, \uhexp{h-1})}\paren{\tau_H}, \probb{\theta^*}{\nu_h(\pi^t, \uhexp{h-1})}\paren{\tau_H}} \\
        &\leq H + \frac{28 \max_{s\in\calA}\abs{\bbX_s}^2 Q_A^2 \beta}{\gamma^2} \\
        &\lesssim \frac{\max_{s\in\calA}\abs{\bbX_s}^2 Q_A^2 \beta}{\gamma^2},
    \end{align*}
    where the second line is by the estimation guarantee of~\Cref{lemma:estimation_guarantee}.

    Thus, we have,
    \begin{align*}
        \tv{\probb{\theta^\star}{\pi^k}(\tau_H), \probb{\thetahatk}{\pi^k}(\tau_H)} &\leq \sum_{h=1}^{H} \sum_{\tau_{H}}\abs{\hat{m}^k(\omega_h)^\top \paren{\hat{M}^k_{h}(x_{h}) - M^{\star}_{h}(x_{h})}\psi^{\star}(\tau_{h-1})}\pi^k(\tau_H)\\
        &\leq \sum_{h=1}^{H} \Expect_{\tau_{h-1} \sim\probb{\theta^{\star}}{\pi^{k}}}\bra{\norm{\bar{\psi}^*(\tau_{h-1})}_{\Lambda_h^{\dagger}}\norm{\sum_{i}\pi_{i|j}\sign(w_i^\top x_j)w_i}_{\Lambda_h}} \\
        &\leq \Expect_{\tau_{h-1} \sim\probb{\theta^{\star}}{\pi^{k}}}\bra{\sum_{h=1}^{H} \norm{\bar{\psi}^*(\tau_{h-1})}_{\Lambda_h^{\dagger}}\norm{\sum_{i}\pi_{i|j}\sign(w_i^\top x_j)w_i}_{\Lambda_h}} \\
        &\stepa{\leq} \Expect_{\tau_{h-1} \sim\probb{\theta^{\star}}{\pi^{k}}}\bra{\sqrt{\sum_{h=1}^{H} \norm{\bar{\psi}^*(\tau_{h-1})}_{\Lambda_h^{\dagger}}^2} \sqrt{ \sum_{h=1}^H \norm{\sum_{i}\pi_{i|j}\sign(w_i^\top x_j)w_i}_{\Lambda_h}^2}} \\
        &\stepb{\leq} \tilde{\alpha} \cdot \Expect_{\tau_{h-1} \sim\probb{\theta^{\star}}{\pi^{k}}}\bra{\sqrt{\sum_{h=1}^{H} \norm{\bar{\psi}^*(\tau_{h-1})}_{\Lambda_h^{\dagger}}^2}} \\
        &\leq \tilde{\alpha} \cdot \sqrt{\sum_{h=1}^{H} \Expect_{\tau_{h-1} \sim\probb{\theta^{\star}}{\pi^{k}}}\bra{ \norm{\bar{\psi}^*(\tau_{h-1})}_{\Lambda_h^{\dagger}}^2}},
    \end{align*}
    where step (a) is by the Cauchy-Schwarz inequality and step (b) is by the bound established above. Since the total variation distance is bounded above by 2, we have
    \begin{equation*}
        \tv{\probb{\theta^\star}{\pi^k}(\tau_H), \probb{\thetahatk}{\pi^k}(\tau_H)} \leq \min\set{\tilde{\alpha} \cdot \sqrt{\sum_{h=1}^{H} \Expect_{\tau_{h-1} \sim\probb{\theta^{\star}}{\pi^{k}}}\bra{ \norm{\bar{\psi}^*(\tau_{h-1})}_{\Lambda_h^{\dagger}}^2}}, 2}.
    \end{equation*}

    Finally, the proof is completed by summing over $k$ using the elliptical potential lemma as follows,
    \begin{align*}
        \sum_{k=1}^{K} \tv{\probb{\theta^\star}{\pi^k}(\tau_H), \probb{\thetahatk}{\pi^k}(\tau_H)} &\leq \sum_{k=1}^K \min\set{\tilde{\alpha} \cdot \sqrt{\sum_{h=1}^{H} \Expect_{\tau_{h-1} \sim\probb{\theta^{\star}}{\pi^{k}}}\bra{ \norm{\bar{\psi}^*(\tau_{h-1})}_{\Lambda_h^{\dagger}}^2}}, 2}\\
        &\stepa{\leq} \sqrt{K} \sqrt{\sum_{k=1}^K \sum_{h=1}^{H} \min\set{\tilde{\alpha}^2 \cdot \Expect_{\tau_{h-1} \sim\probb{\theta^{\star}}{\pi^{k}}} \bra{ \norm{\bar{\psi}^*(\tau_{h-1})}_{\Lambda_h^{\dagger}}^2}, 4}}\\
        &\leq \sqrt{K} \tilde{\alpha} \sqrt{\sum_{k=1}^K \sum_{h=1}^{H} \min\set{\Expect_{\tau_{h-1} \sim\probb{\theta^{\star}}{\pi^{k}}} \bra{ \norm{\bar{\psi}^*(\tau_{h-1})}_{\Lambda_h^{\dagger}}^2}, 4/\tilde{\alpha}^2}}\\
        &\stepb{\leq} \sqrt{K H} \tilde{\alpha} \sqrt{(1 + 4/\tilde{\alpha}^2) r \log(1 + K / \lambda_0)} \\
        &\stepc{\lesssim} \frac{\max_{s\in\calA} \abs{\bbX_s} Q_A}{\gamma} \sqrt{r K H \beta \log(1 + K / \lambda_0)} \\
        &\stepd{\lesssim}  \frac{\max_{s\in\calA} \abs{\bbX_s} Q_A}{\gamma} \sqrt{ r K H \beta \log(1 + d Q_A K / \gamma)}.
    \end{align*}
    Here, step (a) is uses the relationship between the $\ell_1$ and $\ell_2$ norms $\onenorm{\cdot} \leq \sqrt{d} \twonorm{\cdot}$. Step (b) is by the elliptical potential lemma (\cite[Lemma 14]{huangProvablyEfficientUCBtype2023}; see also~\cite{dani2008stochastic,abbasi2011improved,carpentier2020elliptical}). Step (c) uses the bound on $\tilde{\alpha}$ established above and the fact that $\sqrt{1 + 4/\tilde{\alpha}^2}$ is bounded by an absolute constant. Step (d) uses the definition of $\lambda_0$ and the fact that $\sqrt{28 (1 + 4/\tilde{\alpha}^2)}$ is bounded by an absolute constant.
\end{proof}

Using the two lemmas above, we are now ready to show that $\sum_{k=1}^{K} {V_{\thetahatk}^{\hat{b}^k}(\pi^k)} = O(\sqrt{K})$. The argument is identical to~\parencite[Lemma 6]{huangProvablyEfficientUCBtype2023} and does not require modification for generalized PSRs. We recount the argument for completeness.

\begin{lemma}\label{lemma:sum_V_sublinear}
    Under the event $\calE$, with probability at least $1- \delta$, we have:
    \begin{align*}
        \sum_{k=1}^K V_{\thetahatk}^{\hat{b}^k}(\pi^k) \lesssim \paren{\sqrt{r} + \frac{Q_A \sqrt{H}}{\gamma}} \frac{\max_{s \in \calA}^2 Q_A^2 H \sqrt{drH\beta K \beta_0}}{\gamma^2}
    \end{align*}
    where $\beta_0 = \max\{\log(1 + K/\lambda), \log (1 + dQ_A K/\gamma)\}$, and $\lambda = \frac{\gamma\max_{s\in\calA}\abs{\bbX_s}Q_A\beta \max\{\sqrt{r}, Q_A\sqrt{H}/\gamma\}}{\sqrt{dH}}$
\end{lemma}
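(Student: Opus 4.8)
The plan is to adapt the argument of \cite[Lemma 6]{huangProvablyEfficientUCBtype2023}, which carries over to generalized PSRs (\Cref{def:gen_psr}) without essential modification, since the only structural facts it uses are $\norm{M_h(x_h)}_2 \le 1$ and the $\gamma$-well-conditioning of \Cref{ass:psr_gamma_wellcond}. First I would write the bonus value as an expectation under the \emph{estimated} model, $V_{\thetahatk}^{\hat{b}^k}(\pi^k) = \Expect_{\tau_H \sim \probb{\thetahatk}{\pi^k}}\bra{\hat{b}^k(\tau_H)}$, and, using $\hat{b}^k \le 1$ (the purpose of the clipping) together with the definition of total variation, transfer this expectation to the true model at the cost of a TV term while dropping the clipping:
\[
    V_{\thetahatk}^{\hat{b}^k}(\pi^k) \le \tv{\probb{\thetahatk}{\pi^k}(\tau_H),\, \probb{\theta^*}{\pi^k}(\tau_H)} + \alpha \cdot \Expect_{\tau_H \sim \probb{\theta^*}{\pi^k}}\bra{\sqrt{\sum_{h=0}^{H-1} \norm{\hatbar{\psi}^k(\tau_h)}_{(\Uhathk)^{-1}}^2}}.
\]
Then \Cref{lemma:features_bound} replaces the estimated prediction features and Gram matrices inside the expectation by the true $\bar{\psi}^*(\tau_h)$ and $\Uhk$, at the cost of an additional TV term and a multiplicative factor $1 + O(\max_{s\in\calA}\abs{\bbX_s} Q_A \sqrt{r\beta}/\sqrt{\lambda})$ on the $(\Uhk)^{-1}$-norms, giving, per iteration,
\[
    V_{\thetahatk}^{\hat{b}^k}(\pi^k) \lesssim \paren{1 + \frac{\alpha H Q_A}{\sqrt{\lambda}}} \tv{\probb{\thetahatk}{\pi^k}(\tau_H),\, \probb{\theta^*}{\pi^k}(\tau_H)} + \alpha \paren{1 + \frac{\max_{s\in\calA}\abs{\bbX_s} Q_A \sqrt{r\beta}}{\sqrt{\lambda}}} \sum_{h=0}^{H-1} \Expect_{\tau_h \sim \probb{\theta^*}{\pi^k}} \norm{\bar{\psi}^*(\tau_h)}_{(\Uhk)^{-1}}.
\]

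Next I would sum over $k \in [K]$. The total-variation terms (combined with the coefficient $1 + O(\alpha H Q_A/\sqrt{\lambda})$) sum to $O(\sqrt{K})$ by \Cref{lemma:cumulative_tv_error_bound}. For the $\sum_{k}\sum_{h} \Expect_{\tau_h \sim \probb{\theta^*}{\pi^k}}\norm{\bar{\psi}^*(\tau_h)}_{(\Uhk)^{-1}}$ terms, I would apply Cauchy--Schwarz over the $KH$ index pairs to pass to squared norms, then invoke a matrix-martingale concentration bound --- exploiting that the length-$h$ prefix of the trajectory added to $\calD_h^k$ at iteration $k$ is drawn from $\probb{\theta^*}{\pi^{k-1}}$ --- to show that, on a further event of probability at least $1-\delta$, $\Uhk$ dominates a universal constant times the population matrix $\lambda I + \sum_{t<k}\Expect_{\tau_h\sim\probb{\theta^*}{\pi^{t}}}\bra{\bar{\psi}^*(\tau_h)\bar{\psi}^*(\tau_h)^\top}$. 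Finally the elliptical potential lemma (\cite[Lemma 14]{huangProvablyEfficientUCBtype2023}), together with $\norm{\bar{\psi}^*(\tau_h)}_2 \le Q_A$, gives $\sum_{k}\sum_{h}\Expect_{\tau_h\sim\probb{\theta^*}{\pi^k}}\norm{\bar{\psi}^*(\tau_h)}_{(\Uhk)^{-1}}^2 \lesssim rH\log(1 + K Q_A^2/(\lambda r))$, so after Cauchy--Schwarz these terms contribute $O(\sqrt{KH}\,\sqrt{rH\beta_0})$.

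To conclude, I would collect the pieces and substitute $\alpha^2 = \frac{4\lambda H Q_A^2 d}{\gamma^4} + 28\,\max_{s\in\calA}\abs{\bbX_s}^2 Q_A^2\beta/\gamma^2$ from \Cref{lemma:tv_bound}. The resulting bound has two competing kinds of contribution: terms growing like $\sqrt{\lambda}$ (from the $\sqrt{\lambda}$ piece of $\alpha$ against the ``$1$'' in the factors above) and terms shrinking like $1/\sqrt{\lambda}$ (from the $\sqrt{\lambda}$ denominators against the $\beta$-dependent piece of $\alpha$). Choosing $\lambda$ to equalize them gives exactly $\lambda = \frac{\gamma\,\max_{s\in\calA}\abs{\bbX_s}\,Q_A\,\beta\,\max\{\sqrt{r},\, Q_A\sqrt{H}/\gamma\}}{\sqrt{dH}}$, and substituting back --- with the logarithmic factors absorbed into $\beta_0 = \max\{\log(1+K/\lambda),\, \log(1+dQ_AK/\gamma)\}$ and the factor $\max\{\sqrt r,\, Q_A\sqrt H/\gamma\}$ emerging in front --- yields the claimed bound.

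The hard part will be the matrix-concentration step that allows one to replace the adaptively-collected empirical Gram matrix $\Uhk$ by its population counterpart (this is precisely where the choice of exploration suffix $\uhexp{h-1}$, which preserves the length-$(h-1)$ prefix distribution, is used), together with the meticulous bookkeeping needed to verify that the stated $\lambda$ truly balances the $\sqrt{\lambda}$- and $1/\sqrt{\lambda}$-type terms through every factor of $\gamma$, $Q_A$, $\max_{s\in\calA}\abs{\bbX_s}$, $d$, $r$, $H$, and $\beta$. Both are standard and insensitive to the generalized-PSR extension, so the proof coincides with that of \cite[Lemma 6]{huangProvablyEfficientUCBtype2023} modulo notation.
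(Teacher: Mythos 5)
Your decomposition is the same as the paper's: bound $V_{\thetahatk}^{\hat b^k}(\pi^k)$ by a TV term plus $\alpha\,\Expect_{\theta^*}[\,\cdot\,]$, transfer to true prediction features via \Cref{lemma:features_bound}, control the accumulated TV terms by \Cref{lemma:cumulative_tv_error_bound}, finish with the elliptical potential lemma, and choose $\lambda$ to balance the $\sqrt{\lambda}$ and $1/\sqrt{\lambda}$ contributions of $\alpha$. The one place you genuinely diverge is the treatment of $\sum_{k,h}\Expect_{\tau_h\sim\probb{\theta^*}{\pi^k}}\|\bar\psi^*(\tau_h)\|_{(\Uhk)^{-1}}$. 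The paper observes that the realized prefix $\tau_h^{k+1,h+1}$ added to $\calD_h^{k+1}$ is distributed as $\probb{\theta^*}{\pi^k}$ and is exactly one of the rank-one updates of $U_h^{k+1}$, so a scalar Azuma--Hoeffding step replaces the expectation by the realized sample at cost $O(\sqrt{K\log(1/\delta)})$ and the elliptical potential lemma then applies directly to the \emph{empirical} Gram matrices; no comparison between empirical and population covariances is ever needed. You instead propose a matrix-martingale concentration showing $U_h^k \succeq c\,\Lambda_h^k$ uniformly in $k$, followed by the population-level elliptical potential argument (the paper does use the population-level version, but only in \Cref{lemma:cumulative_tv_error_bound}, where the Cauchy--Schwarz is taken with respect to $\Lambda_h^k$ from the outset so no concentration is required).

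Your route can be made to work, but it is not free and you have left its one nonstandard ingredient unverified: uniform-in-$k$ domination of the population Gram matrix by the adaptively collected empirical one requires the regularizer to absorb the fluctuations, i.e.\ roughly $\lambda \gtrsim \|\bar\psi^*\|_2^2\log(dHK/\delta)$, and the stated $\lambda = \gamma\max_s\abs{\bbX_s}^2 Q_A\beta\max\{\sqrt r, Q_A\sqrt H/\gamma\}/\sqrt{dH}$ is fixed by the balancing step, not by this requirement; since $d=\max_h\abs{\bbQ_h^m}$ can be exponential in $m$, the $1/\sqrt{dH}$ factor can make $\lambda$ too small, in which case you would have to inflate $\lambda$ and the final bound would degrade. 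I would recommend replacing that step with the paper's scalar martingale argument, which costs one line and avoids the issue entirely; with that substitution your proof coincides with the paper's.
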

\begin{proof}
    First, we note that,
    \begin{equation*}
        \val{\thetahatk}{\bhatk}(\pi^k) = \sum_{\tau} \probb{\thetahatk}{\pi^k}(\tau) \bhatk(\tau) = \sum_{\tau} \probb{\theta^*}{\pi^k}(\tau) \bhatk(\tau) + \sum_{\tau} (\probb{\thetahatk}{\pi^k}(\tau) - \probb{\theta^*}{\pi^k}(\tau)) \bhatk(\tau)
        \leq \val{\theta^*}{\bhatk}(\pi^k) + \tv{\probb{\thetahatk}{\pi^k}, \probb{\theta^*}{\pi^k}},
        % NOTE: in Huang et al, they don't have a 1/2 factor... why?
    \end{equation*}
    where we recall that $\bhatk(\cdot) \in [0,1]$. Hence, we may focus on bounding the value of $\bhatk$ under the true model $\theta^*$ and use the bound on the cumulative total variation estimation error established in~\Cref{lemma:cumulative_tv_error_bound}.

    Recall the definition of the bonus term,
    \[\bhatk(\tau_H) \coloneq \min\set{\alpha \sqrt{\sum_h \norm{\hatbar{\psi}^k(\tau_h)}_{(\Uhathk)^{-1}}}, 1},\]
    which is defined in terms of the estimated prediction features $\hatbar{\psi}$. Recall also that in~\Cref{lemma:features_bound} we established a bound on the expectation of the prediction features under the true model, which corresponds to $\val{\theta^*}{\bhatk}$. Hence, we proceed to bound $\sum_{k} \val{\theta^*}{\bhatk}(\pi^k)$ as follows,
    \begin{align*}
        &\sum_{k} \val{\theta^*}{\bhatk}(\pi^k) \\
        &= \sum_{k} \Expect_{\tau_H \sim \probb{\theta^*}{\pi^k}}\bra{\bhatk(\tau_H)} \\
        &= \sum_{k} \Expect_{\tau_H \sim \probb{\theta^*}{\pi^k}}\bra{\min\set{\alpha \sqrt{\sum_h \norm{\hatbar{\psi}^k(\tau_h)}^2}, 1}} \\
        &\stepa{\leq} \sum_{k=1}^K \min\set{\alpha \paren{1 + \frac{2 \max_{s \in \calA}\aabs{\bbX_s} Q_A \sqrt{7 r \beta}}{\sqrt{\lambda}}} \sum_{h=0}^{H-1} \Expect_{{\tau_H \sim \probb{\theta^*}{\pi^k}}}\bra{\norm{\bar{\psi}^*(\tau_h)}_{(\Uhk)^{-1}}} + \sum_{k=1}^K \frac{\alpha H Q_A}{\sqrt{\lambda}} \tv{\probb{\theta^*}{\pi^k}, \probb{\thetahatk}{\pi^k}}, 1}\\
        &\stepb{\leq} \underbrace{\sum_{k=1}^K \min\set{\alpha \paren{1 + \frac{2 \max_{s \in \calA}\aabs{\bbX_s} Q_A \sqrt{7 r \beta}}{\sqrt{\lambda}}} \sum_{h=0}^{H-1} \Expect_{{\tau_H \sim \probb{\theta^*}{\pi^k}}}\bra{\norm{\bar{\psi}^*(\tau_h)}_{(U_h^k)^{-1}}}, 1}}_{I_1} + \sum_{k=1}^K \frac{\alpha H Q_A}{\sqrt{\lambda}} \tv{\probb{\theta^*}{\pi^k}, \probb{\thetahatk}{\pi^k}},
    \end{align*}
    where step (a) is by~\Cref{lemma:features_bound} and step (b) is since $\min(a+b, c) \leq \min(a, c) + b$ when $a, b, c$ are non-negative.

    Next, we bound the term $I_1$. Recall the definition of $\Uhk \coloneq \lambda I + \sum_{\tau_h \in \calD_h^k} \psibarstar(\tau_h) \psibarstar(\tau_h)^\top$. Also, note that the process
    \begin{equation*}
        \paren{\Expect_{\tau_h \sim \probb{\theta^*}{\pi^k}} \bra{\norm{\psibarstar(\tau_h)}_{(\Uhk)^{-1}}} - \norm{\psibarstar(\tau_{h}^{k+1, h+1})}_{(\Uhathk)^{-1}}}_{k=1}^{K}
    \end{equation*}
    is a martingale. Hence, by the Azuma-Hoeffding inequality, we have that with probability at least $1 - \delta$,
    \begin{align*}
        I_1 &\leq \sqrt{2 K \log(2/\delta)} + \sum_{k=1}^K \min\set{\alpha \paren{1 + \frac{2 \max_{s \in \calA}\aabs{\bbX_s} Q_A \sqrt{7 r \beta}}{\sqrt{\lambda}}} \sum_{h=0}^{H-1} \norm{\bar{\psi}^*(\tau_h^{k+1, h+1})}_{(\Uhk)^{-1}}, 1} \\
        &\lesssim \sqrt{2 K \log(2/\delta)} + \alpha \paren{1 + \frac{2 \max_{s \in \calA}\aabs{\bbX_s} Q_A \sqrt{7 r \beta}}{\sqrt{\lambda}}} H \sqrt{r K \log (1 + K / \lambda)} \\
        &\lesssim \alpha \paren{1 + \frac{\max_{s \in \calA}\aabs{\bbX_s} Q_A \sqrt{7 r \beta}}{\sqrt{\lambda}}} H \sqrt{r K \log (1 + K / \lambda)} 
    \end{align*}
    where the second line is by the Elliptical potential lemma (\cite[Lemma 14]{huangProvablyEfficientUCBtype2023}; see also~\cite{dani2008stochastic,abbasi2011improved,carpentier2020elliptical}).

    We now return to bounding $\sum_{k=1}^K V_{\thetahatk}^{\hat{b}^k}(\pi^k)$. For convenience we define $\beta_0 \coloneq \max\sset{\log(1 + K/\lambda), \log(1 + d Q_A K / \gamma)}$ and we choose $\lambda$ as follows,
    \begin{equation*}
        \lambda = \frac{\gamma \max_{s \in \calA}\abs{\bbX_s}^2 Q_A \beta \max\sset{\sqrt{r}, Q_A \sqrt{H} / \gamma}}{\sqrt{d H}}.
    \end{equation*}

    We have,
    \begin{align*}
        \sum_{k=1}^K V_{\thetahatk}^{\hat{b}^k}(\pi^k) &\leq \sum_{k=1}^K \val{\theta^*}{\bhatk}(\pi^k) + \sum_{k=1}^K \tv{\probb{\thetahatk}{\pi^k}, \probb{\theta^*}{\pi^k}} \\
        &\leq I_1 + \paren{1 + \frac{\alpha H Q_A}{\sqrt{\lambda}}} \sum_{k=1}^K \tv{\probb{\theta^*}{\pi^k}, \probb{\thetahatk}{\pi^k}} \\
        &\stepa{\lesssim} \alpha \paren{1 + \frac{\max_{s \in \calA}\aabs{\bbX_s} Q_A \sqrt{7 r \beta}}{\sqrt{\lambda}}} H \sqrt{r K \beta_0} + \frac{\alpha H}{\sqrt{\lambda}} \frac{Q_A^2 \max_{s \in \calA}\aabs{\bbX_s} \sqrt{\beta}}{\gamma} \sqrt{r H K \beta_0} \\
        &= \alpha \paren{1 + \frac{\max_{s \in \calA}\aabs{\bbX_s} Q_A \sqrt{7 r \beta}}{\sqrt{\lambda}} + \frac{\max_{s \in \calA}\aabs{\bbX_s} Q_A^2 \sqrt{H \beta}}{\gamma \sqrt{\lambda}}} H \sqrt{r K \beta_0} \\
        &\stepb{\lesssim} \paren{\frac{Q_A \sqrt{H d \lambda}}{\gamma^2} + \frac{\max_{s \in \calA} \aabs{\bbX_s} Q_A \sqrt{\beta}}{\gamma}} \paren{1 + \frac{\max_{s \in \calA}\aabs{\bbX_s} Q_A \sqrt{\beta} \max\sset{\sqrt{r}, Q_A \sqrt{H}/\gamma}}{\sqrt{\lambda}}} H \sqrt{r K \beta_0} \\
        &= \paren{1 + \frac{\sqrt{d H \lambda}}{\max_{s \in \calA}\aabs{\bbX_s} \sqrt{\beta} \gamma}} \paren{1 + \frac{\max_{s \in \calA}\aabs{\bbX_s} Q_A \sqrt{\beta} \max\sset{\sqrt{r}, Q_A \sqrt{H}/\gamma}}{\sqrt{\lambda}}} \frac{\max_{s \in \calA}{\aabs{\bbX_s} Q_A H \sqrt{r K \beta \beta_0}}}{\gamma}\\
        &\stepc{=} \paren{1 + \sqrt{\frac{Q_A \max\sset{\sqrt{r}, Q_A \sqrt{H}/\gamma} \sqrt{dH}}{\gamma}}}^2 \frac{\max_{s \in \calA}{\aabs{\bbX_s} Q_A H \sqrt{r K \beta \beta_0}}}{\gamma}\\
        &\lesssim \paren{1 + \frac{Q_A \sqrt{dH} \max\set{\sqrt{r}, Q_A \sqrt{H} / \gamma}}{\gamma}} \frac{\max_{s \in \calA}{\aabs{\bbX_s} Q_A H \sqrt{r K \beta \beta_0}}}{\gamma} \\
        &\leq \paren{\sqrt{r} + \frac{Q_A \sqrt{H}}{\gamma}} \frac{\max_{s \in \calA}{\aabs{\bbX_s} Q_A^2 H \sqrt{r d H K \beta \beta_0}}}{\gamma^2}, 
    \end{align*}
    where step (a) is by~\Cref{lemma:cumulative_tv_error_bound} and the bound on $I_1$ established above, step (b) uses the definition of $\alpha$ and the fact that $\alpha \lesssim \frac{Q_A \sqrt{H d \lambda}}{\gamma^2} + \frac{\max_{s \in \calA} \aabs{\bbX_s} Q_A \sqrt{\beta}}{\gamma}$, and step (c) is by plugging in the choice of $\lambda$.
\end{proof}

\subsection{Proof of Theorem~\ref{theorem:post_ucb}}\label{ssec:team_thm_proof}

\begin{theorem*}[Restatement of~\Cref{theorem:post_ucb}]
    Suppose~\Cref{ass:psr_gamma_wellcond} holds. Let $p_{\min} = O\paren{\frac{\delta}{K H \prod_{h=1}^H \abs{\bbX_{h}}}}$, $\lambda = \frac{\gamma \max_{s \in \calA} \abs{\bbX_{s}}^2 Q_A \beta \max\set{\sqrt{r}, Q_A \sqrt{H} / \gamma}}{\sqrt{d H}}$, $\alpha = O\paren{\frac{Q_A \sqrt{H d}}{\gamma^2} \sqrt{\lambda} + \frac{\max_{s \in \calA} \abs{\bbX_{s}} Q_A \sqrt{\beta} }{\gamma}}$, and let $\beta = O(\log \abs{\bar{\Theta}_\varepsilon})$, where $\varepsilon = O(\frac{p_{\min}}{K H})$. Then, with probability at least $1 - \delta$,~\Cref{alg:post_ucb} returns a model $\theta^\epsilon$ and a policy $\pi$ that satisfy
    \begin{equation*}
        V_{\theta^\epsilon}^{R}(\pi^*) - V_{\theta^\epsilon}^{R}(\pi) \leq \varepsilon, \ \text{and} \ \forall \pi, \ \tv{\probb{\theta^\epsilon}{\pi}(\tau_H), \probb{\theta^*}{\pi}(\tau_H)} \leq \varepsilon.
    \end{equation*}
    In addition, the algorithm terminates with a sample complexity of,
    \begin{equation*}
        \tilde{O}\paren{\paren{r + \frac{Q_A^2 H}{\gamma^2}} \frac{r d H^3 \max_{s \in \calA} \abs{\bbX_{s}}^2 Q_A^4 \beta}{\gamma^4 \epsilon^2}}.
    \end{equation*}
\end{theorem*}

\begin{proof}
    By~\Cref{prop:mle_E_min,prop:mle_E_omega,prop:mle_E_pi}, the event $\calE$ occurs with high probability, $\prob{\calE} \geq 1 - 3 \delta$. Suppose $\calE$ holds. Then, by the upper confidence bound established in~\Cref{cor:upper_conf_bound}, if~\Cref{alg:post_ucb} terminates, then the following must hold,
    \begin{equation*}
        \forall \pi, \ \tv{\probb{\theta^\epsilon}{\pi}(\tau_H), \probb{\theta^*}{\pi}(\tau_H)} = 2 \max_{R} \abs{V_{\theta^\epsilon}^{R}(\pi) - V_{\theta^*}^{R}(\pi)} \leq V_{\theta^\epsilon}^{\hat{b}^\epsilon}(\pi) \leq \epsilon,
    \end{equation*}
    where the maximization is over reward functions $R: \bbH_H \to [0,1]$. The last inequality is simply the termination condition of~\Cref{alg:post_ucb}.

    Now, the difference between the optimal value and the value of $\pi$ (the policy returned by the algorithm) can be bounded as follows,
    \begin{align*}
        V_{\theta^*}^{R}(\pi^*) - V_{\theta^*}^{R}(\pi) &= V_{\theta^*}^{R}(\pi^*) - V_{\theta^\epsilon}^{R}(\pi^*) + V_{\theta^\epsilon}^{R}(\pi^*) - V_{\theta^\epsilon}^{R}(\pi) + V_{\theta^\epsilon}^{R}(\pi) - V_{\theta^*}^{R}(\pi)\\
        &\leq 2 \max_{\pi} V_{\theta^\epsilon}^{\hat{b}^\epsilon}(\pi) \leq \epsilon,
    \end{align*}
    where the inequality follows from the fact that $\pi = \argmax_{\pi} V_{\theta^\epsilon}^{R}(\pi)$ and by~\Cref{cor:upper_conf_bound}.

    Recall that by~\Cref{lemma:sum_V_sublinear}, we have,
    \begin{align*}
        \sum_{k=1}^KV_{\thetahatk, \bar{b}^k}^{\pi^k} \lesssim \paren{\sqrt{r} + \frac{Q_A \sqrt{H}}{\gamma}} \frac{\max_{s\in\calA}\abs{\bbX_s} Q_A^2 H \sqrt{r d H K \beta \beta_0}}{\gamma^2}.
    \end{align*}

    By the pigeon-hole principle and the termination condition of~\Cref{alg:post_ucb}, the algorithm must terminate within
    \begin{equation*}
        K = \tilde{O}\paren{\paren{r + \frac{Q_A^2 H}{\gamma^2}} \frac{r d H^2 Q_A^4 \max_{s\in\calA}\abs{\bbX_s}^2 \beta}{\gamma^4 \epsilon^2}}
    \end{equation*}
    episodes. Since each episode contains $H$ iterations, this implies a sample complexity of
    \begin{equation*}
        K = \tilde{O}\paren{\paren{r + \frac{Q_A^2 H}{\gamma^2}} \frac{r d H^3 Q_A^4 \max_{s\in\calA}\abs{\bbX_s}^2 \beta}{\gamma^4 \epsilon^2}}.
    \end{equation*}
    Therefore, we conclude the proof of Theorem \ref{theorem:post_ucb}. 
\end{proof}

\section{Proof of Theorem~\ref{theorem:posg_ucb}: UCB Algorithm for Generalized PSRs (Game Setting)}\label{sec:game_thm_proof}

\begin{theorem*}[Restatement of~\Cref{theorem:posg_ucb}]
    Suppose~\Cref{ass:psr_gamma_wellcond} holds. Let $p_{\min} = O\paren{\frac{\delta}{K H \prod_{h=1}^H \abs{\bbX_{h}}}}$, $\lambda = \frac{\gamma \max_{s \in \calA} \abs{\bbX_{s}}^2 Q_A \beta \max\set{\sqrt{r}, Q_A \sqrt{H} / \gamma}}{\sqrt{d H}}$, $\alpha = O\paren{\frac{Q_A \sqrt{H d}}{\gamma^2} \sqrt{\lambda} + \frac{\max_{s \in \calA} \abs{\bbX_{s}} Q_A \sqrt{\beta} }{\gamma}}$, and let $\beta = O(\log \abs{\bar{\Theta}_\varepsilon})$, where $\varepsilon = O(\frac{p_{\min}}{K H})$. Then, with probability at least $1 - \delta$,~\Cref{alg:posg_ucb} returns a model $\theta^\epsilon$ and a policy $\pi$ which is an $\varepsilon$-approximate equilibrium (either NE or CCE). That is,
    \begin{equation*}
        V_{\theta^*}^i(\pi) \geq V_{\theta^*}^{i, \dagger}(\pi^{-i}) - \varepsilon, \, \forall i \in [N].
    \end{equation*}
    In addition, the algorithm terminates with a sample complexity of,
    \begin{equation*}
        \tilde{O}\paren{\paren{r + \frac{Q_A^2 H}{\gamma^2}} \frac{r d H^3 \max_{s \in \calA} \abs{\bbX_{s}}^2 Q_A^4 \beta}{\gamma^4 \epsilon^2}}.
    \end{equation*}
\end{theorem*}

\begin{proof}
    Recall that the model-estimation portion of~\Cref{alg:posg_ucb} is identical to~\Cref{alg:post_ucb}. Hence, by~\Cref{theorem:post_ucb}, the returned estimated model $\theta^\varepsilon$ satisfies,
    \begin{equation*}
        \tv{\probb{\theta^\varepsilon}{\bm{\pi}}(\tau_H), \probb{\theta^*}{\bm{\pi}}(\tau_H)} \leq \varepsilon / 2,
    \end{equation*}
    for any collection of policies $\bm{\pi} = (\pi^i : i \in [N])$. This implies that $V_{\theta^*}^i(\pi) \geq V_{\theta^\varepsilon}^i(\pi) - \varepsilon / 2$ for all $i \in [N]$.

    Let $\Gamma^i = \Gamma_{\mathrm{ind}}^i$ in the case of running the algorithm to find a Nash equilibrium and $\Gamma^i = \Gamma_{\mathrm{cor}}^i$ in the case of a coarse correlated equilibrium. Recall that the collection of policies $\pi = (\pi^1, \ldots, \pi^N)$ returned by the algorithm are an equilibrium under $\theta^{\varepsilon}$. That is, for all $i \in [N]$,
    \begin{equation*}
        V_{\theta^\varepsilon}^i(\pi) = \max_{\tilde{\pi}^i \in \Gamma_{\mathrm{ind}}^i} V_{\theta^\varepsilon}^i(\tilde{\pi}^i, \pi^{-i}) =: V_{\theta^\varepsilon}^{i, \dagger}(\pi^{-i}).
    \end{equation*}

    Moreover, note that,
    \begin{align*}
        \abs{V_{\theta^\varepsilon}^{i, \dagger}(\pi^{-i}) - V_{\theta^*}^{i, \dagger}(\pi^{-i})} &= \abs{\max_{\tilde{\pi}^i} V_{\theta^\varepsilon}^i(\tilde{\pi}^i, \pi^{-i}) - \max_{\tilde{\pi}^i} V_{\theta^*}^i(\tilde{\pi}^i, \pi^{-i})} \\
        &\leq \max_{\tilde{\pi}^i} \abs{V_{\theta^\varepsilon}^i(\tilde{\pi}^i, \pi^{-i}) - V_{\theta^*}^i(\tilde{\pi}^i, \pi^{-i})} \\
        &\leq \varepsilon / 2,
    \end{align*}
    where the final inequality is since $\tv{\probb{\theta^\varepsilon}{\bm{\pi}}(\tau_H), \probb{\theta^*}{\bm{\pi}}(\tau_H)} \leq \varepsilon / 2$ for any $\bm{\pi}$. Thus, $V_{\theta^\varepsilon}^{i, \dagger}(\pi^{-i}) \geq V_{\theta^*}^{i, \dagger}(\pi^{-i}) - \varepsilon / 2$.

    Putting this together, we have,
    \begin{align*}
        V_{\theta^*}^i(\pi) &\geq V_{\theta^\varepsilon}^i(\pi) - \varepsilon / 2 \\
        &= V_{\theta^\varepsilon}^{i, \dagger}(\pi^{-i}) - \varepsilon / 2 \\
        &\geq V_{\theta^*}^{i, \dagger}(\pi^{-i}) - \varepsilon.
    \end{align*}
    Hence, $\pi$ is an $\varepsilon$-approximate equilibrium (either NE or CCE).
\end{proof}

\newpage

\end{document}